\documentclass[a4paper]{article}
\usepackage{amsmath}
\usepackage{amssymb}
\usepackage{mathtools}
\usepackage{authblk}
\usepackage{hyperref,url,dsfont}
\usepackage{bookmark}
\usepackage{natbib}
\usepackage{fullpage}
\hypersetup{
    colorlinks,
    breaklinks,
    urlcolor = black,
    linkcolor = blue,
    citecolor = blue,
}

\usepackage{tikz}

\usepackage[capitalize,noabbrev]{cleveref}
\makeatletter
\newcommand{\algline}[1]{%
  \begingroup
  \ifcsname c@ALG@line\endcsname
    \edef\@currentlabel{\arabic{ALG@line}}
  \else
    \edef\@currentlabel{\arabic{ALC@line}}
  \fi
  \label{#1}%
  \endgroup
}
\makeatother
\usepackage{lipsum,booktabs}
\usepackage{amsmath,mathrsfs,amssymb,amsfonts,bm,enumitem}
\usepackage{rotating}
\usepackage{pdflscape}
\usepackage{tcolorbox}
\usepackage{hyperref,url,dsfont,nicefrac}
\usepackage{multirow}
\usepackage{makecell}
\usepackage{anyfontsize}
\usepackage{bbding}
\usepackage{xspace}
\usepackage{enumitem}
\hypersetup{
    colorlinks,
    breaklinks,
    linkcolor = blue,
    citecolor = blue,
    urlcolor  = black,
}
\allowdisplaybreaks
\usepackage{appendix}
\usepackage{multirow,makecell,tabularx}
\usepackage{xfrac}
\usepackage{nicefrac}
\usepackage{threeparttable}
\usepackage{pifont}
\usepackage{tikz}
\usepackage{wrapfig}
\usepackage{nicefrac}
\usepackage{algorithmic,algorithm}
\usepackage{accents}

\setlist[itemize]{leftmargin=5.5mm}

\renewcommand{\tilde}{\widetilde}
\renewcommand{\hat}{\widehat}

\def \A {\mathcal{A}}

\def \E {\mathbb{E}}

\def \O {\mathcal{O}}

\def \R {\mathbb{R}}

\def \X {\mathcal{X}}
\def \Y {\mathcal{Y}}
\def \Z {\mathcal{Z}}

\def \a {\mathbf{a}}

\def \g {\mathbf{g}}

\def \u {\mathbf{u}}

\def \x {\mathbf{x}}
\def \y {\mathbf{y}}
\def \z {\mathbf{z}}

\def \xh {\hat{\x}}

\def \Ot {\tilde{\O}}

\usepackage{mathtools}
\let\norm\undefined 
\DeclarePairedDelimiter\norm{\lVert}{\rVert}

\newcommand\inner[2]{\langle #1, #2 \rangle}

\def \Reg {\textsc{Reg}}
\def \DReg {\textsc{D-Reg}}

\DeclareMathOperator*{\argmin}{arg\,min}

\usepackage{amsthm}
\newtheorem{myThm}{Theorem}
\newtheorem*{myThmRestate}{Theorem}

\newtheorem{myLemma}{Lemma}

\newtheorem{myProp}{Proposition}

\theoremstyle{definition}
\newtheorem{myAssum}{Assumption}

\newtheorem{myDef}{Definition}

\newtheoremstyle{proofsketchstyle}{}{} {} {} {\itshape} {.}{ } {\thmname{#1}\thmnote{ #3}} 

\theoremstyle{proofsketchstyle}

\usepackage{graphicx,subfigure,color} 

\definecolor{wine_red}{RGB}{228,48,64}
\definecolor{DSgray}{cmyk}{0,1,0,0}

\usepackage{prettyref}

\newcommand{\savehyperref}[2]{\texorpdfstring{\hyperref[#1]{#2}}{#2}}
\newrefformat{eq}{\savehyperref{#1}{Eq.~\textup{(\ref*{#1})}}}
\newrefformat{eqn}{\savehyperref{#1}{Eq.~(\ref*{#1})}}
\newrefformat{lem}{\savehyperref{#1}{Lemma~\ref*{#1}}}
\newrefformat{lemma}{\savehyperref{#1}{Lemma~\ref*{#1}}}
\newrefformat{def}{\savehyperref{#1}{Definition~\ref*{#1}}}
\newrefformat{line}{\savehyperref{#1}{Line~\ref*{#1}}}
\newrefformat{thm}{\savehyperref{#1}{Theorem~\ref*{#1}}}
\newrefformat{table}{\savehyperref{#1}{Table~\ref*{#1}}}
\newrefformat{corr}{\savehyperref{#1}{Corollary~\ref*{#1}}}
\newrefformat{cor}{\savehyperref{#1}{Corollary~\ref*{#1}}}
\newrefformat{sec}{\savehyperref{#1}{Section~\ref*{#1}}}
\newrefformat{subsec}{\savehyperref{#1}{Section~\ref*{#1}}}
\newrefformat{app}{\savehyperref{#1}{Appendix~\ref*{#1}}}
\newrefformat{appendix}{\savehyperref{#1}{Appendix~\ref*{#1}}}
\newrefformat{assum}{\savehyperref{#1}{Assumption~\ref*{#1}}}
\newrefformat{ex}{\savehyperref{#1}{Example~\ref*{#1}}}
\newrefformat{fig}{\savehyperref{#1}{Figure~\ref*{#1}}}
\newrefformat{alg}{\savehyperref{#1}{Algorithm~\ref*{#1}}}
\newrefformat{remark}{\savehyperref{#1}{Remark~\ref*{#1}}}
\newrefformat{conj}{\savehyperref{#1}{Conjecture~\ref*{#1}}}
\newrefformat{prop}{\savehyperref{#1}{Proposition~\ref*{#1}}}
\newrefformat{proto}{\savehyperref{#1}{Protocol~\ref*{#1}}}
\newrefformat{prob}{\savehyperref{#1}{Problem~\ref*{#1}}}
\newrefformat{claim}{\savehyperref{#1}{Claim~\ref*{#1}}}
\newrefformat{que}{\savehyperref{#1}{Question~\ref*{#1}}}
\newrefformat{op}{\savehyperref{#1}{Open Problem~\ref*{#1}}}
\newrefformat{fn}{\savehyperref{#1}{Footnote~\ref*{#1}}}
\newrefformat{require}{\savehyperref{#1}{Requirement~\ref*{#1}}}

\begin{document}

\title{Dynamic Regret via Discounted-to-Dynamic Reduction\\
with Applications to Curved Losses and Adam Optimizer}

\author{%
  Yan-Feng Xie$ ^{1,2}$~ Yu-Jie Zhang$ ^{3}$~ Peng Zhao$ ^{1,2}$~ Zhi-Hua Zhou$ ^{1,2}$\\
  $^1$ National Key Laboratory for Novel Software Technology, Nanjing University, China\\
  $^2$ School of Artificial Intelligence, Nanjing University, China\\
  $^3$ University of Washington, United States
}
\date{}

\maketitle

\begin{abstract}
  \noindent We study dynamic regret minimization in non-stationary online learning, with a primary focus on follow-the-regularized-leader (FTRL) methods. FTRL is important for curved losses and for understanding adaptive optimizers such as Adam, yet existing dynamic regret analyses are less explored for FTRL.  
  To address this, we build on the discounted-to-dynamic reduction and present a modular way to obtain dynamic regret bounds of FTRL-related problems. 
  Specifically, we focus on two representative curved losses: linear regression and logistic regression. Our method not only simplifies existing proofs for the optimal dynamic regret of online linear regression, but also yields new dynamic regret guarantees for online logistic regression.
  Beyond online convex optimization, we apply the reduction to analyze the Adam optimizers, obtaining optimal convergence rates in stochastic, non-convex, and non-smooth settings. The reduction also enables a more detailed treatment of Adam with two discount parameters $(\beta_1,\beta_2)$, leading to new results for both clipped and clip-free variants of Adam optimizers.
\end{abstract}

\section{Introduction}
\label{sec:intro}
Online learning provides a general framework for modeling machine learning and sequential decision-making problems, and has been studied extensively~\citep{book/Cambridge/cesa2006prediction, book'12:Shai-OCO, book'2020:bandit-alg}. In this paper, we study the non-stationary online learning under the online convex optimization~(OCO)~\citep{ICML'03:zinkvich, book'16:Hazan-OCO,book'19:FO-book}, where at each round $t \in [T]$, the learner selects a decision $\x_t \in \mathcal{X}$ from a compact convex set $\mathcal{X} \subseteq \mathbb{R}^d$, and then suffers a loss $f_t(\x_t)$, where $f_t:\X \rightarrow \mathbb{R}$ is convex. A widely-used performance metric for non-stationary online learning is \emph{dynamic regret}~\citep{NIPS'18:Zhang-Ader}, which compares the learner against a time-varying sequence of comparators $\u_1, \ldots, \u_T \in \X$:
\begin{align}
    \label{eq:def-dynamic-regret}
    \DReg_T(\u_{1:T}) = \sum_{t=1}^T f_t(\x_t) - \sum_{t=1}^T f_t(\u_t),
\end{align}
which reduces to standard static regret when the comparator is fixed, i.e., $\u_t = \u$ for all $t$. Dynamic regret can better reflect the performance in non-stationary environments.

One of the classical algorithmic frameworks in OCO is online mirror descent (OMD)~\citep{ORL'03:mirror-descent}. OMD makes \emph{local} and greedy updates by taking a gradient step from the current iterate $\x_t$, so intuitively it can be less affected by past information and reacts quickly to environmental changes. Accordingly, a common approach to derive dynamic regret guarantees is to use OMD-type methods and then leverage a two-layer online ensemble~\citep{JMLR'24:Sword++} to adapt to unknown non-stationarity.

Another fundamental framework in OCO is follow-the-regularized-leader (FTRL)~\citep{Competing:Dark}. In contrast to OMD, FTRL selects $\x_t$ by minimizing a \emph{global} objective, typically a cumulative loss plus a regularizer. This update rule makes it a convenient tool for optimizing the static regret of many online learning problems, especially those where one must carefully exploit loss structure to obtain advanced guarantees. A prominent example is online learning with curved losses, such as online linear regression, online logistic regression~\citep{COLT'20:improper-LR,ALT'22:multi-class-lr,COLT'22:scale-free-curved-loss, ICML'24:discounting-olr}, and settings where regret optimality and computational efficiency are both central~\citep{COLT'22:bisons,COLT'22:damped-ons-portfolio,COLT'23:OXO,OR'25:vb-ftrl}. 

However, this strength of FTRL comes with a trade-off: using all the past information can slow down adaptation in non-stationary environments. Indeed, there are negative results showing that FTRL can suffer linear dynamic regret even when the environments change mildly~\citep[Theorem 2]{COLT'22:parameter-free-omd}. Recent work aims to retain the core framework of FTRL while introducing modifications to enable it to better track changing comparators. A notable result is by~\citet{ICML'25:pruning-dynamic-regret}, who propose injecting correction terms to effectively prune the history, achieving an optimal dynamic regret guarantee for convex losses. However, existing results fail to support several important curved losses, where FTRL exhibits advantages over OMD.

A recent insightful result by~\citet{ICML'24:adam-ftrl-ahn} demonstrates a natural yet intriguing connection between the dynamic regret of FTRL and the Adam optimizer~\citep{iclr'15:adam}, particularly in understanding the role of its momentum components. Specifically,~\citet{ICML'24:adam-ftrl-ahn} show that Adam can be viewed as an instance of FTRL that optimizes a \emph{discounted regret}, defined as:
\begin{align}
    \label{eq:def-discounted-regret}
    \Reg_{t; \beta}(\u) = \sum_{s=1}^t \beta^{t-s} \bigl(f_s(\x_s) - f_s(\u)\bigr).
\end{align}
\citet{ICML'24:adam-ftrl-ahn} establish a reduction linking discounted regret minimization to dynamic regret. This connection suggests that Adam can be viewed as implicitly minimizing dynamic regret, which helps handle non-stationarity in non-convex and non-smooth settings. Subsequent work~\citep{NeurIPS'24:adam-ema-ahn} leverages this perspective to analyze the discounted regret of a slightly modified, clipped variant of Adam and derive convergence guarantees. Although successful, their analysis focuses on the coupled choice $\beta_2=\beta_1^2$, which is narrower than many parameter settings used in practice. Analyzing the convergence guarantees with more general parameter choices (in particular, $\beta_2\neq \beta_1^2$) calls for a fine-grained way to carry out the discounted-to-dynamic~(D2D) reduction.

Building on the D2D reduction~\citep{ICML'24:adam-ftrl-ahn}, we develop a \emph{modular discounted-to-dynamic analysis} that is suitable for analyzing dynamic regret of FTRL: rather than applying the D2D reduction after committing to a tuned closed-form discounted regret bound, we cast it at the level of \emph{untuned upper-bound templates} on rescaled losses. This template-level view keeps the key terms that contribute to the guarantees explicit throughout the reduction and enables more \emph{flexible} tuning, while remaining reusable across different algorithmic instantiations and optimization settings.

\paragraph{Our Result I: Dynamic Regret of Curved Losses.}
We instantiate the modular reduction on two representative regression tasks. For online linear regression~\citep{isr'01:vovk-olr, MLJ'01:relative-loss-bounds}, it \emph{streamlines} the analysis of~\citep{ICML'24:discounting-olr} by avoiding algorithm-specific Bregman-divergence arguments, while recovering a matching optimal bound. For online logistic regression, it yields a dynamic regret guarantee for a discounted variant of AIOLI~\citep{COLT'20:improper-LR}. This bound avoids the exponential dependence that typically arises in logistic regression analyses and improves robustness in non-stationary environments.

\paragraph{Our Result II: Adam Optimizers.} Beyond online convex optimization, the same D2D analysis can be applied to analyze the convergence rates of Adam optimizer in stochastic, non-convex and non-smooth settings via the online-to-non-convex~(O2NC) conversion~\citep{ICML'23:online-to-nonconvex-cutkosky,ICML'24:random-scaling-momentum-zhang}. Using this refined approach, we obtain convergence guarantees under more flexible parameter choices rather than the restricted choice $\beta_2=\beta_1^2$~\citep{NeurIPS'24:adam-ema-ahn}, while maintaining optimal convergence to a $(c,\epsilon)$-stationary point~\citep{ICML'24:random-scaling-momentum-zhang}.

In particular, for clipped Adam we provide \emph{two sufficient convergence conditions}. The \emph{first} requires $\beta_2 \ge \max\{1-\frac{\nu}{G+\sigma},\beta_1^4\}$, where $\nu$ is the numerical stability term in Adam's denominator and $G$ and $\sigma$ are the Lipschitz constant and gradient variance. This helps clarify the role of clipping: it allows a weaker coupling between the momentum parameters, down to $\beta_2 \ge \beta_1^4$. The \emph{second} gives a margin-style condition with $\beta_2>\beta_1^2$ and quantifies how the gap $\beta_2-\beta_1^2$ affects the convergence rate. We also study \emph{clip-free Adam} in non-smooth and non-convex settings and obtain matching optimal convergence rates using a similar analysis. Finally, our results can imply optimal rates in the non-convex smooth and second-order smooth settings.

\paragraph{Techniques.} We cast the D2D reduction at the level of untuned upper bounds on rescaled losses, presenting explicit dynamic regret bounds. \emph{(i)} First, this view makes the analysis of dynamic regret more reusable: it leads to a short proof of the discounted VAW forecaster in the unconstrained setting, and provides a direct way to obtain guarantees for another representative curved loss, the logistic loss. More broadly, the same framework applies beyond the examples in this paper as a general tool for non-stationary environments. \emph{(ii)} Second, when applying the O2NC conversion to analyze Adam in non-convex and non-smooth settings, a main difficulty arises when $\beta_2 \neq \beta_1^2$. In this case, the induced learning rate in the online problem is no longer monotone, which breaks standard arguments. Our reduction helps isolate the terms that determine the final convergence rate, allowing us to control them separately. This is particularly useful for obtaining fine-grained guarantees for Adam.
In particular, we introduce a self-confident tuning lemma to handle learning rates whose denominators are exponential-weighted sums of past gradients. We also refine the analysis of the induced online algorithm to remove the dependence on the numerical stability term $\nu$. A key step is to bound how much the learning rate can change across rounds, following a scale-free style argument~\citep{TCS'18:SOGD}. This leads to a refined convergence condition of Adam that makes the dependence on the margin $\beta_2-\beta_1^2$ explicit.

\paragraph{Organization.}
Section~\ref{sec:d2d-reduction} presents the discounted-to-dynamic modular analysis. Section~\ref{sec:curved-loss} studies the dynamic regret of curved losses using the modular reduction. Section~\ref{sec:adam} studies convergence conditions of both clipped and clip-free variants of Adam via the O2NC conversion. Section~\ref{sec:conclusion} concludes the paper. We defer further discussion of related work, omitted proofs, and additional details to the appendices.

\section{Modular Discounted-to-Dynamic Reduction}
\label{sec:d2d-reduction}
In this section, we first review the connection between discounted regret and dynamic regret optimization, and then present our discounted-to-dynamic modular analysis with further discussions. All proofs are deferred to Appendix~\ref{appendix:d2d}.
\subsection{Connection of Dynamic and Discounted Regret}
\citet{ICML'24:adam-ftrl-ahn} reveal the connection between dynamic regret and discounted regret to interpret the role of momentum terms in Adam. We summarize their result below.
\begin{myLemma}[Adapted from Theorem B.3 in~\citet{ICML'24:adam-ftrl-ahn}]
\label{lemma:d2d-conversion}
The following statement is true for any $T > 0$ and any comparator sequence $\u_1, \dots, \u_T \in \X \subseteq \R^d$:
\begin{align*}
  \DReg_{T}(\u_{1:T}) = \beta \sum_{t=1}^{T-1} \left(\Reg_{t;\beta}(\u_t) - \Reg_{t;\beta}(\u_{t+1})\right)  + (1 - \beta) \sum_{t=1}^T \Reg_{t;\beta}(\u_t) + \beta \Reg_{T;\beta}(\u_T),
\end{align*}
where $\Reg_{t;\beta}(\u)$ is defined in Eq.~\eqref{eq:def-discounted-regret}.
\end{myLemma}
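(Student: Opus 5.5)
This identity is purely algebraic, so I will verify it by expanding the right-hand side and matching it against $\DReg_T(\u_{1:T})=\sum_{t=1}^T \ell_t(\u_t)$, where I write $\ell_s(\u) \coloneqq f_s(\x_s)-f_s(\u)$. The key tool is the one-step recursion of the discounted regret: for every $t\ge 1$ and every $\u$,
\begin{align*}
  \Reg_{t;\beta}(\u) = \beta\,\Reg_{t-1;\beta}(\u) + \ell_t(\u), \qquad \Reg_{0;\beta}(\u) = 0,
\end{align*}
which follows immediately from the definition in Eq.~\eqref{eq:def-discounted-regret}. Equivalently, $\ell_t(\u_t) = \Reg_{t;\beta}(\u_t) - \beta\Reg_{t-1;\beta}(\u_t)$.

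First I sum this identity over $t=1,\dots,T$, using the convention $\Reg_{0;\beta}\equiv 0$:
\begin{align*}
  \DReg_T(\u_{1:T}) = \sum_{t=1}^T \Reg_{t;\beta}(\u_t) - \beta\sum_{t=1}^{T-1}\Reg_{t;\beta}(\u_{t+1}),
\end{align*}
where I have shifted the index in the second sum. Then I write $\sum_{t=1}^T \Reg_{t;\beta}(\u_t) = (1-\beta)\sum_{t=1}^T\Reg_{t;\beta}(\u_t) + \beta\sum_{t=1}^{T-1}\Reg_{t;\beta}(\u_t) + \beta\Reg_{T;\beta}(\u_T)$. I group the middle sum with the shifted sum to obtain $\beta\sum_{t=1}^{T-1}\bigl(\Reg_{t;\beta}(\u_t)-\Reg_{t;\beta}(\u_{t+1})\bigr)$. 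This gives exactly the three terms in the statement of \cref{lemma:d2d-conversion}.

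There is no real obstacle here. The only points that need care are the index bookkeeping when shifting $t-1\mapsto t$, the boundary term at $t=T$ that gets split off as $\beta\Reg_{T;\beta}(\u_T)$, and the convention $\Reg_{0;\beta}\equiv 0$ that removes the $t=1$ boundary. No convexity or other structural assumption on $f_t$ or $\X$ is used, so the identity holds for arbitrary losses and comparators.
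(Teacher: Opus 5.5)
Your proof is correct and follows essentially the same route as the paper: both rest on the one-step recursion $\Reg_{t;\beta}(\u)=\beta\Reg_{t-1;\beta}(\u)+f_t(\x_t)-f_t(\u)$ with $\Reg_{0;\beta}\equiv 0$, summed with $\u=\u_t$ and followed by an index shift. The only cosmetic difference is that the paper splits each one-step identity into its $(1-\beta)$ and $\beta$ parts before summing and then telescopes, whereas you sum first and split afterwards.
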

To interpret this result, note that the left-hand side is dynamic regret, which benchmarks against a \emph{changing} comparator sequence. The right-hand side is expressed through discounted regrets: it depends on the discount factor $\beta$, but at each time it compares only against a \emph{fixed} comparator. This shift is helpful in settings where the available analysis tools are primarily designed for fixed comparators, as is often the case for FTRL methods~\citep{book'19:FO-book}.

A direct way to leverage Lemma~\ref{lemma:d2d-conversion} is to first establish discounted regret bounds for $\Reg_{t;\beta}$ and then plug them into the identity to obtain a dynamic regret bound. This approach is general and has been successfully used in analyzing clipped Adam~\citep{NeurIPS'24:adam-ema-ahn}. However, the resulting expression typically involves multiple sums and terms, and simplifying it may obscure how the final bound depends on the key quantities of interest (e.g., non-stationarity and stability).
Motivated by this, we present a complementary, modular way to carry out the reduction: we cast the D2D reduction at the level of untuned upper bounds, which keeps the main terms explicit throughout the derivation and facilitates tuning and trade-offs needed for optimal and more informative guarantees.

\subsection{Modular D2D Reduction}
Building on the D2D reduction, we present the following theorem, which provides a modular analysis for FTRL across a range of settings. Our analysis is stated at the level of untuned upper-bound templates, before committing to a specific tuning. This template-level form preserves flexibility in learning-rate choices and makes the conversion reusable across different algorithmic families.
\begin{myThm}[Modular D2D Reduction]
    \label{thm:d2d-reduction}
    Assume an online learning algorithm $\A$ outputs decisions $\x_t \in \X$ and satisfies the following rescaled regret bound: for any $t \in [T]$ and any comparator $\u \in \X$,
    \begin{align}
      \label{eq:rescaled-loss-assump}
      \sum_{s=1}^t \beta^{-s} f_s(\x_s) - \sum_{s=1}^t \beta^{-s} f_s(\u)
      \le \varphi_t(\u) + \sum_{s=1}^t \Lambda_s,
    \end{align}
    where $\beta \in (0,1]$, $\Lambda_s \ge 0$ are stability terms, and $\varphi_t(\cdot)\ge 0$ is a comparator-dependent quantity that appears in the analysis.
    Then, for any $\u_1,\dots,\u_T \in \X$, $\A$ satisfies
    \begin{align*}
    \sum_{t=1}^T f_t(\x_t) - \sum_{t=1}^T f_t(\u_t)
      &\leq \beta\varphi_1(\u_1)
      + \sum_{t=1}^T \beta^t \Lambda_t + \beta \sum_{t=1}^{T-1} \Big( F_t^{\beta,\varphi}(\u_{t+1}) - F_t^{\beta,\varphi}(\u_t) \Big)\\
    &\quad\quad\quad + \beta \sum_{t=1}^{T-1} \beta^t \Big(\varphi_{t+1}(\u_{t+1}) - \varphi_t(\u_{t+1})\Big),
    \end{align*}
    where $F_t^{\beta,\varphi}(\u) = \beta^t \varphi_t(\u) + \sum_{s=1}^t \beta^{t-s} f_s(\u)$.
\end{myThm}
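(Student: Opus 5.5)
The plan is to start from the exact identity in Lemma~\ref{lemma:d2d-conversion} and substitute into it the discounted form of the assumption \eqref{eq:rescaled-loss-assump}. Multiplying \eqref{eq:rescaled-loss-assump} by $\beta^t>0$ gives, for every $t$ and $\u$,
\begin{align*}
\Reg_{t;\beta}(\u) \le \beta^t\varphi_t(\u) + \beta^t\sum_{s=1}^t\Lambda_s .
\end{align*}
Write $G_t=\sum_{s\le t}\beta^{t-s}f_s(\x_s)$ and $L_t(\u)=\sum_{s\le t}\beta^{t-s}f_s(\u)$, so that $\Reg_{t;\beta}(\u)=G_t-L_t(\u)$.

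\textbf{Step 1: the difference terms.} The terms $\Reg_{t;\beta}(\u_t)-\Reg_{t;\beta}(\u_{t+1})$ are handled exactly, not bounded: the learner part $G_t$ cancels, leaving $L_t(\u_{t+1})-L_t(\u_t)$. Since $L_t=F_t^{\beta,\varphi}-\beta^t\varphi_t$ by definition of $F_t^{\beta,\varphi}$, this equals
\begin{align*}
F_t^{\beta,\varphi}(\u_{t+1})-F_t^{\beta,\varphi}(\u_t)+\beta^t\varphi_t(\u_t)-\beta^t\varphi_t(\u_{t+1}).
\end{align*}

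\textbf{Step 2: the remaining two terms.} The terms $(1-\beta)\sum_{t}\Reg_{t;\beta}(\u_t)$ and $\beta\Reg_{T;\beta}(\u_T)$ carry nonnegative coefficients, since $\beta\in(0,1]$. We can therefore upper bound them using the discounted bound above with $\u=\u_t$ and $\u=\u_T$, respectively.

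\textbf{Step 3: collect the $\Lambda$ terms.} Swapping the order of summation, the coefficient of $\Lambda_s$ is
\begin{align*}
(1-\beta)\sum_{t=s}^T\beta^t+\beta^{T+1}=\beta^s-\beta^{T+1}+\beta^{T+1}=\beta^s,
\end{align*}
by telescoping. This gives exactly $\sum_t\beta^t\Lambda_t$.

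\textbf{Step 4: collect the $\varphi$ terms.} Set $a_t=\beta^t\varphi_t(\u_t)$. The $\varphi$ contributions from Steps 1 and 2 are
\begin{align*}
\beta\sum_{t<T}a_t-\beta\sum_{t<T}\beta^t\varphi_t(\u_{t+1})+(1-\beta)\sum_{t\le T}a_t+\beta a_T .
\end{align*}
This simplifies to $\sum_{t=1}^T a_t-\beta\sum_{t<T}\beta^t\varphi_t(\u_{t+1})$. Splitting off $a_1=\beta\varphi_1(\u_1)$ and reindexing $\sum_{t=2}^Ta_t=\beta\sum_{t<T}\beta^t\varphi_{t+1}(\u_{t+1})$ yields
\begin{align*}
\beta\varphi_1(\u_1)+\beta\sum_{t<T}\beta^t\bigl(\varphi_{t+1}(\u_{t+1})-\varphi_t(\u_{t+1})\bigr),
\end{align*}
which matches the statement.

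The argument is essentially bookkeeping. The only delicate points are to keep the difference terms as exact identities rather than bounding them, and to get the reindexing in Step 4 right; I expect that last regrouping to be the main place where errors could creep in. Nonnegativity of $\Lambda_s$ and $\varphi_t$ is not really needed beyond the sign of the coefficients $1-\beta\ge0$ and $\beta>0$.
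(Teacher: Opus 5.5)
Your proposal is correct and follows the paper's proof essentially step for step. Both arguments start from the identity of Lemma~\ref{lemma:d2d-conversion}, keep the difference terms $\Reg_{t;\beta}(\u_t)-\Reg_{t;\beta}(\u_{t+1})$ exact, bound only the nonnegatively weighted terms $(1-\beta)\sum_t\Reg_{t;\beta}(\u_t)$ and $\beta\Reg_{T;\beta}(\u_T)$ via the rescaled assumption, and telescope the $\Lambda$ coefficients to $\beta^s$.

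The only difference is cosmetic. You insert $\pm\beta^t\varphi_t$ inside the difference terms and collect all $\varphi$ coefficients at once. The paper instead telescopes $(1-\beta)\sum_t\beta^t\varphi_t(\u_t)$ and cancels the leftover $-\beta^{T+1}\varphi_T(\u_T)$ and $-\beta^{T+1}\sum_s\Lambda_s$ against the bound on $\beta\Reg_{T;\beta}(\u_T)$. Your telescoping form of the $\Lambda$ coefficient also avoids the paper's division by $1-\beta$, so the case $\beta=1$ is covered cleanly.
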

Theorem~\ref{thm:d2d-reduction} is inspired by the work of~\citet{ICML'24:adam-ftrl-ahn,ICML'24:discounting-olr}.
The term $\sum_{t=1}^T \beta^t \Lambda_t$ aggregates stability contributions and captures how much the iterates change over time (e.g., between $\x_{t-1}$ and $\x_t$).
In many analyses, $\Lambda_t$ takes the form $\Lambda_t \approx \|\beta^{-t}\nabla f_t(\x_t)\|_{t,*}^2$, where $\|\cdot\|_t$ is a time-varying norm and $\|\cdot\|_{t,*}$ is its dual.

The quantity $\varphi_t(\cdot)$ is analysis-specific and need not coincide with the regularizer used by the algorithm.
For online regression, one can take $\varphi_t(\u)=\frac{\lambda}{2}\|\u\|_2^2$, which is time-independent; hence the term $\varphi_{t+1}(\u_{t+1})-\varphi_t(\u_{t+1})$ vanishes.
For discounted FTRL, however, $\varphi_t$ typically varies with $t$, and we need to control the resulting difference term to ensure it does not affect the optimal rate.

To understand the term
$\beta \sum_{t=1}^{T-1}\bigl(F_t^{\beta,\varphi}(\u_{t+1})-F_t^{\beta,\varphi}(\u_t)\bigr)$,
for simplicity, we assume $\varphi_t(\u) = \varphi(\u)$ is time-independent and define $f_0(\u) = \varphi(\u)$.
Then this term can be bounded by $\frac{\beta}{1-\beta} P_T^{\beta}$, where
\begin{align}
    \label{eq:def-P-T-beta}
    P_T^{\beta}=\sum_{t=1}^{T-1}\sum_{s=0}^t p^{\beta}_{t,s}\bigl[f_s(\u_{t+1})-f_s(\u_t)\bigr]_+,
\end{align}
and $p^{\beta}_{t,s}=\frac{\beta^{t-s}}{\sum_{\tau=0}^t \beta^{t-\tau}}$ is a normalized geometric weight.
The quantity $P_T^{\beta}$ measures comparator variation through loss differences, while the standard path length
$P_T=\sum_{t=1}^{T-1}\|\u_{t+1}-\u_t\|_2$~\citep{ICML'03:zinkvich} measures variation in decision space.
Moreover, if each $f_s$ is $G$-Lipschitz with respect to a norm $\|\cdot\|$ (i.e., $\|\nabla f_s(\u)\|_*\le G$), then
$P_T^\beta \le G\sum_{t=1}^{T-1}\|\u_{t+1}-\u_t\|$,
where $\|\cdot\|$ can be chosen to match the problem geometry.
As shown by~\citet{ICML'24:discounting-olr}, dynamic regret bounds in terms of $P_T^{\beta}$ can be minimax-optimal for unconstrained online linear regression problem. 

To summarize, Theorem~\ref{thm:d2d-reduction} provides a modular template to for deriving explicit dynamic regret guarantees. The next two sections present self-contained applications to further demonstrate the effectiveness of this analysis.

\section{Dynamic Regret of Curved Losses}
\label{sec:curved-loss}
We consider the following online regression protocol:

\begin{algorithmic}[1]
  \FOR{$t = 1, \ldots, T$}
    \STATE the learner receives a feature vector $\z_t \in \Z \subseteq \R^d$;
    \STATE the learner submits a decision $\x_t \in \X \subseteq \R^d$;
    \STATE the environments reveal a label $y_t \in \mathcal{Y} \subseteq \R$;
    \STATE the learner suffers a loss $f_t(\x_t) = \ell(\x_t^\top \z_t, y_t)$.
  \ENDFOR
\end{algorithmic}

Here $\ell(\cdot,\cdot): \R \times \R \rightarrow \R$ is the loss function. We call the protocol improper since the learner chooses $\x_t$ after seeing $\z_t$, using predictions that may not be linear in the instances $\z_t$. We note that this is a mild assumption in many machine learning problems~\citep{book'19:FO-book}. In the following, we instantiate this protocol to study the dynamic regret for two representative curved losses: squared loss (Section~\ref{subsec:olinr}) and logistic loss (Section~\ref{subsec:ologr}). Proofs are deferred to Appendix~\ref{appendix:curved-loss}.
\subsection{Online Linear Regression}
\label{subsec:olinr}
We first study unconstrained online linear regression with squared loss $\ell(\hat y,y)=\tfrac{1}{2}(\hat y-y)^2$, where $\X=\R^d$, $\Z=\R^d$, and $\Y=\R$ in the protocol. At round $t$, the learner predicts $\hat y_t=\x_t^\top \z_t$ and incurs $f_t(\x_t)=\tfrac{1}{2}(\x_t^\top \z_t-y_t)^2$.

On bounded domains, the squared loss is exp-concave and second-order methods such as ONS can achieve logarithmic static regret~\citep{book'16:Hazan-OCO,journals/ml/HazanAK07}. In our setting $\X=\R^d$, such uniform boundedness (and hence global exp-concavity) is not available in general.
However, once we can leverage the side information of the feature $\z_t$, the VAW forecaster~\citep{isr'01:vovk-olr,MLJ'01:relative-loss-bounds} achieves logarithmic regret even when $\X=\R^d$. Its update can be written as the FTRL solution:
\begin{align*}
    \x_{t} = \argmin_{\x \in \R^d} \Big\{ \frac{\lambda}{2}\norm{\x}_2^2 + \frac{1}{2}(\x^\top \z_t)^2 + \frac{1}{2}\sum_{s=1}^{t-1}(\x^\top \z_s - y_s)^2  \Big\}.
\end{align*}
The VAW forecaster ensures the following guarantee.
\begin{myLemma}
    \label{lemma:vaw-static-regret}
    In unconstrained online linear regression, the VAW forecaster ensures the following bound for any $t \in [T]$:
    \begin{align*}
        \sum_{s=1}^t f_s(\x_s) - f_s(\u) \leq \frac{\lambda}{2}\norm{\u}_2^2 + \frac{1}{2}\sum_{s=1}^{t-1} y_s^2 \z_s^\top A_s^{-1} \z_s,
    \end{align*}
    for any $\u\in \R^d$, where $A_t = \lambda I + \sum_{s=1}^t \z_s\z_s^\top$ and $I \in \R^{d\times d}$ is the identity matrix.
\end{myLemma}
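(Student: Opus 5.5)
The plan is the standard potential (``forward-regret'') analysis of the VAW forecaster, organized to line up with the index range in the statement. Write $b_t = \sum_{s=1}^{t} y_s \z_s$ and $L_t(\x) = \frac{\lambda}{2}\norm{\x}_2^2 + \frac12\sum_{s=1}^{t}(\x^\top\z_s - y_s)^2$. The first-order condition of the FTRL objective gives the closed form $\x_t = A_t^{-1} b_{t-1}$, and $\min_\x L_t(\x) = \frac12\sum_{s\le t} y_s^2 - \frac12 b_t^\top A_t^{-1} b_t$. Since $L_t(\u) \ge \min L_t$, it suffices to show
\begin{align*}
\sum_{s=1}^t f_s(\x_s) - \min_\x L_t(\x) \le \frac12 \sum_{s=1}^{t-1} y_s^2 \z_s^\top A_s^{-1}\z_s .
\end{align*}
For this I would telescope $\min L_s - \min L_{s-1}$ over $s = 1,\dots,t$, with $\min L_0 = 0$ and $A_0 = \lambda I$, and compare each increment with $f_s(\x_s)$.

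The per-round algebra goes as follows. First write $b_s = b_{s-1} + y_s\z_s$. Then apply Sherman--Morrison, $A_s^{-1} = A_{s-1}^{-1} - \frac{A_{s-1}^{-1}\z_s\z_s^\top A_{s-1}^{-1}}{1+\z_s^\top A_{s-1}^{-1}\z_s}$. Finally use the identity $\z_s^\top A_s^{-1} b_{s-1} = \hat y_s$, which holds because $\x_s = A_s^{-1}b_{s-1}$. Expanding, the gap $f_s(\x_s) - (\min L_s - \min L_{s-1})$ splits into two pieces. The first is $\frac12 y_s^2\z_s^\top A_s^{-1}\z_s$. The second is a term of the form $-\frac12\hat y_s^2\,\z_s^\top A_{s-1}^{-1}\z_s$ (up to the exact Sherman--Morrison factor), which is nonpositive. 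Dropping the nonpositive terms would give the bound with the sum running up to $t$.

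To reach the index range $s \le t-1$, I would try to keep the last round $s = t$ exact rather than bounding it. Concretely, I would compare $\sum_{s\le t} f_s(\x_s)$ against $L_t(\u)$ directly, and use the quadratic structure in $\u$ to cancel the last-round term $\frac12 y_t^2\z_t^\top A_t^{-1}\z_t$ against the curvature term $\frac12(\u^\top\z_t)^2$ that the forecaster already includes in its objective at time $t$. This cancellation is the main obstacle. A direct check at $t = 1$ is worth doing first: there $\x_1 = 0$, and the worst case over $\u = c\,\z_1$ leaves a residual of $\frac{y_1^2\norm{\z_1}^2}{2(\lambda + \norm{\z_1}^2)} = \frac12 y_1^2\z_1^\top A_1^{-1}\z_1$, which is exactly the last-round term.

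So before finalizing the plan, I would verify whether the last-round term can really be absorbed. If it cannot, the argument delivers the bound with the final term $\frac12 y_t^2\z_t^\top A_t^{-1}\z_t$ included. That version still suffices for use in Theorem~\ref{thm:d2d-reduction}: the stability terms $\Lambda_s$ are only summed, so the extra term merely shifts the index range of the stability sum.
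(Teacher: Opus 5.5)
The step you single out as the main obstacle cannot be carried out. That step is absorbing the last-round term $\frac12 y_t^2\z_t^\top A_t^{-1}\z_t$ into the curvature term $\frac12(\u^\top\z_t)^2$. The inequality with $\sum_{s=1}^{t-1}$ is false as written, and your own $t=1$ check is already a counterexample, not just a sanity check. At $t=1$ the right-hand side is only $\frac{\lambda}{2}\norm{\u}_2^2$. Yet $\u = y_1\z_1/(\lambda+\norm{\z_1}_2^2)$ leaves a positive excess $\frac{y_1^2\norm{\z_1}_2^2}{2(\lambda+\norm{\z_1}_2^2)}$.

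More generally, your telescoping is an exact identity, and $L_t(\u)\ge \min L_t$ is tight at $\u=A_t^{-1}b_t$. Hence
\begin{align*}
\sup_{\u}\Big\{\sum_{s=1}^t \big(f_s(\x_s)-f_s(\u)\big) - \frac{\lambda}{2}\norm{\u}_2^2\Big\} = \frac12\sum_{s=1}^t y_s^2\z_s^\top A_s^{-1}\z_s - \frac12\sum_{s=1}^t \hat y_s^2\,\z_s^\top A_{s-1}^{-1}\z_s .
\end{align*}
The curvature term $\frac12(\x^\top\z_t)^2$ in the forecaster's objective has already been used up. It is exactly what makes the stability term involve $A_s$ rather than $A_{s-1}$, so nothing is left to cancel against. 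Take $y_1=\dots=y_{t-1}=0$ with $y_t\neq 0$ and $\z_t\neq 0$. Then $b_{s-1}=0$, so $\hat y_s=0$ for all $s\le t$. The excess equals $\frac12 y_t^2\z_t^\top A_t^{-1}\z_t>0$, while $\frac12\sum_{s=1}^{t-1}y_s^2\z_s^\top A_s^{-1}\z_s=0$. So the stated bound fails for every $t$, not only $t=1$.

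You should therefore not hedge. The upper index $t-1$ is a typo, and the correct lemma is the one your argument proves, with $\frac12\sum_{s=1}^{t}y_s^2\z_s^\top A_s^{-1}\z_s$. Your derivation of that version is the standard Vovk--Azoury--Warmuth analysis and is correct: the closed form $\x_t=A_t^{-1}b_{t-1}$, Sherman--Morrison, the per-round identity, and dropping the nonpositive $-\frac12\hat y_s^2\z_s^\top A_{s-1}^{-1}\z_s$.

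The paper gives no separate proof of this lemma; it quotes it as the classical VAW guarantee. Where it is actually used, it is invoked with the sum over $s=1,\dots,t$, which is exactly the version you obtain. Those uses are the rescaled bound in Section~\ref{subsec:olinr} and the proof of Theorem~\ref{thm:olinr}. That version feeds $\Lambda_t$ into Theorem~\ref{thm:d2d-reduction}, just as you anticipate.
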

Lemma~\ref{lemma:vaw-static-regret} is stated for a fixed comparator. In non-stationary settings, however, the best predictor may drift over time, so we evaluate performance using dynamic regret~\eqref{eq:def-dynamic-regret} and analyze it through the modular D2D reduction in Section~\ref{sec:d2d-reduction}.

Following~\citet{ICML'24:discounting-olr}, we consider a discounted variant of VAW with factor $\beta\in(0,1]$, which emphasizes recent data. At round $t$, it chooses the decision by
\begin{align*}
\x_t = \argmin_{\x\in\R^d}\Big\{\tfrac{\lambda\beta^t}{2}\|\x\|_2^2+\tfrac12(\x^\top \z_t)^2+\tfrac12\sum_{s=1}^{t-1}\beta^{t-s}(\x^\top \z_s-y_s)^2\Big\}.
\end{align*}
Using the rescaling trick~\citep{ICML'24:discounted-adaptive-zhang}, define $\tilde{\z}_t=\beta^{-t/2}\z_t$ and $\tilde{y}_t=\beta^{-t/2}y_t$, so that
$\ell(\x^\top \tilde{\z}_t,\tilde{y}_t)=\beta^{-t}\ell(\x^\top \z_t,y_t)$.
Applying Lemma~\ref{lemma:vaw-static-regret} to the rescaled sequence yields a regret bound of the form for any $t\in[T]$:
\begin{align*}
\sum_{s=1}^t \beta^{-s}\Big(\ell(\x_s^\top \z_s,y_s)-\ell(\u^\top \z_s,y_s)\Big)
\le \frac{\lambda}{2}\|\u\|_2^2 + \sum_{s=1}^t \beta^{-2s}y_s^2\z_s^\top(\lambda I+\sum_{\tau=1}^s \beta^{-\tau}\z_\tau\z_\tau^\top)^{-1}\z_s,
\end{align*}
Thus the above expression matches the condition of Theorem~\ref{thm:d2d-reduction} with $\varphi_t(\u)=\tfrac{\lambda}{2}\|\u\|_2^2$ and $\Lambda_t = \beta^{-2t}y_t^2\z_t^\top(\lambda I+\sum_{s=1}^t \beta^{-s}\z_s\z_s^\top)^{-1}\z_t$, which yields the following dynamic regret guarantee.
\begin{myThm}
    \label{thm:olinr}
    In unconstrained online linear regression, the discounted VAW forecaster satisfies the following dynamic regret bound: for any $\u_1,\dots,\u_T \in \R^d$,
    \begin{align*}
        \frac{\beta \lambda}{2}\norm{\u_1}_2^2 + \frac{d}{2} \big(\max_{t\in[T]}y_t^2 \big)\cdot \ln\left(1 + \frac{\sum_{t=1}^T \beta^{T-t}\norm{\z_t}_2^2}{\lambda d}\right) +\frac{\beta}{1-\beta}\cdot P_T^{\beta} + \frac{1-\beta}{\beta}\cdot \frac{d}{2}\sum_{t=1}^T y_t^2,
    \end{align*}
    where $\beta\in(0,1)$, $\lambda>0$, and $P_T^{\beta}$ is defined in Eq.~\eqref{eq:def-P-T-beta}. Moreover, there exists a two-layer ensemble algorithm that ensures an $\O\big(d\log T +\sqrt{dT P_T^{\beta}}\big)$ dynamic regret.
\end{myThm}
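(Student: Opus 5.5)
We plan to apply Theorem~\ref{thm:d2d-reduction} directly, using the rescaled VAW bound displayed just before the statement. There $\varphi_t(\u)=\frac{\lambda}{2}\|\u\|_2^2$ is time-independent, so the last sum in Theorem~\ref{thm:d2d-reduction} vanishes. The term $\beta\varphi_1(\u_1)=\frac{\beta\lambda}{2}\|\u_1\|_2^2$ appears verbatim. This leaves two tasks: bound the comparator-variation sum by $\frac{\beta}{1-\beta}P_T^\beta$, and bound the stability sum $\sum_t\beta^t\Lambda_t$ by the logarithmic term plus $\frac{1-\beta}{\beta}\frac d2\sum_t y_t^2$.

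\textbf{Comparator-variation term.} Set $f_0=\varphi$ and write $F_t^{\beta,\varphi}(\u)=\sum_{s=0}^t\beta^{t-s}f_s(\u)$. Then
\[
F_t^{\beta,\varphi}(\u_{t+1})-F_t^{\beta,\varphi}(\u_t)\le \Big(\sum_{\tau=0}^t\beta^{t-\tau}\Big)\sum_{s=0}^t p^\beta_{t,s}\bigl[f_s(\u_{t+1})-f_s(\u_t)\bigr]_+ .
\]
Since $\sum_{\tau=0}^t\beta^{t-\tau}\le \frac{1}{1-\beta}$, summing over $t$ and multiplying by $\beta$ gives $\frac{\beta}{1-\beta}P_T^\beta$.

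\textbf{Stability term.} Let $M_t=\beta^t\lambda I+\sum_{s\le t}\beta^{t-s}\z_s\z_s^\top$. Factoring $\beta^{-t}$ out of the inverse gives
\[
\beta^t\Lambda_t=y_t^2\,\z_t^\top M_t^{-1}\z_t\le \max_s y_s^2\cdot \z_t^\top M_t^{-1}\z_t .
\]
The discounting breaks the usual elliptical-potential telescoping, because $M_t=\beta M_{t-1}+\z_t\z_t^\top$ rather than $M_{t-1}+\z_t\z_t^\top$. We handle this with the standard estimate
\[
\z_t^\top M_t^{-1}\z_t\le \ln\frac{\det M_t}{\det(\beta M_{t-1})}=\ln\frac{\det M_t}{\det M_{t-1}}+d\ln\frac1\beta .
\]
Summing, with $M_0=\lambda I$, gives
\[
\sum_t \z_t^\top M_t^{-1}\z_t\le \ln\frac{\det M_T}{\lambda^d}+Td\ln\frac1\beta .
\]
By AM--GM on the eigenvalues,
\[
\ln\frac{\det M_T}{\lambda^d}\le d\ln\Bigl(\beta^T+\frac{\sum_t\beta^{T-t}\|\z_t\|_2^2}{\lambda d}\Bigr)\le d\ln\Bigl(1+\frac{\sum_t\beta^{T-t}\|\z_t\|_2^2}{\lambda d}\Bigr).
\]
Finally, $\ln\frac1\beta\le\frac{1-\beta}{\beta}$.

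\textbf{Main obstacle: weighting and constants.} The difficulty is that the extra $d\ln\frac1\beta$ penalty should be weighted by $y_t^2$ per round, not by $\max_s y_s^2$, to match the stated $\frac{1-\beta}{\beta}\frac d2\sum_t y_t^2$. To fix this, we split each round's term as
\[
y_t^2\Big(\ln\tfrac{\det M_t}{\det M_{t-1}}+d\ln\tfrac1\beta\Big).
\]
We bound the first piece by $\max_s y_s^2$ times the telescoping sum; this is valid provided $\ln\frac{\det M_t}{\det M_{t-1}}\ge0$. When that fails, we put the negative part into the $d\ln\frac1\beta$ piece, which still works because the sum of the two pieces is nonnegative. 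This bookkeeping is the delicate step.

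The factor $\frac12$ should come from the precise VAW bound. Lemma~\ref{lemma:vaw-static-regret} carries a $\frac12$ in front of the stability sum that the displayed rescaled bound drops, so we will keep it. Together these give the factors $\frac d2$ in the statement.

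\textbf{Ensemble result.} For the two-layer claim we follow the standard recipe. We run discounted VAW experts over a geometric grid of $\beta$ values with $1-\beta\in[1/T,1]$. The meta-learner is an exp-concave aggregator, such as exponential weights on the mixable squared loss with clipped predictions, which costs $O(\log\log T)$ or $O(d\log T)$ regret. The expert bound has the form $d\log T+\frac{P}{1-\beta}+(1-\beta)dT$ with $P=P_T^\beta$ (up to $\max_t y_t^2$ factors). Choosing $1-\beta\approx\sqrt{P/(dT)}$ balances the last two terms at $\sqrt{dTP}$, and the grid approximates this choice within a constant factor. The dependence of $P_T^\beta$ on $\beta$ needs care here; we will use monotonicity or boundedness of $P_T^\beta$ in $\beta$ for the selected grid point.
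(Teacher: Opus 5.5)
\textbf{The gap is the step you flag as delicate, and it cannot be repaired: the weighting $d\ln\frac1\beta\sum_t y_t^2$ is not achievable.}

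Write $a_t=\ln\frac{\det M_t}{\det M_{t-1}}$ and $b=d\ln\frac1\beta$. Then $a_t+b\ge 0$, but $a_t$ can be as low as $-b$; this happens whenever $\z_t=0$, since then $M_t=\beta M_{t-1}$.

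Your bookkeeping needs $\sum_t y_t^2 a_t\le(\max_s y_s^2)\sum_t a_t$. Discarding the negative $a_t$'s only gives $(\max_s y_s^2)\sum_t a_t^+=(\max_s y_s^2)\bigl(\ln\frac{\det M_T}{\lambda^d}+\sum_t a_t^-\bigr)$. The deficit $\sum_t a_t^-$ can be as large as $Tb$, and it must be charged at weight $\max_s y_s^2$. The budget $y_t^2 b$ of the rounds where $a_t<0$ cannot absorb it, because $y_t$ may be $0$ on exactly those rounds.

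In fact the first claim of the theorem, as stated, is false. Take $d=1$, $\lambda=1$, $\beta=0.99$. Set $\z_t=y_t=1$ at $t=500k$ for $k=1,\dots,100$, set $\z_t=y_t=0$ at all other rounds, and let $\u_t\equiv 1$.
\begin{itemize}
\item Since $\beta^{500}<0.007$, the discounted VAW predicts $\x_t<0.007$ at every spike and loses more than $0.49$ there. The comparator loses nothing, and every other round is free for both. So the dynamic regret exceeds $49$.
\item On the other side, $P_T^\beta=0$, $\sum_t\beta^{T-t}\|\z_t\|_2^2<1.01$ and $\sum_t y_t^2=100$. The theorem's right-hand side is therefore below $1.4$.
\item The same data give $\sum_t y_t^2\z_t^\top M_t^{-1}\z_t\approx 99$. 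The right-hand side of Lemma~\ref{lemma:discounted-potential} with $c_t=y_t$ is only about $1.7$.
\end{itemize}
The gap rounds shrink $\det M_t$ at no cost under your weighting, and that recharged potential is spent at the spikes.

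\textbf{How this relates to the paper.} The paper does not carry out this bookkeeping. It invokes Lemma~\ref{lemma:discounted-potential}, which asserts exactly the $d\ln\frac1\beta\sum_t c_t^2$ weighting, and the example above shows that lemma fails as stated here. So the paper's proof has the same hole.

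\textbf{What you can actually prove.} Your determinant telescoping, followed by AM--GM, gives $\sum_t y_t^2\z_t^\top M_t^{-1}\z_t\le(\max_t y_t^2)\bigl[d\ln\bigl(1+\frac{\sum_t\beta^{T-t}\|\z_t\|_2^2}{\lambda d}\bigr)+Td\ln\frac1\beta\bigr]$. This is the theorem with $\frac{1-\beta}{\beta}\cdot\frac d2\,T\max_t y_t^2$ in place of $\frac{1-\beta}{\beta}\cdot\frac d2\sum_t y_t^2$.

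That corrected form is what your ensemble sketch already uses, through its $(1-\beta)dT$ term. So the $\O(d\log T+\sqrt{dTP_T^{\beta}})$ claim survives, with $\max_t y_t^2$ treated as a constant. For the grid argument you need more than your $\gamma=\beta$ path computation: you need the comparison $\beta\sum_t\bigl(F_t^{\beta,\varphi}(\u_{t+1})-F_t^{\beta,\varphi}(\u_t)\bigr)\le\frac{\gamma}{1-\gamma}P_T^\gamma$ for $\beta\le\gamma$, which is Lemma~\ref{lemma:path-length-changing-beta}.

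The rest of your plan is correct and follows the paper's route:
\begin{itemize}
\item the instantiation of Theorem~\ref{thm:d2d-reduction};
\item the direct bound on the comparator-variation term;
\item the estimate $\z_t^\top M_t^{-1}\z_t\le\ln\frac{\det M_t}{\det(\beta M_{t-1})}$ and the AM--GM step;
\item keeping the $\frac12$ that the paper's displayed rescaled bound drops.
\end{itemize}
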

Theorem~\ref{thm:olinr} matches the optimal rate of~\citet{ICML'24:discounting-olr}. Our derivation is more direct: it applies the modular D2D reduction to a rescaled regret template, avoiding the extra step of rewriting the update as OMD and the accompanying Bregman-divergence arguments~\citep{ICML'24:discounting-olr}. Compared to discounted online newton step~(ONS) analyses~\citep{journals/ml/HazanAK07, AAAI'20:ons-dynamic-Jianjun}, our result does not rely on exp-concavity and allows the unconstrained decisions. By leveraging the two-layer algorithm proposed in~\citet{ICML'24:discounting-olr}, we can obtain an $\O(d\log T + \sqrt{dTP_T^{\beta}})$ bound, matching the lower bound for this unconstrained setting.

\subsection{Online Logistic Regression}
\label{subsec:ologr}
We next study online logistic regression with $\ell(\hat y,y)=\ln\bigl(1+\exp(-y\hat y)\bigr)$.
At round $t$, the learner predicts $\hat y_t=\x_t^\top \z_t$ and incurs $f_t(\x_t)=\ln\bigl(1+\exp(-y_t\x_t^\top \z_t)\bigr)$.
We assume binary labels $\Y=\{+1,-1\}$ and bounded features $\Z=\{\z\in\R^d:\|\z\|_2\le R\}$.
The learner may choose $\x_t\in\R^d$, while the comparator sequence satisfies $\|\u_t\|_2\le B$ for all $t$. We follow the conventions to make the boundedness assumptions in online logistic regression~\citep{journals/ml/HazanAK07, COLT'18:improper-LR, COLT'20:improper-LR}.

On the bounded comparator set $\|\u\|_2\le B$, the logistic loss is exp-concave, so applying ONS gives a static regret bound $\O(de^B\log T)$: logarithmic in $T$ but exponential in $B$. Such an exponential dependence on $B$ is generally unavoidable for proper OCO algorithms~\citep{COLT'14:lower-bound-logistic-regression}. However,~\citet{COLT'18:improper-LR} show the importance of improperness, and one can reduce the dependence on $B$ from $e^B$ to $B$ with an efficient method~\citep{COLT'20:improper-LR}.

To handle the potential drift of the regression target, we further aim to efficiently obtain dynamic regret guarantees while still avoiding the exponential dependence on $B$. To this end, we propose a discounted variant of the AIOLI algorithm~\citep{COLT'20:improper-LR}, formalized as:
\begin{align*}
    \x_t = \argmin_{\x \in \R^d}\bigg\{\frac{\beta^t \lambda}{2} \norm{\x}
    _2^2 + h_t(\x) + \sum_{s=1}^{t-1} \beta^{t-s} \hat{f}_s(\x)\bigg\},
\end{align*}
where $h_t(\x)=\ell(\x^\top \z_t,+1)+\ell(\x^\top \z_t,-1)$ uses the observed feature $\z_t$ and can be viewed as an optimism term~\citep{COLT'12:variation-Yang,conf/colt/RakhlinS13}, realized as a guess of the incoming loss function. Here $\hat f_t(\x)$ is a surrogate of $f_t(\x)$ given by $\hat f_t(\x)
= f_t(\x_t) + \langle \nabla f_t(\x_t), \x-\x_t\rangle
+ \frac{\eta_t}{2}\langle \nabla f_t(\x_t), \x-\x_t\rangle^2$, with $\eta_t=\frac{\exp(y_t \hat y_t)}{1+BR}$.

In Lemma~\ref{lemma:discounted-aioli-rescaled-regret}, we establish the anytime guarantees of rescaled regret from the view of optimistic FTRL~\citep{book'19:FO-book}, satisfying the condition of Theorem~\ref{thm:d2d-reduction}. By applying the proposed modular analysis, we conclude the following result.
\begin{myThm}
    \label{thm:ologr}
    Assume $\|\z_t\|_2 \le R$ for all $t\in[T]$. For any comparator sequence $\u_1,\dots,\u_T$ satisfying $\|\u_t\|_2 \le B$, the dynamic regret bound of the discounted AIOLI algorithm with parameters is:
    \begin{align*}
         \beta \lambda\norm{\u_1}_2^2 + d(1+BR)\log\left(1 + \frac{R^2 \sum_{t=1}^T \beta^{T-t}}{d \lambda (1+BR)}\right) + \frac{\beta}{1-\beta} \cdot P_T^{\beta} + \frac{1-\beta}{\beta} \cdot d(1+BR)T,
    \end{align*}
    where $\beta\in(0,1)$, $\lambda>0$, and $P_T^{\beta}$ is defined in Eq.~\eqref{eq:def-P-T-beta}.
\end{myThm}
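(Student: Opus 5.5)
The proof follows the same route as Theorem~\ref{thm:olinr}. We first establish a rescaled regret template of the form~\eqref{eq:rescaled-loss-assump} for discounted AIOLI, with $\varphi_t(\u)=\lambda\|\u\|_2^2$ time-independent, and then plug it into Theorem~\ref{thm:d2d-reduction}. With $\varphi_t$ time-independent, the last difference term of the theorem vanishes. The $F_t^{\beta,\varphi}$ term is bounded by $\frac{\beta}{1-\beta}P_T^\beta$ exactly as discussed after Theorem~\ref{thm:d2d-reduction}, setting $f_0=\varphi$. The term $\beta\varphi_1(\u_1)$ gives $\beta\lambda\|\u_1\|_2^2$. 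So everything reduces to the rescaled template (the role of Lemma~\ref{lemma:discounted-aioli-rescaled-regret}) and to bounding $\sum_t\beta^t\Lambda_t$.

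\textbf{Step 1: rescaled template.} We rescale by $\beta^{-t}$: dividing the objective by $\beta^t$ shows that $\x_t$ is the optimistic FTRL iterate on rescaled surrogates $\beta^{-s}\hat f_s$, with hint $\beta^{-t}h_t$ and regularizer $\frac{\lambda}{2}\|\x\|_2^2$. We then apply the AIOLI argument of \citet{COLT'20:improper-LR} term by term.
\begin{itemize}
\item First, on $\|\u\|_2\le B$ and $|\x_t^\top\z_t|$ arbitrary, we have $f_s(\u)\ge \hat f_s(\u)$ with the choice $\eta_s=e^{y_s\hat y_s}/(1+BR)$. This is the improper exp-concavity-type inequality.
\item Second, the optimistic FTRL bound controls $\sum_s \beta^{-s}(\hat f_s(\x_s)-\hat f_s(\u))$ by $\frac{\lambda}{2}\|\u\|^2$, the quadratic part of the regularizer, plus stability terms.
\item Third, the key AIOLI cancellation: the hint $h_t$ is chosen so that the prediction-dependent part of $\hat f_t(\x_t)$ minus the hint is controlled. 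This yields stability terms $\Lambda_s\lesssim \beta^{-2s}\eta_s\|\nabla f_s(\x_s)\|^2_{A_s^{-1}}$, where $A_s=\lambda I+\sum_{\tau\le s}\beta^{-\tau}\eta_\tau\nabla f_\tau\nabla f_\tau^\top$.
\end{itemize}
The overall factor of $2$ (hence $\lambda$ rather than $\lambda/2$, and $d(1+BR)$) comes from the AIOLI constants. The rescaling is harmless because every inequality used is homogeneous in the weights $\beta^{-s}$.

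\textbf{Step 2: stability sum.} Writing $\g_s=\sqrt{\eta_s}\nabla f_s(\x_s)$, we get $\eta_s\|\nabla f_s\|^2\le(1+BR)^{-1}\cdot$const, since $|\ell'|^2e^{y\hat y}\le 1$ for the logistic loss. This gives $\beta^t\Lambda_t\approx(1+BR)\,\beta^{-t}\tilde\g_t^\top A_t^{-1}\tilde\g_t$ in normalized form. We then mimic the linear regression proof. Writing $\beta^{t}\cdot\beta^{-t}=1$ and using $A_t\succeq\beta^{-t}\cdot(\text{discounted matrix }D_t)$, the sum becomes $\sum_t \g_t^\top D_t^{-1}\g_t$ with $D_t=\beta^t\lambda I+\sum_{s\le t}\beta^{t-s}\g_s\g_s^\top$. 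The standard discounted elliptical-potential argument, $\g_t^\top D_t^{-1}\g_t\le \ln\det D_t-\ln\det(\beta D_{t-1})$, telescopes to
\[
\ln\frac{\det D_T}{\det(\beta^T\lambda I)}+d\,T\ln(1/\beta).
\]
The first term is at most $d\ln\bigl(1+\frac{R^2\sum_t\beta^{T-t}}{d\lambda(1+BR)}\bigr)$ by AM--GM on eigenvalues, using $\|\g_t\|^2\le R^2/(1+BR)$ up to constants. The second term is at most $dT\frac{1-\beta}{\beta}$. Multiplying by $(1+BR)$ gives the second and fourth terms of the bound.

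\textbf{Main obstacle.} The hard part is Step~1: verifying that the optimistic hint $h_t$ survives rescaling and discounting, so that the stability term involves only $\eta_t\|\nabla f_t(\x_t)\|^2_{A_t^{-1}}$ with no $e^{B}$ factor. I would first try to reproduce the AIOLI lemma bounding $f_t(\x_t)-h_t$-based terms pointwise in $\hat y_t$, and check that it is scale-compatible with the $\beta^{-t}$ weights. A secondary subtlety is the unconstrained $\x_t$ versus constrained $\u_t$. This is fine because Theorem~\ref{thm:d2d-reduction} only needs the template for comparators in the relevant set, and $P_T^\beta$ is evaluated at the $\u_t$.
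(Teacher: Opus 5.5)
Your outer architecture is the paper's: a rescaled template for discounted AIOLI (Lemma~\ref{lemma:discounted-aioli-rescaled-regret}), then Theorem~\ref{thm:d2d-reduction} with time-independent $\varphi$, the discounted log-det potential for $\sum_t\beta^t\Lambda_t$, and Lemma~\ref{lemma:path-length-changing-beta} for the path term.

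The gap is the load-bearing step: the template with stability term exactly $\Lambda_s=(1+BR)\beta^{-2s}\eta_s\g_s^\top A_s^{-1}\g_s$. You only announce it (``the key AIOLI cancellation''). The mechanism you plan to look for, a pointwise bound on $f_t(\x_t)$ versus $h_t$ in $\hat y_t$, is not what makes it work. The cancellation happens at the level of gradients, through first-order optimality. Concretely, it goes in four steps.
\begin{enumerate}
\item The hint-free objective $F_{s+1}(\x)=\frac{\lambda}{2}\norm{\x}_2^2+\sum_{r\le s}\beta^{-r}\hat f_r(\x)$ is quadratic with Hessian $A_s$. So the optimistic-FTRL stability term equals $\frac12\norm{\nabla F_{s+1}(\x_s)}_{A_s^{-1}}^2-\frac12\norm{\nabla F_s(\x_s)}_{A_{s-1}^{-1}}^2$.
\item Optimality of $\x_s$ for $F_s+\beta^{-s}h_s$ gives $\nabla F_s(\x_s)=-\beta^{-s}\nabla h_s(\x_s)$ and $\nabla F_{s+1}(\x_s)=\beta^{-s}(\g_s-\nabla h_s(\x_s))$.
\item With $A_{s-1}\preceq A_s$, the term is at most $\frac{\beta^{-2s}}{2}\inner{\g_s}{\g_s-2\nabla h_s(\x_s)}_{A_s^{-1}}$.
\item Finally, $\nabla h_s(\x_s)=\g_s+\g_s^{-y_s}$, where $\g_s^{-y_s}=-e^{y_s\x_s^\top\z_s}\g_s$ is the gradient for the opposite label. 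This turns the bound into $\beta^{-2s}e^{y_s\hat y_s}\norm{\g_s}_{A_s^{-1}}^2=(1+BR)\beta^{-2s}\eta_s\norm{\g_s}_{A_s^{-1}}^2$.
\end{enumerate}
This last identity is the whole point. It supplies the factor $\eta_s$ that matches the increments $\beta^{-s}\eta_s\g_s\g_s^\top$ of $A_s$, which your Step~2 potential argument needs. Without the hint you would get $\frac{\beta^{-2s}}{2}\norm{\g_s}_{A_s^{-1}}^2$. Converting that to the potential costs a factor $1/\eta_s=(1+BR)e^{-y_s\hat y_s}$, which is already of order $e^{BR}$ for proper predictions and unbounded for improper ones.

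Two smaller points. First, the constants are not a ``factor of $2$ from AIOLI''. The template holds with $\varphi_t(\u)=\frac{\lambda}{2}\norm{\u}_2^2$ (which is also the $f_0$ in $P_T^\beta$) and with coefficient exactly $(1+BR)$ in $\Lambda_s$. The stated $\beta\lambda\norm{\u_1}_2^2$ merely relaxes $\frac{\beta\lambda}{2}\norm{\u_1}_2^2$. By contrast, any hidden constant in your ``$\lesssim$'' would break the stated coefficients $d(1+BR)$.

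Second, your telescoping in Step~2 is off. With $D_0=\lambda I$ one gets $\sum_t\g_t^\top D_t^{-1}\g_t\le\ln\frac{\det D_T}{\det(\lambda I)}+dT\ln\frac{1}{\beta}$. Putting $\det(\beta^T\lambda I)$ in the denominator double counts $dT\ln(1/\beta)$. Its first term is then not bounded by $d\ln\bigl(1+\frac{R^2\sum_t\beta^{T-t}}{d\lambda(1+BR)}\bigr)$, because a factor $\beta^{-T}$ appears inside the logarithm. With the correct denominator, AM--GM on the eigenvalues gives the claimed term, which is exactly Lemma~\ref{lemma:discounted-potential} as the paper uses it.
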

The proof is deferred to Appendix~\ref{appendix:proof-ologr}. We remark discounted AIOLI updates are in $\R^d$ and do not require projection. The extra computational cost comes from implicit an update involving the optimism term $h_t(\x)$, which can be implemented with $\O(\log T)$ iterations~\citep{COLT'20:improper-LR}.

By leveraging the mixability of the logistic loss, we propose a two-layer online ensemble algorithm to tune the discount factor $\beta$~\citep{COLT'22:damped-ons-portfolio,JMLR'24:Sword++}, summarized in Algorithm~\ref{alg:meta}. The following theorem presents the corresponding theoretical guarantees of Algorithm~\ref{alg:meta}, and the proof and more related details are deferred to Appendix~\ref{subappendix:ensemble-online-logistic-regression}.

\begin{myThm}
    \label{thm:meta-learn-beta-logreg}
    Assume $\|\z_t\|_2\le R$ for all $t\in[T]$ and define the convex loss
    $f_t(\x)=\ell(\x^\top \z_t,y_t)$, where $\ell(\hat y,y)=\ln(1+\exp(-y\hat y))$.
    Set $\lambda=1/B^2$.
    Let $C=\max\{1, 2R\}$ and define the parameter range $ \eta_{\min}=\sqrt{\frac{d(1+BR)}{CB}},\ \eta_{\max}=dT$,
    and the geometric grid $\{\eta_i=2^{i-1}\eta_{\min}:
    \beta_i=\frac{\eta_i}{1+\eta_i}\in(0,1), i\in[N]\}$,
    where $N=\left\lceil \log_2\Big(\frac{\eta_{\max}}{\eta_{\min}}\Big)\right\rceil+1$.
    Run Algorithm~\ref{alg:meta} with the discount factors pool $\{\beta_i = \frac{\eta_i}{1 + \eta_i}\}_{i=1}^N$.
    Denote $P_T^{\beta}
    =\sum_{t=1}^{T-1}\sum_{s=0}^{t} p_{t,s}^{\beta}\Big[f_s(\u_{t+1})-f_s(\u_t)\Big]_+$,
    with $f_{0}(\u) = \frac{\lambda}{2}\norm{\u}_2^2$ and let $\beta_\star\in(0,1]$ satisfy
    \begin{align*}
    \beta_{\star} = \frac{\sqrt{d(1+BR)T}}{\sqrt{d(1+BR)T} + \sqrt{P_T^{\beta_{\star}}}}.
    \end{align*}
    Then Algorithm~\ref{alg:meta} ensures that for any comparator sequence $\u_1,\dots,\u_T$ with
    $\|\u_t\|_2\le B$,
    \begin{align*}
    \sum_{t=1}^T \ell(\hat y_t,y_t) - \sum_{t=1}^T \ell(\u_t^\top \z_t,y_t)
    \;\le\;
    \O\Big(
    dB\log(BT)
    +\sqrt{dBTP_T^{\beta_\star}}
    \Big).
    \end{align*}
\end{myThm}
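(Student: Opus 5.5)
The plan is the standard two-layer ensemble argument: decompose the dynamic regret into a meta-regret term and a base-regret term, then tune over the grid. Write $\hat y_t$ for the meta prediction and $\hat y_t^{(i)}=\x_t^{(i)\top}\z_t$ for the prediction of the $i$-th base learner, which is discounted AIOLI run with $\beta_i$ and $\lambda=1/B^2$. For any index $i\in[N]$,
\begin{align*}
\sum_{t=1}^T \ell(\hat y_t,y_t)-\sum_{t=1}^T\ell(\u_t^\top\z_t,y_t)
=\underbrace{\sum_{t=1}^T \ell(\hat y_t,y_t)-\ell(\hat y^{(i)}_t,y_t)}_{\text{meta-regret}}
+\underbrace{\sum_{t=1}^T \ell(\hat y^{(i)}_t,y_t)-\ell(\u_t^\top\z_t,y_t)}_{\text{base-regret}}.
\end{align*}

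\emph{Meta-regret.} Because the logistic loss is $1$-mixable in the prediction $\hat y$, I assume Algorithm~\ref{alg:meta} aggregates the base predictions by exponential weights with the mixing (or the cheaper log-sum-exp style aggregation). This gives a meta-regret of at most $\ln N$ against every expert simultaneously, and the bound involves no $B$ or $R$ dependence. With $N=\O(\log(\eta_{\max}/\eta_{\min}))=\O(\log(dT))$, this term is $\O(\log\log(dT))$ and is absorbed into $\O(dB\log(BT))$.

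\emph{Base-regret.} Apply Theorem~\ref{thm:ologr} with $\lambda=1/B^2$. The first term is $\beta\lambda\|\u_1\|^2\le 1$. The log term is $\O(d(1+BR)\log(1+B^2R^2T/d))=\O(dB\log(BT))$, treating $R$ as a constant absorbed through $C$. Next, write $\beta=\eta/(1+\eta)$, so that $\beta/(1-\beta)=\eta$ and $(1-\beta)/\beta=1/\eta$. The remaining terms then become $\eta P_T^{\beta}+d(1+BR)T/\eta$. The target is $\eta_\star=\sqrt{d(1+BR)T/P_T^{\beta_\star}}$, which is exactly the fixed point defining $\beta_\star$ and yields $2\sqrt{d(1+BR)TP_T^{\beta_\star}}=\O(\sqrt{dBTP_T^{\beta_\star}})$.

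\emph{Grid.} There are two boundary cases. If $\eta_\star>\eta_{\max}=dT$, pick $i=N$. Then $d(1+BR)T/\eta_N\le 1+BR$, and I need $\eta_N P_T^{\beta_N}\lesssim$ the square-root term. If $\eta_\star<\eta_{\min}$, the path variation is large. Here I bound $P_T^\beta$ crudely using boundedness: $f_s(\u)\le \ln 2+BR$ and $f_0\le 1/2$, so $P_T^\beta\le \O(BR\,T)$ for every $\beta$. Picking $i=1$ gives $\eta_{\min}P_T+d(1+BR)T/\eta_{\min}$, and the choice $\eta_{\min}=\sqrt{d(1+BR)/(CB)}$ is designed precisely so this is $\O(\sqrt{dB}\,T)$, which is $\le\O(\sqrt{dBTP_T^{\beta_\star}})$ whenever $P_T^{\beta_\star}\gtrsim T$. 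In the interior case, pick $i$ with $\eta_i\le\eta_\star\le 2\eta_i$.

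\emph{Main obstacle.} The difficulty is that $P_T^{\beta}$ depends on $\beta$ itself, so the grid point's $P_T^{\beta_i}$ must be compared to $P_T^{\beta_\star}$. I would handle this with a monotonicity or ratio argument for the normalized geometric weights $p^\beta_{t,s}$: when $\beta_i\le\beta_\star\le\beta_{i+1}$, the ratio $p^{\beta_i}_{t,s}/p^{\beta_\star}_{t,s}$ should be controllable because $\eta$ changes only by a factor of $2$. Failing that, I would redefine the comparison through $\min_i$ over the grid and argue that the fixed-point $\beta_\star$ is bracketed. Making this comparison rigorous, together with matching the boundary constants to the stated $\eta_{\min}$ and $C$, is where I expect most of the work.
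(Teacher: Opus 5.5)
Your overall architecture is the same as the paper's: meta-regret at most $\ln N$ by $1$-mixability, base regret from Theorem~\ref{thm:ologr} with $\lambda=1/B^2$, and a doubling grid in $\eta=\beta/(1-\beta)$. There is a gap, though. The step you flag as the main obstacle, comparing $P_T^{\beta_i}$ with $P_T^{\beta_\star}$, is the only nontrivial step of the proof, and you leave it unresolved.

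The paper never passes through $P_T^{\beta_i}$. It keeps the untuned path term $\beta_i\sum_{t=1}^{T-1}\big(F_t^{\beta_i,\varphi}(\u_{t+1})-F_t^{\beta_i,\varphi}(\u_t)\big)$ from the reduction. It then applies Lemma~\ref{lemma:path-length-changing-beta}, which bounds that term by $\frac{\gamma}{1-\gamma}P_T^{\gamma}$ for \emph{every} $\gamma\in[\beta_i,1)$. Taking $\gamma=\beta_\star$ gives exactly $\eta_\star P_T^{\beta_\star}=\sqrt{d(1+BR)TP_T^{\beta_\star}}$. This comparison is one-sided, so the grid index must always be chosen by rounding down: $\eta_i\le\min\{\eta_\star,\eta_{\max}\}\le 2\eta_i$.

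Your ratio idea also closes the gap, in the same direction. For $\beta_i\le\beta_\star$ with $\eta_i\le\eta_\star\le 2\eta_i$,
\begin{align*}
\frac{p^{\beta_i}_{t,s}}{p^{\beta_\star}_{t,s}}
=\Big(\frac{\beta_i}{\beta_\star}\Big)^{t-s}\frac{\sum_{k=0}^{t}\beta_\star^{k}}{\sum_{k=0}^{t}\beta_i^{k}}
\le\frac{1-\beta_i}{1-\beta_\star}\cdot\frac{1-\beta_\star^{t+1}}{1-\beta_i^{t+1}}
\le\frac{1+\eta_\star}{1+\eta_i}\le 2,
\end{align*}
so $\eta_iP_T^{\beta_i}\le 2\eta_\star P_T^{\beta_\star}$. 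For $\beta_i>\beta_\star$ the factor $(\beta_i/\beta_\star)^{t-s}$ grows with $t-s$, and no such bound holds.

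The same round-down rule fixes your upper boundary case. If $\eta_\star>\eta_{\max}$, the choice $i=N$ can overshoot, since $\eta_N\in[\eta_{\max},2\eta_{\max})$ may exceed $\eta_\star$. Take instead the index with $\eta_{\max}/2\le\eta_i\le\eta_{\max}$. It exists and satisfies $\eta_i\le\eta_\star$, so the path term is at most $\eta_\star P_T^{\beta_\star}$ and $d(1+BR)T/\eta_i\le 2(1+BR)$.

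The case $\eta_\star<\eta_{\min}$ needs no separate treatment. The paper bounds $P_T^{\beta}\le CBT$ uniformly in $\beta$, using Lipschitzness of $f_s$ on the $B$-ball. Plugging this into the fixed-point identity $\eta_\star=\sqrt{d(1+BR)T/P_T^{\beta_\star}}$ gives $\eta_\star\ge\eta_{\min}$, which is exactly how $\eta_{\min}$ was chosen. Your crude-bound treatment of that regime is fine but unnecessary.
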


\begin{algorithm}[!t]
    \caption{Ensemble algorithm for learning $\beta$ in logistic regression}
    \label{alg:meta}
    \begin{algorithmic}[1]
    \REQUIRE number of base learners $N$, a discount factor pool $\{\beta_i\}_{i=1}^N$, and  parameters $(B,R,\lambda)$.
    \STATE \textbf{Initialize:} $q_{1,i}=1$ and base learner $\mathcal{B}_i$ (discounted AIOLI) with $\beta_i$, $B$, $R$, $\lambda$, for $i\in[N]$.
    \FOR{$t=1$ {\bfseries to} $T$}
        \STATE Receive feature $\z_t\in\R^d$;
        \STATE Each base learner $\mathcal{B}_i$ outputs $\x_{t,i}\in\R^d$ and predicts $\hat y_{t,i}=\x_{t,i}^\top \z_t$;
        \STATE Set $p_{t,i}=q_{t,i}/\sum_{j=1}^N q_{t,j}$;
        \STATE Predict
        \[
        \hat y_t
        =
        \ln\left(\frac{\sum_{i=1}^N p_{t,i}\frac{\exp(\hat y_{t,i})}{1+\exp(\hat y_{t,i})}}{\sum_{i=1}^N p_{t,i}\frac{1}{1+\exp(\hat y_{t,i})}}\right);
        \]
        \STATE Observe label $y_t\in\{-1,+1\}$ and update $q_{t+1,i}=q_{t,i}\exp\big(-\ell(\hat y_{t,i},y_t)\big)$ for all $i\in[N]$.
    \ENDFOR
    \end{algorithmic}
\end{algorithm}

Since the comparators are bounded in online logistic regression, the lower bound for unconstrained linear regression does not directly apply~\citep{ICML'24:discounting-olr}.
There are results that achieve the optimal dependence on $T$ and the standard path length $P_T$, with $\widetilde{\O}(T^{1/3}P_T^{2/3})$ dynamic regret bounds for exp-concave losses under different algorithmic designs~\citep{COLT'21:baby-strong-convex,ICML'25:dynamic-regret-curved}, where we use $\widetilde{\O}(\cdot)$ to hide logarithmic factors in $T$.
These approaches either rely on more involved black-box analyses~\citep{COLT'21:baby-strong-convex, AISTATS'22:sc-proper}, or in some cases incur higher computational overhead~\citep{ICML'25:dynamic-regret-curved}.
Our approach provides a complementary trade-off: we can obtain an $\Ot(\sqrt{TP_T})$ bound via an efficient algorithm with a transparent, modular analysis, and our framework also accommodates unbounded decision sets.

\section{Convergence Conditions for Adam}
\label{sec:adam}
In this section, we combine the exponentiated O2NC framework~\citep{ICML'23:online-to-nonconvex-cutkosky,ICML'24:random-scaling-momentum-zhang} with the modular D2D reduction (Theorem~\ref{thm:d2d-reduction}) to analyze Adam optimizers in stochastic, non-convex, and non-smooth settings.
We first introduce the setup and the O2NC conversion, then study clipped Adam following the template of~\citet{NeurIPS'24:adam-ema-ahn}, and finally turn to a clip-free variant. All the proofs are deferred to Appendix~\ref{appendix:adam}.
\subsection{Setup}
\label{subsec:adam-setup}
We study the following stochastic, non-convex and non-smooth optimization problem, where $F:\R^d \rightarrow \R$ is the objective function is defined as:
\begin{align*}
   \min_{\x \in \R^d} F(\x) \triangleq \E_{\xi} \left[f(\x; \xi) \right].
\end{align*}
We make the following assumptions.
\begin{myAssum}
    \label{assump:lipschitz}
    $F$ is differentiable and $G$-Lipschitz, i.e., $\|\nabla F(\x)\|_2 \le G$ for all $\x\in\R^d$.
\end{myAssum}
\begin{myAssum}
    \label{assump:bounded-value-range}
    Given an initial point $\x_0$, $F(\x_0)-\inf_{\x\in\R^d}F(\x)\le F^*$ for some known $F^*$.
\end{myAssum}
\begin{myAssum}
    \label{assump:well-behaved}
    Assume $F$ satisfies that for any $\x, \y$, $F(\y) - F(\x) = \int_{0}^{1} \inner{\nabla F(\x + t(\y - \x))}{\y - \x} \mathrm{d}t$.
\end{myAssum}
\begin{myAssum}
    \label{assump:stochastic}
    The stochastic gradient is unbiased, $\E_{\xi}[\nabla f(\x;\xi)]=\nabla F(\x)$, has bounded variance, $\E_{\xi}\|\nabla f(\x;\xi)-\nabla F(\x)\|_2^2\le \sigma^2$, and is almost surely bounded, i.e., $\norm{\nabla f(\x;\xi)}_2 \leq G$.
\end{myAssum}
Assumptions~\ref{assump:lipschitz}~--~\ref{assump:well-behaved} are standard in the O2NC framework~\citep{ICML'23:online-to-nonconvex-cutkosky,ICML'24:random-scaling-momentum-zhang}.
To obtain more fine-grained guarantees under flexible $(\beta_1,\beta_2)$, we additionally assume bounded stochastic gradients as a trade-off, a common technical condition in related analyses~\citep{NIPS'19:ashok-storm,arxiv'21:adam-family-convergence,TMLR'22:adam-adagrad-proof}.
This assumption can be relaxed by imposing suitable tail conditions on the noise~\citep{book'16:probability-in-high-dimension}, at the price of extra logarithmic factors in the rates.

\begin{myDef}
    \label{def:c-epsilon-stationary-point}
    Suppose $F:\R^d \rightarrow \R$ is a differentiable function. We say that $\x$ is a $(c,\epsilon)$-stationary point if $\|\nabla F(\x)\|_c \le \epsilon$, where
    \begin{align*}
        \norm{\nabla F(\x)}_c= \inf_{\substack{\y \sim P \in \mathcal{P}(\R^d)\\ \E[\y] = \x}} \norm{\E [\nabla F(\y)]}_2 + c\cdot \E \norm{\y - \x}_2^2,
    \end{align*}
    where $\mathcal{P}(\R^d)$ denotes the set of all distributions over $\R^d$.
\end{myDef}
This notion is quite general. With a suitable choice of $c$, it recovers several standard notions~\citep{ICML'24:random-scaling-momentum-zhang}, including first-order stationary points if the objective is smooth or second-order smooth. If $F$ is $G$-Lipschitz, a $(c,\epsilon)$-stationary point also implies a $(\delta,(1+\frac{2G}{c\delta^2})\epsilon)$-Goldstein stationary point, which is widely used in non-convex and non-smooth optimization~\citep{MP'77:lipschitz-optimization, ICML'20:stationary-nonconvex-nonsmooth,ICML'22:approx-stationarity-lipschitz, NeurIPS'22:non-smooth-gradient-sampling}.

The seminal work of~\citet{ICML'23:online-to-nonconvex-cutkosky} introduces the O2NC conversion. It reduces stochastic non-convex and non-smooth optimization to an online learning problem, and can yield optimal convergence rates under standard assumptions. We focus on an exponentiated variant of O2NC~\citep{ICML'24:random-scaling-momentum-zhang}, which is more compatible with our D2D reduction.
\begin{algorithm}[t]
    \caption{Exponentiated O2NC~\citep{ICML'24:random-scaling-momentum-zhang}}
    \label{alg:exp-o2nc}
    \begin{algorithmic}[1]
    \REQUIRE OCO algorithm $\mathcal{A}$, initial point $\x_0$, parameters $T \in\mathbb{N}$, $\beta\in(0,1)$, loss function $\ell_t(\Delta)$.
    \STATE \textbf{Initialize:} $\bar{\x}_0 = \x_0$
    \FOR{$t = 1, \dots, T$}
        \STATE Receive $\Delta_t$ from $\mathcal{A}$;
        \STATE \algline{line:update-rule} Update $\x_t = \x_{t-1} + s_t \cdot \Delta_t$, $s_t \overset{\text{i.i.d.}}{\sim} \operatorname{Exp}(1)$;
        \STATE Compute $\g_t = \nabla f(\x_t, \xi_t)$;
        \STATE \algline{line:selecting-surrogate-loss} Send loss $\ell_t(\Delta)$ to $\mathcal{A}$;
        \STATE $\bar{\x}_t = \frac{\beta-\beta^t}{1-\beta^t}\bar{\x}_{t-1} + \frac{1-\beta}{1-\beta^t}\x_t$;
    \ENDFOR
    \STATE \textbf{Return:} $\bar{\x} \sim \operatorname{Unif}(\{ \bar{\x}_t: t\in[T] \})$.
    \end{algorithmic}
\end{algorithm}

Algorithm~\ref{alg:exp-o2nc} summarizes the exponentiated O2NC conversion~\citep{ICML'24:random-scaling-momentum-zhang}. In Line~\ref{line:update-rule}, $\operatorname{Exp}(a)$ denotes the exponential distribution with parameter $a$. Different surrogate losses $\ell_t(\Delta)$ in Line~\ref{line:selecting-surrogate-loss} lead to the clipped or the clip-free variants of Adam, which will be specified later.

To understand O2NC, by Lemma~\ref{lemma:exp-random-scaling} and the update rule, we have $\E_{s_t}[F(\x_t) - F(\x_{t-1})] = \E_{s_t} [ \inner{\nabla F(\x_t)}{\Delta_t}] = \E_{s_t, \xi_t} [\inner{\g_t}{\Delta_t}]$. This suggests using an online learner to choose the update direction $\Delta_t$ based on losses built from $\langle \g_t,\Delta\rangle$ (and its variants), which turns the optimization problem into an online learning one.

\subsection{Understanding Adam via FTRL}
\citet{ICML'24:adam-ftrl-ahn}~show that Adam can be understood as a discounted FTRL, which links Adam to the dynamic regret, highlighting the importance of introducing the discount factors. Building on this connection, we can use the modular reduction to analyze the dynamic regret of discounted FTRL, and then transfer the results to Adam. We first formalize the discounted FTRL below.
\begin{align}
    \label{eq:discounted-ftrl-delta}
    \Delta_{t+1} = \argmin_{\Delta \in \mathcal{D}} \Big\{ \frac{\norm{\Delta}_2^2}{2\eta_{t}} + \sum_{s=1}^{t} \beta^{-s}\ell_s(\Delta) \Big\},
\end{align}
where $\eta_t>0$ and $\mathcal{D}\subseteq \R^d$ (a closed convex set) will be specified later. Equivalently, by the first-order optimality condition of~\eqref{eq:discounted-ftrl-delta}, $\Delta_{t+1}$ satisfies the following form
\begin{align}
    \label{eq:adam-update-rule}
    \Delta_{t+1} = \Pi_{\mathcal{D}}\left[ -\eta_t\sum_{s=1}^t \beta^{-s} \nabla \ell_s(\Delta_{t+1})\right].
\end{align}
In the rest of this section, we will choose $\ell_t(\cdot)$, $\eta_t$, and $\mathcal{D}$ so that the resulting discounted FTRL update matches the (clipped or clip-free) Adam update under the O2NC framework, and provide the guarantees and analysis by reduction.
\subsection{Convergence Conditions for Clipped Adam}
We first follow the algorithmic template of~\citet{NeurIPS'24:adam-ema-ahn} to study the clipped Adam, and set $\ell_t(\Delta)=\langle \g_t,\Delta\rangle$ and $\mathcal{D}=\{\Delta:\|\Delta\|_2\le D\}$. We choose
\begin{align}
    \label{eq:adam-clipped-lr}
    \eta_t = \gamma \cdot \frac{(1-\beta_1)\beta_1^t}{\nu + \sqrt{(1-\beta_2) \sum_{s=1}^t \beta_2^{t-s} \norm{\g_s}_2^2 }},
\end{align}
and we set $\beta=\beta_1$ in Eq.~\eqref{eq:discounted-ftrl-delta}. With these choices, the update rule in Eq.~\eqref{eq:adam-update-rule} results in:
\begin{align}
    \label{eq:def-clipped-adam}
    \Delta_{t+1} = \operatorname{Clip}_{D}\left[-\gamma\cdot
    \frac{(1-\beta_1)\sum_{s=1}^t \beta_1^{t-s}\g_s}{\nu+\sqrt{(1-\beta_2)\sum_{s=1}^t \beta_2^{t-s}\|\g_s\|_2^2}}
    \right],
\end{align}
where $\operatorname{Clip}_{D}[\a]=\min\{\|\a\|_2,D\}\cdot \a/\|\a\|_2$ is the clipping operator. Compared with~\citet{NeurIPS'24:adam-ema-ahn}, our algorithm and analysis do not require $\beta_2=\beta_1^2$. The numerator and denominator take the standard momentum forms used in Adam. Compared to the original Adam~\citep{iclr'15:adam}, we propose to drop the corrective terms to keep the analysis focused on the main effects of momentum and adaptive scaling~\citep{TMLR'22:adam-adagrad-proof, NeurIPS'23:adam-iter-complexity-gap, NeurIPS'24:adam-ema-ahn}. These correction factors mainly affect the early iterations and decay exponentially over time.
\subsubsection{A Relaxed Condition on $\beta_2$}
For clipped Adam, we first present the following theorem with the proof deferred to Appendix~\ref{subappendix:proof-clip-adam-nu}.
\begin{myThm}
    \label{thm:clip-adam-nu}
    Under Assumptions~\ref{assump:lipschitz}~--~\ref{assump:stochastic}, run Algorithm~\ref{alg:exp-o2nc} with $\beta=\beta_1$ and the discounted FTRL update in Eq.~\eqref{eq:discounted-ftrl-delta} on $\mathcal{D}=\{\Delta:\|\Delta\|_2\le D\}$, using $\ell_t(\Delta)=\langle \g_t,\Delta\rangle$ and $\eta_t$ in Eq.~\eqref{eq:adam-clipped-lr}. This implies the clipped Adam update rule in Eq.~\eqref{eq:def-clipped-adam}.
    Set $ 1-(\frac{\epsilon}{16(G+\sigma)})^2 \leq \beta_1 < 1$, $D=\frac{(1-\beta_1)\sqrt{\epsilon}}{\sqrt{48c}}$, $\gamma=\frac{\beta_1 D}{\sqrt{1-\beta_1}}$, $0 <\nu \leq  G+\sigma$ and $ \max\{1-\frac{\nu}{G+\sigma},\beta_1^4\} \leq \beta_2 < 1$.
    If $T$ satisfies
    \begin{align*}
        T \geq \max\Big\{
            \frac{1}{1-\beta_1}\max\Big\{ \frac{16F^*\sqrt{48c}}{\epsilon^{3/2}},\ \frac{16(G+\sigma)}{\epsilon}\Big\}, \frac{\ln 2}{1 -\beta_2}
            \Big\},
    \end{align*}
    then the output $\bar{\x}$ of Algorithm~\ref{alg:exp-o2nc} satisfies $\E[\|\nabla F(\bar{\x})\|_{c}]\le \epsilon$.
\end{myThm}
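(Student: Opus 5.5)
The plan is to follow the exponentiated O2NC template of \citet{ICML'24:random-scaling-momentum-zhang}, with the online learner being the discounted FTRL in Eq.~\eqref{eq:discounted-ftrl-delta}.

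\textbf{Step 1: from the objective to dynamic regret.} By the random-scaling identity (Lemma~\ref{lemma:exp-random-scaling}) we have $\E[F(\x_t)-F(\x_{t-1})]=\E\inner{\g_t}{\Delta_t}$. Summing over $t$ and adding and subtracting any comparators $\u_t$ gives
\[
-F^* \le \E\Big[\sum_{t=1}^T \inner{\g_t}{\Delta_t-\u_t}\Big] + \E\Big[\sum_{t=1}^T\inner{\g_t}{\u_t}\Big].
\]
Following their analysis, we take $\u_t=-D\,\frac{\sum_{s\le t}\beta_1^{t-s}\nabla F(\x_s)}{\|\sum_{s\le t}\beta_1^{t-s}\nabla F(\x_s)\|_2}$, held constant on blocks. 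The standard O2NC bookkeeping (their Theorem with $\beta=\beta_1$) then turns a dynamic regret bound of order $\O\bigl(D(G+\sigma)T\sqrt{1-\beta_1}\bigr)$, together with $F^*$, into
\[
\E\|\nabla F(\bar\x)\|_c \lesssim \frac{F^*}{DT}+\frac{\text{D-Reg}}{DT}+(G+\sigma)\sqrt{1-\beta_1}+cD^2/(1-\beta_1)^2 .
\]
The variance term enters through $\E\|\sum\beta_1^{t-s}(\g_s-\nabla F(\x_s))\|\le\sigma/\sqrt{1-\beta_1}$. With the prescribed $D$, $\beta_1$ and $T$, each of these terms is at most $\epsilon/4$.

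\textbf{Step 2: dynamic regret via Theorem~\ref{thm:d2d-reduction}.} For FTRL with the quadratic regularizer $\|\Delta\|^2/(2\eta_t)$ on rescaled linear losses $\beta_1^{-s}\inner{\g_s}{\cdot}$, the standard FTRL lemma gives condition~\eqref{eq:rescaled-loss-assump} with
\[
\varphi_t(\u)=\frac{\|\u\|^2}{2\eta_t}\le \frac{D^2}{2\eta_t},\qquad \Lambda_s\approx \eta_{s}\beta_1^{-2s}\|\g_s\|^2 ,
\]
up to handling the non-monotone $\eta_t$. Applying Theorem~\ref{thm:d2d-reduction} produces four terms.
\begin{itemize}
\item The first is $\beta_1\varphi_1$, which is trivially small.
\item The second is the stability sum $\sum_t\beta_1^t\Lambda_t=\sum_t \gamma(1-\beta_1)\beta_1^{-t}\ldots$. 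After substitution it becomes $\gamma(1-\beta_1)\sum_t \beta_1^{-t}\|\g_t\|^2\beta_1^{t}/\text{den}_t$. We would bound it with the self-confident tuning lemma for exponentially weighted denominators, obtaining roughly $\gamma\sqrt{1-\beta_1}\,(G+\sigma)T$ scaled properly. The condition $T\ge\ln2/(1-\beta_2)$ is used here so that the weighted sum has saturated.
\item The third is the $F_t$-difference term. Since $\u_t$ changes only at block boundaries (or via $\|\u_{t+1}-\u_t\|$), this term is bounded by $\frac{\beta_1}{1-\beta_1}$ times the path variation times $G$, plus the $\varphi$ parts.
\item The fourth is the $\varphi_{t+1}-\varphi_t$ term, $\beta_1^{t}D^2(1/\eta_{t+1}-1/\eta_t)/2$.
\end{itemize}

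\textbf{Main obstacle.} The crux is the fourth term, because $\eta_t$ need not be monotone when $\beta_2\ne\beta_1^2$. Writing $v_t=(1-\beta_2)\sum_{s\le t}\beta_2^{t-s}\|\g_s\|^2$, we have $\beta_1^{t}/\eta_t=(\nu+\sqrt{v_t})/(\gamma(1-\beta_1))$. Then
\[
\beta_1^{t+1}\Big(\frac{1}{\eta_{t+1}}-\frac{1}{\eta_t}\Big)=\frac{1}{\gamma(1-\beta_1)}\Bigl(\sqrt{v_{t+1}}-\beta_1\sqrt{v_t}\Bigr)+\frac{(1-\beta_1)\nu}{\gamma(1-\beta_1)}.
\]
I would bound $\sqrt{v_{t+1}}\le\sqrt{\beta_2 v_t}+\sqrt{1-\beta_2}\,\|\g_{t+1}\|$. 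The negative part $(\sqrt{\beta_2}-\beta_1)\sqrt{v_t}$ is where $\beta_2\ge\beta_1^4$ enters: $\sqrt{\beta_2}-\beta_1\le 1-\beta_1$ up to constants, via $1-\beta_2^{1/4}$ versus $1-\beta_1$. This makes each increment $\O((1-\beta_1)(G+\sigma)+\sqrt{1-\beta_2}G)$. The condition $\beta_2\ge 1-\nu/(G+\sigma)$ lets $\sqrt{1-\beta_2}\,G$ be absorbed by $\nu$-scale terms, since $\sqrt{1-\beta_2}\lesssim\sqrt{\nu/(G+\sigma)}$, together with the denominator lower bound $\nu$.

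Summing over $t$ gives $\O\bigl(D^2T(G+\sigma)/\gamma\bigr)$. With $\gamma=\beta_1D/\sqrt{1-\beta_1}$ this balances against the stability sum to $\O\bigl(D(G+\sigma)T\sqrt{1-\beta_1}\bigr)$, which is exactly what Step 1 needs. If that direct increment bound fails, the fallback is the scale-free argument: bound the ratio $v_{t+1}/v_t$ and split into rounds with large versus small gradients.
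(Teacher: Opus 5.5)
Your skeleton (O2NC, then Theorem~\ref{thm:d2d-reduction}, then self-confident tuning) matches the paper's, but two steps fail.

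\textbf{The comparator-movement term.} With the EMA output of Algorithm~\ref{alg:exp-o2nc}, the conversion (Lemma~\ref{lemma:O2NC-clipped-lemma}) does not control $\E\|\nabla F(\bar{\x})\|_c$ by $\DReg_T(\u_{1:T})$ alone. It controls $\beta_1\Reg_{T;\beta_1}(\u_T)+(1-\beta_1)\sum_t\Reg_{t;\beta_1}(\u_t)$, which by Lemma~\ref{lemma:d2d-conversion} equals $\DReg_T(\u_{1:T})+\beta_1\sum_{t=1}^{T-1}\sum_{s=1}^{t}\beta_1^{t-s}\langle\g_s,\u_t-\u_{t+1}\rangle$. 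Also, $\u_t=-D\,m_t/\|m_t\|_2$ with $m_t=\sum_{s\le t}\beta_1^{t-s}\nabla F(\x_s)$ moves every round. Block-constant comparators belong to the block-averaging O2NC, which does not match the EMA iterate $\bar{\x}$.

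Your third bullet bounds $\beta_1\sum_t\bigl(F_t^{\beta_1,\varphi}(\u_{t+1})-F_t^{\beta_1,\varphi}(\u_t)\bigr)$ by $\frac{\beta_1}{1-\beta_1}G\sum_t\|\u_{t+1}-\u_t\|_2$. Near non-smooth stationary points the direction of $m_t$ can flip every round, so the path length can be of order $DT$. After dividing by $DT$ this leaves a term of order $G/(1-\beta_1)$, far above $\epsilon$.

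The missing observation is that the extra O2NC sum cancels \emph{exactly} the loss part $\beta_1\sum_t\sum_{s\le t}\beta_1^{t-s}\langle\g_s,\u_{t+1}-\u_t\rangle$ of the $F_t^{\beta_1,\varphi}$-difference. No path-length bound is needed: only the $\varphi$-parts survive, and they telescope because $\|\u_t\|_2=D$.

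\textbf{Where the $\beta_2$ conditions and clipping enter.} You misplace the hypotheses on $\beta_2$ and never use clipping. The FTRL lemma with time-varying regularizers leaves an iterate term, which you defer with ``up to handling the non-monotone $\eta_t$'':
$\sum_t\beta_1^t\bigl(\psi_t(\Delta_{t+1})-\psi_{t+1}(\Delta_{t+1})\bigr)=\frac{1}{2\gamma(1-\beta_1)}\sum_t\bigl(\beta_1/\alpha_{t-1}-1/\alpha_t\bigr)\|\Delta_{t+1}\|_2^2$, where $\alpha_t=1/(\nu+\sqrt{V_t})$.
This term can be positive when $\beta_2<\beta_1^2$. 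It is exactly where $\|\Delta_{t+1}\|_2\le D$ and $\beta_2\ge\beta_1^4$ are used, via $[\beta_1/\alpha_{t-1}-1/\alpha_t]_+\le[\beta_1-\sqrt{\beta_2}]_+\sqrt{V_{t-1}}\le\beta_1(1-\beta_1)\sqrt{V_{t-1}}$ (Lemma~\ref{lemma:beta-1-beta-2-no-relatetion}).

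Your fourth (comparator) term, by contrast, needs no relation between $\beta_1$ and $\beta_2$, since $\sqrt{\beta_2}-\beta_1\le 1-\beta_1$ always. Your per-round bound for it also fails. Using $\sqrt{v_{t+1}}\le\sqrt{\beta_2 v_t}+\sqrt{1-\beta_2}\|\g_{t+1}\|_2$ round by round leaves a contribution of order $D^2T\sqrt{1-\beta_2}\,G/(\gamma(1-\beta_1))$. After dividing by $DT$ this is about $G\sqrt{(1-\beta_2)/(1-\beta_1)}$, which is of order $G$ at $\beta_2=\beta_1^4$. The $\nu$-condition cannot absorb it, because that condition is vacuous for $\nu$ near $G+\sigma$. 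Telescope instead: $\sum_t(\sqrt{v_{t+1}}-\beta_1\sqrt{v_t})=\sqrt{v_T}-\sqrt{v_1}+(1-\beta_1)\sum_t\sqrt{v_t}$, as in Lemma~\ref{lemma:lr-deviation}.

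Finally, $\beta_2\ge 1-\nu/(G+\sigma)$ belongs to the stability term. The strongly convex FTRL bound gives the lagged step, $\sum_t\alpha_{t-1}\|\g_t\|_2^2$, not your $\eta_s$. Lemma~\ref{lemma:ema-self-confident-tuning} then produces a $T(1-\beta_2)G^2/\nu$ term, and this condition reduces it to $O(TG)$.
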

This theorem provides a convergence guarantee for the exponential moving average~(EMA) iterate $\bar{\x}$~\citep{Ruppert'88:efficient-robbins-monro, SIAMJCO'92:polyak-averaging,NeurIPS'24:adam-ema-ahn} returned by exponentiated O2NC when the update directions are generated by clipped Adam.
The theorem guarantees an iteration complexity of order
$$\O(\max\{(G+\sigma)^2F^*c^{1/2}\epsilon^{-7/2},\ (G+\sigma)^3\epsilon^{-3},\ (G+\sigma)\nu^{-1}\}).$$
The last term dominates only when $\nu$ is sufficiently small, when $\nu \leq \Theta( \min\{\epsilon^{7/2}/(GF^*c^{1/2}),\ \epsilon^{3}/G^2\})$. In most regimes, the leading term matches the lower bound $\Omega(F^*G^2c^{1/2}\epsilon^{-7/2})$ for finding a $(c, \epsilon)$-stationary point~\citep{ICML'24:random-scaling-momentum-zhang}.

A key takeaway of our analysis is a refined view of $(\beta_1, \beta_2)$ choices. Some existing convergence analyses impose the standard condition $\beta_2\ge \beta_1^2$~\citep{ICLR'19:dynamic-lr-bound,NeurIPS'22:adam-unmodified,TMLR'22:adam-adagrad-proof}. Under clipping, our analysis reveals that this coupling can be relaxed: it suffices to take $\beta_2\ge \beta_1^4$ (up to an additional constraint involving $\nu$). This relaxation is derived by leveraging the benefits brought by clipping, i.e., $\norm{\Delta_t}_2 \leq D$.  If $\nu, G, \sigma$ are treated as constants, one can also choose $\beta_1=\Theta(1-\frac{(\epsilon\nu)^2}{(G+\sigma)^3})$, which introduces an additional $(G+\sigma)\nu^{-2}$ factor in the complexity while keeping the condition $\beta_2\ge \beta_1^4$ without $\nu$.

Lemma~\ref{lemma:O2NC-clipped-lemma} (the O2NC conversion) reduces the convergence analysis to controlling a dynamic regret. Applying the modular D2D conversion (Theorem~\ref{thm:d2d-reduction}) together with the discounted-FTRL regret bounds, the key dynamic regret terms can be decomposed into three explicit components:
\begin{align}
    &\frac{1}{DT}\E\Bigg[ \frac{1-\beta_1}{\beta_1} \gamma \sum_{t=1}^T \alpha_{t-1}\norm{\g_t}_2^2  \label{eq:key-analysis}+ \frac{1}{\gamma(1-\beta_1)}\sum_{t=1}^T \left( \frac{\beta_1}{2\alpha_{t-1}} \norm{\Delta_{t+1}}_2^2 - \frac{1}{2\alpha_t} \norm{\Delta_{t+1}}_2^2 \right) \\
    &\quad \quad \quad + \frac{\beta_1}{\gamma(1-\beta_1)}  \sum_{t=1}^{T-1} \left( \frac{1}{2\alpha_t} \norm{\u_{t+1}}_2^2 - \frac{1}{2\alpha_{t-1}} \norm{\u_{t}}_2^2  \right)   \Bigg], \notag
\end{align}
where we denote by $\alpha_t = 1/\bigl(\nu + \sqrt{(1-\beta_2)\sum_{s=1}^t \beta_2^{t-s}\norm{\g_s}_2^2}\bigr)$, and $\u_t\in\R^d$ is the comparator sequence with $\|\u_t\|_2=D$ introduced in the analysis.

The first term is known as the stability term in online learning. We analyze it using a new self-confident tuning lemma (Lemma~\ref{lemma:ema-self-confident-tuning}), which gives the following upper bound:
\begin{align*}
    \O\Big(\sqrt{1-\beta_1}(1-\beta_2) \frac{(G+\sigma)^2}{\nu} \Big).
\end{align*}
To keep this term in the desirable order~$\O(\sqrt{1-\beta_1}(G+\sigma))$, it is required to tune $\beta_2 \ge 1-\frac{\nu}{G+\sigma}$.

For the second term in~\eqref{eq:key-analysis}, the key observation is that clipping enforces $\|\Delta_t\|_2\le D$, and in the tuning process $D$ is small, which allows a weaker requirement on $\beta_2$ rather than tuning $\beta_2 \geq \beta_1^2$ and discarding all the terms. By Lemma~\ref{lemma:beta-1-beta-2-no-relatetion}, this term can be bounded by $\frac{[\beta_1-\sqrt{\beta_2}]_+}{\beta_1\sqrt{1-\beta_1}}(G+\sigma)$. To match the target order, it suffices to require $[\beta_1-\sqrt{\beta_2}]_+ \le \beta_1(1-\beta_1)$, which implies that $\beta_2 \ge \beta_1^4$.

For the last term in~\eqref{eq:key-analysis}, using $\norm{\u_t}_2 = D$ for all $t$ makes the sum telescoping, and it can be bounded using Lemma~\ref{lemma:lr-deviation}.

\subsubsection{A Marginal Condition on $\beta_2$}
\label{subsubsec:clipped-adam-margin}
Theorem~\ref{thm:clip-adam-nu} involves the parameter $\nu$ in the choice of $\beta_2$. Intuitively, this comes from a worst-case analysis on the first term in Eq.~\eqref{eq:key-analysis}. That term is in the form of $\alpha_{t-1}\|\g_t\|_2^2$. Since the learning rate $\alpha_{t-1}$ is one step behind, the analysis cannot directly pair $\alpha_{t-1}$ with the current gradient $\g_t$, and in the worst case we can only upper bound $\alpha_{t-1}\|\g_t\|_2^2 \leq\|\g_t\|_2^2 /\nu$, requiring tuning $\beta_2$ to absorb the additional factor $G/\nu$ to ensure the optimal convergence rates.

Using ideas from scale-free online learning~\citep{TCS'18:SOGD}, we can refine this step by rewriting the bound in terms of
$\alpha_t\|\g_t\|_2^2$ plus an additional correction term that depends on the change of the step size,
$\O(|\eta_t-\eta_{t-1}|\cdot\|\sum_{s=1}^{t-1}\beta_1^{t-1-s}\g_s\|_2)$.
The benefit is that $\alpha_t$ is involved with $\g_t$. Intuitively, since
$\alpha_t
\le 1/(\nu+\sqrt{(1-\beta_2) \norm{\g_t}_2^2})$,
we can have $\alpha_t\|\g_t\|_2^2 \le \|\g_t\|_2/\sqrt{1-\beta_2}$, which avoids the dominance of $\nu$.
This refinement is not for free: the extra correction term must be analyzed carefully, and it leads to a margin-style condition in the optimal convergence rate.
\begin{myThm}
    \label{thm:clip-adam-margin}
    Under Assumptions~\ref{assump:lipschitz} -- \ref{assump:stochastic}, run Algorithm~\ref{alg:exp-o2nc} with $\beta=\beta_1$ and the discounted FTRL update in Eq.~\eqref{eq:discounted-ftrl-delta} on $\mathcal{D}=\{\Delta:\|\Delta\|_2\le D\}$, using $\ell_t(\Delta)=\langle \g_t,\Delta\rangle$ and $\eta_t$ in Eq.~\eqref{eq:adam-clipped-lr}. This implies the clipped Adam update rule in Eq.~\eqref{eq:def-clipped-adam}. Choose any $\rho\in[0,1)$ and any $\nu\in(0,G+\sigma]$.
    Tune $1-\left(\frac{\epsilon\sqrt{1-\rho^2}}{64(G+\sigma)}\right)^2 \leq \beta_1 < 1$.
    Define the margin $m = \frac{1-\rho}{2}(1-\beta_1^2)$, and choose $\beta_2$ such that $\beta_2 \in [\beta_1^2+m,\ 1-m]$. Set $ D = \frac{(1-\beta_1)\sqrt{\epsilon}}{\sqrt{48c}}, \gamma = \frac{\beta_1 D}{\sqrt{1-\beta_1}}$.
    If $T$ satisfies
    \begin{align*}
    T~\ge~
    \max\Bigg\{
    &\frac{1}{1-\beta_1}\cdot \max\Big\{\frac{32F^*\sqrt{c}}{\epsilon^{3/2}},\ \frac{16(G+\sigma)}{\epsilon}\Big\}, \frac{32G}{\epsilon\sqrt{1-\beta_1}\sqrt{1-\rho^2}}\cdot \ln\Big(1+\frac{G}{\nu}\Big), \frac{\ln 2}{1-\beta_2},
    \Bigg\},
    \end{align*}
    then the output $\bar{\x}$ of Algorithm~\ref{alg:exp-o2nc} satisfies $\E[\|\nabla F(\bar{\x})\|_{c}]\le \epsilon$.
\end{myThm}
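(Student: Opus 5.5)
We follow the same pipeline as for Theorem~\ref{thm:clip-adam-nu} and change only how the stability term is handled. First, we apply the O2NC conversion (Lemma~\ref{lemma:O2NC-clipped-lemma}) to Algorithm~\ref{alg:exp-o2nc} with $\beta=\beta_1$. This bounds $\E[\|\nabla F(\bar{\x})\|_c]$ by three pieces: $\frac{F^*}{DT}$, a variance/averaging term of order $\sqrt{1-\beta_1}(G+\sigma)+\frac{G+\sigma}{(1-\beta_1)T}$, and $\frac{1}{DT}\E[\DReg_T(\u_{1:T})]$. The dynamic regret is taken against comparators $\u_t$ with $\|\u_t\|_2=D$, each aligned with $-$(discounted average of past gradients). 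Next, we invoke Theorem~\ref{thm:d2d-reduction} on the rescaled linear losses $\beta_1^{-s}\langle\g_s,\Delta\rangle$. The discounted-FTRL template supplies $\varphi_t(\u)=\|\u\|_2^2/(2\eta_t)$. Because $\eta_t\propto\beta_1^t\alpha_t$, the factor $\beta^t\varphi_t$ equals $\|\u\|^2/(2\gamma(1-\beta_1)\alpha_t)$. This yields the three-term decomposition \eqref{eq:key-analysis}. The parameter choices $D,\gamma,\beta_1$ and the first part of the lower bound on $T$ then make the $F^*$ and variance contributions at most a constant fraction of $\epsilon$, exactly as in Theorem~\ref{thm:clip-adam-nu}.

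For the third term of \eqref{eq:key-analysis} we keep the telescoping argument. Since $\|\u_t\|_2=D$, the sum collapses to $D^2/(2\alpha_{T-1})$ plus the mismatch from $\beta_1$. Using $1/\alpha_t\le \nu+G$ and Lemma~\ref{lemma:lr-deviation}, this gives a contribution of order $\sqrt{1-\beta_1}(G+\sigma)$. For the second term, the margin $\beta_2\ge\beta_1^2+m$ gives $\sqrt{\beta_2}\ge\beta_1$. Consequently $\beta_1/\alpha_{t-1}\le 1/\alpha_t$ up to the additive $\nu(\beta_1-1)\le 0$ contribution, because $\sqrt{(1-\beta_2)\sum\beta_2^{t-s}\|\g_s\|^2}$ grows by at least a factor $\sqrt{\beta_2}$ per step. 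The second term is therefore nonpositive and can be dropped, so Lemma~\ref{lemma:beta-1-beta-2-no-relatetion} is not needed.

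The main obstacle is the first, stability, term $\alpha_{t-1}\|\g_t\|^2$. Following the scale-free idea, we redo the FTRL stability step with $\Delta_{t+1}$ against the current learning rate. The redone step gives $\alpha_t\|\g_t\|^2$ plus the correction $|\eta_t-\eta_{t-1}|\,\|\sum_{s<t}\beta_1^{t-1-s}\g_s\|_2$. Clipping caps the relevant iterate difference at $2D$, so each correction contributes at most $\min\{\cdot,2D\|\g_t\|\}$. For the main part we apply the EMA self-confident tuning Lemma~\ref{lemma:ema-self-confident-tuning} to $\sum_t\alpha_t\|\g_t\|^2$ with $\alpha_t$ using $\beta_2$. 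The bound should be of order $\frac{1}{\sqrt{1-\beta_2}}\sqrt{\sum(\cdot)}$, free of $\nu$. Combined with $\beta_2\le 1-m$, i.e.\ $1-\beta_2\ge m\asymp(1-\rho)(1-\beta_1)$, it becomes $\O(\sqrt{1-\beta_1}(G+\sigma)/\sqrt{1-\rho^2})$ after multiplying by $\gamma(1-\beta_1)/\beta_1$. This is where the $\sqrt{1-\rho^2}$ in the $\beta_1$ tuning comes from.

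For the correction term, we bound $\|\sum_{s<t}\beta_1^{t-1-s}\g_s\|$ by Cauchy--Schwarz against the $\beta_2$-EMA. This gives $\sqrt{\sum\beta_1^{2(t-1-s)}\beta_2^{-(t-1-s)}}\sqrt{\sum\beta_2^{t-1-s}\|\g_s\|^2}$, and the geometric sum is at most $1/(1-\beta_1^2/\beta_2)\lesssim 1/m$ using the lower margin. The remaining task is to telescope the relative changes $|\alpha_t-\alpha_{t-1}|/\alpha_{t-1}$. The monotone part sums to $\ln(\alpha_0/\alpha_T)\le\ln(1+G/\nu)$, and the decreasing part is controlled via the $\beta_2$ decay. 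This gives a contribution of order $\frac{G\ln(1+G/\nu)}{T\sqrt{1-\beta_1}\sqrt{1-\rho^2}}$, which the second condition on $T$ makes $\le\epsilon/16$. Finally, the $\frac{\ln 2}{1-\beta_2}$ condition handles the early rounds before the EMA denominator warms up, as in Theorem~\ref{thm:clip-adam-nu}. Summing the pieces gives $\epsilon$.
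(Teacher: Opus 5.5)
Your skeleton is the paper's: the O2NC conversion, the D2D reduction with the scale-free (Option~II) stability split, the $\|\Delta_{t+1}\|_2^2$ term dropped because $\sqrt{\beta_2}\ge\beta_1$, and the comparator term handled by Lemma~\ref{lemma:lr-deviation}. The gap is at the one place the margin condition matters. You misplace where $\rho$ enters, and the bounds you describe would not give the stated $\sqrt{1-\rho^2}$ dependence.

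\textbf{The main stability part does not depend on $\rho$.} The term $\gamma(1-\beta_1)\sum_t\alpha_t\|\g_t\|_2^2$ involves neither $\rho$ nor the upper margin. Use the $V_t$-version of the self-confident lemma (Lemma~\ref{lemma:ema-self-confident-tuning-V-t}, not Lemma~\ref{lemma:ema-self-confident-tuning}). It gives $\frac{2}{\sqrt{(1-\beta_2)2^{-1/\beta_2}}}\sqrt{K\sum_t\|\g_t\|_2^2}$ with $K=\frac{T(1-\beta_2)}{\ln 2}+1\le\frac{2T(1-\beta_2)}{\ln 2}$ once $T\ge\frac{\ln 2}{1-\beta_2}$. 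That inequality, not a warm-up period, is the role of this $T$-condition. The factor $1-\beta_2$ then cancels, and the term is $O(G\sqrt{1-\beta_1})$ after dividing by $DT$. Inserting $1-\beta_2\ge m$ into a bare $1/\sqrt{1-\beta_2}$ factor, as you propose, does not produce $\sqrt{1-\beta_1}\,G/\sqrt{1-\rho^2}$; the orders do not match.

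\textbf{All of the $\rho$-dependence comes from the correction term, and there you drop a factor.} Cauchy--Schwarz against the $\beta_2$-EMA gives
\[
\Big\|\sum_{s<t}\beta_1^{t-1-s}\g_s\Big\|_2\le\sqrt{\sum_{k\ge 0}(\beta_1^2/\beta_2)^k}\,\sqrt{V_{t-1}/(1-\beta_2)},
\]
so a $1/\sqrt{1-\beta_2}$ appears alongside the geometric sum. After $\sqrt{V_{t-1}}\le A_{t-1}$ cancels the denominator of $\eta_{t-1}-\eta_t=\gamma(1-\beta_1)\beta_1^{t-1}\frac{A_t-\beta_1A_{t-1}}{A_{t-1}A_t}$, the constant is $\sqrt{\beta_2/((\beta_2-\beta_1^2)(1-\beta_2))}$, as in Lemma~\ref{lemma:min-self-confident-C}. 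This constant needs both margins at once. If you bound the two factors by $m$ separately, you get $1/m=\frac{2}{(1-\rho)(1-\beta_1^2)}$. That is larger than what the stated tuning of $\beta_1$ and $T$ can absorb by a factor $\sqrt{(1+\rho)/(1-\rho)}$, which is unbounded as $\rho\to1$. The missing step is the identity
\[
(\beta_2-\beta_1^2)(1-\beta_2)=\frac{(1-\beta_1^2)^2}{4}-\Big(\beta_2-\frac{1+\beta_1^2}{2}\Big)^2\ \ge\ \frac{(1-\beta_1^2)^2}{4}\,(1-\rho^2),
\]
which uses $|\beta_2-\frac{1+\beta_1^2}{2}|\le\frac{\rho}{2}(1-\beta_1^2)$ on $[\beta_1^2+m,1-m]$ and yields the constant $\frac{2}{(1-\beta_1^2)\sqrt{1-\rho^2}}$.

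\textbf{The telescoping quantity and the term it produces.} Because $\eta_t$ carries the factor $\beta_1^t$, the quantity to telescope is $1-\beta_1A_{t-1}/A_t$, not $|\alpha_t-\alpha_{t-1}|/\alpha_{t-1}$. It is nonnegative under $\beta_2\ge\beta_1^2$, so no monotone/decreasing split is needed. Its sum satisfies $\sum_t(1-\beta_1A_{t-1}/A_t)\le\ln(A_T/A_0)+T\ln(1/\beta_1)$. The $\ln(1+G/\nu)$ piece is handled by the second $T$-condition. The linear piece $T\ln(1/\beta_1)$ contributes $\frac{2G}{\sqrt{1-\rho^2}}\sqrt{1-\beta_1}$ after dividing by $DT$. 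This term, not the main stability term, is what forces $\sqrt{1-\rho^2}$ into the tuning of $\beta_1$.
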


The proof is deferred to Appendix~\ref{subappendix:clip-adam-margin}. Theorem~\ref{thm:clip-adam-margin} conveys that we require $\beta_2$ to stay away from both $\beta_1^2$ and $1$ by a margin $m$. Concretely, the condition $\beta_2\in[\beta_1^2+m,1-m]$ implies $\beta_2>\beta_1^2$, and it is centered at $(1+\beta_1^2)/2$. The parameter $\rho$ controls the width of the valid range: smaller $\rho$ forces $\beta_2$ closer to $(1+\beta_1^2)/2$, and $\rho=0$ gives the tightest but most restrictive choice $\beta_2=(1+\beta_1^2)/2$. For other choices of $\beta_2$, the theorem quantifies the effect through $\rho$, giving an explicit trade-off between a wider admissible range and a looser bound. In Appendix~\ref{subappendix:scale-rho}, we list common choices of $(\beta_1,\beta_2)$ to illustrate how different settings affect $\rho$.

The convergence rate of Theorem~\ref{thm:clip-adam-margin} is
\begin{align*}
    \O\left(\max \bigg\{\frac{(G+\sigma)^2F^*c^{1/2}}{(1-\rho)\epsilon^{7/2} }, \frac{(G+\sigma)^3}{(1-\rho)\epsilon^3}, \frac{G^2}{(1-\rho)\epsilon^2}\cdot\ln\big(\frac{G}{\nu}\big) \bigg\}\right),
\end{align*}
where the gap $\beta_2 - \beta_1^2$ can be quantized by the choice of $\rho$ in the final rate. Treating $1-\rho$ as a constant, Theorem~\ref{thm:clip-adam-margin} matches previous optimal convergence rates as long as $G/\nu \leq \O(\exp(G/\epsilon))$, which is a mild condition.
This type result is by conducting a refined analysis following the spirit of scale-free online learning~\citep{TCS'18:SOGD}, where we summarize this technique for FTRL with general, time-varying learning rates in Lemma~\ref{lemma:sc-discounted-ftrl}. The crux is to carefully control the rate of the deviation $|\eta_t-\eta_{t-1}|\cdot\|\sum_{s=1}^{t-1}\beta_1^{t-1-s}\g_s\|_2$, resulting in the non-dominated rate in the final result.

\subsection{Convergence Conditions for Clip-Free Adam}
For clipped Adam, there is benefit to use clipping operation to derive more refined convergence conditions on $(\beta_1,\beta_2)$, i.e. $\beta_2 \geq \beta_1^4$. While clipped Adam is not always preferred in practice, we further analyze the clip-free variant.

Within the exponentiated O2NC framework, the clip-free case can be handled by letting the online learner minimize a composite loss~\citep{ICML'24:discounted-adaptive-zhang}. The main idea is that, the O2NC analysis yields
$\E[\|\nabla F(\bar{\x})\|_{c}] \lesssim \E[\sum_{t=1}^T \langle \g_t,\Delta_t-\u_t\rangle] + c\mu\E[\sum_{t=1}^T \|\Delta_t\|_2^2]$.
When clipping is present, then $\|\Delta_t\|_2\le D$ and the second term is immediately bounded.
Without clipping,~\citet{ICML'24:discounted-adaptive-zhang} propose to absorb this quadratic term into the regret via a standard change-of-measure technique~\citep{NIPS'20:misspecified-context-bandit,COLT'21:impossible-tuning}:
$\E[\sum_{t=1}^T \langle \g_t,\Delta_t-\u_t\rangle] + c\mu\E[\sum_{t=1}^T \|\Delta_t\|_2^2]
= \E[\DReg_T'(\u_{1:T})] + c\mu\E[\sum_{t=1}^T \|\u_t\|_2^2]$, where $\DReg_T^\prime$ is defined using the composite loss $\inner{\g_t}{\Delta} + \mu \norm{\Delta}_2^2$. Since the comparator in the O2NC analysis is chosen such that $\|\u_t\|_2=D$, the term $c\mu\E[\sum_{t=1}^T \|\u_t\|_2^2]$ is therefore bounded by $\O\left(c\mu T D^2\right)$.

Motivated by this view, we set the surrogate loss in Algorithm~\ref{alg:exp-o2nc} as $\ell_t(\Delta)=\langle \g_t,\Delta\rangle+\frac{\mu}{2}\|\Delta\|_2^2$ with $\mu>0$ to be tuned, and take $\mathcal{D}=\R^d$. Substituting this choice into Eq.~\eqref{eq:discounted-ftrl-delta} with $\eta_t$ in Eq.~\eqref{eq:adam-clipped-lr}, Lemma~\ref{lemma:equiv-clip-free-lemma} gives the closed-form update of clip-free Adam:
\begin{align}
\label{eq:clip-free-adam-update-rule}
\Delta_{t+1} = -\frac{\gamma (1-\beta_1)\sum_{s=1}^t \beta_1^{t-s}\g_s}{\nu+\gamma\mu(1-\beta_1^t)+\sqrt{(1-\beta_2)\sum_{s=1}^t \beta_2^{t-s}\|\g_s\|_2^2}}.
\end{align}
The resulting algorithm is an approximation of the original Adam, with an extra damping term $\gamma\mu(1-\beta_1^t)$ in the denominator. As $t$ grows, $\gamma\mu(1-\beta_1^t)$ quickly reaches $\O(\gamma\mu)$. Under our tuning, $\gamma\mu=\O(G+\sigma)$, so in the worst case this term is of the same order as the second-moment term.

The clip-free Adam satisfies the following theorem, and the proof can be found in Appendix~\ref{subappendix:proof-clip-free-adam-nu}.
\begin{myThm}
    \label{thm:clip-free-adam-nu}
    Under Assumptions~\ref{assump:lipschitz}~--~\ref{assump:stochastic}, run Algorithm~\ref{alg:exp-o2nc} with $\beta=\beta_1$ and the discounted FTRL update in Eq.~\eqref{eq:discounted-ftrl-delta} on $\mathcal{D}=\R^d$, using $\ell_t(\Delta)=\langle \g_t,\Delta\rangle+\frac{\mu}{2}\|\Delta\|_2^2$ and $\eta_t$ in Eq.~\eqref{eq:adam-clipped-lr}. This implies the clip-free Adam update rule in Eq.~\eqref{eq:clip-free-adam-update-rule}.
    Set $1-\big(\frac{\epsilon}{16(G+\sigma)}\big)^2 \leq \beta_1<1 $, $D=\frac{(1-\beta_1)\sqrt{\epsilon}}{\sqrt{96c}}$, $\gamma=\frac{\beta_1 D}{\sqrt{1-\beta_1}}$, $0 < \nu \leq G+\sigma$, and $\max\{1-\frac{\nu}{G+\sigma},\beta_1^2\}\leq \beta_2<1$. If $T$ satisfies
    \begin{align*}
        T \geq \max\Big\{
            \frac{1}{1-\beta_1}\max\Big\{ \frac{16F^*\sqrt{96c}}{\epsilon^{3/2}},\ \frac{48(G+\sigma)}{\epsilon}\Big\}, \frac{\ln 2}{1 -\beta_2}
            \Big\},
    \end{align*}
    then the output $\bar{\x}$ of Algorithm~\ref{alg:exp-o2nc} satisfies $\E[\|\nabla F(\bar{\x})\|_{c}]\le \epsilon$.
\end{myThm}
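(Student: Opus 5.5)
We follow the same three-stage route as for Theorem~\ref{thm:clip-adam-nu}, but with the composite surrogate $\ell_t(\Delta)=\langle \g_t,\Delta\rangle+\frac{\mu}{2}\|\Delta\|_2^2$ on $\mathcal{D}=\R^d$.

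\emph{Stage 1: O2NC step.} We invoke the exponentiated O2NC conversion (the clip-free analogue of Lemma~\ref{lemma:O2NC-clipped-lemma}). It bounds $\E[\|\nabla F(\bar{\x})\|_c]$ by four contributions:
\begin{itemize}
\item $\frac{F^*}{DT(1-\beta_1)}$;
\item a variance/bias term of order $(G+\sigma)\sqrt{1-\beta_1}$, obtained with the comparator $\u_t=-D\,\frac{\sum_s\beta_1^{t-s}\nabla F(\x_s)}{\|\cdot\|}$ of norm $D$;
\item the quadratic term $\frac{c}{T}\E\sum_t\|\Delta_t\|_2^2\cdot\O((1-\beta_1)^{-2})$;
\item the normalized dynamic regret $\frac{1}{DT}\E[\sum_t\langle\g_t,\Delta_t-\u_t\rangle]$.
\end{itemize}
We then apply the change-of-measure identity described before the theorem. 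With $\mu$ chosen as $c\cdot\Theta((1-\beta_1)^{-2})\cdot D$ (up to the $1/(DT)$ normalization), the quadratic term is absorbed: regret on $\langle\g_t,\cdot\rangle$ plus the $\|\Delta_t\|^2$ penalty equals the dynamic regret $\DReg'_T$ on $\ell_t$, plus $\frac{\mu}{2}\sum_t\|\u_t\|^2=\O(\mu TD^2)$. With $D=(1-\beta_1)\sqrt{\epsilon}/\sqrt{96c}$, the latter contributes at most $\epsilon/4$ after normalization. This is where the constant $96$ (instead of $48$) enters. We then check that $\gamma\mu=\O(G+\sigma)$ under $\gamma=\beta_1D/\sqrt{1-\beta_1}$.

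\emph{Stage 2: dynamic regret of $\ell_t$.} We establish the rescaled regret template~\eqref{eq:rescaled-loss-assump} for the discounted FTRL~\eqref{eq:discounted-ftrl-delta}, using a standard FTRL lemma with regularizer $\psi_t(\Delta)=\|\Delta\|^2/(2\eta_{t-1})$. The strong convexity of $\sum_s\beta^{-s}\frac{\mu}{2}\|\Delta\|^2$ only helps, so we may drop it from the stability analysis. This gives $\varphi_t(\u)=\|\u\|^2/(2\eta_{t-1})$ and $\Lambda_t\approx\frac{\eta_{t-1}}{2}\beta_1^{-2t}\|\nabla\ell_t(\Delta_t)\|^2$. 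Unlike the clipped case, $\nabla\ell_t(\Delta_t)=\g_t+\mu\Delta_t$, so we also need $\|\Delta_t\|$ bounded. From the closed form~\eqref{eq:clip-free-adam-update-rule}, $\|\Delta_{t}\|\le\gamma(1-\beta_1)\sum\beta_1^{t-1-s}\|\g_s\|/(\gamma\mu(1-\beta_1^{t-1}))\le G/\mu$, so $\|\nabla\ell_t\|\le 2G$, or more finely $\|\g_t\|+\mu\|\Delta_t\|$. Plugging into Theorem~\ref{thm:d2d-reduction} yields exactly the three-term decomposition~\eqref{eq:key-analysis}, with $\alpha_t$ unchanged and with $\|\g_t\|^2$ replaced by $\|\nabla\ell_t(\Delta_t)\|^2$. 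The $F_t^{\beta,\varphi}(\u_{t+1})-F_t^{\beta,\varphi}(\u_t)$ path term is handled as in the clipped proof via the choice of $\u_t$ and the $(G+\sigma)\sqrt{1-\beta_1}$ budget.

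\emph{Stage 3: the three terms.} The stability term follows from the self-confident tuning Lemma~\ref{lemma:ema-self-confident-tuning}: with $\beta_2\ge1-\nu/(G+\sigma)$ it is $\O(\sqrt{1-\beta_1}(G+\sigma))$. The extra $\mu\Delta_t$ part is controlled by the bound above, which accounts for the $48$ in the $T$ condition. The comparator term telescopes since $\|\u_t\|=D$, and Lemma~\ref{lemma:lr-deviation} bounds it using $T\ge\ln2/(1-\beta_2)$.

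\emph{Main obstacle.} The second term, $\sum_t(\frac{\beta_1}{2\alpha_{t-1}}-\frac{1}{2\alpha_t})\|\Delta_{t+1}\|^2$, is the crux. Without clipping we cannot use $\|\Delta_t\|\le D$, so the relaxed $\beta_2\ge\beta_1^4$ trick fails. Instead we use $\beta_2\ge\beta_1^2$: then $1/\alpha_t-\nu\ge\sqrt{\beta_2}\,(1/\alpha_{t-1}-\nu)\ge\beta_1(1/\alpha_{t-1}-\nu)$, so each summand is nonpositive up to a $\nu(\beta_1-1)\le0$ correction. The whole term can therefore be discarded. We expect the delicate part to be verifying this monotonicity carefully with the $\nu$ offset and the $\gamma\mu(1-\beta_1^t)$ damping in the effective step size. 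Collecting all terms, each at most $\epsilon/4$ under the stated $T$, gives $\E\|\nabla F(\bar{\x})\|_c\le\epsilon$.
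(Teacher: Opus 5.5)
\textbf{The gap is in the stability term.} You keep $\frac{\mu}{2}\|\Delta\|_2^2$ inside the loss and linearize it, so your $\Lambda_t$ contains $\|\g_t+\mu\Delta_t\|_2^2$. After the D2D weighting and the $1/(DT)$ normalization this gives $\frac{\sqrt{1-\beta_1}}{2T}\sum_t \alpha_{t-1}\|\g_t+\mu\Delta_t\|_2^2$.

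Lemma~\ref{lemma:ema-self-confident-tuning} only controls numerators $a_t=\|\g_t\|_2^2$ whose EMA is the denominator, so the $\mu\Delta_t$ piece needs its own argument, and you do not give one:
\begin{itemize}
\item You drop the $\mu$-strong convexity, so $\alpha_{t-1}=1/(\nu+\sqrt{V_{t-1}})$ has no damping.
\item The only tool you offer is $\mu\|\Delta_t\|_2\le G$. This gives $G^2/\nu$ per round, i.e.\ a contribution of order $\sqrt{1-\beta_1}\,G^2/\nu$, which can reach $\epsilon G^2/(16(G+\sigma)\nu)$. That is not $\O(\epsilon)$ for the small $\nu$ the theorem allows.
\item The condition $\beta_2\ge 1-\nu/(G+\sigma)$ does not help. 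In Term-A it cancels $G^2/\nu$ only through the prefactor $K\approx T(1-\beta_2)$ that the self-confident lemma supplies; nothing like that appears here.
\item No lower bound on $T$ helps either, since this is a per-round average.
\end{itemize}
So this term cannot ``account for the $48$''. In the paper, the $48$ only serves to make $\frac{2G+\sigma}{(1-\beta_1)T}$ and $\frac{G+\nu}{2T\sqrt{1-\beta_1}}$ small.

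\textbf{How the paper avoids it.} The paper never linearizes the quadratic. Lemma~\ref{lemma:discounted-ftrl-composite} treats the update as FTRL on the linear losses $\langle\beta_1^{-s}\g_s,\cdot\rangle$ with the damped step $\eta'_t$, that is, $\alpha_t=1/(\nu+\gamma\mu(1-\beta_1^t)+\sqrt{V_t})$. The stability term then involves only $\|\g_t\|_2^2$, and Lemma~\ref{lemma:ema-self-confident-tuning} applies after dropping $\gamma\mu(1-\beta_1^{t-1})$ from the denominator.

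The $+\frac{\mu}{2}\|\Delta_t\|_2^2$ coming from the losses is offset by the $-\frac{\mu}{2}\|\Delta_{t+1}\|_2^2$ contained in $\beta_1^t(\frac{1}{2\eta'_{t-1}}-\frac{1}{2\eta'_t})\|\Delta_{t+1}\|_2^2$, and the two telescope. This is what justifies discarding Term-C. What remains is the undamped Term-C, which is $\le 0$ under $\beta_2\ge\beta_1^2$ by Lemma~\ref{lemma:beta-1-beta-2-no-relatetion}. That step is routine, not the crux you identified.

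The price appears in the comparator term. With $\eta'_t$, Lemma~\ref{lemma:lr-deviation} produces $\mu(T-1)$, i.e.\ a second $\frac{D\mu}{2}=\frac{12cD^2}{(1-\beta_1)^2}$. This second copy is why $48$ becomes $96$. The change-of-measure term is the same $\frac{12cD^2}{(1-\beta_1)^2}$ that already appears in the clipped case, so it is not the source of the $96$.

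\textbf{Repairing your route.} Keep the damping in the stability step instead of dropping it. Then $\alpha'_{t-1}\le 1/(\nu+\gamma\mu(1-\beta_1^{t-1}))$ with $\gamma\mu\ge 4\beta_1(G+\sigma)$. Combined with $\mu\|\Delta_t\|_2\le \frac{\gamma\mu(1-\beta_1^{t-1})G}{\nu+\gamma\mu(1-\beta_1^{t-1})}$, this controls the extra term. You will need to treat the first $\approx 1/(1-\beta_1)$ rounds separately and redo the constants.

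\textbf{Stage~1 also misstates the O2NC bound.} The terms are $\frac{F^*}{DT}$ (not $\frac{F^*}{DT(1-\beta_1)}$), $\frac{2G+\sigma}{(1-\beta_1)T}$, and $\sigma\sqrt{1-\beta_1}$. With $\frac{F^*}{DT(1-\beta_1)}$, your stated $T$ would not suffice.
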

The proof of Theorem~\ref{thm:clip-free-adam-nu} follows the same main steps as Theorem~\ref{thm:clip-adam-nu}. The key difference is the second term in Eq.~\eqref{eq:key-analysis}: without clipping, $\norm{\Delta_t}_2^2$ cannot be bounded in the desired order, so we require $\frac{\beta_1}{2\alpha_{t-1}}-\frac{1}{2\alpha_t}\le 0$ to keep this term non-positive, which leads to the condition $\beta_2\ge \beta_1^2$.

Moreover, we also provide a margin-style convergence condition for clip-free Adam, thanks to the flexibility of the modular D2D analysis. The proof is deferred to Appendix~\ref{subappendix:clip-free-margin}.
\begin{myThm}
    \label{thm:clip-free-adam-margin}
    Under Assumptions~\ref{assump:lipschitz}~--~\ref{assump:stochastic}, run Algorithm~\ref{alg:exp-o2nc} with $\beta=\beta_1$ and the discounted FTRL update in Eq.~\eqref{eq:discounted-ftrl-delta} on $\mathcal{D}=\R^d$, using $\ell_t(\Delta)=\langle \g_t,\Delta\rangle+\frac{\mu}{2}\|\Delta\|_2^2$ and $\eta_t$ in Eq.~\eqref{eq:adam-clipped-lr}. This implies the clip-free Adam update rule in Eq.~\eqref{eq:clip-free-adam-update-rule}.
    Fix any $\rho\in[0,1)$ and any $\nu\in(0,\,G+\sigma]$.
    Choose $\beta_1$ such that $1-\left(\frac{\epsilon\sqrt{1-\rho^2}}{64(G+\sigma)}\right)^2 \leq \beta_1 < 1$. Define the margin $m=\frac{1-\rho}{2}(1-\beta_1^2)$ and choose $\beta_2\in[\beta_1^2+m,\ 1-m]$. Set $D=\frac{(1-\beta_1)\sqrt{\epsilon}}{\sqrt{96c}}, \gamma=\frac{\beta_1D}{\sqrt{1-\beta_1}}, \mu=\frac{24cD}{(1-\beta_1)^2}$.
    If $T$ satisfies
    \begin{align*}
    T\ge
    \max\Bigg\{
    \frac{1}{1-\beta_1}\max\Big\{\frac{32F^*\sqrt{96c}}{\epsilon^{3/2}},\ \frac{48(G+\sigma)}{\epsilon}\Big\},
    \ \frac{\ln 2}{1-\beta_2},\ \frac{32G}{\epsilon\,\sqrt{1-\beta_1}\,\sqrt{1-\rho^2}}\cdot
    \ln\!\Big(1+\frac{\gamma\mu+G}{\nu}\Big)
    \Bigg\},
    \end{align*}
    then the output $\bar\x$ of Algorithm~\ref{alg:exp-o2nc} satisfies $\E\big[\|\nabla F(\bar\x)\|_c\big]\le \epsilon$.
\end{myThm}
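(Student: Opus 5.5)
The proof follows the architecture of Theorem~\ref{thm:clip-free-adam-nu}, with the worst-case stability step replaced by the scale-free refinement used for Theorem~\ref{thm:clip-adam-margin}.

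\textbf{Step 1: reduction to dynamic regret.} Start from the O2NC conversion (the clip-free analogue of Lemma~\ref{lemma:O2NC-clipped-lemma}). Apply the change-of-measure identity with the composite loss $\ell_t(\Delta)=\langle \g_t,\Delta\rangle+\frac{\mu}{2}\|\Delta\|_2^2$. Together with $\|\u_t\|_2=D$, this bounds $\E\|\nabla F(\bar\x)\|_c$ by four pieces:
\begin{itemize}
\item $F^*/(DT)$;
\item an $\O(\sqrt{1-\beta_1}(G+\sigma))$ variance/averaging term;
\item $c\mu D^2$-type terms;
\item $\frac{1}{DT}\E[\DReg'_T(\u_{1:T})]$.
\end{itemize}
With $\mu=24cD/(1-\beta_1)^2$ and $D=(1-\beta_1)\sqrt{\epsilon}/\sqrt{96c}$, the quadratic terms contribute at most a constant fraction of $\epsilon$.

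\textbf{Step 2: modular D2D reduction.} Bound the dynamic regret by Theorem~\ref{thm:d2d-reduction}, using the rescaled regret template of discounted FTRL on the losses $\beta_1^{-s}\ell_s$. This template has a time-varying $\varphi_t(\u)=\|\u\|_2^2/(2\eta_t)$ and stability terms $\Lambda_t$. It produces the three-term decomposition of Eq.~\eqref{eq:key-analysis}.

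\emph{Second term.} The margin gives $\beta_2\ge\beta_1^2+m>\beta_1^2$, which implies $\beta_1/\alpha_{t-1}\le 1/\alpha_t$ up to the $\nu$ and $\gamma\mu$ offsets in the denominator. Hence this term is nonpositive and can be dropped, as in Theorem~\ref{thm:clip-free-adam-nu}.

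\emph{Third term.} It telescopes since $\|\u_t\|_2=D$. It is bounded by Lemma~\ref{lemma:lr-deviation}, using $T\ge \ln 2/(1-\beta_2)$ to control $1/\alpha_T$.

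\textbf{Step 3: stability term (the crux).} Here I avoid the worst-case bound $\alpha_{t-1}\|\g_t\|^2\le\|\g_t\|^2/\nu$, which is what forced the assumption $\beta_2\ge 1-\nu/(G+\sigma)$ earlier. Instead I invoke Lemma~\ref{lemma:sc-discounted-ftrl}, the scale-free FTRL lemma for time-varying learning rates. It rewrites the stability as
\[
\alpha_t\|\g_t+\mu\Delta_t\|^2 + \text{correction},
\]
where the correction is $\O(|\eta_t-\eta_{t-1}|\,\|\sum_{s<t}\beta_1^{t-1-s}\nabla\ell_s\|)$. Two points need care in handling it.
\begin{itemize}
\item The composite gradients $\g_t+\mu\Delta_t$ are not a priori bounded. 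I bound $\|\Delta_t\|$ directly from the closed form Eq.~\eqref{eq:clip-free-adam-update-rule}: the numerator is at most $\gamma(G)$ times a geometric sum, and the denominator is at least $\gamma\mu(1-\beta_1^t)$. This gives $\mu\|\Delta_t\|\lesssim G$, so the effective scale is $\gamma\mu+G$. That is where this quantity enters the logarithm.
\item The main term then satisfies $\alpha_t\|\g_t\|^2\lesssim \|\g_t\|/\sqrt{1-\beta_2}$. Since $\beta_2\le 1-m$ and $m\asymp(1-\rho)(1-\beta_1)$, this contributes $\O(\sqrt{1-\beta_1}(G+\sigma)/\sqrt{1-\rho^2})$ after the $\gamma$ scaling.
\end{itemize}

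\textbf{Step 4: the correction term (main obstacle).} I expect the hard part to be the correction term. The plan is to factor $|\eta_t-\eta_{t-1}|$ into two effects:
\begin{itemize}
\item the $\beta_1$-shrinkage $\beta_1^{t-1}(1-\beta_1)\alpha_{t-1}$;
\item the change $\alpha_{t-1}-\alpha_t$ of the second-moment denominator.
\end{itemize}
The first combines with the momentum norm, which is at most $\sqrt{\sum\beta_1^{2(t-1-s)}}$ times a second-moment quantity. Because $\beta_2\ge\beta_1^2+m$, Cauchy--Schwarz bounds the ratio of first to second moments by $1/\sqrt{1-\beta_1^2/\beta_2}\lesssim 1/\sqrt{1-\rho}$; this is where the lower margin is used. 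The second telescopes in $\ln(1/\alpha_t)$, giving a total of order $G\ln(1+(\gamma\mu+G)/\nu)$. After dividing by $DT$ and the $\gamma(1-\beta_1)$ scaling, this is exactly why $T$ must exceed $\frac{32G}{\epsilon\sqrt{1-\beta_1}\sqrt{1-\rho^2}}\ln(1+\frac{\gamma\mu+G}{\nu})$.

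\textbf{Step 5: collecting terms.} Each piece is at most $\epsilon/4$ under the stated ranges of $\beta_1$ and $T$, which gives $\E\|\nabla F(\bar\x)\|_c\le\epsilon$.
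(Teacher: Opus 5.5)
\textbf{Gap in Step~3: the main stability term.} After the D2D reduction this term is $\sum_t\beta_1^t\eta_t\|\beta_1^{-t}\g_t\|_2^2=\gamma(1-\beta_1)\sum_t\alpha_t\|\g_t\|_2^2$. Dividing by $DT$ and using $\gamma=\beta_1D/\sqrt{1-\beta_1}$, it contributes $\frac{\beta_1\sqrt{1-\beta_1}}{T}\sum_{t=1}^T\alpha_t\|\g_t\|_2^2$ to $\E\|\nabla F(\bar\x)\|_c$.

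Your per-round bound $\alpha_t\|\g_t\|_2^2\le\|\g_t\|_2/\sqrt{1-\beta_2}$ turns this into $\beta_1G\sqrt{1-\beta_1}/\sqrt{1-\beta_2}$. The margin forces $m\le1-\beta_2\le1-\beta_1^2-m$, so $1-\beta_2=\Theta(1-\beta_1)$. Hence this bound is $\Theta(G)$, between roughly $G/\sqrt{1+\rho}$ and $G/\sqrt{1-\rho}$, for every admissible $\beta_2$. It does not shrink with $\epsilon$. Your claimed $\O(\sqrt{1-\beta_1}(G+\sigma)/\sqrt{1-\rho^2})$ is off by a factor $1/\sqrt{1-\beta_1}$, so the argument cannot reach $\E\|\nabla F(\bar\x)\|_c\le\epsilon$.

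The per-round inequality only shows that $\nu$ drops out. To get the rate you have to amortize: $\alpha_t\|\g_t\|_2^2$ can be of order $G/\sqrt{1-\beta_2}$ only on an $\O(1-\beta_2)$ fraction of rounds. The paper does this with the EMA self-confident tuning Lemma~\ref{lemma:ema-self-confident-tuning-V-t}:
\begin{itemize}
\item split $[T]$ into windows of length $\lceil\ln2/(1-\beta_2)\rceil$;
\item inside a window starting at $r$, use $V_t\ge(1-\beta_2)2^{-1/\beta_2}\sum_{s=r}^{t}\|\g_s\|_2^2$;
\item apply Lemma~\ref{lemma:self-confident-tuning} within each window and Cauchy--Schwarz across windows.
\end{itemize}
For $T\ge\ln2/(1-\beta_2)$ this gives $\sum_t\alpha_t\|\g_t\|_2^2\le\frac{2\sqrt{2/\ln2}}{\sqrt{2^{-1/\beta_2}}}\,TG$, i.e.\ a contribution $\frac{2^{3/2+1/(2\beta_2)}}{\sqrt{\ln2}}G\sqrt{1-\beta_1}$. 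This is exactly where the hypothesis $T\ge\ln2/(1-\beta_2)$ enters. It is not needed for the comparator term, which uses only Lemma~\ref{lemma:lr-deviation} and $\sqrt{V_t}\le G$.

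\textbf{Two further points.}

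First, Lemma~\ref{lemma:sc-discounted-ftrl} is stated for linear losses whose iterates have the form $\Pi_\X[-\eta_t\g_{1:t}]$. The clip-free iterates are not $-\eta_t\sum_s\beta_1^{-s}(\g_s+\mu\Delta_s)$, so applying it to composite gradients $\g_t+\mu\Delta_t$ is not licensed as written. The paper avoids composite gradients entirely. By Lemma~\ref{lemma:equiv-clip-free-lemma}, the update is linear-loss FTRL with effective step size $\eta_t=\gamma(1-\beta_1)\beta_1^t/A_t$, where $A_t=\nu+\gamma\mu(1-\beta_1^t)+\sqrt{V_t}$. So the scale-free lemma applies verbatim to $\beta_1^{-t}\g_t$, and no bound on $\|\Delta_t\|_2$ is needed. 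The quantity $\gamma\mu+G$ enters the logarithm only through $A_T\le\nu+\gamma\mu+G$ and $A_0=\nu$.

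Second, in Step~4 the Cauchy--Schwarz factor satisfies $1/\sqrt{1-\beta_1^2/\beta_2}\lesssim1/\sqrt{(1-\rho)(1-\beta_1)}$, not $1/\sqrt{1-\rho}$. It also multiplies $\sqrt{V_{t-1}/(1-\beta_2)}$. Lemma~\ref{lemma:min-self-confident-C} bounds the combined factor as $\sqrt{\beta_2/((\beta_2-\beta_1^2)(1-\beta_2))}\le\frac{2}{(1-\beta_1^2)\sqrt{1-\rho^2}}$, using \emph{both} sides of the margin; the $1/(1-\beta_1^2)$ is then absorbed by $\gamma(1-\beta_1)$. Your split of $\eta_{t-1}-\eta_t$ also has a problem: its second-moment part $\alpha_{t-1}-\alpha_t$ can take either sign, so it does not simply telescope. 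The paper instead uses $A_t-\beta_1A_{t-1}\ge0$ (which follows from $\beta_2\ge\beta_1^2$) together with $1-x\le\ln(1/x)$, giving $\sum_t(A_t-\beta_1A_{t-1})/A_t\le\ln(A_T/A_0)-T\ln\beta_1$.
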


\subsection{More Implications}
By the generality of $(c,\epsilon)$-stationarity~\citep{ICML'24:random-scaling-momentum-zhang}, in Appendix~\ref{subsec:results-non-convex-smooth}, we demonstrate how our theorems specialize to stochastic, non-convex and smooth settings. By setting $c$ with smoothness constants properly, our results can imply the optimal convergence rates to the first-order stationary points for non-convex smooth optimization.

Moreover, Our results can imply convergence rates in $\ell_1$-norm, leveraging the underlying beneficial coordinate-wise sparsity, using the conversion in Appendix G in~\citet{ICML'23:online-to-nonconvex-cutkosky}.

\section{Conclusion}
\label{sec:conclusion}
We study dynamic regret via a modular D2D reduction. We streamline the analyses for linear regression, obtaining new guarantees for logistic regression. Beyond OCO, combining this reduction with the exponentiated O2NC framework leads to optimal convergence rates for both clipped and clip-free Adam in stochastic non-convex and non-smooth settings, and admits more flexible parameter choices $(\beta_1,\beta_2)$.
Future directions include extending the approach to more sophisticated FTRL variants~\citep{OR'25:vb-ftrl} and obtaining the guarantees of Adam under relaxed assumptions.

\bibliographystyle{plainnat}
\bibliography{online_learning}

\newpage
\appendix
\section{Related Work}
\label{sec:related-work}

\paragraph{Non-stationary Online Learning.}
Non-stationarity is intrinsic to online convex optimization (OCO), where the loss sequence may drift over time. A common way to model such drift is to strengthen the benchmark beyond static regret, leading to notions such as interval regret~\citep{journal'07:Hazan-adaptive,ICML'15:Daniely-adaptive,EJS'17:coin-betting-adaptive}, switching regret~\citep{journals/ml/HerbsterW98,ICML'16:GyorgyS-shiftregret}, and dynamic regret~\citep{ICML'03:zinkvich,conf/nips/Cesa-BianchiGLS12,ICML'13:dynamic-model}. In this paper, we focus on dynamic regret, which has been studied extensively in recent years under various assumptions and algorithmic families~\citep{NIPS'18:Zhang-Ader,AISTATS'20:BCO, ICML'20:Ashok,L4DC'21:SC-Smooth,ICML'22:TVgame,COLT'21:baby-strong-convex,NeurIPS'22:label_shift,AISTATS'22:sc-proper,COLT'22:parameter-free-omd,COLT23:dynamic-Hu,ICML'24:wavelet,NeurIPS'24:dynamicMDP,JMLR'24:Sword++,JMLR'25:nonstationary-online-learning}.

Obtaining dynamic regret guarantees becomes more delicate for curved losses: even when a drift measure such as the path length is available, tuning learning rates efficiently while keeping sharp dependence on $(T,P_T)$ can be challenging~\citep{ICML'25:dynamic-regret-curved}. A general approach is to convert interval regret guarantees into dynamic regret bounds~\citep{COLT'21:baby-strong-convex,AISTATS'22:sc-proper}. These analyses typically rely on the Karush–Kuhn–Tucker conditions of comparators and can be technically involved, which may limit flexibility when extended to broader settings.~\citet{ICML'25:dynamic-regret-curved} leverage mixability~\citep{isr'01:vovk-olr,NeurIPS'12:mixability} and exponential-weights methods to obtain sharp trade-offs, at the expense of more computational costs.~\citet{AAAI'20:ons-dynamic-Jianjun} analyze present dynamic regret bounds using discounted algorithms, while their methods are restricted to bounded domain. Closest to our linear regression application,~\citet{ICML'24:discounting-olr} analyze discounted VAW and provide matching lower bounds in the unconstrained setting with a Bregman-divergence-based arguments. Relative to these works, our contribution is a modular discounted-to-dynamic reduction that presents a more direct analysis and can recover optimal guarantees, while also enabling new guarantees for curved losses on unbounded decision domains.

\paragraph{Discounted Online Learning and D2D Reduction.}
Discounted regret has been used as one of the important ways to emphasize recent losses and adapt to drifting environments~\citep{arxiv'08:discount-hedging,ALT'10:discounted-expert-advice,conf/nips/Cesa-BianchiGLS12,ICML'14:selfish-routing-no-regret}. In OCO,~\citet{ICML'24:discounted-adaptive-zhang} develop gradient-adaptive guarantees for given discount factors, and subsequent work studies adaptation when the discount factor is unknown~\citep{ICLR'26:discounted-oco-uniform-interval}. Recently,~\citet{ALT'26:how-to-set-b1-b2} investigate the choice of $(\beta_1,\beta_2)$ through discounted-regret analysis of an FTRL method derived from a simplified Adam variant in OCO. Compared to this work, our focus is on stochastic, non-convex, and non-smooth optimization via O2NC, and we analyze Adam with the standard EMA momentum form, deriving convergence guarantees under flexible $(\beta_1,\beta_2)$ regimes. Another related direction is to make FTRL more responsive to recent changes is to prune the history. In particular,~\citet{ICML'25:pruning-dynamic-regret} retain the core framework of FTRL while adding a correction term to the input gradient, which can be viewed as pruning the influence of past history, and achieve optimal dynamic regret guarantees. Compared to their work, our analysis applies to curved losses, even in unconstrained settings. As a limitation, it remains unclear how to use our approach to recover optimal dynamic regret bounds for general convex losses.

Bridging discounted regret and dynamic regret,~\citet{ICML'24:adam-ftrl-ahn} establish an explicit connection between these two measures. In different approaches, reductions that relate dynamic and static regret by working in extended spaces have also been explored~\citep{NeurIPS'24:equiv-dynamic-static,NeurIPS'25:kernelized-dynamic-regret}. Building on the discounted-to-dynamic reduction, our work develops a modular analysis that keeps key terms explicit and supports reusable analyses across multiple settings, including both convex problems and non-convex optimization through the O2NC conversion.

\paragraph{Convergence of Adam.}
Adam is introduced by~\citet{iclr'15:adam} with guarantees in convex settings, and its convergence condition has since been studied extensively for convex and smooth non-convex objectives~\citep{ICLR'18:adam-and-beyond,ICLR'19:adashift,CVPR'19:adam-rmsprop-sufficient-condition,ICML'20:regret-analysis-adam-type,arxiv'21:adam-family-convergence,NeurIPS'22:adam-unmodified,TMLR'22:adam-adagrad-proof,NeurIPS'23:local-smooth-adam,NeurIPS'23:adam-iter-complexity-gap}. Recently,~\citet{NeurIPS'25:adam-sharpness} study Adam from a more geometric perspective. While our work focuses on Adam through the O2NC framework to derive guarantees in non-smooth settings.~\citet{NeurIPS'24:adam-ema-ahn} analyze clipped Adam for exponential moving average (EMA) iterates in stochastic, non-convex, and non-smooth settings. Their update uses a slightly different momentum normalization (e.g., $m_t=\beta m_{t-1}+\g_t$ rather than $m_t=\beta m_{t-1}+(1-\beta)\g_t$) and their theory focuses on the restricted parameter choice $\beta_2=\beta_1^2$. In this work, we study Adam in the stochastic, non-convex, and non-smooth regime under the standard momentum design and more flexible parameter choices. Our results cover both clipped and clip-free variants of Adam optimizers, provide optimal convergence rates to $(c,\epsilon)$-stationary points, and, by appropriate choices of $c$, recover optimal rates to first-order stationary points for smooth and second-order smooth settings.

\paragraph{O2NC Conversion.}
Motivated by tractable stationary notions for non-smooth objectives~\citep{ICML'20:stationary-nonconvex-nonsmooth}, there has been growing interest in non-smooth non-convex optimization~\citep{NeurIPS'22:non-smooth-gradient-sampling,ICML'22:approx-stationarity-lipschitz,arxiv'22:deterministic-nonsmooth-nonconvex}. The seminal work by~\citet{ICML'23:online-to-nonconvex-cutkosky} proposes the online-to-non-convex (O2NC) framework, reducing stochastic non-convex and non-smooth optimization to an online learning problem and achieving optimal rates under standard assumptions. \citet{ICML'24:random-scaling-momentum-zhang}~extend O2NC by introducing $(c,\epsilon)$-stationarity and allowing unbounded directions by minimizing composite losses. We build on this exponentiated O2NC framework, and through our modular D2D analysis, derive convergence guarantees for EMA iterates of Adam.

\section{Proofs of Section~\ref{sec:d2d-reduction}}
\label{appendix:d2d}
In this part, we provide the proofs of presented theoretical results in Section~\ref{sec:d2d-reduction}.
\subsection{Proof of Lemma~\ref{lemma:d2d-conversion}}
For completeness, we include a proof of Lemma~\ref{lemma:d2d-conversion}, adapted from Theorem B.3 in~\citet{ICML'24:adam-ftrl-ahn}.
\begin{proof}
    Recall the discounted regret up to time $t$ against a fixed comparator $\u\in\X$:
    \begin{align*}
        \Reg_{t;\beta}(\u)= \sum_{s=1}^t \beta^{t-s}\bigl(f_s(\x_s)-f_s(\u)\bigr),
    \qquad \text{with the convention } \Reg_{0;\beta}(\u) = 0.
    \end{align*}

    A key identity is that, for every $t\ge 1$ and any $\u\in\X$,
    \begin{align}
        \label{eq:one-step}
        f_t(\x_t)-f_t(\u)
        &= \Reg_{t;\beta}(\u)-\beta\Reg_{t-1;\beta}(\u) \nonumber\\
        &= (1-\beta)\Reg_{t;\beta}(\u)+\beta\bigl(\Reg_{t;\beta}(\u)-\Reg_{t-1;\beta}(\u)\bigr).
    \end{align}
    Indeed, expanding the definition gives
    \begin{align*}
        \Reg_{t;\beta}(\u)-\beta\Reg_{t-1;\beta}(\u)
        =\sum_{s=1}^t \beta^{t-s}(f_s(\x_s)-f_s(\u))
        -\sum_{s=1}^{t-1}\beta^{t-s}(f_s(\x_s)-f_s(\u))
        =f_t(\x_t)-f_t(\u).
    \end{align*}

    Now sum Eq.~\eqref{eq:one-step} over $t=1,\dots,T$ with the time-varying comparator $\u=\u_t$:
    \begin{align*}
        \sum_{t=1}^T \bigl(f_t(\x_t)-f_t(\u_t)\bigr)
        &=(1-\beta)\sum_{t=1}^T \Reg_{t;\beta}(\u_t)
        +\beta\sum_{t=1}^T\bigl(\Reg_{t;\beta}(\u_t)-\Reg_{t-1;\beta}(\u_t)\bigr).
    \end{align*}
    It remains to rewrite the second summation via telescoping. Observe that
    \begin{align*}
        \sum_{t=1}^T\bigl(\Reg_{t;\beta}(\u_t)-\Reg_{t-1;\beta}(\u_t)\bigr)
        &=\Reg_{T;\beta}(\u_T)+\sum_{t=1}^{T-1}\bigl(\Reg_{t;\beta}(\u_t)-\Reg_{t;\beta}(\u_{t+1})\bigr),
    \end{align*}
    where we use $\Reg_{0;\beta}(\u_1)=0$.
    Plugging this into the previous equation finishes the proof.
\end{proof}
\subsection{Proof of Theorem~\ref{thm:d2d-reduction}}
\begin{proof}
    By Lemma~\ref{lemma:d2d-conversion}, we have
    \begin{align*}
        \sum_{t=1}^T f_t(\x_t) - \sum_{t=1}^T f_t(\u_t)
        =~&(1-\beta)\sum_{t=1}^T \Reg_{t;\beta}(\u_t)
        +\beta\sum_{t=1}^{T-1}\Big(\Reg_{t;\beta}(\u_t)-\Reg_{t;\beta}(\u_{t+1})\Big)
        +\beta\Reg_{T;\beta}(\u_T).
    \end{align*}
    For any $t\in[T]$ and any fixed $\u\in\X$, multiplying the assumed rescaled bound by $\beta^t$ gives
    \begin{align*}
        \Reg_{t;\beta}(\u)
        =\sum_{s=1}^t \beta^{t-s}\bigl(f_s(\x_s)-f_s(\u)\bigr)
        \leq \beta^t \varphi_t(\u) + \sum_{s=1}^t \beta^t \Lambda_s .
    \end{align*}
    Applying it with $\u=\u_t$ and summing provides
    \begin{align*}
        {}& (1-\beta)\sum_{t=1}^T \Reg_{t;\beta}(\u_t)        \leq (1-\beta)\sum_{t=1}^T \Big(\beta^t\varphi_t(\u_t)+\sum_{s=1}^t \beta^t\Lambda_s\Big)\\
        ={}&(1-\beta)\sum_{t=1}^T \beta^t\varphi_t(\u_t)
        +(1-\beta)\sum_{t=1}^T\sum_{s=1}^t \beta^t\Lambda_s =(1-\beta)\sum_{t=1}^T \beta^t\varphi_t(\u_t)
        +(1-\beta)\sum_{s=1}^T \Lambda_s \sum_{t=s}^T \beta^t\\
        ={}&(1-\beta)\sum_{t=1}^T \beta^t\varphi_t(\u_t)
        +(1-\beta)\sum_{s=1}^T \Lambda_s \cdot \frac{\beta^s(1-\beta^{T-s+1})}{1-\beta} =(1-\beta)\sum_{t=1}^T \beta^t\varphi_t(\u_t)
        +\sum_{s=1}^T \beta^s\Lambda_s
        -\sum_{s=1}^T \beta^{T+1}\Lambda_s,
    \end{align*}
    where the negative term $-\beta^{T+1}\sum_{s=1}^T \Lambda_s$ is saved for purpose. Next, by the definition of $\Reg_{t;\beta}(\cdot)$,
    \begin{align*}
        \beta\sum_{t=1}^{T-1}\Big(\Reg_{t;\beta}(\u_t)-\Reg_{t;\beta}(\u_{t+1})\Big)
        =\beta\sum_{t=1}^{T-1}\sum_{s=1}^t \beta^{t-s}\Big(f_s(\u_{t+1})-f_s(\u_t)\Big).
    \end{align*}
    We now combine the above equation with the term $(1-\beta)\sum_{t=1}^T \beta^t\varphi_t(\u_t)$.
    Using $(1-\beta)\beta^t=\beta^t-\beta^{t+1}$, we write
    \begin{align*}
       {}& (1-\beta)\sum_{t=1}^T \beta^t\varphi_t(\u_t)
        =\beta\varphi_1(\u_1)-\beta^{T+1}\varphi_T(\u_T)
        +\sum_{t=1}^{T-1}\Big(\beta^{t+1}\varphi_{t+1}(\u_{t+1})-\beta^{t+1}\varphi_t(\u_t)\Big)\\
    ={}& \beta\varphi_1(\u_1)-\beta^{T+1}\varphi_T(\u_T)
        +\beta\sum_{t=1}^{T-1}\Big(\beta^t\varphi_t(\u_{t+1})-\beta^t\varphi_t(\u_t)\Big) +\beta\sum_{t=1}^{T-1}\beta^t\Big(\varphi_{t+1}(\u_{t+1})-\varphi_t(\u_{t+1})\Big),
    \end{align*}
    where the negative term $-\beta^{T+1}\varphi_T(\u_T)$ is useful and we apply the identity
    \begin{align*}
        \beta^{t+1}\varphi_{t+1}(\u_{t+1})
        =\beta\cdot \beta^t\varphi_t(\u_{t+1})
        +\beta\cdot \beta^t\Big(\varphi_{t+1}(\u_{t+1})-\varphi_t(\u_{t+1})\Big).
    \end{align*}
    Define $F_t^{\beta,\varphi}(\u)=\beta^t\varphi_t(\u)+\sum_{s=1}^t \beta^{t-s}f_s(\u)$. Then
    \begin{align*}
        \beta\sum_{t=1}^{T-1}\Big(\beta^t\varphi_t(\u_{t+1})-\beta^t\varphi_t(\u_t)\Big)
        +\beta\sum_{t=1}^{T-1}\sum_{s=1}^t \beta^{t-s}\Big(f_s(\u_{t+1})-f_s(\u_t)\Big) = \beta\sum_{t=1}^{T-1}\Big(F_t^{\beta,\varphi}(\u_{t+1})-F_t^{\beta,\varphi}(\u_t)\Big).
    \end{align*}
    Finally, applying the bound on $\Reg_{T;\beta}(\u_T)$ gives
    \begin{align*}
    \beta\Reg_{T;\beta}(\u_T)
    \leq \beta^{T+1}\varphi_T(\u_T)+\beta^{T+1}\sum_{s=1}^T \Lambda_s,
    \end{align*}
    which can be cancelled by the negative terms $-\beta^{T+1}\varphi_T(\u_T)$ and $-\beta^{T+1}\sum_{s=1}^T \Lambda_s$ above.
    Putting everything together finishes the proof:
    \begin{align*}
        \sum_{t=1}^T f_t(\x_t) - \sum_{t=1}^T f_t(\u_t)
        &\leq \beta\varphi_1(\u_1)
        + \sum_{t=1}^T \beta^t \Lambda_t + \beta \sum_{t=1}^{T-1} \Big( F_t^{\beta,\varphi}(\u_{t+1}) - F_t^{\beta,\varphi}(\u_t) \Big)\\
        &\quad + \beta \sum_{t=1}^{T-1} \beta^t \Big(\varphi_{t+1}(\u_{t+1}) - \varphi_t(\u_{t+1})\Big).
    \end{align*}
\end{proof}

\section{Omitted Details in Section~\ref{sec:curved-loss}}
\label{appendix:curved-loss}
We present the omitted details in Section~\ref{sec:curved-loss}, including several omitted key lemmas and discussions.
\subsection{Proof of Online Linear Regression}
\label{subappendix:proof-olinr}
We first give a formal statement of Theorem~\ref{thm:olinr}.
\begin{myThmRestate}[Extended Version of Theorem~\ref{thm:olinr}]
    In unconstrained online linear regression, the discounted VAW forecaster satisfies the following dynamic regret bound: for any $\u_1,\dots,\u_T \in \R^d$,
    \begin{align*}
        {}&\sum_{t=1}^T f_t(\x_t) - \sum_{t=1}^T f_t(\u_t) \\
        \leq {}& \frac{\beta \lambda}{2}\norm{\u_1}_2^2 + \frac{d}{2} \big(\max_{t\in[T]}y_t^2 \big)\cdot \ln\left(1 + \frac{\sum_{t=1}^T \beta^{T-t}\norm{\z_t}_2^2}{\lambda d}\right) + \beta \sum_{t=1}^{T-1}\left(F_t^{\beta,\varphi}(\u_{t+1}) - F_t^{\beta,\varphi}(\u_{t})\right) + \frac{1-\beta}{\beta}\cdot \frac{d}{2}\sum_{t=1}^T y_t^2\\
        \leq {}& \frac{\beta \lambda}{2}\norm{\u_1}_2^2 + \frac{d}{2} \big(\max_{t\in[T]}y_t^2 \big)\cdot \ln\left(1 + \frac{\sum_{t=1}^T \beta^{T-t}\norm{\z_t}_2^2}{\lambda d}\right) +\frac{\gamma}{1-\gamma}P_T^{\gamma} + \frac{1-\beta}{\beta}\cdot \frac{d}{2}\sum_{t=1}^T y_t^2,
    \end{align*}
    where we assume $0 < \beta \leq \gamma< 1$, and $P_T^{\gamma}$ is defined in Eq.~\eqref{eq:def-P-T-beta}. Moreover, there exists a two-layer ensemble algorithm that ensures an $\O\big(d\log T +\sqrt{dT P_T^{\beta_{\star}}}\big)$ dynamic regret bound, where
    \begin{align*}
        \beta_{\star} = \frac{\sqrt{\frac{d}{2}\sum_{t=1}^T y_t^2}}{\sqrt{\frac{d}{2}\sum_{t=1}^T y_t^2} + \sqrt{P_T^{\beta_{\star}}}}.
    \end{align*}
\end{myThmRestate}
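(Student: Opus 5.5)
We apply Theorem~\ref{thm:d2d-reduction} to the rescaled regret template derived from Lemma~\ref{lemma:vaw-static-regret}. Set $\tilde\z_t=\beta^{-t/2}\z_t$ and $\tilde y_t=\beta^{-t/2}y_t$. The discounted VAW iterate at round $t$ coincides with the undiscounted VAW iterate on the rescaled data: multiplying its objective by $\beta^{-t}$ gives regularizer $\frac{\lambda}{2}\|\x\|_2^2$, optimism term $\frac12(\x^\top\tilde\z_t)^2$, and past losses $\frac12(\x^\top\tilde\z_s-\tilde y_s)^2$. Lemma~\ref{lemma:vaw-static-regret} therefore yields \eqref{eq:rescaled-loss-assump} with $\varphi_t(\u)=\frac{\lambda}{2}\|\u\|_2^2$ and $\Lambda_s=\frac12\tilde y_s^2\tilde\z_s^\top\tilde A_s^{-1}\tilde\z_s$, where $\tilde A_s=\lambda I+\sum_{\tau\le s}\tilde\z_\tau\tilde\z_\tau^\top$. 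Since $\varphi_t$ is time-independent, the last sum in Theorem~\ref{thm:d2d-reduction} vanishes. We are left with $\frac{\beta\lambda}{2}\|\u_1\|_2^2$, the path term $\beta\sum_t(F_t^{\beta,\varphi}(\u_{t+1})-F_t^{\beta,\varphi}(\u_t))$, and the stability sum $\sum_t\beta^t\Lambda_t$. Bounding the stability sum gives the first inequality.

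\textbf{Stability term (main obstacle).} First compute $\beta^t\Lambda_t=\frac12 y_t^2\,\z_t^\top(\beta^t\tilde A_t)^{-1}\z_t=\frac12 y_t^2\,\z_t^\top M_t^{-1}\z_t$, where $M_t=\lambda\beta^t I+\sum_{s\le t}\beta^{t-s}\z_s\z_s^\top$. Write $a_t=\z_t^\top M_t^{-1}\z_t\in[0,1)$. Because $M_t=\beta M_{t-1}+\z_t\z_t^\top$ with $M_0=\lambda I$, the matrix determinant lemma gives $1-a_t=\det(\beta M_{t-1})/\det M_t$. Hence
\[
a_t\le-\ln(1-a_t)=\ln\det M_t-\ln\det M_{t-1}-d\ln\beta .
\]
Summing over $t$, the determinant terms telescope, and we obtain
\[
\sum_t\beta^t\Lambda_t\le\tfrac12\max_t y_t^2\,\ln\frac{\det M_T}{\det(\lambda I)}+\tfrac d2\ln\tfrac1\beta\sum_t y_t^2 .
\]
Here the second piece keeps $y_t^2$ inside the sum: per step we use $y_t^2a_t\le \max_s y_s^2\,(\ln\det M_t-\ln\det M_{t-1})+ y_t^2 d\ln(1/\beta)$, and split accordingly, since the log-det increments $\ln\det M_t-\ln\det M_{t-1}-d\ln\beta$ are nonnegative.

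\emph{Difficulty.} The increment $\ln\det M_t-\ln\det M_{t-1}$ can itself be negative, so this split needs care. I would handle it by writing $y_t^2 a_t\le y_t^2(\delta_t+d\ln(1/\beta))$ with $\delta_t=\ln\det M_t-\ln\det\beta M_{t-1}\ge 0$. Then $\sum_t y_t^2\delta_t\le\max y^2\sum_t\delta_t$, and $\sum_t\delta_t=\ln\det M_T-\ln\det M_0+Td\ln(1/\beta)$. This reintroduces a factor $\max y^2\,Td\ln(1/\beta)$ instead of $\sum_t y_t^2$. To get exactly $\sum y_t^2$, I would bound $\det M_T\le\det(\lambda\beta^T I+\ldots)$, i.e.\ use that $\ln\det M_T-\ln\det(\lambda I)$ contains $dT\ln\beta$ plus the AM–GM bound. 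Concretely, $\ln\det(M_T/\lambda)\le d\ln\big(\beta^T+\frac{\sum_t\beta^{T-t}\|\z_t\|^2}{\lambda d}\big)$ by AM--GM on the trace, and $\beta^T\le1$ then gives the stated logarithm. The cross term $d\ln(1/\beta)\le d\frac{1-\beta}{\beta}$ is absorbed per round. If the per-round split $y_t^2$ versus $\max y^2$ does not telescope cleanly, I would accept $\max_t y_t^2$ on the $\ln(1/\beta)$ piece as well. Finally, $\ln(1/\beta)\le\frac{1-\beta}{\beta}$.

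\textbf{Path term.} Write $F_t^{\beta,\varphi}(\u)=\sum_{s=0}^t\beta^{t-s}f_s(\u)$ with $f_0=\varphi$. Bounding each difference by its positive part and normalizing by $\sum_{\tau=0}^t\beta^{t-\tau}\le\frac1{1-\beta}$ gives $\frac{\beta}{1-\beta}P_T^\beta$. For the $\gamma\ge\beta$ version, I would show that $\sum_{s}\beta^{t-s}[\cdot]_+\le\sum_s\gamma^{t-s}[\cdot]_+$ and use $\frac{\beta}{1-\beta}\le\frac{\gamma}{1-\gamma}$. The weights $\beta^{t-s}\le\gamma^{t-s}$ are termwise smaller, and the nonnegative summands make this monotone. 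So it is cleaner to bound directly by $\beta\sum_t\sum_s\gamma^{t-s}[\cdot]_+\le\frac{\gamma}{1-\gamma}P_T^\gamma$.

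\textbf{Ensemble.} Run discounted VAW on a geometric grid of $\beta$ and combine the learners with a mixable-loss exponential-weights meta-learner, as in \citet{ICML'24:discounting-olr}. The meta regret is $\O(\max y^2\log N)=\O(\log\log T)$. Then pick the grid point just below $\beta_\star$, using the $\gamma$-monotone form with $\gamma=\beta_\star$, which balances $\frac{1}{1-\beta}P_T^{\beta_\star}$ against $\frac{1-\beta}{\beta}\frac d2\sum y_t^2$. This gives $\O(d\log T+\sqrt{dTP_T^{\beta_\star}})$.
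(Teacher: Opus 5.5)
Your setup matches the paper's route. Rescaling turns discounted VAW into plain VAW, Theorem~\ref{thm:d2d-reduction} applies with $\varphi_t(\u)=\frac{\lambda}{2}\norm{\u}_2^2$, and $\beta^t\Lambda_t=\frac12y_t^2\z_t^\top M_t^{-1}\z_t$. Your direct proof of the $\gamma$-monotone path bound is correct; the paper imports it as Lemma~\ref{lemma:path-length-changing-beta}.

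The gap is the stability term. What your determinant argument actually proves is
\[
\sum_{t=1}^T\beta^t\Lambda_t\le\frac d2\Big(\max_t y_t^2\Big)\ln\Big(1+\frac{\sum_{t=1}^T\beta^{T-t}\norm{\z_t}_2^2}{\lambda d}\Big)+\frac d2\ln\Big(\frac1\beta\Big)\,T\max_t y_t^2 ,
\]
not the claimed $\frac d2\ln(1/\beta)\sum_t y_t^2$. Your display $y_t^2a_t\le y_t^2(\delta_t+d\ln(1/\beta))$ also double counts, since $\delta_t$ already contains $d\ln(1/\beta)$. The AM--GM step cannot remove the factor $T$. Indeed $d\ln(\beta^T+x)+dT\ln(1/\beta)=d\ln(1+\beta^{-T}x)$, which is the undiscounted potential of the rescaled data and can itself be of order $dT\ln(1/\beta)$.

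No argument can close this gap, because the claimed bound is false. The paper obtains it by citing Lemma~\ref{lemma:discounted-potential}, and that lemma's $d\ln(1/\beta)\sum_t c_t^2$ form fails. Take $d=1$, $\lambda=1$, $\beta=0.9$, $T=10N$, and $(\z_t,y_t)=(1,1)$ if $10\mid t$, $(0,0)$ otherwise, with $c_t=y_t$.

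\emph{The lemma fails.} At an active round $t=10j$ you get $M_t=\sum_{m=0}^{j}\beta^{10m}<1/(1-\beta^{10})$, so $\z_t^\top M_t^{-1}\z_t>1-\beta^{10}\approx0.65$. The lemma allows only $\ln(1/\beta)\approx0.105$ per active round plus a log term below $1$, so it already fails at $N=2$.

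\emph{The statement fails.} Use the same data with the fixed comparator $\u_t\equiv1$, so the path term and $P_T^\gamma$ vanish. The forecaster plays $\x_t=\sum_{m=1}^{j-1}\beta^{10m}\big/\sum_{m=0}^{j}\beta^{10m}$, hence $1-\x_t\ge1-\beta^{10}$. The regret therefore grows by at least $\frac12(1-\beta^{10})^2\approx0.21$ per active round. The right-hand side grows by only $\frac{1-\beta}{2\beta}\approx0.056$ per active round, on top of a constant below $1$. For $N=4$ the regret is about $1.25$ against a bound of about $1.13$. The same instance refutes Theorem~\ref{thm:olinr}.

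The mechanism is the one you suspected. Idle rounds make $M_t$ decay, each costing $d\ln(1/\beta)$ of potential, while adding nothing to $\sum_t y_t^2$.

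Your fallback, with $T\max_t y_t^2$ in place of $\sum_t y_t^2$, is the correct statement. It matches the $T$-dependent term the paper itself uses for logistic regression. Your ensemble balancing then goes through once $\beta_\star$ is defined against $\frac d2T\max_t y_t^2$ rather than $\frac d2\sum_t y_t^2$. The squared loss is mixable only on a bounded range, so the meta-learner must clip base predictions to $[-\max_t|y_t|,\max_t|y_t|]$.
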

\begin{proof}[Proof of Theorem~\ref{thm:olinr}]
    By reparameterizing as $\tilde{\z}_t=\beta^{-t/2}\z_t$ and $\tilde{y}_t=\beta^{-t/2}y_t$, and feeding the VAW forecaster with samples $(\tilde{\z}_t,\tilde{y}_t)$, Lemma~\ref{lemma:vaw-static-regret} ensures that for any $\u\in\R^d$,
    \begin{align*}
    \sum_{s=1}^t \beta^{-s}\big(f_s(\x_s)-f_s(\u)\big)
    &=\sum_{s=1}^t \big(\ell(\x_s^\top\tilde{\z}_s,\tilde{y}_s)-\ell(\u^\top\tilde{\z}_s,\tilde{y}_s)\big)\\
    &\leq \frac{\lambda}{2}\norm{\u}_2^2
    + \sum_{s=1}^t \beta^{-2s}y_s^2\z_s^\top
    \Big(\lambda I+\sum_{\tau=1}^s \beta^{-\tau}\z_\tau\z_\tau^\top\Big)^{-1}\z_s .
    \end{align*}
    Choosing $\varphi_t(\u)=\frac{\lambda}{2}\norm{\u}_2^2$ and
    $\Lambda_t=\beta^{-2t}y_t^2\z_t^\top\Big(\lambda I+\sum_{s=1}^t \beta^{-s}\z_s\z_s^\top\Big)^{-1}\z_t$,
    Theorem~\ref{thm:d2d-reduction} gives that,
    \begin{align}
    &\sum_{t=1}^T f_t(\x_t)-\sum_{t=1}^T f_t(\u_t) \notag\\
    &\leq \frac{\beta\lambda}{2}\|\u_1\|_2^2
    + \sum_{t=1}^T \beta^{-t}y_t^2\z_t^\top\Big(\lambda I+\sum_{s=1}^t \beta^{-s}\z_s\z_s^\top\Big)^{-1}\z_t
    + \beta\sum_{t=1}^{T-1}\Big(F_t^{\beta,\varphi}(\u_{t+1})-F_t^{\beta,\varphi}(\u_t)\Big)\notag\\
    &=\frac{\beta\lambda}{2}\|\u_1\|_2^2
    + \sum_{t=1}^T y_t^2\z_t^\top\Big(\lambda\beta^t I+\sum_{s=1}^t \beta^{t-s}\z_s\z_s^\top\Big)^{-1}\z_t
    + \beta\sum_{t=1}^{T-1}\Big(F_t^{\beta,\varphi}(\u_{t+1})-F_t^{\beta,\varphi}(\u_t)\Big),
    \label{eq:proof-olinr-after-reduction}
    \end{align}
    where $F_t^{\beta,\varphi}(\u)=\beta^t\varphi_t(\u)+\sum_{s=1}^t \beta^{t-s}f_s(\u)$.

    For the second term in~\eqref{eq:proof-olinr-after-reduction}, Lemma~\ref{lemma:discounted-potential} implies
    \begin{align*}
    \sum_{t=1}^T y_t^2\z_t^\top\Big(\lambda\beta^t I+\sum_{s=1}^t \beta^{t-s}\z_s\z_s^\top\Big)^{-1}\z_t
    &\leq d\ln\Big(\tfrac{1}{\beta}\Big)\sum_{t=1}^T y_t^2
    + \big(\max_t y_t^2\big)d\ln\Big(1+\tfrac{\sum_{t=1}^T \beta^{T-t}\|\z_t\|_2^2}{\lambda d}\Big)\\
    &\leq d\frac{1-\beta}{\beta}\sum_{t=1}^T y_t^2
    + \big(\max_t y_t^2\big)d\ln\Big(1+\tfrac{\sum_{t=1}^T \beta^{T-t}\|\z_t\|_2^2}{\lambda d}\Big),
    \end{align*}
    where we use $\ln(1/\beta)\leq (1-\beta)/\beta$.

    For the third term in~\eqref{eq:proof-olinr-after-reduction}, letting $f_0(\u)=\frac{\lambda}{2}\norm{\u}_2^2$, by Lemma~\ref{lemma:path-length-changing-beta}, we ahve
    \begin{align*}
    \beta\sum_{t=1}^{T-1}\Big(F_t^{\beta,\varphi}(\u_{t+1})-F_t^{\beta,\varphi}(\u_t)\Big)\leq \frac{\gamma}{1-\gamma}P_T^\gamma,
    \end{align*}
    Combining the above bounds completes the proof of the first claim. As for the final dynamic regret guarantee, notice that the untuned dynamic regret bound exactly matches Theorem~3.1 in~\citet{ICML'24:discounting-olr}. Applying Theorem 4.2 in~\citet{ICML'24:discounting-olr} to the untuned guarantees can finishes the proof.
\end{proof}

\subsection{Proofs of Online Logistic Regression}
\label{appendix:proof-ologr}
In this part, we prove Theorem~\ref{thm:ologr}. We first establish an intermediate result, and then apply the reduction to complete the proof. Our main proof steps follow~\citet{COLT'20:improper-LR}, but we interpret the argument from an optimistic FTRL viewpoint.
\subsubsection{Key Lemma}
\begin{myLemma}
    \label{lemma:discounted-aioli-rescaled-regret}
    Consider online logistic regression with
    $f_t(\x)=\ell(\x^\top \z_t,y_t)=\ln(1+\exp(-y_t\x^\top \z_t))$,
    where $y_t\in\{+1,-1\}$ and $\norm{\z_t}_2 \leq R$ for all $t\in[T]$.
    Assume the comparator satisfies $\norm{\u}_2\leq B$.
    Let $\beta\in(0,1)$ and $\lambda>0$. Define the optimistic function $ h_t(\x)=\ell(\x^\top \z_t,+1)+\ell(\x^\top \z_t,-1)$, the gradient $\g_t=\nabla f_t(\x_t)$, and the learning rate
    \begin{align}
    \label{eq:aioli-eta}
        \eta_t = \frac{\exp(y_t \x_t^\top \z_t)}{1+BR}.
    \end{align}
    Define the surrogate loss $ \hat{f}_t(\x)
    =
    f_t(\x_t)+\inner{\g_t}{\x-\x_t}+\frac{\eta_t}{2}\inner{\g_t}{\x-\x_t}^2$,
    and run the discounted AIOLI update (as in Section~\ref{subsec:ologr}):
    \begin{align*}
        \x_t
        \in
        \argmin_{\x\in\R^d}
        \left\{
            \frac{\lambda\beta^t}{2}\|\x\|_2^2 + h_t(\x) + \sum_{s=1}^{t-1}\beta^{t-s}\hat{f}_s(\x)
        \right\}.
    \end{align*}
    Let
    \begin{align}
    \label{eq:def-A-aioli}
        A_t = \lambda I + \sum_{s=1}^{t}\beta^{-s}\eta_s\g_s\g_s^\top,
        \qquad t\in[T].
    \end{align}
    Then for any $t\in[T]$ and any $\u\in\R^d$ with $\norm{\u}_2\leq B$, discounted AIOLI satisfies the following bound:
    \begin{align*}
        \sum_{s=1}^{t}\beta^{-s}\bigl(f_s(\x_s)-f_s(\u)\bigr)
        \le
        \frac{\lambda}{2}\norm{\u}_2^2
        + (1+BR)\sum_{s=1}^{t}\beta^{-2s}\eta_s\g_s^\top A_s^{-1}\g_s.
    \end{align*}
\end{myLemma}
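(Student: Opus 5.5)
The plan is to work in the rescaled problem, treating discounted AIOLI as an optimistic FTRL run on the losses $\beta^{-s}\hat f_s$. Dividing the update objective by $\beta^t$ shows that $\x_t$ minimizes $\Phi_t(\x)=\frac{\lambda}{2}\|\x\|_2^2+\beta^{-t}h_t(\x)+\sum_{s=1}^{t-1}\beta^{-s}\hat f_s(\x)$. This is FTRL with regularizer $\frac{\lambda}{2}\|\cdot\|_2^2$, past losses $\beta^{-s}\hat f_s$, and optimistic guess $\beta^{-t}h_t$. The Hessian of the non-optimistic part $\frac{\lambda}{2}\|\x\|_2^2+\sum_{s\le t}\beta^{-s}\hat f_s$ is exactly the matrix $A_t$ in Eq.~\eqref{eq:def-A-aioli}, since $\nabla^2\hat f_s=\eta_s\g_s\g_s^\top$. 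This is why $A_s^{-1}$ is the natural local norm in the final bound.

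\textbf{Step 1: replace $f_s(\u)$ by the surrogate.} I would first prove the curvature inequality $f_s(\u)\ge \hat f_s(\u)$ for all $\|\u\|_2\le B$. This reduces to a one-dimensional statement about $\ell$: the logistic loss is generalized self-concordant, and $|y_s\u^\top\z_s|\le BR$. These facts give a quadratic lower bound around $\hat y_s=\x_s^\top\z_s$, with curvature proportional to $\ell'(\hat y_s,y_s)^{-1}\cdot\frac{1}{1+BR}$. Since $|\ell'(\hat y_s,y_s)|=1/(1+e^{y_s\hat y_s})$, this curvature is of order $\frac{e^{y_s\hat y_s}}{1+BR}=\eta_s$; this is the step-size choice in Eq.~\eqref{eq:aioli-eta}. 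I would take this inequality from~\citet{COLT'20:improper-LR}, where the constant is tuned exactly so that it holds. It gives $\sum_s\beta^{-s}(f_s(\x_s)-f_s(\u))\le\sum_s\beta^{-s}(f_s(\x_s)-\hat f_s(\u))$, and using $f_s(\x_s)=\hat f_s(\x_s)$ we may work with the surrogates only.

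\textbf{Step 2: optimistic FTRL decomposition.} Next I would apply the standard optimistic FTRL lemma~\citep{book'19:FO-book} to the rescaled losses. It yields $\frac{\lambda}{2}\|\u\|_2^2$ plus, for each round $s$, a term of the form $\beta^{-s}\hat f_s(\x_s)+\Phi_s(\x_s)-\Phi_{s+1}(\x_{s+1})$ adjusted by the optimism $\beta^{-s}h_s$. Because the true losses $\beta^{-s}\hat f_s$ are quadratic with curvature matrix $A_s$, each per-round term is controlled by a strong-convexity argument in the $A_s$-norm. The obstacle is that the optimistic hint $h_s$ is not equal to $\hat f_s$, so the standard ``loss minus hint'' gradient difference must be handled by hand.

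\textbf{Step 3 (main obstacle): per-round bound.} I need to show that the round-$s$ contribution is at most $(1+BR)\beta^{-2s}\eta_s\g_s^\top A_s^{-1}\g_s$. Here I would follow AIOLI's trick. At $\x_s$, first-order optimality of $\Phi_s$ makes $\nabla h_s(\x_s)$ cancel against the other gradients. Also $\nabla h_s(\x_s)=\z_s(\sigma(\hat y_s)-\sigma(-\hat y_s))$, and the actual gradient $\g_s$ is one of the two branches of $h_s$. The mismatch $\g_s-\nabla h_s(\x_s)$ is therefore a multiple of $\z_s$ whose magnitude is at most $|\ell'(\hat y_s,y_s)|$ times a bounded factor. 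Measured in the $A_s^{-1}$-norm, this gives $\beta^{-2s}\|\g_s\|_{A_s^{-1}}^2$ times $\ell'(\hat y_s,y_s)^{-1}$-type weights. Using $|\ell'|\cdot e^{y_s\hat y_s}\le1$ converts these weights into $\eta_s(1+BR)$, producing the claimed summand. I expect the delicate part to be the bookkeeping of the factor $\beta^{-s}$: it must appear squared in the numerator, while $A_s$ carries only $\beta^{-s}\eta_s$. I would verify this by comparing $\x_s$ with the minimizer $\x_{s+1}$ of $\Phi_{s+1}$ without its hint, and using the quadratic structure exactly rather than through a generic Bregman bound. Summing over $s\le t$ then gives the lemma.
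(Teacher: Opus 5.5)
Your Steps 1 and 2 are sound. Treating $\beta^{-s}\hat f_s$ as the losses with the plain regularizer $\frac{\lambda}{2}\|\x\|_2^2$ is equivalent to the paper's view (linear losses $\inner{\beta^{-s}\g_s}{\x}$ with a proximal quadratic regularizer). Let $F_s$ denote your objective $\Phi_s$ without the hint $\beta^{-s}h_s$, and let $\xh_s$ be its minimizer. Then the round-$s$ term is exactly $[F_{s+1}(\x_s)-F_{s+1}(\xh_{s+1})]-[F_s(\x_s)-F_s(\xh_s)]$, with $\nabla^2F_{s+1}=A_s$.

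The gap is in Step 3. Write $\g_s^{-y_s}$ for the gradient at $\x_s$ under the opposite label. The mismatch is $\g_s-\nabla h_s(\x_s)=-\g_s^{-y_s}=-y_s\sigma(y_s\hat y_s)\z_s$. Its norm is $e^{y_s\hat y_s}|\ell'(\hat y_s,y_s)|\,\|\z_s\|_2$, so it is not $|\ell'|$ times a bounded factor: the factor $e^{y_s\hat y_s}$ is unbounded.

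Suppose you control the round by the first bracket alone, comparing $\x_s$ with $\xh_{s+1}$ as you propose. You get $\frac12\|\nabla F_{s+1}(\x_s)\|_{A_s^{-1}}^2=\frac{\beta^{-2s}}{2}e^{2y_s\hat y_s}\g_s^\top A_s^{-1}\g_s$. The target is $(1+BR)\eta_s\beta^{-2s}\g_s^\top A_s^{-1}\g_s=e^{y_s\hat y_s}\beta^{-2s}\g_s^\top A_s^{-1}\g_s$. You therefore lose a factor $e^{y_s\hat y_s}/2$, which is exactly the exponential blow-up AIOLI is designed to avoid. Concretely, as $y_s\hat y_s\to\infty$ we have $\g_s\to 0$ while $\g_s^{-y_s}\to y_s\z_s$. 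No inequality like $|\ell'|e^{y_s\hat y_s}\le 1$ can repair this.

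The missing idea is that the second bracket, the optimism ``bonus'', must be kept rather than dropped. First-order optimality of $\Phi_s$ gives $\nabla F_s(\x_s)=-\beta^{-s}\nabla h_s(\x_s)$. Hence
\[
F_s(\x_s)-F_s(\xh_s)=\tfrac{\beta^{-2s}}{2}\|\nabla h_s(\x_s)\|_{A_{s-1}^{-1}}^2\ge\tfrac{\beta^{-2s}}{2}\|\nabla h_s(\x_s)\|_{A_s^{-1}}^2,
\]
since $A_s\succeq A_{s-1}$. Because $\nabla h_s(\x_s)=\g_s+\g_s^{-y_s}$, the large $\|\g_s^{-y_s}\|^2$ parts cancel:
\begin{align*}
\frac{\beta^{-2s}}{2}\Big(\|\g_s^{-y_s}\|_{A_s^{-1}}^2-\|\g_s+\g_s^{-y_s}\|_{A_s^{-1}}^2\Big)
=\beta^{-2s}\Big(-\tfrac12\g_s^\top A_s^{-1}\g_s-\g_s^\top A_s^{-1}\g_s^{-y_s}\Big)
\le(1+BR)\beta^{-2s}\eta_s\g_s^\top A_s^{-1}\g_s .
\end{align*}
The last step uses $\g_s^{-y_s}=-e^{y_s\hat y_s}\g_s=-(1+BR)\eta_s\g_s$. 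So the factor $(1+BR)\eta_s$ comes from the cross term $-\g_s^\top A_s^{-1}\g_s^{-y_s}$, not from the size of the mismatch. This is how the paper's proof closes the per-round bound.
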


\begin{proof}
    By Lemma~\ref{lemma:logistic-loss-surrogate}, for any $\u$ with $\norm{\u}_2\leq B$, the surrogate loss satisfies
    \begin{align*}
        f_t(\x_t)-f_t(\u)\leq \hat{f}_t(\x_t)-\hat{f}_t(\u)
        =
        \inner{\g_t}{\x_t-\u}-\frac{\eta_t}{2}\inner{\g_t}{\x_t-\u}^2.
    \end{align*}
    Multiplying the above equation by $\beta^{-t}$ and summing over $t\in[s]$ gives
    \begin{align}
    \label{eq:aioli-start}
        \sum_{s=1}^{t}\beta^{-s}\bigl(f_s(\x_s)-f_s(\u)\bigr)
        \le
        \sum_{s=1}^{t}\beta^{-s}\inner{\g_s}{\x_s-\u}
        -\sum_{s=1}^{t}\frac{\beta^{-s}\eta_s}{2}\inner{\g_s}{\x_s-\u}^2.
    \end{align}
    Next, rewrite the discounted AIOLI in the rescaled form by multiplying the objective by $\beta^{-t}$:
    \begin{align}
    \label{eq:aioli-rewrite-optimistic-ftrl}
        \x_t
        \in
        \argmin_{\x\in\R^d}
        \left\{
            \sum_{s=1}^{t-1}\inner{\beta^{-s}\g_s}{\x}
            + \beta^{-t}h_t(\x)
            + \psi_t(\x)
        \right\},
    \end{align}
    where the regularizer is
    \begin{align*}
        \psi_t(\x)
        =
        \frac{\lambda}{2}\|\x\|_2^2
        +\sum_{s=1}^{t-1}\frac{\beta^{-s}\eta_s}{2}\inner{\g_s}{\x-\x_s}^2.
    \end{align*}
    Thus Eq.~\eqref{eq:aioli-rewrite-optimistic-ftrl} is exactly the optimistic FTRL
    (Lemma~\ref{lemma:optimistic-ftrl-regret}) with linear losses $\inner{\beta^{-t}\g_t}{\x}$
    and optimistic function $\beta^{-t}h_t(\x)$.
    Let $\xh_t$ denote the decision without optimism:
    \begin{align*}
        \xh_t \in \argmin_{\x\in\R^d}\Big\{F_t(\x)\Big\},
        \qquad
        F_t(\x)=\psi_t(\x)+\sum_{s=1}^{t-1}\inner{\beta^{-s}\g_s}{\x}.
    \end{align*}
    Applying Lemma~\ref{lemma:optimistic-ftrl-regret} to $\{\inner{\beta^{-t}\g_t}{\x}\}_{t=1}^s$ gives, for any $\u$ such that $\norm{\u}_2 \leq B$,
    \begin{align}
    \label{eq:aioli-optimistic-ftrl}
        \sum_{s=1}^{t}\inner{\beta^{-s}\g_s}{\x_s-\u}
        \le
        \psi_{t+1}(\u)-\min_{\x}\psi_1(\x)
        + \sum_{s=1}^{t}\Big(\inner{\beta^{-s}\g_s}{\x_s}+F_s(\xh_s)-F_{s+1}(\xh_{s+1})\Big),
    \end{align}
    where we use $F_{t+1}(\xh_{t+1})\leq F_{t+1}(\u)$ to drop the last term in Lemma~\ref{lemma:optimistic-ftrl-regret}.
    Since $\psi_1(\x)=\frac{\lambda}{2}\|\x\|_2^2$, we have $\min_{\x}\psi_1(\x)=0$.
    Subtracting $\sum_{s=1}^{t}\frac{\beta^{-s}\eta_s}{2}\inner{\g_s}{\x_s-\u}^2$ from both sides of
    Eq.~\eqref{eq:aioli-optimistic-ftrl} and combining with Eq.~\eqref{eq:aioli-start} provides
    \begin{align}
    \label{eq:aioli-reduce-to-stability}
        \sum_{s=1}^{t}\beta^{-s}\bigl(f_s(\x_s)-f_s(\u)\bigr)
        \le
        \frac{\lambda}{2}\norm{\u}_2^2
        + \sum_{s=1}^{t}\Big(\inner{\beta^{-s}\g_s}{\x_s}+F_s(\xh_s)-F_{s+1}(\xh_{s+1})\Big).
    \end{align}
    It remains to bound the stability terms on the right-hand side.
    By definition, $\psi_{s+1}(\cdot)$ is the proximal regularizer~\citep{book'19:FO-book} and satisfies $\psi_{s+1}(\x) - \psi_{s}(\x) =  \frac{\beta^{-s}\eta_s}{2}\inner{\g_s}{\x - \x_s}^2$. In particular,
    \begin{align}
        \label{eq:proof-proximal-regularizer}
        \psi_{s+1}(\x_s) - \psi_{s}(\x_s) = 0, \quad \nabla \psi_{s+1}(\x_s) - \nabla \psi_{s}(\x_s) = \boldsymbol{0}.
    \end{align}
    First, Eq.~\eqref{eq:proof-proximal-regularizer} implies
    \begin{align*}
        \inner{\beta^{-s}\g_s}{\x_s}+F_s(\xh_s)-F_{s+1}(\xh_{s+1}) &= F_{s+1}(\x_s) - F_{s}(\x_s) + \psi_{s}(\x_s) - \psi_{s+1}(\x_s)+F_s(\xh_s)-F_{s+1}(\xh_{s+1})\\
        &=  F_{s+1}(\x_s) - F_{s}(\x_s) +F_s(\xh_s)-F_{s+1}(\xh_{s+1})\\
        &=(F_{s+1}(\x_s) -F_{s+1}(\xh_{s+1} )) - (F_{s}(\x_s) - F_s(\xh_s)).
    \end{align*}
    Second, by the first-order optimal condition,
    \begin{align*}
        &\nabla F_s(\x_s) + \beta^{-s}\nabla h_s(\x_s) = \boldsymbol{0},\\
        &\nabla F_{s+1}(\x_s) = \nabla F_{s}(\x_s) + \beta^{-s}\g_s + \nabla \psi_{s+1}(\x_s) - \nabla \psi_{s}(\x_s) = \beta^{-s}\g_s - \beta^{-s}\nabla h_s(\x_s),
    \end{align*}
    where the last equality uses Eq.~\eqref{eq:proof-proximal-regularizer} and $\nabla F_s(\x_s)=-\beta^{-s}\nabla h_s(\x_s)$.
    Since $F_{s+1}(\cdot)$ is a quadratic function with Hessian $A_s$ defined in Eq.~\eqref{eq:def-A-aioli}, we have the standard identity
    \begin{align*}
        F_{s+1}(\x)-F_{s+1}(\xh_{s+1})
        =
        \frac{1}{2}\big\|\nabla F_{s+1}(\x)\big\|_{A_s^{-1}}^2
        \qquad\text{for all }\x\in\R^d.
    \end{align*}
    Using this identity and $A_s\succeq A_{s-1}$, we obtain
    \begin{align}
    \label{eq:aioli-stability-gap}
        {}&\inner{\beta^{-s}\g_s}{\x_s}+F_s(\xh_s)-F_{s+1}(\xh_{s+1}) \notag\\
        ={}&(F_{s+1}(\x_s) -F_{s+1}(\xh_{s+1} )) - (F_{s}(\x_s) - F_s(\xh_s))   = \frac{1}{2}\big\|\nabla F_{s+1}(\x_s)\big\|_{A_s^{-1}}^2 - \frac{1}{2}\big\|\nabla F_{s}(\x_s)\big\|_{A_{s-1}^{-1}}^2 \notag \\
        \leq{}& \frac{\beta^{-2s}}{2}\Big(
            \big\|\g_s-\nabla h_s(\x_s)\big\|_{A_s^{-1}}^2
            -\big\|\nabla h_s(\x_s)\big\|_{A_s^{-1}}^2
        \Big) =\frac{\beta^{-2s}}{2}\inner{\g_s}{\g_s-2\nabla h_s(\x_s)}_{A_s^{-1}}.
    \end{align}
    For the logistic loss, let $\g_s^{-y_s}=\nabla \ell(\x_s^\top \z_s,-y_s)$ denote the gradient w.r.t.\ the opposite label.
    Then $\nabla h_s(\x_s)=\g_s+\g_s^{-y_s}$, and
    \begin{align*}
        \frac{1}{2}\inner{\g_s}{\g_s-2\nabla h_s(\x_s)}_{A_s^{-1}}
        &=
        \frac{1}{2}\inner{\g_s}{-\g_s-2\g_s^{-y_s}}_{A_s^{-1}}
        \le
        -\inner{\g_s}{\g_s^{-y_s}}_{A_s^{-1}}.
    \end{align*}
    Moreover, by the explicit form of the logistic gradient, $\g_s^{-y_s} = -\exp(y_s\x_s^\top \z_s)\g_s = -(1+BR)\eta_s\g_s$ by \eqref{eq:aioli-eta}. Hence
    \begin{align*}
        -\inner{\g_s}{\g_s^{-y_s}}_{A_s^{-1}}
        =
        (1+BR)\eta_s\g_s^\top A_s^{-1}\g_s.
    \end{align*}
    Combining this with \eqref{eq:aioli-stability-gap} shows
    \begin{align*}
        \inner{\beta^{-s}\g_s}{\x_s}+F_s(\xh_s)-F_{s+1}(\xh_{s+1})
        \le
        (1+BR)\beta^{-2s}\eta_s\g_s^\top A_s^{-1}\g_s.
    \end{align*}
    Plugging the above inequality into \eqref{eq:aioli-reduce-to-stability} finishes the proof.
\end{proof}

\subsubsection{Proof of Theorem~\ref{thm:ologr}}
We prove the following full version of Theorem~\ref{thm:ologr} in this section.
\begin{myThmRestate}[Extended version of Theorem~\ref{thm:ologr}]
    Assume $\|\z_t\|_2 \leq R$ for all $t\in[T]$. For any comparator sequence $\u_1,\dots,\u_T$ satisfying $\|\u_t\|_2 \leq B$, the dynamic regret bound of the discounted AIOLI algorithm with parameters $\lambda>0$ and $\beta\in(0,1)$ is:
    \begin{align*}
        \sum_{t=1}^T f_t(\x_t) - \sum_{t=1}^T f_t(\u_t) &\leq  \frac{\beta \lambda}{2}\norm{\u_1}_2^2 + d(1+BR)\log\left(1 + \frac{R^2 \sum_{t=1}^T \beta^{T-t}}{d \lambda (1+BR)}\right)\\
        & \quad + \beta \sum_{t=1}^{T-1}\left(F_t^{\beta,\varphi}(\u_{t+1}) - F_t^{\beta,\varphi}(\u_{t})\right) + \frac{1-\beta}{\beta} \cdot d(1+BR)T\\
        &\leq  \frac{\beta \lambda}{2}\norm{\u_1}_2^2 + d(1+BR)\log\left(1 + \frac{R^2 \sum_{t=1}^T \beta^{T-t}}{d \lambda (1+BR)}\right) + \frac{\gamma}{1-\gamma} P_T^{\gamma} + \frac{1-\beta}{\beta} \cdot d(1+BR)T
    \end{align*}
    where we assume $0 < \beta \leq \gamma< 1$, and $P_T^{\gamma}$ is defined in Eq.~\eqref{eq:def-P-T-beta}.
\end{myThmRestate}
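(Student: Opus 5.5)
The plan is to combine Lemma~\ref{lemma:discounted-aioli-rescaled-regret} with the modular reduction of Theorem~\ref{thm:d2d-reduction}, mirroring the proof of Theorem~\ref{thm:olinr}. Lemma~\ref{lemma:discounted-aioli-rescaled-regret} gives exactly the rescaled-regret template~\eqref{eq:rescaled-loss-assump} for every $t\in[T]$ and every $\u$ with $\|\u\|_2\le B$. The choices are
\begin{align*}
\varphi_t(\u)=\tfrac{\lambda}{2}\|\u\|_2^2, \qquad \Lambda_t=(1+BR)\beta^{-2t}\eta_t\g_t^\top A_t^{-1}\g_t\ge 0 .
\end{align*}
We restrict $\X$ to the ball of radius $B$ for comparators. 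The reduction only evaluates the template at comparators $\u_t$, so it applies even though the iterates $\x_t$ live in $\R^d$. Since $\varphi_t$ is time-independent, the last sum in Theorem~\ref{thm:d2d-reduction} vanishes. This leaves $\frac{\beta\lambda}{2}\|\u_1\|_2^2 + \sum_t \beta^t\Lambda_t + \beta\sum_{t=1}^{T-1}(F_t^{\beta,\varphi}(\u_{t+1})-F_t^{\beta,\varphi}(\u_t))$, which is the first inequality once the stability sum is bounded.

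For the stability sum, I would rewrite
\begin{align*}
\beta^t\Lambda_t=(1+BR)\,\eta_t\g_t^\top\Big(\lambda\beta^t I+\sum_{s=1}^t\beta^{t-s}\eta_s\g_s\g_s^\top\Big)^{-1}\g_t .
\end{align*}
Setting $\tilde\g_t=\sqrt{\eta_t}\g_t$, this is a discounted elliptical potential, and I would invoke Lemma~\ref{lemma:discounted-potential} (the same one used for VAW), with the role of $y_t^2$ played by the constant $1$. That lemma yields
\begin{align*}
d\ln(1/\beta)\,T + d\ln\Big(1+\tfrac{\sum_t\beta^{T-t}\|\tilde\g_t\|_2^2}{\lambda d}\Big),
\end{align*}
all multiplied by $(1+BR)$. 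Then $\ln(1/\beta)\le(1-\beta)/\beta$ produces the $\frac{1-\beta}{\beta}d(1+BR)T$ term.

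The step I expect to be the main obstacle is bounding $\|\tilde\g_t\|_2^2=\eta_t\|\g_t\|_2^2$ so that the log term matches $R^2/(1+BR)$. Write $p=\sigma(-y_t\x_t^\top\z_t)$. Then $\|\g_t\|_2^2=p^2\|\z_t\|_2^2$ and $\exp(y_t\x_t^\top\z_t)=(1-p)/p$, so
\begin{align*}
\eta_t\|\g_t\|_2^2=\frac{p(1-p)\|\z_t\|_2^2}{1+BR}\le\frac{R^2}{1+BR}.
\end{align*}
This holds even though $\eta_t$ itself may be unbounded for the improper $\x_t$. Constants of $\tfrac12$ may need adjusting, depending on whether the potential lemma carries a factor $1/2$.

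Finally, for the second inequality, I would let $f_0=\varphi$ and apply Lemma~\ref{lemma:path-length-changing-beta} exactly as in the linear regression proof. This bounds $\beta\sum_t(F_t^{\beta,\varphi}(\u_{t+1})-F_t^{\beta,\varphi}(\u_t))$ by $\frac{\gamma}{1-\gamma}P_T^\gamma$ for any $\beta\le\gamma<1$.
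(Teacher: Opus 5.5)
Your proposal is correct and follows the paper's proof essentially step for step. Lemma~\ref{lemma:discounted-aioli-rescaled-regret} supplies the template with $\varphi_t(\u)=\frac{\lambda}{2}\|\u\|_2^2$, and Theorem~\ref{thm:d2d-reduction} is then applied. The stability sum is bounded by Lemma~\ref{lemma:discounted-potential} with $\tilde{\g}_t=\sqrt{\eta_t}\,\g_t$ and $\eta_t\|\g_t\|_2^2\le R^2/(1+BR)$; your $p(1-p)$ computation is the same as the paper's $\|\g_t^{-y_t}\|_2\|\g_t\|_2/(1+BR)$. Lemma~\ref{lemma:path-length-changing-beta} with $f_0=\varphi$ gives the second inequality.

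No factor $\tfrac12$ adjustment is needed, since Lemma~\ref{lemma:discounted-potential} carries none. Your remark that the reduction only evaluates the template at the comparators $\u_t$, so improper iterates in $\R^d$ are harmless, is valid and is left implicit in the paper.
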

\begin{proof}
    We apply Lemma~\ref{lemma:discounted-aioli-rescaled-regret} and the D2D reduction.
    Let $\u_1,\dots,\u_T\in\R^d$ with $\|\u_t\|_2\leq B$.
    Lemma~\ref{lemma:discounted-aioli-rescaled-regret} gives, for every $t\in[T]$ and any fixed $\u$,
    \begin{align*}
        \sum_{s=1}^{t}\beta^{-s}\bigl(f_s(\x_s)-f_s(\u)\bigr)
        \le
        \frac{\lambda}{2}\|\u\|_2^2
        + \sum_{s=1}^{t}(1+BR)\beta^{-2s}\eta_s \g_s^\top A_s^{-1}\g_s,
    \end{align*}
    where $A_s$ is defined in \eqref{eq:def-A-aioli}.
    Thus the assumption of Theorem~\ref{thm:d2d-reduction} holds with
    $\varphi_t(\u) = \frac{\lambda}{2}\|\u\|_2^2$ and $\Lambda_s=(1+BR)\beta^{-2s}\eta_s\g_s^\top A_s^{-1}\g_s$.
    Since $\varphi_t$ is time-independent, the last term in Theorem~\ref{thm:d2d-reduction} vanishes, and we obtain
    \begin{align*}
            \sum_{t=1}^{T} f_t(\x_t)-\sum_{t=1}^{T} f_t(\u_t) &\le
            \frac{\beta\lambda}{2}\|\u_1\|_2^2
            + (1+BR)\sum_{t=1}^{T}\beta^t\cdot \beta^{-2t}\eta_t\g_t^\top A_t^{-1}\g_t
            + \beta\sum_{t=1}^{T-1}\Big(F_t^{\beta,\varphi}(\u_{t+1})-F_t^{\beta,\varphi}(\u_t)\Big) \notag \\
            &= \frac{\beta\lambda}{2}\|\u_1\|_2^2
            + (1+BR)\sum_{t=1}^{T}\eta_t\g_t^\top \widetilde A_t^{-1}\g_t
            + \beta\sum_{t=1}^{T-1}\Big(F_t^{\beta,\varphi}(\u_{t+1})-F_t^{\beta,\varphi}(\u_t)\Big),
    \end{align*}
    where we used $\widetilde A_t=\beta^{t}A_t=\lambda\beta^{t}I+\sum_{s=1}^{t}\beta^{t-s}\eta_s\g_s\g_s^\top$, so that $A_t^{-1}=\beta^{t}\widetilde A_t^{-1}$.
    We bound the stability sum $\sum_{t=1}^{T}\eta_t\g_t^\top \widetilde A_t^{-1}\g_t$ via the discounted potential lemma
    (Lemma~\ref{lemma:discounted-potential}) by applying it to $\tilde{\g}_t=\sqrt{\eta_t}\g_t$:
    \begin{align*}
        \sum_{t=1}^{T}\eta_t\g_t^\top \widetilde A_t^{-1}\g_t
        \leq
        dT\ln\frac{1}{\beta}
        + d\ln\Big(1+\frac{\sum_{t=1}^{T}\beta^{T-t}\eta_t\|\g_t\|_2^2}{d\lambda}\Big)  \leq dT\frac{1-\beta}{\beta}
        + d\ln\Big(1+\frac{R^2\sum_{t=1}^{T}\beta^{T-t}}{d\lambda(1+BR)}\Big),
    \end{align*}
    where in the second line we use $\ln(1/\beta)\leq (1-\beta)/\beta$, and $\eta_t \norm{\g_t}_2^2 = \norm{\eta_t \g_t}_2 \norm{\g_t}_2 = \frac{1}{1 + BR} \norm{\g_t^{-y_t}}_2\norm{\g_t}_2 \leq \frac{R^2}{1+BR}$.
    For the path length term, letting $f_0(\u)=\frac{\lambda}{2}\norm{\u}_2^2$, following Lemma~\ref{lemma:path-length-changing-beta}, we conclude,
    \begin{align*}
    \beta\sum_{t=1}^{T-1}\Big(F_t^{\beta,\varphi}(\u_{t+1})-F_t^{\beta,\varphi}(\u_t)\Big) \leq \frac{\gamma}{1-\gamma}P_T^\gamma.
    \end{align*}
\end{proof}

\subsection{Proof of Theorem~\ref{thm:meta-learn-beta-logreg}}
\label{subappendix:ensemble-online-logistic-regression}
In this subsection, we demonstrate how to learn a suitable discount factor for online logistic regression. The main idea is to use a two-layer online ensemble~\citep{COLT'22:damped-ons-portfolio,ICML'24:discounting-olr, JMLR'24:Sword++} to handle the uncertainty in choosing $\beta$.

In specific, we maintain $N$ base learners. Base learner $i$ runs discounted AIOLI with an assigned discount factor $\beta_i\in(0,1]$, and outputs a decision $\x_{t,i}\in\R^d$ at round $t$. Let $\hat y_{t,i}=\x_{t,i}^\top \z_t$ be its prediction. A meta learner combines the base learners and produces a final prediction $\hat y_t\in\R$, after which the learner suffers logistic loss $\ell(\hat y_t,y_t)$.

For any fixed index $i\in[N]$, the (interval) dynamic regret can be decomposed as
\begin{align}
\label{eq:meta-base-decomposition}
\sum_{t=1}^T \ell(\hat y_t,y_t) - \sum_{t=1}^T \ell(\u_t^\top \z_t,y_t)
=
\underbrace{\sum_{t=1}^T \Big(\ell(\hat y_t,y_t)-\ell(\hat y_{t,i},y_t)\Big)}_{\textsc{Meta Regret}}
+
\underbrace{\sum_{t=1}^T \Big(\ell(\hat y_{t,i},y_t)-\ell(\u_t^\top \z_t,y_t)\Big)}_{\textsc{Base Regret}}.
\end{align}
If there exists a $\beta_i$ close to an ideal choice, then the base regret has the desired order. The main remaining issue is to control the meta regret without introducing an exponential dependence on $B$. For this purpose, we use the mixability of the logistic loss~\citep{isr'01:vovk-olr,book/Cambridge/cesa2006prediction, NeurIPS'12:mixability,COLT'18:improper-LR, COLT'22:damped-ons-portfolio, ICML'25:dynamic-regret-curved}. We first give the definition of a mixable loss.
\begin{myDef}[mixable loss]
    \label{def:mixable}
    A loss $\ell(\cdot,\cdot):\hat{\Y}\times\Y\to\R$ is \emph{$\eta$-mixable} if for any predictions $\hat y_1,\dots,\hat y_N\in\R$, any weights $p_1,\dots,p_N\ge 0$ with $\sum_{i=1}^N p_i=1$, there exists a mixed prediction $\hat y_{\mathrm{mix}}\in\R$ such that for any $y\in\Y$,
    \begin{align*}
    \ell(\hat y_{\mathrm{mix}},y)\leq -\frac{1}{\eta}\ln\Big(\sum_{i=1}^N p_ie^{-\eta \ell(\hat y_i,y)}\Big).
    \end{align*}
\end{myDef}
The logistic loss is $1$-mixable, and using this property lets us avoid the exponential factor in the meta regret.
\begin{myProp}[\citet{COLT'18:improper-LR}]
    \label{prop:logistic-mixable}
    The logistic loss $\ell(\hat y,y)=\ln(1+\exp(-y\hat y))$ on $\R\times\{-1,+1\}$ is $1$-mixable.
\end{myProp}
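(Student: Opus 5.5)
The aim is to show that the logistic loss is $1$-mixable by exhibiting the mixed prediction explicitly; it is the same formula as the meta prediction in Algorithm~\ref{alg:meta}. For $\eta=1$ we have $e^{-\ell(\hat y,y)} = \frac{1}{1+e^{-y\hat y}} = \sigma(y\hat y)$, where $\sigma(a)=1/(1+e^{-a})$ is the sigmoid. So $e^{-\ell(\hat y,+1)}=\sigma(\hat y)$ and $e^{-\ell(\hat y,-1)}=1-\sigma(\hat y)$. The two "exponentiated losses" of any prediction are therefore a probability vector over the labels $\{+1,-1\}$. This is the structural fact the proof rests on.

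Given predictions $\hat y_1,\dots,\hat y_N$ and weights $p_i$, set
\[
q=\sum_{i=1}^N p_i\,\sigma(\hat y_i)\in(0,1),
\qquad
\hat y_{\mathrm{mix}}=\ln\frac{q}{1-q}.
\]
This is the prediction of Algorithm~\ref{alg:meta}, since $1-q=\sum_i p_i\frac{1}{1+e^{\hat y_i}}$. Because every $\sigma(\hat y_i)$ lies strictly in $(0,1)$ and the weights sum to one, $q$ also lies strictly in $(0,1)$, so $\hat y_{\mathrm{mix}}$ is a finite real number. By construction $\sigma(\hat y_{\mathrm{mix}})=q$.

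Next we verify the mixability inequality for each label. For $y=+1$,
\[
e^{-\ell(\hat y_{\mathrm{mix}},+1)}=\sigma(\hat y_{\mathrm{mix}})=q=\sum_i p_i e^{-\ell(\hat y_i,+1)}.
\]
For $y=-1$,
\[
e^{-\ell(\hat y_{\mathrm{mix}},-1)}=1-q=\sum_i p_i\bigl(1-\sigma(\hat y_i)\bigr)=\sum_i p_i e^{-\ell(\hat y_i,-1)}.
\]
Taking $-\ln$ of both identities gives $\ell(\hat y_{\mathrm{mix}},y)=-\ln\bigl(\sum_i p_i e^{-\ell(\hat y_i,y)}\bigr)$ for both $y\in\{-1,+1\}$. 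This is Definition~\ref{def:mixable} with $\eta=1$, holding with equality.

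There is no real obstacle in the argument. The only point needing care is that the mixed prediction must be a single $\hat y_{\mathrm{mix}}$ that works for both labels at once. Here that holds because the two required values, $q$ and $1-q$, sum to one, so they are exactly the pair $\bigl(\sigma(\hat y_{\mathrm{mix}}),1-\sigma(\hat y_{\mathrm{mix}})\bigr)$ for a real $\hat y_{\mathrm{mix}}$. The only other detail is the strict-interior check on $q$ above, which guarantees that $\hat y_{\mathrm{mix}}$ is finite.
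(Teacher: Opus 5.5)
Your proof is correct and takes essentially the same route as the paper, which gives no argument of its own: it cites \citet{COLT'18:improper-LR} and records exactly your closed-form mixed prediction $\hat y_{\mathrm{mix}}=\ln\bigl(q/(1-q)\bigr)$ with $q=\sum_i p_i\sigma(\hat y_i)$. Your observation that $\bigl(e^{-\ell(\hat y,+1)},e^{-\ell(\hat y,-1)}\bigr)=\bigl(\sigma(\hat y),1-\sigma(\hat y)\bigr)$ is a probability vector, so the mixability inequality holds with equality for both labels, is the standard argument behind that citation.
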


Moreover, the mixed prediction can be written in the closed form~\citep{COLT'18:improper-LR}. Let $\sigma(a)=\exp(a)/(1+\exp(a))$. Given $\hat y_1,\dots,\hat y_N$ and weights $p_i$, one valid choice is
\begin{align*}
    \hat y_{\mathrm{mix}}=\ln \left(\frac{\sum_{i=1}^N p_i\sigma(\hat y_i)}{\sum_{i=1}^N p_i(1-\sigma(\hat y_i))}\right).
\end{align*}

\begin{proof}[proof of Theorem~\ref{thm:meta-learn-beta-logreg}]
    Fix any base learner index $i\in[N]$. Recall the decomposition in~\eqref{eq:meta-base-decomposition}:
    \begin{align*}
    \sum_{t=1}^T \ell(\hat y_t,y_t) - \sum_{t=1}^T \ell(\u_t^\top \z_t,y_t)
    =
    \underbrace{\sum_{t=1}^T \Big(\ell(\hat y_t,y_t)-\ell(\hat y_{t,i},y_t)\Big)}_{\textsc{Meta Regret}}
    +
    \underbrace{\sum_{t=1}^T \Big(f_t(\x_{t,i})-f_t(\u_t)\Big)}_{\textsc{Base Regret}}.
    \end{align*}

    \paragraph{Meta regret.}
    By the $1$-mixability of the logistic loss (Proposition~\ref{prop:logistic-mixable}) and the specific mixed prediction in Algorithm~\ref{alg:meta},
    for every $t$,
    \begin{align*}
    \ell(\hat y_t,y_t)
    \leq -\ln\left(\sum_{j=1}^N p_{t,j}\exp\big(-\ell(\hat y_{t,j},y_t)\big)\right).
    \end{align*}
    Using $p_{t,j}=q_{t,j}/\sum_{k=1}^N q_{t,k}$ and the update $q_{t+1,j}=q_{t,j}\exp(-\ell(\hat y_{t,j},y_t))$, we obtain
    \begin{align*}
    \ell(\hat y_t,y_t)
    &\le
    -\ln\left(
    \sum_{j=1}^N \frac{q_{t,j}}{\sum_{k=1}^N q_{t,k}}\exp\big(-\ell(\hat y_{t,j},y_t)\big)
    \right)
    =
    -\ln\left(\frac{\sum_{j=1}^N q_{t+1,j}}{\sum_{k=1}^N q_{t,k}}\right).
    \end{align*}
    Summing over $t=1,\dots,T$ yields the telescoping bound
    \begin{align*}
    \sum_{t=1}^T \ell(\hat y_t,y_t)
    &\le
    -\ln\left(\frac{\sum_{j=1}^N q_{T+1,j}}{\sum_{j=1}^N q_{1,j}}\right)
    =
    \ln\left(\sum_{j=1}^N q_{1,j}\right)-\ln\left(\sum_{j=1}^N q_{T+1,j}\right).
    \end{align*}
    Since $q_{1,j}=1$, we have $\sum_{j=1}^N q_{1,j}=N$, and also $\sum_{j=1}^N q_{T+1,j}\ge q_{T+1,i}$.
    Therefore,
    \begin{align*}
    \sum_{t=1}^T \ell(\hat y_t,y_t)
    \leq \ln N - \ln q_{T+1,i}
    =
    \ln N + \sum_{t=1}^T \ell(\hat y_{t,i},y_t),
    \end{align*}
    which implies $\textsc{Meta Regret}\leq \ln N$. Notice that $N=\O(\log T)$, so $\ln N=\O(\log\log T)$, which we treat as a constant following previous conventions~\citep{COLT'14:second-order-Hedge, COLT'15:Luo-AdaNormalHedge}.

    For any $s\in[T]$ the loss $f_s(\x)=\ell(\x^\top \z_s,y_s)$ is $R$-Lipschitz w.r.t.\ $\|\cdot\|_2$ since
    $\|\nabla f_s(\x)\|_2\leq \|\z_s\|_2\leq R$.
    Hence for any $t\in[T-1]$ and any $s\leq t$,
    \begin{align*}
    \big[f_s(\u_{t+1})-f_s(\u_t)\big]_+
    \leq |f_s(\u_{t+1})-f_s(\u_t)|
    \leq \max\big\{R, \frac{\lambda}{2}\norm{\u_{t+1}}_2^2 \big\}\|\u_{t+1}-\u_t\|_2
    \leq \max\{2R, 1\}B,
    \end{align*}
    where we used $\|\u_t\|_2\leq B$ and $\lambda = \frac{1}{B^2}$.
    Since $\sum_{s=0}^t p_{t,s}^{\beta}=1$, we obtain for all $\beta\in(0,1]$,
    \begin{align*}
    P_T^\beta
    =\sum_{t=1}^{T-1}\sum_{s=0}^{t} p_{t,s}^{\beta}\big[f_s(\u_{t+1})-f_s(\u_t)\big]_+
    \leq C B T,
    \end{align*}
    where $C=\max\{2R, 1\}$.
    In particular, for the $\beta_\star$ in the theorem statement,
    $P_T^{\beta_\star}\leq CBT$, and thus
    \begin{align*}
    \eta_\star=\frac{\beta_\star}{1-\beta_\star}
    =\sqrt{\frac{d(1+BR)T}{P_T^{\beta_\star}}}
    \ge
    \sqrt{\frac{d(1+BR)T}{CBT}}
    =
    \sqrt{\frac{d(1+BR)}{CB}}
    =
    \eta_{\min}.
    \end{align*}

    \paragraph{Base regret.}
    For a base learner with discount $\beta_i\in(0,1)$ and $\lambda=1/B^2$, the discounted AIOLI guarantee (Theorem~\ref{thm:ologr}, in the form before converting the path-length term into $\frac{\beta}{1-\beta}P_T^\beta$) gives
    \begin{align}
    \label{eq:base-bound-pre}
    \sum_{t=1}^T \big(f_t(\x_{t,i})-f_t(\u_t)\big)
    &\le
    \underbrace{\beta_i\cdot \frac{\lambda}{2}\|\u_1\|_2^2}_{\leq 1/2}
    +
    \underbrace{d(1+BR)\log\Big(1+\frac{R^2B^2T}{d(1+BR)}\Big)}_{\textsc{Term-A}}
    +
    \underbrace{d(1+BR)T\log\Big(\frac{1}{\beta_i}\Big)}_{\textsc{Term-B}}
    \notag\\
    &\quad+
    \underbrace{
    \beta_i \sum_{t=1}^{T-1}\Big(F_t^{\beta_i,\varphi}(\u_{t+1})-F_t^{\beta_i,\varphi}(\u_t)\Big)
    }_{\textsc{Term-C}}.
    \end{align}
    We now upper bound \textsc{Term-B} and \textsc{Term-C} by choosing a suitable $\beta_i$ from the pool.
    Define $\eta_i=\beta_i/(1-\beta_i)$.
    Using $\ln(1/x)\leq (1/x)-1$, we have $\ln(\frac{1}{\beta_i}) \leq (1-\beta_i)/\beta_i$, and therefore
    \begin{align*}
    \textsc{Term-B}
    \leq \frac{d(1+BR)T}{\eta_i}.
    \end{align*}
    Next by Lemma~\ref{lemma:path-length-changing-beta}, for any $0<\beta\le\gamma<1$,
    \begin{align}
    \label{eq:beta-le-gamma-variability}
    \beta \sum_{t=1}^{T-1}\Big(F_t^{\beta,\varphi}(\u_{t+1})-F_t^{\beta,\varphi}(\u_t)\Big)
    \le
    \frac{\gamma}{1-\gamma}P_T^{\gamma}.
    \end{align}
    Let $\eta_\star=\beta_\star/(1-\beta_\star)$.
    Since $\eta_\star\ge \eta_{\min}$, by construction of the learning rates pool there exists an index $i\in[N]$ such that $\eta_i \leq \min\{\eta_\star,\eta_{\max}\}\leq 2\eta_i$.
    We consider two cases.

    \noindent \textbf{Case 1}: if $\eta_\star\leq \eta_{\max}$.
    Then $\eta_i\leq \eta_\star$ and hence $\beta_i\leq \beta_\star$ by monotonicity of $\beta=\eta/(1+\eta)$ in $\eta$.
    Applying~\eqref{eq:beta-le-gamma-variability} with $(\beta,\gamma)=(\beta_i,\beta_\star)$ yields
    \begin{align*}
    \textsc{Term-C}
    \leq \frac{\beta_\star}{1-\beta_\star}P_T^{\beta_\star}
    =
    \eta_\star P_T^{\beta_\star}.
    \end{align*}
    Moreover, $\eta_\star\leq 2\eta_i$ implies $\eta_i\ge \eta_\star/2$, and thus
    \begin{align*}
    \textsc{Term-B}\leq \frac{d(1+BR)T}{\eta_i}\leq \frac{2d(1+BR)T}{\eta_\star}.
    \end{align*}
    Plugging these bounds into~\eqref{eq:base-bound-pre} gives
    \begin{align*}
    \sum_{t=1}^T \big(f_t(\x_{t,i})-f_t(\u_t)\big)
    &\le
    \frac{1}{2}
    +d(1+BR)\log\Big(1+\frac{R^2B^2T}{d(1+BR)}\Big)
    +\eta_\star P_T^{\beta_\star}
    +\frac{2d(1+BR)T}{\eta_\star}.
    \end{align*}
    Finally, since the optimal value is $\eta_\star=\sqrt{d(1+BR)T/P_T^{\beta_\star}}$, we have
    $\eta_\star P_T^{\beta_\star}=\sqrt{d(1+BR)TP_T^{\beta_\star}}$ and
    $\frac{d(1+BR)T}{\eta_\star}=\sqrt{d(1+BR)TP_T^{\beta_\star}}$,
    so the last two terms are bounded by $3\sqrt{d(1+BR)TP_T^{\beta_\star}}$.

    \noindent \textbf{Case 2}: if $\eta_\star> \eta_{\max}$.
    We take $i$, such that $\frac{1}{2}\eta_{\max} \leq \eta_{i} \leq \eta_{\max}$ and still $\eta_i\leq \eta_\star$, hence $\beta_i\leq \beta_\star$.
    The same argument gives \textsc{Term-C}$\leq \eta_\star P_T^{\beta_\star}=\sqrt{d(1+BR)TP_T^{\beta_\star}}$,
    while \textsc{Term-B}$\leq 2d(1+BR)T/\eta_{\max}=2(1+BR)$ by $\eta_{\max}=dT$.
    Thus
    \begin{align*}
    \sum_{t=1}^T \big(f_t(\x_{t,N})-f_t(\u_t)\big)
    \le
    \frac{1}{2}
    +d(1+BR)\log\Big(1+\frac{R^2B^2T}{d(1+BR)}\Big)
    +\sqrt{d(1+BR)TP_T^{\beta_\star}}
    +2(1+BR).
    \end{align*}

    Combining the meta regret bound and base regret bound together, we finish the proof.
\end{proof}

\section{Omitted Details in Section~\ref{sec:adam}}
\label{appendix:adam}
In this part, we present the details omitted from Section~\ref{sec:adam}, including several key lemmas and discussions, as well as detailed proofs for the results on Adam optimizers.
\subsection{Closed-Form Update Rule for Clip-Free Adam}
In this part, we provide the derivation of the closed-form update rule for clip-free Adam.
\begin{myLemma}
    \label{lemma:equiv-clip-free-lemma}

    Fix $\beta_1,\beta_2\in(0,1)$ and parameters $\gamma>0$, $\nu>0$, and $\mu>0$.
    Consider the discounted FTRL update~\eqref{eq:discounted-ftrl-delta} with $\beta=\beta_1$ and $\mathcal{D}=\R^d$,
    where the surrogate loss is
    \begin{align*}
        \ell_t(\Delta)=\inner{\g_t}{\Delta}+\frac{\mu}{2}\|\Delta\|_2^2,
    \end{align*}
    and the base step size $\eta_t$ is chosen as in~\eqref{eq:adam-clipped-lr}, i.e.,
    \begin{align*}
        \eta_t
        =
        \gamma\cdot \frac{(1-\beta_1)\beta_1^t}{\nu + \sqrt{(1-\beta_2)\sum_{s=1}^t \beta_2^{t-s}\|\g_s\|_2^2}}.
    \end{align*}
    Then the update admits the closed form
    \begin{align*}
        \Delta_{t+1}
        =
        -\frac{\gamma (1-\beta_1)\sum_{s=1}^t \beta_1^{t-s}\g_s}{\nu+\gamma\mu(1-\beta_1^t)+\sqrt{(1-\beta_2)\sum_{s=1}^t \beta_2^{t-s}\|\g_s\|_2^2}}.
    \end{align*}
    \end{myLemma}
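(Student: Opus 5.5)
The closed form comes straight from the first-order optimality condition of the discounted FTRL objective~\eqref{eq:discounted-ftrl-delta}. With $\mathcal{D}=\R^d$ and the surrogate loss $\ell_s(\Delta)=\inner{\g_s}{\Delta}+\frac{\mu}{2}\|\Delta\|_2^2$, the objective
\begin{align*}
\Phi_t(\Delta)=\frac{\|\Delta\|_2^2}{2\eta_t}+\sum_{s=1}^t \beta_1^{-s}\Big(\inner{\g_s}{\Delta}+\frac{\mu}{2}\|\Delta\|_2^2\Big)
\end{align*}
is a strictly convex quadratic in $\Delta$, because $\eta_t>0$ and $\mu>0$. Its minimizer is therefore unique and is characterized by $\nabla\Phi_t(\Delta_{t+1})=\boldsymbol{0}$. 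Since there is no projection, no clipping enters.

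\textbf{Step 1: stationarity condition.} Setting the gradient to zero gives
\begin{align*}
\Big(\frac{1}{\eta_t}+\mu\sum_{s=1}^t\beta_1^{-s}\Big)\Delta_{t+1}=-\sum_{s=1}^t\beta_1^{-s}\g_s .
\end{align*}

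\textbf{Step 2: multiply through by $\beta_1^t$.} This turns the right-hand side into $-\sum_{s=1}^t\beta_1^{t-s}\g_s$. On the left, the damping sum becomes
\begin{align*}
\mu\sum_{s=1}^t\beta_1^{t-s}=\mu\,\frac{1-\beta_1^t}{1-\beta_1}.
\end{align*}
Substituting $\eta_t$ from~\eqref{eq:adam-clipped-lr} gives
\begin{align*}
\frac{\beta_1^t}{\eta_t}=\frac{\nu+\sqrt{(1-\beta_2)\sum_{s=1}^t\beta_2^{t-s}\|\g_s\|_2^2}}{\gamma(1-\beta_1)}.
\end{align*}

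\textbf{Step 3: clear the common factor.} Multiply the whole equation by $\gamma(1-\beta_1)$. The coefficient of $\Delta_{t+1}$ becomes
\begin{align*}
\nu+\gamma\mu(1-\beta_1^t)+\sqrt{(1-\beta_2)\sum_{s=1}^t\beta_2^{t-s}\|\g_s\|_2^2},
\end{align*}
and the right-hand side becomes $-\gamma(1-\beta_1)\sum_{s=1}^t\beta_1^{t-s}\g_s$. The coefficient is strictly positive because $\nu>0$, so dividing by it yields exactly the stated closed form.

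There is no real obstacle in this argument. The only points needing care are the bookkeeping of the $\beta_1^{t}$ rescaling, so that the factor $(1-\beta_1)$ cancels correctly against the geometric sum, and the justification for uniqueness and the lack of constraints: strict convexity together with $\mathcal{D}=\R^d$.
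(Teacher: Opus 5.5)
Your proposal is correct and follows essentially the same route as the paper: write down the first-order optimality condition of the strictly convex quadratic objective on $\R^d$, evaluate the geometric sum, substitute $\eta_t$, and simplify. The only difference is bookkeeping. The paper first isolates the effective step size $\eta_t' = \bigl(1/\eta_t + \mu\sum_{s=1}^t \beta_1^{-s}\bigr)^{-1}$ and simplifies it, whereas you multiply through by $\beta_1^t$ before simplifying. Both arrive at the same expression.
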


\begin{proof}
    By substituting $\ell_s(\Delta)=\inner{\g_s}{\Delta}+\frac{\mu}{2}\|\Delta\|_2^2$ and $\beta=\beta_1$ into
    \eqref{eq:discounted-ftrl-delta}, the update is
    \begin{align*}
        \Delta_{t+1}
        &=
        \argmin_{\Delta\in\R^d}
        \Big\{
            \frac{1}{2\eta_t}\|\Delta\|_2^2
            +
            \sum_{s=1}^t \beta_1^{-s}\inner{\g_s}{\Delta}
            +
            \sum_{s=1}^t \beta_1^{-s}\frac{\mu}{2}\|\Delta\|_2^2
        \Big\} \\
        &=
        \argmin_{\Delta\in\R^d}
        \Big\{
            \inner{\sum_{s=1}^t \beta_1^{-s}\g_s}{\Delta}
            +
            \frac12\Big(\frac{1}{\eta_t}+\mu\sum_{s=1}^t \beta_1^{-s}\Big)\|\Delta\|_2^2
        \Big\}.
    \end{align*}
    The objective is a strictly convex quadratic in $\Delta$, by first-order optimality, the minimizer is:
    \begin{align*}
        0
        =
        \sum_{s=1}^t \beta_1^{-s}\g_s
        +
        \Big(\frac{1}{\eta_t}+\mu\sum_{s=1}^t \beta_1^{-s}\Big)\Delta_{t+1},
    \end{align*}
    which gives
    \begin{align}
    \label{eq:clipfree-Delta-eta-prime}
        \Delta_{t+1}
        =
        -\eta_t'\sum_{s=1}^t \beta_1^{-s}\g_s,
        \qquad
        \eta_t'
        =
        \frac{1}{\frac{1}{\eta_t}+\mu\sum_{s=1}^t \beta_1^{-s}}.
    \end{align}
    Next, compute the geometric sum $\sum_{s=1}^t \beta_1^{-s}=\frac{\beta_1^{-t}-1}{1-\beta_1}$.
    Using the definition of $\eta_t$ in~\eqref{eq:adam-clipped-lr}, we have
    \begin{align*}
        \frac{1}{\eta_t}
        =
        \frac{\nu + \sqrt{(1-\beta_2)\sum_{s=1}^t \beta_2^{t-s}\|\g_s\|_2^2}}{\gamma(1-\beta_1)\beta_1^t}.
    \end{align*}
    Substituting these identities into the definition of $\eta_t'$ in~\eqref{eq:clipfree-Delta-eta-prime} gives
    \begin{align*}
        \eta_t'
        &=
        \frac{1}{\frac{\nu + \sqrt{(1-\beta_2)\sum_{s=1}^t \beta_2^{t-s}\|\g_s\|_2^2}}{\gamma(1-\beta_1)\beta_1^t}
        +
        \mu\cdot\frac{\beta_1^{-t}-1}{1-\beta_1}}
        =
        \frac{\gamma(1-\beta_1)\beta_1^t}{\nu+\gamma\mu(1-\beta_1^t)+\sqrt{(1-\beta_2)\sum_{s=1}^t \beta_2^{t-s}\|\g_s\|_2^2}}.
    \end{align*}
    Multiplying this expression with $\sum_{s=1}^t \beta_1^{-s} \g_s$ finishes the proof.
\end{proof}

\subsection{Effects of $(\beta_1, \beta_2)$ Choices through $\rho$}
\label{subappendix:scale-rho}
To give a sense of scale in Theorem~\ref{thm:clip-adam-margin}, we list several representative choices of $(\beta_1,\beta_2)$ below:
\begin{itemize}
\item Default setting in PyTorch $(0.9, 0.999)$: $\rho \approx 0.989$ and $1/\sqrt{1-\rho^2}\approx 6.9$;
\item Common choices in large language models $(0.9, 0.95)$: $\rho \approx 0.473$ and $1/\sqrt{1-\rho^2}\approx 1.13$;
\item Recommended by~\citet{NeurIPS'25:adam-secret-sauce} $(0.95, 0.95)$: $\rho \approx 0.025$ and $1/\sqrt{1-\rho^2}\approx 1.0$.
\end{itemize}
For typical choices where $\beta_2$ is close to $1$, the resulting constant is moderate. Our theory allows a broad range of $\beta_2$ as long as $\beta_2>\beta_1^2$, and with a suitable choice of $\beta_1$ and a sufficiently long horizon $T$, it may provide theoretical support for the convergence behavior of Adam in practice, especially in settings where the model involves non-smooth components.
\subsection{Implications for Non-Convex and Smooth Settings}
\label{subsec:results-non-convex-smooth}
Similar to the $(\epsilon, \delta)$-Goldstein stationary points, $(c, \epsilon)$-stationary points can also be reduced to first-order stationary points via the following lemma.
\begin{myLemma}[Lemma 2.3 in~\citet{ICML'24:random-scaling-momentum-zhang}]
    The  $(c, \epsilon)$-stationarity can be related to first-order stationarity for non-convex and smooth functions:
    \begin{itemize}
        \item Suppose $F$ is $L$-smooth. If $\norm{\nabla F(\x)}_c \leq \epsilon$ where $c = L^2\epsilon^{-1}$, then $\norm{\nabla F(\x)}_2 \leq 2\epsilon$.
        \item Suppose $F$ is $\rho$-second-order-smooth. If $\norm{\nabla F(\x)}_c \leq \epsilon$ where $c = \rho/2$, then $\norm{\nabla F(\x)}_2 \leq 2\epsilon$.
    \end{itemize}
\end{myLemma}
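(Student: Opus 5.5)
The plan is to prove the two items separately, in each case by exhibiting a witness distribution in the infimum defining $\norm{\nabla F(\x)}_c$ (Definition~\ref{def:c-epsilon-stationary-point}) and relating $\nabla F(\x)$ to $\E[\nabla F(\y)]$. Fix $\x$ with $\norm{\nabla F(\x)}_c\le\epsilon$. For any $\delta>0$, pick a distribution $P$ with $\E[\y]=\x$ and
\begin{align*}
\norm{\E[\nabla F(\y)]}_2+c\,\E\norm{\y-\x}_2^2\le\epsilon+\delta .
\end{align*}
By the triangle inequality, $\norm{\nabla F(\x)}_2\le \norm{\E[\nabla F(\y)]}_2+\norm{\E[\nabla F(\y)-\nabla F(\x)]}_2$. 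It therefore suffices to bound the error term $\norm{\E[\nabla F(\y)-\nabla F(\x)]}_2$ by something involving $\E\norm{\y-\x}_2^2$, and then combine it with the penalty term $c\,\E\norm{\y-\x}_2^2$. At the end we let $\delta\to0$.

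\emph{Smooth case.} Set $V=\E\norm{\y-\x}_2^2$. By $L$-smoothness and Jensen's inequality,
\begin{align*}
\norm{\E[\nabla F(\y)-\nabla F(\x)]}_2\le L\,\E\norm{\y-\x}_2\le L\sqrt{V}.
\end{align*}
Let $a=\norm{\E[\nabla F(\y)]}_2$. We have $a+cV\le\epsilon+\delta$, and we want $a+L\sqrt V\le 2\epsilon+\delta$. Take $c=L^2/\epsilon$. By AM--GM, $L\sqrt V\le \frac{L^2V}{2\epsilon}\cdot... $; more precisely, $L\sqrt V\le \frac{\epsilon}{2}+\frac{L^2V}{2\epsilon}=\frac{\epsilon}{2}+\frac{cV}{2}$. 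Hence
\begin{align*}
\norm{\nabla F(\x)}_2\le a+cV+\tfrac{\epsilon}{2}\le \tfrac{3\epsilon}{2}+\delta\le 2\epsilon+\delta .
\end{align*}

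\emph{Second-order smooth case.} Here the key point is that the first-order term cancels because $\E[\y]=\x$. Write the Taylor expansion
\begin{align*}
\nabla F(\y)=\nabla F(\x)+\nabla^2F(\x)(\y-\x)+r(\y), \qquad \norm{r(\y)}_2\le\tfrac{\rho}{2}\norm{\y-\x}_2^2 ,
\end{align*}
which follows from $\rho$-Lipschitzness of the Hessian via the integral form of the remainder. Taking expectations, the linear term vanishes, so
\begin{align*}
\norm{\E[\nabla F(\y)]-\nabla F(\x)}_2\le\tfrac{\rho}{2}V .
\end{align*}
With $c=\rho/2$ this gives $\norm{\nabla F(\x)}_2\le a+cV\le\epsilon+\delta$, which is even stronger than the claimed $2\epsilon$.

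The main obstacle is mild: we need to verify that the expectations are well defined, which holds because $\nabla F$ is bounded under Assumption~\ref{assump:lipschitz}, and that the infimum can be approximated to within $\delta$. Both are routine, so the statement follows directly, as in Lemma~2.3 of \citet{ICML'24:random-scaling-momentum-zhang}.
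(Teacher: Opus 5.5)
Your proof is correct and follows the standard argument behind the cited Lemma~2.3 of \citet{ICML'24:random-scaling-momentum-zhang}; the paper imports that lemma without giving its own proof. The argument applies the triangle inequality around $\E[\nabla F(\y)]$ for a near-optimal witness, bounds $L\,\E\norm{\y-\x}_2\le L\sqrt{V}$ and absorbs it into the penalty $cV$ by AM--GM in the smooth case, and cancels the linear Taylor term using $\E[\y]=\x$ in the second-order smooth case. Two small tidy-ups: integrability of $\nabla F(\y)$ does not need Assumption~\ref{assump:lipschitz}, because for $c>0$ a near-optimal witness has $V<\infty$, and smoothness or the Taylor bound then dominates $\norm{\nabla F(\y)}_2$ by an integrable function of $\norm{\y-\x}_2$; and you should delete the garbled AM--GM fragment that precedes your correct inequality.
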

Our results~(Theorem~\ref{thm:clip-adam-nu}~--~\ref{thm:clip-free-adam-margin}) attain the $\O(\max\{c^{1/2} \epsilon^{-7/2}, \epsilon^{-3}\}) = \O(c^{1/2} \epsilon^{-7/2})$ rate to a $(c, \epsilon)$-stationary point. By setting $c = \O(\epsilon^{-1})$, we achieve the $\O(\epsilon^{-4})$ optimal rate for non-convex and smooth functions~\citep{MP'23:lower-bound-non-convex-smooth}. By setting $c = \O(1)$, we obtain the $\O(\epsilon^{-7/2})$ optimal rate for the second-order-smooth functions~\citep{MP'23:lower-bound-non-convex-smooth}.
\subsection{Lemmas for O2NC Conversion}
\label{subappendix:adam-key-lemmas}
This section collects the lemmas for O2NC conversion.
\begin{myLemma}[Lemma 3.1 in~\citet{ICML'24:random-scaling-momentum-zhang}]
    \label{lemma:exp-random-scaling}
    Under Assumption~\ref{assump:well-behaved}, let $s \sim \operatorname{Exp}(\lambda)$ for some $\lambda > 0$, then
    \begin{align*}
        \E_s\left[ F(\x + s\cdot \Delta) - F(\x) \right] = \E_{s}\left[ \inner{\nabla F(\x + s\cdot \Delta)}{\Delta} \right].
    \end{align*}
\end{myLemma}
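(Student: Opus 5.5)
The statement to prove is Lemma~\ref{lemma:exp-random-scaling}: for $s\sim\operatorname{Exp}(\lambda)$, $\E_s[F(\x+s\Delta)-F(\x)]=\E_s[\inner{\nabla F(\x+s\Delta)}{\Delta}]$.

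The plan is to reduce the statement to a one-dimensional identity and then apply Fubini. Define $g(r)=F(\x+r\Delta)$ for $r\ge 0$. Assumption~\ref{assump:well-behaved}, applied to the points $\x$ and $\x+r\Delta$ and combined with the substitution $\tau=tr$, gives
\begin{align*}
g(r)-g(0)=\int_0^r \inner{\nabla F(\x+\tau\Delta)}{\Delta}\,\mathrm{d}\tau=\int_0^r g'(\tau)\,\mathrm{d}\tau,
\end{align*}
where we write $g'(\tau)=\inner{\nabla F(\x+\tau\Delta)}{\Delta}$. This uses only the stated assumption, not any absolute continuity of $F$ beyond it.

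Next, take the expectation over $s$ with density $\lambda e^{-\lambda r}$ on $[0,\infty)$:
\begin{align*}
\E_s[g(s)-g(0)]=\int_0^\infty \lambda e^{-\lambda r}\int_0^r g'(\tau)\,\mathrm{d}\tau\,\mathrm{d}r .
\end{align*}
Swap the order of integration over the region $\{0\le\tau\le r<\infty\}$. The inner integral in $r$ is $\int_\tau^\infty \lambda e^{-\lambda r}\,\mathrm{d}r=e^{-\lambda\tau}$, so the expression equals $\int_0^\infty e^{-\lambda\tau}g'(\tau)\,\mathrm{d}\tau=\frac{1}{\lambda}\E_s[g'(s)]$.

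This gives $\E_s[F(\x+s\Delta)-F(\x)]=\frac1\lambda\E_s[\inner{\nabla F(\x+s\Delta)}{\Delta}]$. For $\lambda=1$, which is the case used in Algorithm~\ref{alg:exp-o2nc} with $\operatorname{Exp}(1)$, this is exactly the claimed identity. For general $\lambda$ the correct constant is $1/\lambda$, so the statement should be read either with $\lambda=1$ or with the rate-$\lambda$ factor absorbed. I would state this factor explicitly rather than assert equality for all $\lambda$.

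The only delicate step is justifying Fubini. The integrand satisfies $|\lambda e^{-\lambda r}g'(\tau)|\le \lambda e^{-\lambda r}G\norm{\Delta}_2$, because Assumption~\ref{assump:lipschitz} gives $\norm{\nabla F}_2\le G$. Its integral over the region is $G\norm{\Delta}_2\int_0^\infty r\lambda e^{-\lambda r}\,\mathrm{d}r=G\norm{\Delta}_2/\lambda<\infty$. Absolute integrability therefore holds, the swap is valid, and all expectations involved are finite.

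Where only Assumption~\ref{assump:well-behaved} is available without Lipschitzness, one would instead need integrability of $e^{-\lambda\tau}|g'(\tau)|$. I expect this to be the main point requiring care, and I would handle it by invoking Assumption~\ref{assump:lipschitz} as above.
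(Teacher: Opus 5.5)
Your argument is correct and is the standard one. The paper gives no proof of its own and only cites Lemma 3.1 of Zhang and Cutkosky, whose proof is exactly this Fubini swap of $\int_0^\infty \lambda e^{-\lambda r}\int_0^r(\cdot)\,\mathrm{d}\tau\,\mathrm{d}r$; your use of the Lipschitz bound to justify absolute integrability is a slightly more careful version of that step. You are also right that the identity as written holds only for $\lambda=1$ (in general the right-hand side carries a factor $1/\lambda$), which is harmless here because Algorithm~\ref{alg:exp-o2nc} only uses $s_t\sim\operatorname{Exp}(1)$.
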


\begin{myLemma}[Adapted from Theorem C.1 in~\citet{ICML'24:random-scaling-momentum-zhang}]
    \label{lemma:O2NC-clip-free-lemma}
    Under Assumptions~\ref{assump:lipschitz}~--~\ref{assump:stochastic}, set the loss in Algorithm~\ref{alg:exp-o2nc} as $\ell_t(\Delta) = \inner{\g_t}{\Delta} + \frac{\mu}{2}\norm{\Delta}_2^2$, define the comparator $\u_t$ and the dynamic regret $\DReg_T(\u_{1:T})$ as follows,
    \begin{align*}
        \u_t = -D \cdot \frac{\sum_{s=1}^t \beta^{t-s}\nabla F(\x_s)}{\norm{\sum_{s=1}^t \beta^{t-s}\nabla F(\x_s)}_2}, \quad \DReg_T(\u_{1:T}) = \sum_{t=1}^T\inner{\g_t}{\Delta_t - \u_t} + \frac{\mu}{2} \norm{\Delta_t}_2^2 -  \frac{\mu}{2} \norm{\u_t}_2^2.
    \end{align*}
    Set $\mu = \frac{24cD}{(1-\beta)^2}$, then the returned decision $\bar{\x}$ in Algorithm~\ref{alg:exp-o2nc} ensures:
    \begin{align*}
        \E \norm{\nabla F(\bar{\x})}_c &\leq \frac{F^*}{DT} + \frac{2G+\sigma}{(1-\beta)T} + \sigma \sqrt{1-\beta} + \frac{12cD^2}{(1-\beta)^2}\\
        &\quad + \frac{1}{DT}\E\left[ \DReg_T(\u_{1:T}) + \beta\sum_{t=1}^{T-1}\sum_{s=1}^t \beta^{t-s}\left( \ell_s(\u_t) - \ell_s(\u_{t+1})\right) \right].
    \end{align*}
\end{myLemma}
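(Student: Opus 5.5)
The plan is to follow the exponentiated O2NC argument of \citet{ICML'24:random-scaling-momentum-zhang}, organizing the comparator-side telescoping in the same way as the identity of Lemma~\ref{lemma:d2d-conversion}.

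\textbf{Step 1: reduce the objective decrease to linear losses.} Lemma~\ref{lemma:exp-random-scaling} and unbiasedness give
\[
\E[F(\x_t)-F(\x_{t-1})]=\E\inner{\g_t}{\Delta_t}.
\]
Summing over $t$ and using Assumption~\ref{assump:bounded-value-range} yields $-F^*\le \E\sum_t\inner{\g_t}{\Delta_t}$. I then split
\[
\inner{\g_t}{\Delta_t}=\inner{\g_t}{\u_t}+\inner{\g_t}{\Delta_t-\u_t}.
\]
The second piece is rewritten through the composite dynamic regret as
\[
\DReg_T(\u_{1:T})-\tfrac{\mu}{2}\textstyle\sum_t\|\Delta_t\|_2^2+\tfrac{\mu}{2}\sum_t\|\u_t\|_2^2 .
\]
Since $\|\u_t\|_2=D$, the last sum equals $\tfrac{\mu}{2}TD^2$. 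After dividing by $DT$ and substituting $\mu=24cD/(1-\beta)^2$, this gives the $\frac{12cD^2}{(1-\beta)^2}$ term. The negative $-\tfrac{\mu}{2}\sum_t\|\Delta_t\|_2^2$ is kept for Step 3.

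\textbf{Step 2: extract the EMA gradient norm.} Set $S_t=\sum_{s\le t}\beta^{t-s}\g_s$, so that $\g_t=S_t-\beta S_{t-1}$. Then
\[
\sum_t\inner{\g_t}{\u_t}=(1-\beta)\sum_t\inner{S_t}{\u_t}+\beta\sum_t\bigl(\inner{S_t}{\u_t}-\inner{S_{t-1}}{\u_t}\bigr).
\]
Telescoping as in Lemma~\ref{lemma:d2d-conversion} leaves:
\begin{itemize}
\item a boundary term $\beta\inner{S_T}{\u_T}$;
\item the sums $\beta\sum_{t<T}\inner{S_t}{\u_t-\u_{t+1}}$.
\end{itemize}
The latter equal $\beta\sum_{t}\sum_{s\le t}\beta^{t-s}(\ell_s(\u_t)-\ell_s(\u_{t+1}))$, because the quadratic parts of $\ell_s$ cancel when $\|\u_t\|_2=\|\u_{t+1}\|_2$. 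This is exactly the extra term in the statement.

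Next I replace $\g_s$ by $\nabla F(\x_s)$ inside $S_t$. By the choice of $\u_t$, $(1-\beta)\inner{S^F_t}{\u_t}=-D\,\|(1-\beta)S^F_t\|_2$. The noise $(1-\beta)\sum_s\beta^{t-s}(\g_s-\nabla F(\x_s))$ has second moment at most $(1-\beta)\sigma^2$, since the martingale-difference terms are orthogonal. This contributes the $\sigma\sqrt{1-\beta}$ term. The boundary term, together with renormalizing the weights $(1-\beta)\beta^{t-s}$ to $(1-\beta)\beta^{t-s}/(1-\beta^t)$ (which sum to one), costs at most $\frac{2G+\sigma}{(1-\beta)T}$ after summing $\beta^t$ over $t$.

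\textbf{Step 3: pass to $(c,\epsilon)$-stationarity.} The renormalized weights are exactly those defining $\bar\x_t$. In Definition~\ref{def:c-epsilon-stationary-point}, I take $\y$ to be $\x_s$ with probability $(1-\beta)\beta^{t-s}/(1-\beta^t)$. Then
\[
\|\nabla F(\bar\x_t)\|_c\le \Bigl\|\textstyle\sum_s w_{t,s}\nabla F(\x_s)\Bigr\|_2+c\sum_s w_{t,s}\|\x_s-\bar\x_t\|_2^2 .
\]
I then bound the variance term by writing $\x_s-\bar\x_t$ as weighted sums of $s_r\Delta_r$ and using $\E s_r^2=2$ and Cauchy--Schwarz. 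After averaging over $t$, the aim is a bound of order $\frac{c}{(1-\beta)^2T}\sum_r\|\Delta_r\|_2^2$ with explicit constant at most $12$.

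\textbf{Main obstacle.} This variance bound is the step I expect to be hardest: its constant must be small enough to be absorbed by the negative term $-\tfrac{\mu}{2DT}\sum_t\|\Delta_t\|_2^2=-\tfrac{12c}{(1-\beta)^2T}\sum_t\|\Delta_t\|_2^2$. I would first try swapping the order of summation over the geometric weights, which contributes a $(1-\beta)^{-2}$ factor. Everything else is bookkeeping. Averaging over uniform $t$ yields $\E\|\nabla F(\bar\x)\|_c$ with the stated bound.
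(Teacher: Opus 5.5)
Your argument is correct, but the paper proves this lemma differently. The paper's proof is two lines. It quotes Theorem C.1 of \citet{ICML'24:random-scaling-momentum-zhang}, which already gives all the O2NC error terms plus $\frac{1}{DT}\E[\beta\Reg_{T;\beta}(\u_T)+(1-\beta)\sum_{t}\Reg_{t;\beta}(\u_t)]$ for discounted regrets of the composite losses. It then applies the exact identity of Lemma~\ref{lemma:d2d-conversion} to rewrite that combination as $\DReg_T(\u_{1:T})$ plus the path term.

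You instead re-derive the content of Theorem C.1. You do the same telescoping one level lower, on $\sum_t\inner{\g_t}{\u_t}$ via $\g_t=S_t-\beta S_{t-1}$, and then use $\|\u_t\|_2=\|\u_{t+1}\|_2$ to turn the linear path sums back into $\ell_s$-differences. Your bookkeeping checks out: $\bar\x_t=\sum_s w_{t,s}\x_s$, the boundary and $\beta^t$-renormalization costs, and the martingale bound giving $\sigma\sqrt{1-\beta}$ are all right.

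Your route gives a self-contained proof and shows why $\mu=24cD/(1-\beta)^2$ is the right choice. The term $-\frac{\mu}{2DT}\sum_t\|\Delta_t\|_2^2$ pays for the $c$-variance term, while $\frac{\mu}{2}\|\u_t\|_2^2$ produces $\frac{12cD^2}{(1-\beta)^2}$. The paper's route is shorter because it outsources exactly the step you flag as the main obstacle.

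That step does close, with room to spare.
\begin{itemize}
\item First use $\sum_s w_{t,s}\|\x_s-\bar\x_t\|_2^2\le\sum_s w_{t,s}\|\x_s-\x_t\|_2^2$.
\item Then apply $\|\sum_{r=s+1}^t s_r\Delta_r\|_2^2\le (t-s)\sum_{r=s+1}^t s_r^2\|\Delta_r\|_2^2$, together with $\E[s_r^2]=2$ and the independence of $s_r$ from $\Delta_r$.
\item Swapping sums, the coefficient of $\E\|\Delta_r\|_2^2$ is $2\sum_{t\ge r}\sum_{s<r}w_{t,s}(t-s)$.
\end{itemize}
Avoid the crude bound $w_{t,s}\le\beta^{t-s}$: because of the $1/(1-\beta^t)$ normalization it loses a factor $(1-\beta)^{-1}$.

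Instead, view $w_{t,\cdot}$ as a law on the lag $k=t-s\in\{0,\dots,t-1\}$. It is the geometric law $\Pr(K=k)=(1-\beta)\beta^k$ conditioned on $K\le t-1$, so it is stochastically dominated by that geometric law. Since $k\mapsto k\mathbf{1}\{k>t-r\}$ is nondecreasing, setting $j=t-r$ bounds the coefficient by $2\sum_{j\ge 0}\E[K\mathbf{1}\{K>j\}]=2\E[K^2]=\frac{2\beta(1+\beta)}{(1-\beta)^2}\le\frac{4}{(1-\beta)^2}$. This is well within the constant $12$ you need.
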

\begin{proof}
    By Theorem C.1 in~\citet{ICML'24:random-scaling-momentum-zhang}, the same setting provides the guarantee of:
    \begin{align*}
        \E \norm{\nabla F(\bar{\x})}_c \leq \frac{F^*}{DT} + \frac{2G+\sigma}{(1-\beta)T} + \sigma \sqrt{1-\beta} + \frac{12cD^2}{(1-\beta)^2} + \frac{1}{DT}\E\left[ \beta\Reg_{T; \beta}(\u_T) + (1-\beta)\sum_{t=1}^T \Reg_{t;\beta}(\u_t) \right],
    \end{align*}
    where $\Reg_{t;\beta}(\u)= \sum_{s=1}^t \big(\beta^{t-s}\inner{\g_s}{\Delta_s - \u} + \frac{\mu \beta^{t-s}}{2} (\norm{\Delta_s}_2^2 - \norm{\u}_2^2)\big)$. By Lemma~\ref{lemma:d2d-conversion}, and the definition of $\ell_t(\Delta) = \inner{\g_t}{\Delta} + \frac{\mu}{2}\norm{\Delta}_2^2$, we have:
    \begin{align*}
        \beta\Reg_{T; \beta}(\u_T) + (1-\beta)\sum_{t=1}^T \Reg_{t;\beta}(\u_t) = \DReg_T(\u_{1:T}) + \beta\sum_{t=1}^{T-1}\sum_{s=1}^t \beta^{t-s}\left( \ell_s(\u_t) - \ell_s(\u_{t+1})\right),
    \end{align*}
    which finishes the proof.
\end{proof}

\begin{myLemma}[Adapted from Lemma 7 in~\citet{NeurIPS'24:adam-ema-ahn}]
    \label{lemma:O2NC-clipped-lemma}
    Under Assumptions~\ref{assump:lipschitz}~--~\ref{assump:stochastic}, set the loss in Algorithm~\ref{alg:exp-o2nc} as $\ell_t(\Delta) = \inner{\g_t}{\Delta}$, define the comparator $\u_t$ and the dynamic regret $\DReg_T(\u_{1:T})$ as follows,
    \begin{align*}
        \u_t = -D \cdot \frac{\sum_{s=1}^t \beta^{t-s}\nabla F(\x_s)}{\norm{\sum_{s=1}^t \beta^{t-s}\nabla F(\x_s)}_2}, \quad \DReg_T(\u_{1:T}) = \sum_{t=1}^T\inner{\g_t}{\Delta_t - \u_t}.
    \end{align*}
    Assume that $\norm{\Delta_t}_2 \leq D$ for any $t$, then the returned decision $\bar{\x}$ in Algorithm~\ref{alg:exp-o2nc} ensures:
    \begin{align*}
        \E \norm{\nabla F(\bar{\x})}_c &\leq \frac{F^*}{DT} + \frac{2G+\sigma}{(1-\beta)T} + \sigma \sqrt{1-\beta} + \frac{12cD^2}{(1-\beta)^2} \\
        &\quad + \frac{1}{DT}\E\left[ \DReg_T(\u_{1:T}) + \beta\sum_{t=1}^{T-1}\sum_{s=1}^t \beta^{t-s}\left( \ell_s(\u_t) - \ell_s(\u_{t+1})\right) \right].
    \end{align*}
\end{myLemma}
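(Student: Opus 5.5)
The plan is to obtain Lemma~\ref{lemma:O2NC-clipped-lemma} exactly as Lemma~\ref{lemma:O2NC-clip-free-lemma} was obtained. First invoke the underlying O2NC guarantee for the EMA iterate (Lemma 7 of~\citet{NeurIPS'24:adam-ema-ahn}, equivalently Theorem C.1 of~\citet{ICML'24:random-scaling-momentum-zhang} with $\mu=0$), and then rewrite the discounted-regret combination that appears there using Lemma~\ref{lemma:d2d-conversion}.

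\textbf{Step 1: the O2NC guarantee.} For the linear loss $\ell_t(\Delta)=\inner{\g_t}{\Delta}$ and the comparators $\u_t$ in the statement (norm $D$, aligned with the negative discounted true gradient), the O2NC analysis gives
\begin{align*}
\E\norm{\nabla F(\bar\x)}_c \le \frac{F^*}{DT}+\frac{2G+\sigma}{(1-\beta)T}+\sigma\sqrt{1-\beta}+\frac{12cD^2}{(1-\beta)^2}+\frac{1}{DT}\E\Big[\beta\Reg_{T;\beta}(\u_T)+(1-\beta)\sum_{t=1}^T\Reg_{t;\beta}(\u_t)\Big],
\end{align*}
where $\Reg_{t;\beta}(\u)=\sum_{s\le t}\beta^{t-s}\inner{\g_s}{\Delta_s-\u}$.

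If I need to re-derive Step 1 rather than cite it, I would proceed in four moves.
\begin{itemize}
\item Telescope $F(\x_T)-F(\x_0)\ge -F^*$ using Lemma~\ref{lemma:exp-random-scaling}, which gives $\E[F(\x_t)-F(\x_{t-1})]=\E\inner{\g_t}{\Delta_t}$.
\item Add and subtract $\inner{\g_t}{\u_t}$, and relate $-\sum_t\E\inner{\nabla F(\x_t)}{\u_t}$ to $D$ times the discounted-averaged gradient norms. The variance of the discounted noise sum contributes the $\sigma\sqrt{1-\beta}$ term, and the boundary terms of the discounting contribute the $(2G+\sigma)/((1-\beta)T)$ term.
\item Bound the variance part of the $(c,\epsilon)$-norm by $c\,\E\|\y-\bar\x_t\|^2$. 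The random scaling $s_t\sim\mathrm{Exp}(1)$ has second moment $2$, and the EMA weights give a factor $(1-\beta)^{-2}$. Together with the assumption $\|\Delta_t\|_2\le D$, this yields $12cD^2/(1-\beta)^2$.
\item The assumption $\|\Delta_t\|_2\le D$ is exactly what replaces the composite $\frac{\mu}{2}\|\Delta_t\|^2$ term of the clip-free version. That is why no $\mu$ appears here.
\end{itemize}

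\textbf{Step 2: rewrite via Lemma~\ref{lemma:d2d-conversion}.} That lemma says
\begin{align*}
\DReg_T(\u_{1:T})=\beta\sum_{t=1}^{T-1}\big(\Reg_{t;\beta}(\u_t)-\Reg_{t;\beta}(\u_{t+1})\big)+(1-\beta)\sum_{t=1}^T\Reg_{t;\beta}(\u_t)+\beta\Reg_{T;\beta}(\u_T).
\end{align*}
Hence the bracket in Step 1 equals $\DReg_T(\u_{1:T})-\beta\sum_{t=1}^{T-1}\big(\Reg_{t;\beta}(\u_t)-\Reg_{t;\beta}(\u_{t+1})\big)$. The $\Delta_s$ terms cancel in each difference, leaving
\begin{align*}
\Reg_{t;\beta}(\u_t)-\Reg_{t;\beta}(\u_{t+1})=\sum_{s\le t}\beta^{t-s}\big(\ell_s(\u_{t+1})-\ell_s(\u_t)\big).
\end{align*}
Substituting gives exactly $\DReg_T+\beta\sum_t\sum_s\beta^{t-s}(\ell_s(\u_t)-\ell_s(\u_{t+1}))$, as claimed.

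\textbf{Main obstacle.} The algebra in Step 2 is mechanical. The only real work is Step 1: making sure the cited result applies with the present normalization of the loss, namely the plain linear loss and the EMA averaging $\bar\x_t$ of Algorithm~\ref{alg:exp-o2nc}, and with the constant $12$ under the clipping assumption. I would check this by matching it against the $\mu$-free part of Theorem C.1 of~\citet{ICML'24:random-scaling-momentum-zhang}.
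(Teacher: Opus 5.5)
Your proposal follows the paper's argument: the paper proves this lemma exactly as it proves Lemma~\ref{lemma:O2NC-clip-free-lemma}, first citing the O2NC guarantee stated in terms of $\beta\Reg_{T;\beta}(\u_T)+(1-\beta)\sum_{t=1}^T\Reg_{t;\beta}(\u_t)$ and then rewriting that combination via Lemma~\ref{lemma:d2d-conversion}, and your Step~2 algebra is correct. The one point needing care is the one you already flag. With no composite term, the $12cD^2/(1-\beta)^2$ term must come from bounding the variance part directly with $\|\Delta_t\|_2\le D$, rather than from the change-of-measure step.
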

The proof of this lemma is identical to that of Lemma~\ref{lemma:O2NC-clip-free-lemma}, and we omit it.
\subsection{Technical Lemmas for Adam Analyses}
In this section, we present and prove several useful technical lemmas when analyzing Adam under the O2NC conversion.
\begin{myLemma}
    \label{lemma:ema-self-confident-tuning}

    Let $\beta \in (0,1)$, $\epsilon>0$, and let $\{a_t\}_{t=1}^T$ satisfy $0\leq a_t \leq A$ for all $t\in[T]$.
    Define the exponential moving average $ V_0 = 0,
    V_t = \beta V_{t-1} + (1-\beta)a_t, \text{for } t\ge 1$.
    Then,
    \begin{align*}
        \sum_{t=1}^T \frac{a_t}{\epsilon + \sqrt{V_{t-1}}}
        \leq
        \left(\frac{T(1-\beta)}{\ln 2} + 1\right)\frac{A}{\epsilon}
        \;+\;
        \frac{2}{\sqrt{(1-\beta)2^{-1/\beta}}}
        \sqrt{\left(\frac{T(1-\beta)}{\ln 2} + 1\right)\sum_{t=1}^T a_t } .
    \end{align*}
\end{myLemma}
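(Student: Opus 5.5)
The plan is a blocking argument. We exploit the fact that an exponential moving average with factor $\beta$ "remembers" roughly the last $1/(1-\beta)$ entries up to a constant factor, and then run a standard self-confident (AdaGrad-type) bound inside each block. The two terms of the claimed bound correspond exactly to this: one $A/\epsilon$ per block, plus a Cauchy--Schwarz aggregation of per-block square-root sums. The number of blocks is $K\le \frac{T(1-\beta)}{\ln 2}+1$.

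\emph{Step 1 (blocking and memory).} Partition $[T]$ into consecutive blocks of length $L=\lfloor \ln 2/(1-\beta)\rfloor$ (or $1$ if this is $0$), so that $K\le T(1-\beta)/\ln 2+1$. Unrolling the recursion gives $V_{t-1}=(1-\beta)\sum_{s<t}\beta^{t-1-s}a_s\ge(1-\beta)\sum_{s\in B,\,s<t}\beta^{t-1-s}a_s$ for $t$ in a block $B$. For $s,t$ in the same block, $t-1-s\le L-1\le \ln 2/(1-\beta)$. Using $\ln\beta\ge-(1-\beta)/\beta$, we get
\[
\beta^{t-1-s}\ge \exp\!\Big(-\tfrac{\ln 2}{\beta}\Big)=2^{-1/\beta}.
\]
Hence $V_{t-1}\ge c\,S_{B,t-1}$, where $c=(1-\beta)2^{-1/\beta}$ and $S_{B,t-1}=\sum_{s\in B,\,s<t}a_s$ is the within-block partial sum.

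\emph{Step 2 (per-block self-confident bound).} Within a block, bound
\[
\sum_{t\in B}\frac{a_t}{\epsilon+\sqrt{c\,S_{B,t-1}}}\;\le\;\frac{A}{\epsilon}+\frac{2}{\sqrt c}\sqrt{S_B},
\]
where $S_B=\sum_{t\in B}a_t$. The first step of the block, where $S_{B,t-1}=0$, contributes at most $A/\epsilon$. For the remaining steps the aim is the usual telescoping $\frac{a_t}{\sqrt{S_t}}\le 2(\sqrt{S_t}-\sqrt{S_{t-1}})$. The difficulty is that the denominator contains the lagged sum $S_{B,t-1}$ instead of $S_{B,t}$. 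I would first try to absorb the gap $a_t\le A$ using the additive $\epsilon$, via $\epsilon+\sqrt{cS_{t-1}}\gtrsim\min\{\epsilon\sqrt{S_t}/\sqrt{A},\sqrt{cS_t}\}$. If that fails to give the stated constant, I would fall back on a case split. Steps with $a_t\le S_{B,t-1}$ satisfy $S_{B,t}\le 2S_{B,t-1}$ and telescope. Steps with $a_t>S_{B,t-1}$ double the partial sum, so there are only logarithmically many of them, and each is charged to the $A/\epsilon$ budget. The fallback may require slightly re-choosing the block length so that the constants match those in the statement.

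\emph{Step 3 (aggregation).} Sum the per-block bounds over the $K$ blocks. The first parts give $K A/\epsilon\le\big(\frac{T(1-\beta)}{\ln 2}+1\big)\frac{A}{\epsilon}$. For the second parts, Cauchy--Schwarz gives $\sum_B\sqrt{S_B}\le\sqrt{K\sum_{t=1}^T a_t}$, which yields the term $\frac{2}{\sqrt{(1-\beta)2^{-1/\beta}}}\sqrt{\big(\frac{T(1-\beta)}{\ln2}+1\big)\sum_t a_t}$.

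I expect the main obstacle to be Step 2, specifically the one-step lag between $a_t$ and $V_{t-1}$. Handling it cleanly, without losing more than one $A/\epsilon$ per block and while keeping the constant $2$, is the delicate point. Steps 1 and 3 are routine.
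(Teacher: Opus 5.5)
\textbf{Gap: Step 2 is unproven.} Your outer structure matches the paper's: blocks of length about $\ln 2/(1-\beta)$, the memory bound $V_{t-1}\ge (1-\beta)2^{-1/\beta}S_{B,t-1}$, and Cauchy--Schwarz over blocks. But Step 2, the substantive part, is not proven. Neither route you sketch yields the per-block bound $\frac{A}{\epsilon}+\frac{2}{\sqrt c}\sqrt{S_B}$.

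Your min-type lower bound on the denominator is valid, but it only gives $2\max\{\sqrt A/\epsilon,\,1/\sqrt c\}\sqrt{S_B}$. For a long block with $a_t\equiv A$ and small $\epsilon$, this is of order $\sqrt{L}\,A/\epsilon$ per block, which the right-hand side of the lemma does not dominate.

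The doubling case split has two problems. It loses a factor $\sqrt2$ in the telescoping part. Also, the number of doubling steps is not bounded by a constant: with geometrically growing $a_t$, every step of a block doubles the partial sum. So charging each doubling step to $A/\epsilon$ can cost $L\,A/\epsilon$ per block. Even the sharper observation, that the $a_t$ at doubling steps sum to less than $2A$, only gives $2A/\epsilon$. Re-choosing the block length just shifts the loss between the two terms, since it moves $K$ and $c$ in opposite directions.

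\textbf{The missing idea.} Move $\epsilon$ inside the square root, then use a lagged self-confident bound. Since $\epsilon+\sqrt{cx}\ge\sqrt{\epsilon^2+cx}$, setting $\delta=\epsilon^2/c$ gives $\frac{a_t}{\epsilon+\sqrt{cS_{B,t-1}}}\le\frac{1}{\sqrt c}\cdot\frac{a_t}{\sqrt{\delta+S_{B,t-1}}}$.

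For any nonincreasing $f\ge 0$,
$a_tf(\delta+S_{B,t-1})\le\int_{\delta+S_{B,t-1}}^{\delta+S_{B,t}}f(u)\,\mathrm{d}u+A\bigl(f(\delta+S_{B,t-1})-f(\delta+S_{B,t})\bigr)$.
This telescopes over the block to $Af(\delta)+\int_\delta^{\delta+S_B}f(u)\,\mathrm{d}u$, which is Lemma~\ref{lemma:self-confident-int}. Take $f(u)=u^{-1/2}$. The first piece times $1/\sqrt c$ is exactly $A/\sqrt{c\delta}=A/\epsilon$. The second piece is $2(\sqrt{\delta+S_B}-\sqrt\delta)\le 2\sqrt{S_B}$. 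So the one-step lag costs exactly one $A/\epsilon$ per block, with constant $2$ on the square-root term.

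\textbf{Block length.} Separately, the block length must be $L=\lceil\ln 2/(1-\beta)\rceil$, not the floor. With the floor, $L$ can be smaller than $\ln 2/(1-\beta)$; for example $\ln 2/(1-\beta)=1.9$ gives $L=1$ and $K=T$, so $K\le T(1-\beta)/\ln 2+1$ fails. The ceiling gives both facts you need. First, $L\ge\ln 2/(1-\beta)$, which controls the block count. Second, $L-1<\ln 2/(1-\beta)$, which is all the memory bound $\beta^{L-1}\ge 2^{-1/\beta}$ requires.
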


\begin{proof}
    Unrolling the recursion gives, for any $t\ge 1$,
    \begin{align*}
        V_{t-1}
        =
        (1-\beta)\sum_{s=1}^{t-1}\beta^{t-1-s}a_s.
    \end{align*}
    Fix $\tau\ge 1$ (to be chosen later), and partition $[T]$ into $K=\lceil T/\tau\rceil$ consecutive segments
    $I_k=[r_k,s_k]$ of length at most $\tau$, where $r_k=(k-1)\tau+1$ and $s_k=\min\{k\tau,T\}$.
    For any $t\in[r_k,s_k]$ and any $s\in[r_k,t-1]$, we have $t-1-s\leq \tau-1$, hence $\beta^{t-1-s}\ge \beta^{\tau-1}$.
    Therefore,
    \begin{align*}
        V_{t-1}
        &=(1-\beta)\sum_{s=1}^{t-1}\beta^{t-1-s}a_s
        \;\ge\;
        (1-\beta)\sum_{s=r_k}^{t-1}\beta^{t-1-s}a_s
        \;\ge\;
        (1-\beta)\beta^{\tau-1}\sum_{s=r_k}^{t-1}a_s.
    \end{align*}
    Let $c=(1-\beta)\beta^{\tau-1}$.
    Using~the above inequality and summing over segments, we obtain
    \begin{align}
    \label{eq:ema-sum-decompose}
        \sum_{t=1}^T \frac{a_t}{\epsilon + \sqrt{V_{t-1}}}
        =
        \sum_{k=1}^K \sum_{t=r_k}^{s_k}\frac{a_t}{\epsilon+\sqrt{V_{t-1}}}
        \leq
        \sum_{k=1}^K \sum_{t=r_k}^{s_k}\frac{a_t}{\epsilon+\sqrt{c\sum_{s=r_k}^{t-1}a_s}}.
    \end{align}

    We now bound each inner sum. Fix a segment $I_k$ and define $S_{t-1}=\sum_{s=r_k}^{t-1}a_s$ (so $S_{r_k-1}=0$).
    For any $x\ge 0$, we have $(\epsilon+\sqrt{cx})^2=\epsilon^2+cx+2\epsilon\sqrt{cx}\ge \epsilon^2+cx$, hence
    \begin{align*}
        \frac{1}{\epsilon+\sqrt{cx}}
        \leq
        \frac{1}{\sqrt{\epsilon^2+cx}}
        =
        \frac{1}{\sqrt{c}}\cdot \frac{1}{\sqrt{\epsilon^2/c + x}}.
    \end{align*}
    Let $\delta=\epsilon^2/c>0$. Applying~the above equation with $x=S_{t-1}$ gives
    \begin{align*}
        \sum_{t=r_k}^{s_k}\frac{a_t}{\epsilon+\sqrt{cS_{t-1}}}
        \leq
        \frac{1}{\sqrt{c}}
        \sum_{t=r_k}^{s_k}\frac{a_t}{\sqrt{\delta+S_{t-1}}}.
    \end{align*}
    We apply Lemma~\ref{lemma:self-confident-int} with $a_0=\delta$, $B=A$, and $f(u)=u^{-1/2}$, which gives
    \begin{align*}
        \sum_{t=r_k}^{s_k}\frac{a_t}{\sqrt{\delta+S_{t-1}}}
        &\leq
        A\cdot \frac{1}{\sqrt{\delta}}
        +
        \int_{\delta}^{\delta+\sum_{t=r_k}^{s_k}a_t}u^{-1/2}\mathrm{d}u \\
        &=
        A\cdot \frac{1}{\sqrt{\delta}}
        +
        2\Big(\sqrt{\delta+\sum_{t=r_k}^{s_k}a_t}-\sqrt{\delta}\Big)
        \leq
        A\cdot \frac{1}{\sqrt{\delta}} + 2\sqrt{\sum_{t=r_k}^{s_k}a_t},
    \end{align*}
    where we used $\sqrt{\delta+S}-\sqrt{\delta}\leq \sqrt{S}$ for $S\ge 0$.
    Since $\sqrt{\delta}=\epsilon/\sqrt{c}$, we conclude for each $I_k$ that
    \begin{align}
    \label{eq:ema-segment-bound}
        \sum_{t=r_k}^{s_k}\frac{a_t}{\epsilon+\sqrt{cS_{t-1}}}
        \leq
        \frac{A}{\epsilon} + \frac{2}{\sqrt{c}}\sqrt{\sum_{t=r_k}^{s_k}a_t}.
    \end{align}
    Combining~\eqref{eq:ema-sum-decompose} and~\eqref{eq:ema-segment-bound}, and then applying Cauchy--Schwarz inequality, we obtain
    \begin{align}
    \label{eq:ema-global-pre}
        \sum_{t=1}^T \frac{a_t}{\epsilon + \sqrt{V_{t-1}}}
        &\leq
        K\cdot \frac{A}{\epsilon}
        +
        \frac{2}{\sqrt{c}}\sum_{k=1}^K \sqrt{\sum_{t=r_k}^{s_k}a_t}
        \leq
        K\cdot \frac{A}{\epsilon}
        +
        \frac{2}{\sqrt{c}}\sqrt{K}\sqrt{\sum_{t=1}^T a_t}.
    \end{align}

    It remains to choose $\tau$. Let $\tau=\left\lceil\frac{\ln 2}{1-\beta}\right\rceil$. Then $\tau-1<\frac{\ln 2}{1-\beta}$, so
    $\beta^{\tau-1}>\beta^{\frac{\ln 2}{1-\beta}}$.
    Moreover, for $\beta\in(0,1)$, we have $\ln \beta \geq -\frac{1-\beta}{\beta}$,
    which implies
    \begin{align*}
        \beta^{\frac{\ln 2}{1-\beta}}
        =
        \exp\!\left(\frac{\ln 2}{1-\beta}\ln \beta\right)
        \ge
        \exp\!\left(\frac{\ln 2}{1-\beta}\cdot\left(-\frac{1-\beta}{\beta}\right)\right)
        = 2^{-1/\beta}.
    \end{align*}
    Hence $\beta^{\tau-1}\ge 2^{-1/\beta}$ and thus $c=(1-\beta)\beta^{\tau-1}\ge (1-\beta)2^{-1/\beta}$.
    Also, since $K=\lceil T/\tau\rceil\leq T/\tau+1$ and $\tau\ge \ln 2/(1-\beta)$, we have
    \begin{align*}
        K \leq \frac{T(1-\beta)}{\ln 2} + 1.
    \end{align*}
    Substituting these bounds for $c$ and $K$ into~\eqref{eq:ema-global-pre} completes the proof.
\end{proof}

\begin{myLemma}
    \label{lemma:ema-self-confident-tuning-V-t}

    Let $\beta \in (0,1)$, $\epsilon>0$, and let $\{a_t\}_{t=1}^T$ satisfy $0\leq a_t \leq A$ for all $t\in[T]$.
    Define the exponential moving average $ V_0 = 0,
    V_t = \beta V_{t-1} + (1-\beta)a_t \text{for } t\ge 1$.
    Then,
    \begin{align*}
        \sum_{t=1}^T \frac{a_t}{\epsilon + \sqrt{V_{t}}}
        \leq
        \frac{2}{\sqrt{(1-\beta)2^{-1/\beta}}}
        \sqrt{\left(\frac{T(1-\beta)}{\ln 2} + 1\right)\sum_{t=1}^T a_t } .
    \end{align*}
\end{myLemma}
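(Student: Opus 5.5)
The plan is to follow the proof of Lemma~\ref{lemma:ema-self-confident-tuning} verbatim, except at the single place where the lag forced the additive $A/\epsilon$ term. Since the denominator now contains $V_t$ instead of $V_{t-1}$, the current term $a_t$ is included in the lower bound on $V_t$. This lets us apply the standard ``$\sum a_t/\sqrt{\sum_{s\le t} a_s}\le 2\sqrt{\sum a_t}$'' bound with no boundary term.

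First, unroll the recursion as $V_t=(1-\beta)\sum_{s=1}^{t}\beta^{t-s}a_s$. Choose the same block length $\tau=\lceil \ln 2/(1-\beta)\rceil$ and partition $[T]$ into $K=\lceil T/\tau\rceil$ consecutive segments $I_k=[r_k,s_k]$. For $t\in I_k$ and $s\in[r_k,t]$ we have $t-s\le \tau-1$. Therefore
\[
V_t\ge (1-\beta)\beta^{\tau-1}\sum_{s=r_k}^{t}a_s = c\,S_t,
\]
where $c=(1-\beta)\beta^{\tau-1}$ and $S_t=\sum_{s=r_k}^{t}a_s$ now includes $a_t$. Dropping $\epsilon\ge0$ from the denominator, each summand is at most $a_t/\sqrt{cS_t}$, with the convention that terms with $a_t=0$ vanish. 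No $a_t/\epsilon$ cap is needed: if $S_t>0$ the ratio is finite, and if $S_t=0$ then $a_t=0$.

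Within each segment, apply the standard inequality $\sum_{t} a_t/\sqrt{S_t}\le 2\sqrt{S_{s_k}}$. One way to see it is from $a_t/\sqrt{S_t}\le 2(\sqrt{S_t}-\sqrt{S_{t-1}})$, which follows from $S_t-S_{t-1}=(\sqrt{S_t}-\sqrt{S_{t-1}})(\sqrt{S_t}+\sqrt{S_{t-1}})\le 2\sqrt{S_t}(\sqrt{S_t}-\sqrt{S_{t-1}})$, and then telescoping. Alternatively, use Lemma~\ref{lemma:self-confident-int} with $f(u)=u^{-1/2}$ in its non-lagged form. This gives a per-segment bound of $\frac{2}{\sqrt c}\sqrt{\sum_{t\in I_k}a_t}$.

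Summing over segments and applying Cauchy--Schwarz gives $\frac{2}{\sqrt c}\sqrt{K\sum_{t=1}^T a_t}$. Reuse the previous estimates $c\ge(1-\beta)2^{-1/\beta}$, which comes from $\ln\beta\ge-(1-\beta)/\beta$, and $K\le T(1-\beta)/\ln 2+1$. Substituting both yields the claim.

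The only delicate point is the telescoping inequality when $S_{t-1}=0$ or $a_t=0$; both are handled by the convention $0/0=0$. The rest is routine.
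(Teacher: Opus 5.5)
Your proposal is correct and follows the paper's argument: the same block decomposition with $\tau=\lceil \ln 2/(1-\beta)\rceil$, the non-lagged lower bound $V_t\ge c\sum_{s=r_k}^{t}a_s$, a per-block self-confident bound, Cauchy--Schwarz over blocks, and the same estimates $c\ge(1-\beta)2^{-1/\beta}$ and $K\le T(1-\beta)/\ln 2+1$. The only differences are cosmetic: you drop $\epsilon$ outright rather than keeping it as $\delta=\epsilon^2/c$, and the telescoping inequality you prove is exactly Lemma~\ref{lemma:self-confident-tuning} with $\delta=0$, which is the lemma the paper invokes; it is not a ``non-lagged form'' of Lemma~\ref{lemma:self-confident-int}, which is stated only for lagged sums.
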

\begin{proof}[Proof sketch.]
    The argument follows the proof of Lemma~\ref{lemma:ema-self-confident-tuning}, with one change: we apply Lemma~\ref{lemma:self-confident-tuning} to Eq.~\eqref{eq:ema-sum-decompose} instead of using Lemma~\ref{lemma:self-confident-int}. Therefore, we omit the details.
\end{proof}

\begin{myLemma}
    \label{lemma:beta-1-beta-2-no-relatetion}

    Let $\beta_1,\beta_2\in(0,1)$.
    Let $\{\epsilon_t\}_{t\ge 0}$ be a nonnegative nondecreasing sequence, and let $\{\g_t\}_{t\ge 1}\subseteq\R^d$ be arbitrary.
    Define
    \begin{align*}
        V_0 = 0,
        \qquad
        V_t = (1-\beta_2)\sum_{s=1}^t \beta_2^{t-s}\norm{\g_s}_2^2
        \quad (t\ge 1),
        \qquad
        \alpha_t = \frac{1}{\epsilon_t + \sqrt{V_t}}
        \quad (t\ge 0).
    \end{align*}
    Then, for every $t \geq 1$,
    \begin{align*}
        \Big[\frac{\beta_1}{\alpha_{t-1}} - \frac{1}{\alpha_t}\Big]_+
        \;\le\;
        [\beta_1 - \sqrt{\beta_2}]_{+}\sqrt{V_{t-1}},
    \end{align*}
    where $[x]_+ = \max\{x,0\}$.
    \end{myLemma}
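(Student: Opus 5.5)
We need to show $\beta_1/\alpha_{t-1} - 1/\alpha_t \le [\beta_1-\sqrt{\beta_2}]_+\sqrt{V_{t-1}}$. The plan is to expand both reciprocals directly and treat the $\epsilon$-part and the $\sqrt{V}$-part separately. Writing out the definitions,
\begin{align*}
\frac{\beta_1}{\alpha_{t-1}} - \frac{1}{\alpha_t} = \big(\beta_1\epsilon_{t-1} - \epsilon_t\big) + \big(\beta_1\sqrt{V_{t-1}} - \sqrt{V_t}\big).
\end{align*}
The first bracket is nonpositive: $\epsilon_t\ge\epsilon_{t-1}\ge 0$ by monotonicity and nonnegativity, and $\beta_1<1$, so $\beta_1\epsilon_{t-1}\le \epsilon_{t-1}\le \epsilon_t$.

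For the second bracket, the recursion $V_t=\beta_2 V_{t-1}+(1-\beta_2)\norm{\g_t}_2^2$ holds for all $t\ge1$, including $t=1$ since $V_0=0$. Because the last term is nonnegative, $V_t\ge \beta_2 V_{t-1}$, and hence $\sqrt{V_t}\ge \sqrt{\beta_2}\sqrt{V_{t-1}}$. This gives
\begin{align*}
\beta_1\sqrt{V_{t-1}}-\sqrt{V_t}\le (\beta_1-\sqrt{\beta_2})\sqrt{V_{t-1}}\le [\beta_1-\sqrt{\beta_2}]_+\sqrt{V_{t-1}},
\end{align*}
where the last step uses $\sqrt{V_{t-1}}\ge 0$.

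Adding the two brackets shows $\beta_1/\alpha_{t-1}-1/\alpha_t\le [\beta_1-\sqrt{\beta_2}]_+\sqrt{V_{t-1}}$. Since the right-hand side is nonnegative, taking the positive part of the left-hand side preserves the inequality, because $[x]_+\le y$ whenever $x\le y$ and $y\ge0$. There is no real obstacle here; the only points to check are the edge case $t=1$ with $V_0=0$, and that the $\epsilon$ bracket is discarded using monotonicity rather than kept.
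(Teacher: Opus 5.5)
Your proof is correct and follows the paper's argument essentially line by line. Like the paper, you split $\beta_1/\alpha_{t-1}-1/\alpha_t$ into an $\epsilon$-part, discarded as nonpositive by monotonicity and $\beta_1\le 1$, and a $\sqrt{V}$-part, then use $V_t\ge\beta_2V_{t-1}$ to reach $(\beta_1-\sqrt{\beta_2})\sqrt{V_{t-1}}$ and pass to the positive part; your checks of the $t=1$ case and of why taking $[\cdot]_+$ is legitimate only make explicit what the paper leaves implicit.
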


    \begin{proof}
    By definition of $\alpha_t$, we have
    \begin{align*}
        \frac{\beta_1}{\alpha_{t-1}} - \frac{1}{\alpha_t}
        &=
        \beta_1\bigl(\epsilon_{t-1}+\sqrt{V_{t-1}}\bigr)
        -
        \bigl(\epsilon_t+\sqrt{V_t}\bigr) \\
        &=
        \underbrace{\bigl(\beta_1\epsilon_{t-1}-\epsilon_t\bigr)}_{\leq 0}
        +
        \bigl(\beta_1\sqrt{V_{t-1}}-\sqrt{V_t}\bigr)
        \leq
        \beta_1\sqrt{V_{t-1}}-\sqrt{V_t},
    \end{align*}
    where the inequality uses $\epsilon_{t-1}\leq \epsilon_t$ and $\beta_1\leq 1$.

    Next, $V_t=\beta_2V_{t-1}+(1-\beta_2)\|\g_t\|_2^2\ge \beta_2V_{t-1}$ implies
    $\sqrt{V_t}\ge \sqrt{\beta_2}\sqrt{V_{t-1}}$. Therefore,
    \begin{align*}
        \frac{\beta_1}{\alpha_{t-1}} - \frac{1}{\alpha_t}
        \leq
        (\beta_1-\sqrt{\beta_2})\sqrt{V_{t-1}}.
    \end{align*}
    Taking $[\cdot]_+$ on both sides proves the lemma.
\end{proof}

\begin{myLemma}
    \label{lemma:lr-deviation}

    Let $\beta_1,\beta_2\in(0,1)$, $\gamma>0$, $\epsilon>0$, and $\mu\ge 0$.
    Let $\{\g_t\}_{t=1}^{T-1}\subseteq\R^d$ be arbitrary, and define
    \begin{align*}
        V_0 = 0,
        \qquad
        V_t = (1-\beta_2)\sum_{s=1}^{t}\beta_2^{t-s}\|\g_s\|_2^2
        \quad (t\ge 1).
    \end{align*}
    For $t\ge 0$, define
    \begin{align*}
        \alpha_t
        ~=~
        \frac{1}{\epsilon + \gamma\mu(1-\beta_1^t) + \sqrt{V_t}},
        \qquad
        \eta_t
        ~=~
        \gamma(1-\beta_1)\beta_1^t \alpha_t.
    \end{align*}
    Then for any $T\ge 2$,
    \begin{align*}
        \sum_{t=1}^{T-1}\beta_1^t\Big(\frac{1}{\eta_t}-\frac{1}{\eta_{t-1}}\Big)
        \leq
        \frac{\sqrt{V_{T-1}}}{\gamma(1-\beta_1)}
        +\frac{1}{\gamma}\sum_{t=1}^{T-2}\sqrt{V_t}
        +\frac{(T-1)\epsilon}{\gamma}
        +\mu(T-1).
    \end{align*}
\end{myLemma}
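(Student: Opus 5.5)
The idea is to compute each summand in closed form. The factor $\beta_1^t$ cancels the geometric factor in $1/\eta_t$, and what remains telescopes up to a weighted sum of $\sqrt{V_t}$. I expect the bound to hold with equality.

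\emph{Step 1: per-step difference.} By definition, $1/\eta_t = \bigl(\epsilon+\gamma\mu(1-\beta_1^t)+\sqrt{V_t}\bigr)/\bigl(\gamma(1-\beta_1)\beta_1^t\bigr)$. Hence
\begin{align*}
\frac{\beta_1^t}{\eta_t} = \frac{\epsilon+\gamma\mu(1-\beta_1^t)+\sqrt{V_t}}{\gamma(1-\beta_1)},
\qquad
\frac{\beta_1^t}{\eta_{t-1}} = \frac{\beta_1\bigl(\epsilon+\gamma\mu(1-\beta_1^{t-1})+\sqrt{V_{t-1}}\bigr)}{\gamma(1-\beta_1)}.
\end{align*}
I would subtract these term by term.
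The $\epsilon$-part gives $\epsilon(1-\beta_1)/(\gamma(1-\beta_1))=\epsilon/\gamma$.
The $\mu$-part gives $\gamma\mu\bigl(1-\beta_1^t-\beta_1+\beta_1^t\bigr)/(\gamma(1-\beta_1))=\mu$.
The remaining part is $(\sqrt{V_t}-\beta_1\sqrt{V_{t-1}})/(\gamma(1-\beta_1))$.
The $\beta_1^t$ contributions cancel exactly in the $\mu$-part; this cancellation is the one point to check carefully.

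\emph{Step 2: summation.} Summing over $t=1,\dots,T-1$, the constant parts contribute $(T-1)\epsilon/\gamma+\mu(T-1)$. For the rest I would shift the index, using $V_0=0$:
\begin{align*}
\sum_{t=1}^{T-1}\bigl(\sqrt{V_t}-\beta_1\sqrt{V_{t-1}}\bigr) = \sqrt{V_{T-1}} + (1-\beta_1)\sum_{t=1}^{T-2}\sqrt{V_t}.
\end{align*}
Dividing by $\gamma(1-\beta_1)$ gives $\frac{\sqrt{V_{T-1}}}{\gamma(1-\beta_1)}+\frac{1}{\gamma}\sum_{t=1}^{T-2}\sqrt{V_t}$.

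\emph{Conclusion.} Combining the two steps yields the claimed bound, in fact as an identity. There is no genuine obstacle; the only care needed is the index bookkeeping at the endpoints $t=1$ (where $V_0=0$) and $t=T-1$, and the requirement $T\ge 2$ so that the sum is nonempty.
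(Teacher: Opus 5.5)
Your proof is correct and is essentially the paper's argument. Both compute the sum exactly, so the bound is in fact an identity, by substituting $\beta_1^t/\eta_t=(\epsilon+\gamma\mu(1-\beta_1^t)+\sqrt{V_t})/(\gamma(1-\beta_1))$ and using $V_0=0$. The only difference is the order of operations: the paper first applies summation by parts to $\sum_t\beta_1^t(b_t-b_{t-1})$ with $b_t=1/\eta_t$ and then needs a geometric-sum identity to collapse the $\mu$-terms to $\mu(T-1)$, whereas your per-step computation shows directly that each summand contributes exactly $\epsilon/\gamma+\mu$, which is slightly cleaner.
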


\begin{proof}
    Let $b_t=1/\eta_t$. By a direct rearrangement of the summation,
    \begin{align}
    \label{eq:lr-dev-abel}
    \sum_{t=1}^{T-1}\beta_1^t(b_t-b_{t-1})
    =
    \beta_1^{T-1}b_{T-1}-\beta_1 b_0
    +
    (1-\beta_1)\sum_{t=1}^{T-2}\beta_1^t b_t,
    \end{align}
    which follows from
    \begin{align*}
    \sum_{t=1}^{T-1}\beta_1^t(b_t-b_{t-1})
    =
    \sum_{t=1}^{T-1}\beta_1^t b_t-\sum_{t=1}^{T-1}\beta_1^t b_{t-1}
    =
    \sum_{t=1}^{T-1}\beta_1^t b_t-\sum_{t=0}^{T-2}\beta_1^{t+1} b_t.
    \end{align*}
    By the definition of $\eta_t$ and $\alpha_t$, for every $t\ge 0$,
    \begin{align*}
    \beta_1^t b_t
    =
    \beta_1^t\cdot \frac{1}{\gamma(1-\beta_1)\beta_1^t \alpha_t}
    =
    \frac{1}{\gamma(1-\beta_1)}\cdot\frac{1}{\alpha_t}
    =
    \frac{\epsilon + \gamma\mu(1-\beta_1^t)+\sqrt{V_t}}{\gamma(1-\beta_1)}.
    \end{align*}
    In particular, $V_0=0$ and $\beta_1^0=1$ give $b_0=\epsilon/(\gamma(1-\beta_1))$.
    Substituting into \eqref{eq:lr-dev-abel} gives
    \begin{align}
    \label{eq:lr-dev-expand}
    \sum_{t=1}^{T-1}\beta_1^t(b_t-b_{t-1})
    &=
    \frac{\sqrt{V_{T-1}}}{\gamma(1-\beta_1)}
    +
    \frac{1}{\gamma}\sum_{t=1}^{T-2}\sqrt{V_t}
    +
    \frac{(T-1)\epsilon}{\gamma}
    +
    \mu\Big(\frac{1-\beta_1^{T-1}}{1-\beta_1}+\sum_{t=1}^{T-2}(1-\beta_1^t)\Big).
    \end{align}
    We simplify the last term by observing
    \begin{align*}
    \frac{1-\beta_1^{T-1}}{1-\beta_1}+\sum_{t=1}^{T-2}(1-\beta_1^t)
    =
    \sum_{t=0}^{T-2}\beta_1^t+\sum_{t=1}^{T-2}1-\sum_{t=1}^{T-2}\beta_1^t
    =
    1+(T-2)=T-1.
    \end{align*}
    Combining this with \eqref{eq:lr-dev-expand} proves the lemma.
\end{proof}

\begin{myLemma}
    \label{lemma:min-self-confident-C}

    Let $\beta_1,\beta_2\in(0,1)$ satisfy $\beta_2 > \beta_1^2$, and let $\gamma>0$, $\nu>0$, and $\mu\ge 0$.
    Let $\{\g_t\}_{t\ge 1}\subseteq\R^d$ be arbitrary, and define for $t\ge 1$
    \begin{align*}
        V_t
        =
        (1-\beta_2)\sum_{s=1}^{t}\beta_2^{t-s}\|\g_s\|_2^2,
        \qquad
        A_t
        =
        \nu+\gamma\mu(1-\beta_1^t)+\sqrt{V_t},
        \qquad
        \eta_t
        =
        \gamma(1-\beta_1)\frac{\beta_1^t}{A_t}.
    \end{align*}
    Then for every $t\ge 2$,
    \begin{align*}
        \big|\eta_{t-1}-\eta_t\big|\cdot
        \Big\|\sum_{s=1}^{t-1}\beta_1^{t-1-s}\g_s\Big\|_2
        &\leq
        \gamma(1-\beta_1)\beta_1^{t-1}\cdot
        \frac{A_t-\beta_1A_{t-1}}{A_t}\cdot
        \sqrt{\frac{\beta_2}{(\beta_2-\beta_1^2)(1-\beta_2)}}.
    \end{align*}
\end{myLemma}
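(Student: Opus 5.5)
The plan is a direct computation: write the step-size difference in closed form, show it is nonnegative so the absolute value can be dropped, and then bound the momentum norm by the second-moment EMA using Cauchy--Schwarz. Fix $t\ge 2$ and abbreviate $\mathbf{m}_{t-1}=\sum_{s=1}^{t-1}\beta_1^{t-1-s}\g_s$.

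\emph{Step 1 (closed form of the difference).} From the definition of $\eta_t$,
\begin{align*}
\eta_{t-1}-\eta_t=\gamma(1-\beta_1)\beta_1^{t-1}\Big(\frac{1}{A_{t-1}}-\frac{\beta_1}{A_t}\Big)=\gamma(1-\beta_1)\beta_1^{t-1}\cdot\frac{A_t-\beta_1A_{t-1}}{A_{t-1}A_t}.
\end{align*}

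\emph{Step 2 (sign).} I will show $A_t-\beta_1A_{t-1}\ge 0$ by comparing the three parts of $A_t$ separately.
\begin{itemize}
\item The constant part gives $\nu-\beta_1\nu\ge 0$.
\item The damping part gives $\gamma\mu\big((1-\beta_1^t)-\beta_1(1-\beta_1^{t-1})\big)=\gamma\mu(1-\beta_1)\ge 0$.
\item The second-moment part uses $V_t=\beta_2V_{t-1}+(1-\beta_2)\|\g_t\|_2^2\ge\beta_2V_{t-1}$. Hence $\sqrt{V_t}\ge\sqrt{\beta_2}\sqrt{V_{t-1}}\ge\beta_1\sqrt{V_{t-1}}$, where the last inequality uses $\beta_2>\beta_1^2$, as in Lemma~\ref{lemma:beta-1-beta-2-no-relatetion}.
\end{itemize}
So $|\eta_{t-1}-\eta_t|$ equals the expression in Step 1. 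The claim then reduces to
\begin{align*}
\frac{\|\mathbf{m}_{t-1}\|_2}{A_{t-1}}\le\sqrt{\frac{\beta_2}{(\beta_2-\beta_1^2)(1-\beta_2)}}.
\end{align*}

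\emph{Step 3 (momentum versus second moment).} This is the only nontrivial step, and it is where the margin $\beta_2-\beta_1^2$ enters. Since $A_{t-1}\ge\sqrt{V_{t-1}}$, it suffices to bound $\|\mathbf{m}_{t-1}\|_2$ by a multiple of $\sqrt{V_{t-1}}$.
\begin{itemize}
\item By the triangle inequality, $\|\mathbf{m}_{t-1}\|_2\le\sum_{s=1}^{t-1}\beta_1^{t-1-s}\|\g_s\|_2$.
\item Split each weight as $\beta_1^{k}=(\beta_1/\sqrt{\beta_2})^{k}\cdot\beta_2^{k/2}$ with $k=t-1-s$.
\item Apply Cauchy--Schwarz:
\begin{align*}
\|\mathbf{m}_{t-1}\|_2\le\Big(\sum_{k\ge 0}(\beta_1^2/\beta_2)^k\Big)^{1/2}\Big(\sum_{s=1}^{t-1}\beta_2^{t-1-s}\|\g_s\|_2^2\Big)^{1/2}.
\end{align*}
\item The geometric series converges because $\beta_1^2<\beta_2$, with sum $\frac{\beta_2}{\beta_2-\beta_1^2}$.
\item The second factor equals $\sqrt{V_{t-1}/(1-\beta_2)}$.
\end{itemize}
Combining these gives $\|\mathbf{m}_{t-1}\|_2\le\sqrt{\frac{\beta_2}{(\beta_2-\beta_1^2)(1-\beta_2)}}\sqrt{V_{t-1}}$, and dividing by $A_{t-1}\ge\sqrt{V_{t-1}}$ yields the reduced claim.

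The degenerate case $V_{t-1}=0$ needs no division. There all $\g_s$ with $s\le t-1$ vanish, so $\mathbf{m}_{t-1}=\mathbf{0}$ and the left-hand side of the lemma is $0$.

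Plugging Step 3 into the closed form from Steps 1--2 gives exactly the stated bound.
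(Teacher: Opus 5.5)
Your proposal is correct and follows essentially the same route as the paper: the closed form $\eta_{t-1}-\eta_t=\gamma(1-\beta_1)\beta_1^{t-1}\frac{A_t-\beta_1A_{t-1}}{A_{t-1}A_t}$, nonnegativity of $A_t-\beta_1A_{t-1}$ via $\sqrt{V_t}\ge\sqrt{\beta_2}\sqrt{V_{t-1}}\ge\beta_1\sqrt{V_{t-1}}$, then the weighted Cauchy--Schwarz split $\beta_1^k=(\beta_1^2/\beta_2)^{k/2}\beta_2^{k/2}$ with $\sqrt{V_{t-1}}\le A_{t-1}$. Your separate treatment of $V_{t-1}=0$ is unnecessary, since $A_{t-1}\ge\nu>0$ means you never divide by zero, but it does no harm.
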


\begin{proof}
    We first rewrite the difference of step sizes:
    \begin{align*}
        \eta_{t-1}-\eta_t
        &=
        \gamma(1-\beta_1)\beta_1^{t-1}\Big(\frac{1}{A_{t-1}}-\frac{\beta_1}{A_t}\Big)
        =
        \gamma(1-\beta_1)\beta_1^{t-1}\cdot\frac{A_t-\beta_1A_{t-1}}{A_{t-1}A_t}.
    \end{align*}
    Under $\beta_2\ge \beta_1^2$, we have $\sqrt{V_t}\ge \sqrt{\beta_2}\sqrt{V_{t-1}}\ge \beta_1\sqrt{V_{t-1}}$.
    Hence
    \begin{align*}
        A_t-\beta_1A_{t-1}
        &=
        \bigl(\nu+\gamma\mu(1-\beta_1^t)+\sqrt{V_t}\bigr)
        -
        \beta_1\bigl(\nu+\gamma\mu(1-\beta_1^{t-1})+\sqrt{V_{t-1}}\bigr) \\
        &=
        (1-\beta_1)(\nu+\gamma\mu) + \bigl(\sqrt{V_t}-\beta_1\sqrt{V_{t-1}}\bigr)
        \ge 0,
    \end{align*}
    which implies $\eta_{t-1}\ge \eta_t$ and thus $|\eta_{t-1}-\eta_t|=\eta_{t-1}-\eta_t$.

    Next, by Cauchy--Schwarz inequality, letting $k=t-1-s$,
    \begin{align*}
        \Big\|\sum_{s=1}^{t-1}\beta_1^{t-1-s}\g_s\Big\|_2
        = &{} 
        \Big\|\sum_{k=0}^{t-2}\beta_1^{k}\g_{t-1-k}\Big\|_2
        =
        \Big\|\sum_{k=0}^{t-2}\Big(\frac{\beta_1^2}{\beta_2}\Big)^{k/2}\cdot \beta_2^{k/2}\g_{t-1-k}\Big\|_2 \\
        \leq &{} 
        \sqrt{\sum_{k=0}^{t-2}\Big(\frac{\beta_1^2}{\beta_2}\Big)^k}\cdot
        \sqrt{\sum_{k=0}^{t-2}\beta_2^k\|\g_{t-1-k}\|_2^2} = \sqrt{\sum_{k=0}^{t-2}\Big(\frac{\beta_1^2}{\beta_2}\Big)^k}\cdot
        \sqrt{\frac{V_{t-1}}{1-\beta_2}}.
    \end{align*}
    Combining the last display with the expression of $|\eta_{t-1}-\eta_t|$ and using $\sqrt{V_{t-1}}\leq A_{t-1}$ gives
    \begin{align*}
        \big|\eta_{t-1}-\eta_t\big|\cdot
        \Big\|\sum_{s=1}^{t-1}\beta_1^{t-1-s}\g_s\Big\|_2
        &\leq
        \gamma(1-\beta_1)\beta_1^{t-1}\cdot
        \frac{A_t-\beta_1A_{t-1}}{A_{t-1}A_t}\cdot
        \sqrt{\frac{V_{t-1}}{1-\beta_2}}\cdot
        \sqrt{\sum_{k=0}^{t-2}\Big(\frac{\beta_1^2}{\beta_2}\Big)^k} \\
        &\leq
        \gamma(1-\beta_1)\beta_1^{t-1}\cdot
        \frac{A_t-\beta_1A_{t-1}}{A_t}\cdot
        \sqrt{\frac{1}{1-\beta_2}\sum_{k=0}^{t-2}\Big(\frac{\beta_1^2}{\beta_2}\Big)^k}\\
        &\leq \gamma(1-\beta_1)\beta_1^{t-1}\cdot
        \frac{A_t-\beta_1A_{t-1}}{A_t} \cdot \sqrt{\frac{\beta_2}{(\beta_2-\beta_1^2)(1-\beta_2)}},
    \end{align*}
    which finishes the proof.
\end{proof}

\begin{myLemma}
    \label{lemma:discounted-ftrl-unified}

    Let $\beta\in(0,1)$ and let $\X\subseteq\R^d$ be a nonempty closed convex set.
    Consider linear losses $f_t(\x)=\langle \g_t,\x\rangle$ with $\g_t\in\R^d$.
    Run the \emph{rescaled} discounted FTRL
    \begin{align*}
        \x_{t+1}
        ~\in~
        \argmin_{\x\in\X}
        \left\{
        \sum_{s=1}^{t}\langle \beta^{-s}\g_s,\x\rangle
        +\psi_{t+1}(\x)
        \right\},
        \qquad
        \psi_{t+1}(\x)=\frac{1}{2\eta_t}\|\x\|_2^2,
    \end{align*}
    where $\{\eta_t\}_{t\ge 0}$ is any positive stepsize sequence.
    Let $\u_1,\dots,\u_T\in\X$ be any comparator sequence.
    Define the diameter $D_\X=\max_{\x,\y\in\X}\|\x-\y\|_2$ and the rescaled cumulative gradients
    \begin{align*}
        \widetilde \g_{1:t}=\sum_{s=1}^{t}\beta^{-s}\g_s.
    \end{align*}
    Then the following dynamic-regret decomposition holds:
    \begin{align*}
        \sum_{t=1}^T \langle \g_t,\x_t-\u_t\rangle
        &\le
        \frac{\beta}{2\eta_0}\|\u_1\|_2^2
        +\sum_{t=1}^T \beta^t\Lambda_t
        +\sum_{t=1}^T \beta^t\big(\psi_t(\x_{t+1})-\psi_{t+1}(\x_{t+1})\big)
        \notag\\
        &\quad
        +\beta\sum_{t=1}^{T-1}\sum_{s=1}^{t}\beta^{t-s}\langle \g_s,\u_{t+1}-\u_t\rangle
        +\beta\sum_{t=1}^{T-1}\beta^t\big(\psi_{t+1}(\u_{t+1})-\psi_t(\u_t)\big).
    \end{align*}
    Moreover, the one-step stability term $\Lambda_t$ can be instantiated in \emph{either} of the following two ways:
    
    \textbf{Option I:} For every $t\in[T]$, $\Lambda_t=
    \frac{\eta_{t-1}}{2}\|\beta^{-t}\g_t\|_2^2 =\tfrac12\eta_{t-1}\beta^{-2t}\|\g_t\|_2^2$
    
     \textbf{Option II:} For every $t\in[T]$, $$\Lambda_t
     =
     \eta_t\|\beta^{-t}\g_t\|_2^2
     +\min\Big\{
     \frac{\eta_{t-1}}{2}\|\beta^{-t}\g_t\|_2^2,\;
     \min\big\{D_\X,\;|\eta_{t-1}-\eta_t|\|\widetilde \g_{1:t-1}\|_2\big\}\cdot\|\beta^{-t}\g_t\|_2
     \Big\}.$$

\end{myLemma}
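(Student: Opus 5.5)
The plan is to prove a rescaled static-regret bound for the FTRL iterates first, and then feed it into the discounted-to-dynamic identity (Lemma~\ref{lemma:d2d-conversion}), doing the bookkeeping of Theorem~\ref{thm:d2d-reduction} by hand. Write $\tilde\g_s=\beta^{-s}\g_s$ and $F_t(\x)=\psi_t(\x)+\sum_{s=1}^{t-1}\langle\tilde\g_s,\x\rangle$, so that $\x_t=\argmin_{\X}F_t$. The standard FTRL equality (as in the proof of Lemma~\ref{lemma:discounted-aioli-rescaled-regret}) gives, for any fixed $\u$ and any $t\in[T]$,
\begin{align*}
\sum_{s=1}^t\langle\tilde\g_s,\x_s-\u\rangle
\le \psi_{t+1}(\u)-\min_{\x}\psi_1(\x)+\sum_{s=1}^t\bigl(F_s(\x_s)-F_{s+1}(\x_{s+1})+\langle\tilde\g_s,\x_s\rangle\bigr),
\end{align*}
where we drop $F_{t+1}(\x_{t+1})-F_{t+1}(\u)\le 0$, and $\min_{\x}\psi_1(\x)=0$.

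Each summand splits as
\begin{align*}
\bigl[F_s(\x_s)-F_s(\x_{s+1})+\langle\tilde\g_s,\x_s-\x_{s+1}\rangle\bigr]+\bigl(\psi_s(\x_{s+1})-\psi_{s+1}(\x_{s+1})\bigr).
\end{align*}
The bracket is the genuine stability term $\Lambda_s$. The second part is regularizer drift; after multiplying by $\beta^s$ it produces exactly the term $\sum_t\beta^t(\psi_t(\x_{t+1})-\psi_{t+1}(\x_{t+1}))$ in the claim. I would therefore apply the reduction with $\Lambda_s$ replaced by the bracket plus this drift. Neither piece depends on the comparator, so the proof of Theorem~\ref{thm:d2d-reduction} goes through unchanged, even though the drift can be negative: nonnegativity of $\Lambda_s$ is never actually used there. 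The positive leftover $\beta^{T+1}\sum_s(\cdots)$ is still cancelled against $\beta\,\Reg_{T;\beta}(\u_T)$.

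The main obstacle is the index mismatch: the static bound carries $\psi_{t+1}(\u)$, while the claim starts from $\frac{\beta}{2\eta_0}\|\u_1\|_2^2=\beta\psi_1(\u_1)$ and ends with $\beta\sum_{t=1}^{T-1}\beta^t(\psi_{t+1}(\u_{t+1})-\psi_t(\u_t))$. Recall that the last two terms of Theorem~\ref{thm:d2d-reduction} combine as
\begin{align*}
\beta^t\varphi_t(\u_{t+1})-\beta^t\varphi_t(\u_t)+\beta^t\bigl(\varphi_{t+1}(\u_{t+1})-\varphi_t(\u_{t+1})\bigr)=\beta^t\bigl(\varphi_{t+1}(\u_{t+1})-\varphi_t(\u_t)\bigr),
\end{align*}
and together with the remaining part of $F_t^{\beta,\varphi}$ this yields the $\langle\g_s,\u_{t+1}-\u_t\rangle$ sum. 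So the target shape is what one gets with $\varphi_t=\psi_t$. I would first try to obtain the static bound with $\psi_t(\u)$ in place of $\psi_{t+1}(\u)$, moving $\psi_{t+1}(\u)-\psi_t(\u)$ into the telescoping comparator terms. If that fails, I would redo the Abel summation of Theorem~\ref{thm:d2d-reduction} with $\varphi_t=\psi_{t+1}$ and re-index, checking that the boundary terms produce $\beta\psi_1(\u_1)$ (possibly up to a harmless shift that uses monotonicity only through the drift term).

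For the two stability options I would argue as follows.
\begin{itemize}
\item \textbf{Option I.} $F_s$ is $\frac{1}{\eta_{s-1}}$-strongly convex and minimized over $\X$ at $\x_s$, so $F_s(\x_s)-F_s(\x_{s+1})\le-\frac{1}{2\eta_{s-1}}\|\x_s-\x_{s+1}\|_2^2$. Maximizing $\langle\tilde\g_s,\x_s-\x_{s+1}\rangle-\frac{1}{2\eta_{s-1}}\|\x_s-\x_{s+1}\|_2^2$ over the displacement gives $\frac{\eta_{s-1}}2\|\tilde\g_s\|_2^2$.
\item \textbf{Option II.} Bound the bracket by $\langle\tilde\g_s,\x_s-\x_{s+1}\rangle\le\|\tilde\g_s\|_2\|\x_s-\x_{s+1}\|_2$, since $F_s(\x_s)\le F_s(\x_{s+1})$. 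Then use the projection form $\x_s=\Pi_\X[-\eta_{s-1}\tilde\g_{1:s-1}]$ and introduce $\x'=\Pi_\X[-\eta_s\tilde\g_{1:s-1}]$. Nonexpansiveness gives $\|\x_s-\x'\|_2\le\min\{D_\X,|\eta_{s-1}-\eta_s|\|\tilde\g_{1:s-1}\|_2\}$ and $\|\x'-\x_{s+1}\|_2\le\eta_s\|\tilde\g_s\|_2$. Taking the minimum with the Option~I bound yields the stated $\Lambda_s$.
\end{itemize}
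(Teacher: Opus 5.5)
\textbf{Overall.} Your route is the paper's route: the FTRL equality on the rescaled linear losses, Theorem~\ref{thm:d2d-reduction} with a comparator-free (and possibly negative) ``stability plus drift'' term, strong convexity for Option~I, and projection non-expansiveness for Option~II. Those pieces are sound, including your remark that nonnegativity of $\Lambda_s$ is never used.

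\textbf{The gap.} The problem is the index mismatch you flagged. Neither fallback closes it, because the inequality as stated is false when the step sizes decrease.

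Your first fallback, the static bound with $\psi_t(\u)$ in place of $\psi_{t+1}(\u)$, already fails at $t=1$. In the example below, the rescaled regret is $4$, while $\psi_1(\u_1)$ plus the exact one-step term is only $3$.

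Your second fallback, re-indexing with $\varphi_t=\psi_{t+1}$ (which is what the correct identity gives), produces $\frac{\beta}{2\eta_1}\|\u_1\|_2^2$ and $\beta\sum_{t=1}^{T-1}\beta^t\bigl(\psi_{t+2}(\u_{t+1})-\psi_{t+1}(\u_t)\bigr)$. Converting this to the claimed form leaves the extra term
\[
(1-\beta)\sum_{t=1}^{T-1}\beta^t\bigl(\psi_{t+1}-\psi_t\bigr)(\u_t)+\beta^T\bigl(\psi_{T+1}-\psi_T\bigr)(\u_T).
\]
This term is nonnegative whenever $\eta_t$ is nonincreasing, which is the Adam regime ($\eta_t\propto\beta_1^t\alpha_t$). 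The drift term is evaluated at the iterates $\x_{t+1}$, not at the comparators, so it cannot absorb it.

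\textbf{Counterexample.} Take $d=1$, $\X=[-3,3]$, $\beta=\frac12$, $\eta_0=1$, $\eta_1=\frac12$, $T=1$, $\g_1=1$, $\u_1=-2$. Then $\x_1=0$ and $\x_2=-1$. The left side is $2$. The right side is $\frac{\beta}{2\eta_0}\|\u_1\|_2^2+\beta\Lambda_1+\beta\bigl(\psi_1(\x_2)-\psi_2(\x_2)\bigr)=1+1-\frac14=\frac74$ under either option (in Option~II, $\widetilde\g_{1:0}=0$). This is not a boundary artifact: setting $\g_t=0$ and $\u_t=0$ for $t<T$ with $\eta_T=\eta_{T-1}/2$ reduces the claim at any horizon to the same computation.

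\textbf{The paper's proof.} It has the same hole. It invokes Lemma~\ref{lemma:ftrl-basic-lemma}, which is stated with $\psi_T(\u)$. Telescoping $F_t(\x_t)-F_{t+1}(\x_{t+1})$ shows the identity actually holds with $\psi_{T+1}(\u)$. Applying the misstated version at horizon $t$ is precisely your first fallback.

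\textbf{What is actually provable.} Both your argument and the paper's establish the $\varphi_t=\psi_{t+1}$ version, i.e.\ the stated bound plus the displayed term. The stated form is valid only for nondecreasing $\eta_t$.

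In the Adam proofs, with $\|\u_t\|_2=D$, the extra term divided by $DT$ contributes terms of order $(G+\nu)\sqrt{1-\beta_1}$ and $(G+\nu)/(T\sqrt{1-\beta_1})$, plus $\O(D\mu)$ in the clip-free case. Terms of these orders are already present. So the downstream rates should survive with adjusted constants, but the lemma must be restated before your proof, or the paper's, establishes it.
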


\begin{proof}
    Apply Lemma~\ref{lemma:ftrl-basic-lemma} to the rescaled linear losses
    \begin{align*}
        \widetilde f_t(\x)=\langle \beta^{-t}\g_t,\x\rangle
    \qquad (t\in[T]),
    \end{align*}
    with regularizers $\psi_{t+1}(\x)=\frac{1}{2\eta_t}\|\x\|_2^2$.
    This provides the standard decomposition in terms of the one-step quantities
    \begin{align*}
        F_t(\x_t)-F_{t+1}(\x_{t+1})+\widetilde f_t(\x_t)
    \quad\text{and}\quad
    \psi_t(\x_{t+1})-\psi_{t+1}(\x_{t+1}).
    \end{align*}
    Next, we upper bound the one-step quantity in two different ways:

    \textbf{Option I:} By Lemma~\ref{lemma:ftrl-stability-strongly-convex} we have,
    \begin{align*}
        F_t(\x_t)-F_{t+1}(\x_{t+1})+\widetilde f_t(\x_t)
    \le
    \frac{\eta_{t-1}}{2}\|\beta^{-t}\g_t\|_2^2
    +\psi_t(\x_{t+1})-\psi_{t+1}(\x_{t+1}).
    \end{align*}

    \textbf{Option II:} Alternatively, applying Lemma~\ref{lemma:sc-discounted-ftrl} to the rescaled gradients
    $\{\beta^{-t}\g_t\}_{t=1}^T$ gives
    \begin{align*}
        F_t(\x_t)-F_{t+1}(\x_{t+1})+\widetilde f_t(\x_t)
    \le
    \Lambda_t+\psi_t(\x_{t+1})-\psi_{t+1}(\x_{t+1}).
    \end{align*}

    Combining the above analysis with Theorem~\ref{thm:d2d-reduction} finishes the proof.
\end{proof}

\begin{myLemma}
    \label{lemma:discounted-ftrl-composite}

    Fix $\beta\in(0,1)$ and $\mu\ge 0$.
    Let $\X\subseteq\R^d$ be a nonempty closed convex set and let $\{\g_t\}_{t\ge 1}\subseteq\R^d$ be arbitrary.
    Define the composite surrogate loss $\ell_t(\x)=\langle \g_t,\x\rangle+\frac{\mu}{2}\|\x\|_2^2$.
    Consider the discounted FTRL update
    \begin{align*}
        \x_{t+1}
        ~\in~
        \argmin_{\x\in\X}
        \left\{
        \sum_{s=1}^{t}\beta^{-s}\ell_s(\x)
        +\frac{1}{2\eta_t}\|\x\|_2^2
        \right\},
    \end{align*}
    which is equivalent to
    \begin{align*}
        \x_{t+1}
        ~\in~
        \argmin_{\x\in\X}
        \left\{
        \sum_{s=1}^{t}\langle \beta^{-s}\g_s,\x\rangle
        +\frac{1}{2\eta_t^\prime} \norm{\x}_2^2,
        \right\}.
    \end{align*}
    with $ \frac{1}{\eta_t'}=\frac{1}{\eta_t}
    +\mu\sum_{s=1}^{t}\beta^{-s}$.
    For any comparator sequence $\u_1,\dots,\u_T\in\X$, it ensures:
    \begin{align*}
        \sum_{t=1}^T \ell_t(\x_t) - \sum_{t=1}^T \ell_t(\u_t) &\leq \frac{\beta}{2\eta^\prime_0}\|\u_1\|_2^2
        +\sum_{t=1}^T \frac{\beta^t\eta_{t-1}^\prime}{2} \norm{\beta^{-t} \g_t}_2^2
        +\sum_{t=1}^T \beta^t\big(\frac{1}{2\eta_{t-1}^\prime} \norm{\x_{t+1}}_2^2 -\frac{1}{2\eta_{t}^\prime} \norm{\x_{t+1}}_2^2 \big)
        \notag\\
        &\quad
        +\beta\sum_{t=1}^{T-1}\sum_{s=1}^{t}\beta^{t-s}(\ell_s(\u_{t+1}) - \ell_s(\u_t))
        +\beta\sum_{t=1}^{T-1}\beta^t\big(\frac{1}{2\eta_{t}^\prime} \norm{\u_{t+1}}_2^2 - \frac{1}{2\eta_{t-1}^\prime} \norm{\u_{t}}_2^2 \big).
    \end{align*}
\end{myLemma}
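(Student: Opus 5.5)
The proof reduces to Lemma~\ref{lemma:discounted-ftrl-unified} (Option I) via the closed-form reformulation already carried out in Lemma~\ref{lemma:equiv-clip-free-lemma}. I would proceed in three steps.

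\textbf{Step 1: rewrite the update as a pure linear-loss FTRL.} Expanding $\sum_{s=1}^t\beta^{-s}\ell_s(\x)$ gives
\[
\sum_{s=1}^t\beta^{-s}\langle\g_s,\x\rangle+\frac{\mu}{2}\Bigl(\sum_{s=1}^t\beta^{-s}\Bigr)\|\x\|_2^2 .
\]
Merging the quadratic into the regularizer yields the stated equivalent form with $1/\eta_t'=1/\eta_t+\mu\sum_{s\le t}\beta^{-s}$, where $\eta_0'=\eta_0$ since the sum is empty.

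\textbf{Step 2: bound the rescaled static regret of the composite losses.} For a fixed $\u$, I bound $\sum_{s\le t}\beta^{-s}(\ell_s(\x_s)-\ell_s(\u))$. The key observation is that
\[
\frac{\mu}{2}\sum_{s\le t}\beta^{-s}\bigl(\|\x_s\|_2^2-\|\u\|_2^2\bigr)
\]
is absorbed by the time-varying regularizer $\psi_{s+1}(\x)=\|\x\|_2^2/(2\eta_s')$. Concretely, $\psi_{s+1}-\psi_s$ contains exactly the increment $\frac{\mu}{2}\beta^{-s}\|\cdot\|_2^2$ plus $\frac12(1/\eta_s-1/\eta_{s-1})\|\cdot\|_2^2$. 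The natural route is therefore to apply the general FTRL lemma (Lemma~\ref{lemma:ftrl-basic-lemma}) directly to the convex losses $\beta^{-s}\ell_s$ with regularizer $\|\x\|_2^2/(2\eta_{s-1})$. The alternative is to treat the problem as linear losses with regularizer $\psi$ and account for the quadratic parts explicitly. Either way, stability is handled by Lemma~\ref{lemma:ftrl-stability-strongly-convex}. The objective at step $s$ is $1/\eta_{s-1}'$-strongly convex, because the quadratic part of the losses up to $s-1$ together with the regularizer is exactly $\|\x\|_2^2/(2\eta'_{s-1})$. This gives the stability term $\frac{\eta'_{s-1}}{2}\|\beta^{-s}\g_s\|_2^2$, which is linear-gradient only. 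The remaining terms are $\psi_s(\x_{s+1})-\psi_{s+1}(\x_{s+1})$, with the $\mu$-parts cancelling against the loss quadratics, plus $\psi_{t+1}(\u)$.

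This yields a template of the form~\eqref{eq:rescaled-loss-assump} with
\[
\varphi_t(\u)=\frac{\|\u\|_2^2}{2\eta'_{t-1}}
\qquad\text{and}\qquad
\Lambda_s=\frac{\eta'_{s-1}}{2}\|\beta^{-s}\g_s\|_2^2+\bigl(\psi_s(\x_{s+1})-\psi_{s+1}(\x_{s+1})\bigr).
\]
The difficulty here is that $\Lambda_s$ may be negative, whereas Theorem~\ref{thm:d2d-reduction} assumes $\Lambda_s\ge 0$. I would check that the proof of Theorem~\ref{thm:d2d-reduction} never uses the sign of $\Lambda_s$; it only cancels $\beta^{T+1}\sum\Lambda_s$ exactly, so the argument goes through. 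I expect this index bookkeeping to be the main obstacle. In particular, one must settle whether $\varphi_t$ uses $\eta'_{t-1}$ or $\eta'_t$ so that the final comparator term reads $\frac{1}{2\eta'_t}\|\u_{t+1}\|_2^2-\frac{1}{2\eta'_{t-1}}\|\u_t\|_2^2$. The route I would try first is to mirror the proof of Lemma~\ref{lemma:discounted-ftrl-unified}, which handles the identical structure for $\mu=0$.

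\textbf{Step 3: apply Theorem~\ref{thm:d2d-reduction} and regroup.} The term $\beta\varphi_1(\u_1)$ becomes $\frac{\beta}{2\eta_0'}\|\u_1\|_2^2$, and $\sum_t\beta^t\Lambda_t$ gives the second and third terms of the statement. The $F_t^{\beta,\varphi}$-difference term splits into
\[
\beta\sum_{t=1}^{T-1}\sum_{s=1}^t\beta^{t-s}\bigl(\ell_s(\u_{t+1})-\ell_s(\u_t)\bigr)
\]
plus $\beta\sum_{t=1}^{T-1}\beta^t\bigl(\varphi_t(\u_{t+1})-\varphi_t(\u_t)\bigr)$. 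This last piece combines with $\beta\sum_{t=1}^{T-1}\beta^t\bigl(\varphi_{t+1}(\u_{t+1})-\varphi_t(\u_{t+1})\bigr)$ to telescope into $\beta\sum_{t=1}^{T-1}\beta^t\bigl(\varphi_{t+1}(\u_{t+1})-\varphi_t(\u_t)\bigr)$, which is exactly the last term of the statement.
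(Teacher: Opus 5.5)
Your route is the paper's own. Its proof is a three-line sketch: Theorem~\ref{thm:d2d-reduction} plus Lemma~\ref{lemma:ftrl-stability-strongly-convex} with $1/\eta'_{t-1}$-strong convexity. Steps~1 and~3 are fine, and you are right that the proof of Theorem~\ref{thm:d2d-reduction} never uses $\Lambda_s\ge 0$.

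\textbf{Where Step~2 fails.} The template does not hold, because the claim that the $\mu$-parts cancel against the loss quadratics is false. Splitting the composite loss,
\[
\sum_{s\le t}\beta^{-s}\bigl(\ell_s(\x_s)-\ell_s(\u)\bigr)=\sum_{s\le t}\langle\beta^{-s}\g_s,\x_s-\u\rangle+\frac{\mu}{2}\sum_{s\le t}\beta^{-s}\bigl(\|\x_s\|_2^2-\|\u\|_2^2\bigr).
\]
FTRL with $\psi_s=\|\cdot\|_2^2/(2\eta'_{s-1})$ bounds the linear part by $\psi_{t+1}(\u)+\sum_{s\le t}\Lambda_s$, with exactly your $\Lambda_s$.
\begin{itemize}
\item The $-\|\u\|_2^2$ piece turns $\psi_{t+1}(\u)$ into $\|\u\|_2^2/(2\eta_t)$.
\item The $+\frac{\mu}{2}\sum_{s\le t}\beta^{-s}\|\x_s\|_2^2$ piece has nothing left to cancel against. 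The only negative $\mu$-terms are the $-\frac{\mu}{2}\beta^{-s}\|\x_{s+1}\|_2^2$ inside $\psi_s(\x_{s+1})-\psi_{s+1}(\x_{s+1})$. You have already placed these in $\Lambda_s$, and they sit at the wrong index and weight anyway.
\end{itemize}
In the composite view your Step~2 describes (losses $\beta^{-s}\ell_s$, regularizer $\|\cdot\|_2^2/(2\eta_{s-1})$), Lemma~\ref{lemma:ftrl-stability-strongly-convex} charges the full gradient $\beta^{-s}(\g_s+\mu\x_s)$, not $\beta^{-s}\g_s$. The regularizer differences there carry $\eta_s$ rather than $\eta'_s$. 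So the linear-only stability term is available only in the linear view, where the leftover quadratic appears.

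The index question you flagged also does not resolve in your favour. The FTRL identity gives $\psi_{t+1}(\u)$, as you wrote; the $\psi_T(\u)$ in Lemma~\ref{lemma:ftrl-basic-lemma} should read $\psi_{T+1}(\u)$. Hence $\varphi_t(\u)=\|\u\|_2^2/(2\eta'_{t-1})$ is unjustified. What the argument actually yields is the template with $\varphi_t(\u)=\|\u\|_2^2/(2\eta_t)$ and $\Lambda_s$ enlarged by $\frac{\mu}{2}\beta^{-s}\|\x_s\|_2^2$. After Theorem~\ref{thm:d2d-reduction}, this adds $\frac{\mu}{2}\sum_t\|\x_t\|_2^2$ to the bound.

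\textbf{The displayed inequality is false.} This cannot be fixed by bookkeeping. Take $d=1$, $\X=\R$, $T=2$, $\beta=\frac{9}{10}$, $\mu=\frac{81}{200}$, $\eta_0=\eta_1=\eta_2=1$, $g_1=\frac{9}{10}$, $g_2=-\frac{81}{100}$ (so $\beta^{-1}g_1=1$ and $\beta^{-2}g_2=-1$), and $u_1=u_2=0$. Then $1/\eta'_0=1$, $1/\eta'_1=\frac{29}{20}$, $x_1=0$, $x_2=-\frac{20}{29}$ and $x_3=0$.
\begin{itemize}
\item The left side is $\ell_2(x_2)=\frac{81}{145}+\frac{81}{841}\approx 0.6549$.
\item On the right, every comparator and path term vanishes. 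The stability sum is $\frac{9}{20}+\frac{81}{290}$. The regularizer-difference sum is $\frac{9}{10}\cdot\frac12\bigl(1-\frac{29}{20}\bigr)x_2^2=-\frac{81}{841}$. The right side is therefore $\approx 0.6330$.
\end{itemize}
The violation is exactly $\frac{369}{16820}$. Note that $\frac{\mu}{2}x_2^2=\frac{81}{841}$ enters with a plus sign on the left and a minus sign on the right, so the quadratics reinforce rather than cancel.

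Taking $\eta_1=\beta\eta_0$, as in the Adam step sizes, does not help. With $\beta=0.95$, $\eta_0=1$, $\eta_1=0.95$, $\mu=0.4$ and the same normalized gradients, the two sides are $\approx 0.7045$ versus $\approx 0.6776$.

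So Step~2 cannot be completed, and the same objection applies to the paper's sketch. The statement itself must be corrected, for instance to what the valid template above gives, before any proof along these lines can go through.
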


\begin{proof}
    The proof follows the same steps as Lemma~\ref{lemma:discounted-ftrl-unified}. We start from the D2D reduction (Theorem~\ref{thm:d2d-reduction}), which introduces the path term
    $\beta\sum_{t=1}^{T-1}\sum_{s=1}^{t}\beta^{t-s}\big(\ell_s(\u_{t+1})-\ell_s(\u_t)\big)$.
    The remaining terms are bounded using Lemma~\ref{lemma:ftrl-stability-strongly-convex} and by definitions, noting that the regularizer is $1/(2\eta_{t-1}^\prime)$-strongly convex. This completes the proof.
\end{proof}

\subsection{Proof of Theorem~\ref{thm:clip-adam-nu}}
\label{subappendix:proof-clip-adam-nu}
\begin{proof}
    Throughout the proof, we use $\beta_1$ for the first-moment discount factor in Adam and the discount factor in Algorithm~\ref{alg:exp-o2nc}, and $\beta_2$ for the second-moment terms.

    \paragraph{O2NC Reduction.}
    By Lemma~\ref{lemma:O2NC-clipped-lemma}, we have
    \begin{align}
    \label{eq:clip-adam-start}
    \E\|\nabla F(\bar{\x})\|_{c}
    &\le
    \frac{F^*}{DT}
    +\frac{2G+\sigma}{(1-\beta_1)T}
    +\sigma\sqrt{1-\beta_1}
    +\frac{12cD^2}{(1-\beta_1)^2} \notag\\
    &\quad
    +\frac{1}{DT}\E\Big[
    \DReg_T(\u_{1:T})
    +\beta_1\sum_{t=1}^{T-1}\sum_{s=1}^t \beta_1^{t-s}\big(\ell_s(\u_t)-\ell_s(\u_{t+1})\big)
    \Big],
    \end{align}
    where $\ell_t(\Delta)=\langle \g_t,\Delta\rangle$ and $\DReg_T(\u_{1:T})=\sum_{t=1}^T\langle \g_t,\Delta_t-\u_t\rangle$.

    \paragraph{Use D2D Reduction.}
    We apply Lemma~\ref{lemma:discounted-ftrl-unified} to the discounted FTRL run on $\mathcal{D}=\{\Delta:\|\Delta\|_2\leq D\}$,
    with decisions $\Delta_t$, gradients $\g_t$, and comparator sequence $\u_t$ in Lemma~\ref{lemma:O2NC-clipped-lemma}.
    Recall the stepsizes (Algorithm~\ref{alg:exp-o2nc} with clipped Adam):
    \begin{align*}
        V_0=0,
    \qquad
    V_t=(1-\beta_2)\sum_{s=1}^t \beta_2^{t-s}\|\g_s\|_2^2,
    \qquad
    \alpha_t=\frac{1}{\nu+\sqrt{V_t}},
    \qquad
    \eta_t=\gamma(1-\beta_1)\beta_1^t\alpha_t .
    \end{align*}
    Since $\|\u_t\|_2=D$ for all $t$ by construction, the path term in \eqref{eq:clip-adam-start}
    cancels $\beta_1\sum_{t=1}^{T-1}\sum_{s=1}^{t}\beta_1^{t-s}\langle \g_s,\u_{t+1}-\u_t\rangle$ brought by Lemma~\ref{lemma:discounted-ftrl-unified},
    and we obtain the decomposition
    \begin{align}
    \label{eq:clip-adam-three-terms}
    &\DReg_T(\u_{1:T})
    +\beta_1\sum_{t=1}^{T-1}\sum_{s=1}^t \beta_1^{t-s}\big(\ell_s(\u_t)-\ell_s(\u_{t+1})\big) \notag\\
    &\le
    \underbrace{\frac{1-\beta_1}{2\beta_1}\gamma\sum_{t=1}^T \alpha_{t-1}\|\g_t\|_2^2
    +\frac{\beta_1}{2\gamma(1-\beta_1)\alpha_0}\|\u_1\|_2^2}_{\textsc{Term-A}}
    +\underbrace{\beta_1\sum_{t=1}^{T-1}\Big(\frac{\beta_1^t}{2\eta_t}\|\u_{t+1}\|_2^2-\frac{\beta_1^t}{2\eta_{t-1}}\|\u_t\|_2^2\Big)}_{\textsc{Term-B}} \notag\\
    &\quad
    +\underbrace{\sum_{t=1}^T\Big(\frac{\beta_1^t}{2\eta_{t-1}}\|\Delta_{t+1}\|_2^2-\frac{\beta_1^t}{2\eta_t}\|\Delta_{t+1}\|_2^2\Big)}_{\textsc{Term-C}}.
    \end{align}

    \paragraph{\textsc{Term-A}.}
    We first bound $\sum_{t=1}^T \alpha_{t-1}\|\g_t\|_2^2$ via Lemma~\ref{lemma:ema-self-confident-tuning}.
    Apply Lemma~\ref{lemma:ema-self-confident-tuning} with
    \begin{align*}
        a_t=\|\g_t\|_2^2,\qquad
    \epsilon=\nu,\qquad
    \beta=\beta_2,\qquad
    V_t=\beta_2V_{t-1}+(1-\beta_2)a_t.
    \end{align*}
    Let $K\geq\frac{T(1-\beta_2)}{\ln 2}+1$,
    then
    \begin{align}
    \label{eq:termA-selfconf}
    \sum_{t=1}^T \alpha_{t-1}\|\g_t\|_2^2
    =
    \sum_{t=1}^T\frac{\|\g_t\|_2^2}{\nu+\sqrt{V_{t-1}}}
    &\le
    K\frac{A}{\nu}
    +\frac{2}{\sqrt{(1-\beta_2)2^{-1/\beta_2}}}
    \sqrt{K\sum_{t=1}^T\|\g_t\|_2^2},
    \end{align}
    where $A$ is any almost-sure upper bound on $\|\g_t\|_2^2$. Under Assumptions~\ref{assump:lipschitz}~--~\ref{assump:stochastic}, we take
    $A = G^2$. Taking expectation in \eqref{eq:termA-selfconf} and using Jensen, $\E\big[\sqrt{\sum_{t=1}^T\|\g_t\|_2^2}\big]
    \le
    \sqrt{\E\sum_{t=1}^T\|\g_t\|_2^2}$,
    gives
    \begin{align*}
    \E\sum_{t=1}^T \alpha_{t-1}\|\g_t\|_2^2
    \le
    K\frac{G^2}{\nu}
    +\frac{2}{\sqrt{(1-\beta_2)2^{-1/\beta_2}}}
    \sqrt{KT}G,
    \end{align*}
    where $K= \frac{T(1-\beta_2)}{\ln 2}+1$.
    We let $T$ satisfy $\frac{T(1-\beta_2)}{\ln 2}\ge 1$, then $K\leq \frac{2T(1-\beta_2)}{\ln 2}$, therefore,
    \begin{align*}
    \E\sum_{t=1}^T \alpha_{t-1}\|\g_t\|_2^2
    &\le
    \frac{2T(1-\beta_2)}{\ln 2}\cdot\frac{G^2}{\nu}
    +\frac{2}{\sqrt{(1-\beta_2)2^{-1/\beta_2}}}\cdot
    \sqrt{\frac{2T^2(1-\beta_2)}{\ln 2}}G\notag\\
    &=
    \frac{2T(1-\beta_2)}{\ln 2}\cdot\frac{G^2}{\nu}
    +\frac{2\sqrt{2/\ln 2}}{\sqrt{2^{-1/\beta_2}}}TG.
    \end{align*}
    Since $\|\u_1\|_2=D$ and $\alpha_0=\frac{1}{\nu}$, plugging the above inequality into \textsc{Term-A} provides
    \begin{align}
    \label{eq:termA-final}
    \E[\textsc{Term-A}]
    &\le
    \frac{1-\beta_1}{2\beta_1}\gamma
    \left(
    \frac{2T(1-\beta_2)}{\ln 2}\cdot\frac{G^2}{\nu}
    +\frac{2\sqrt{2/\ln 2}}{\sqrt{2^{-1/\beta_2}}}TG
    \right)
    +\frac{\beta_1\nu}{2\gamma(1-\beta_1)}D^2.
    \end{align}

    \paragraph{\textsc{Term-B}.}
    Since $\|\u_t\|_2=D$ for all $t$,
    \begin{align*}
    \textsc{Term-B}
    &=
    \beta_1\sum_{t=1}^{T-1}\Big(\frac{\beta_1^t}{2\eta_t}D^2-\frac{\beta_1^t}{2\eta_{t-1}}D^2\Big)
    =
    \frac{\beta_1 D^2}{2}\sum_{t=1}^{T-1}\beta_1^t\Big(\frac{1}{\eta_t}-\frac{1}{\eta_{t-1}}\Big).
    \end{align*}
    By Lemma~\ref{lemma:lr-deviation} with $\mu=0$ and $\epsilon=\nu$,
    \begin{align}
    \label{eq:termB-lrdev}
    \textsc{Term-B}
    \le
    \frac{\beta_1 D^2}{2}
    \left(
    \frac{\sqrt{V_{T-1}}}{\gamma(1-\beta_1)}
    +\frac{1}{\gamma}\sum_{t=1}^{T-2}\sqrt{V_t}
    +\frac{(T-1)\nu}{\gamma}
    \right).
    \end{align}
    Next we bound the EMA terms in expectation.
    First, by Jensen, $\E[\sqrt{V_t}]
    \leq
    \sqrt{\E[V_t]}$.
    Moreover,
    \begin{align*}
        \E[V_t]
        \leq
        (1-\beta_2)\sum_{s=1}^t \beta_2^{t-s}\E\|\g_s\|_2^2
        \leq
        G^2,
    \end{align*}
    so $\E[\sqrt{V_t}]\leq G$ for all $t$.
    Also, using Cauchy--Schwarz inequality and Lemma~\ref{lemma:abel-sum},
    \begin{align*}
    \E\sum_{t=1}^{T-2}\sqrt{V_t}
    \le
    \sqrt{T}\cdot \sqrt{\E\sum_{t=1}^{T}V_t}
    =
    \sqrt{T}\cdot \sqrt{\E\sum_{s=1}^{T}(1-\beta_2^{T+1-s})\|\g_s\|_2^2}
    \le
    \sqrt{T}\cdot \sqrt{\sum_{s=1}^{T}\E\|\g_s\|_2^2}
    \le
    TG.
    \end{align*}
    Taking expectation in \eqref{eq:termB-lrdev} gives
    \begin{align*}
    \E[\textsc{Term-B}]
    \le
    \frac{\beta_1 D^2}{2}
    \left(
    \frac{G}{\gamma(1-\beta_1)}
    +\frac{TG}{\gamma}
    +\frac{(T-1)\nu}{\gamma}
    \right).
    \end{align*}

    \paragraph{\textsc{Term-C}.}
    Using $\eta_t=\gamma(1-\beta_1)\beta_1^t\alpha_t$ and $\|\Delta_{t+1}\|_2\leq D$,
    \begin{align*}
    \textsc{Term-C}
    &=
    \sum_{t=1}^T
    \Big(\frac{\beta_1^t}{2\eta_{t-1}}-\frac{\beta_1^t}{2\eta_t}\Big)\|\Delta_{t+1}\|_2^2
    =
    \frac{1}{2\gamma(1-\beta_1)}
    \sum_{t=1}^{T}
    \Big(\frac{\beta_1}{\alpha_{t-1}}-\frac{1}{\alpha_t}\Big)\|\Delta_{t+1}\|_2^2\\
    &\le
    \frac{D^2}{2\gamma(1-\beta_1)}
    \sum_{t=1}^{T}
    \Big[\frac{\beta_1}{\alpha_{t-1}}-\frac{1}{\alpha_t}\Big]_+.
    \end{align*}
    By Lemma~\ref{lemma:beta-1-beta-2-no-relatetion},
    $\big[\frac{\beta_1}{\alpha_{t-1}}-\frac{1}{\alpha_t}\big]_+
    \le
    [\beta_1-\sqrt{\beta_2}]_+\sqrt{V_{t-1}}$.
    Hence
    \begin{align*}
        \textsc{Term-C}
        \leq
        \frac{D^2}{2\gamma(1-\beta_1)}[\beta_1-\sqrt{\beta_2}]_+
        \sum_{t=1}^{T}\sqrt{V_{t-1}}.
    \end{align*}
    Taking expectation and using $\E\sum_{t=1}^{T}\sqrt{V_{t-1}}\leq TG$ provides
    \begin{align*}
    \E[\textsc{Term-C}]
    \le
    \frac{D^2}{2\gamma(1-\beta_1)}[\beta_1-\sqrt{\beta_2}]_+TG.
    \end{align*}
    Under the condition $\beta_2\ge \beta_1^4$ we have $\sqrt{\beta_2}\ge \beta_1^2$ and thus
    \begin{align*}
        [\beta_1-\sqrt{\beta_2}]_+
        \leq
        \beta_1-\beta_1^2
        =
        \beta_1(1-\beta_1).
    \end{align*}
    Plugging into the semi-step of $\E[\textsc{Term-C}]$ in above gives
    \begin{align*}
        \E[\textsc{Term-C}]
        \leq
        \frac{\beta_1 D^2}{2\gamma}TG.
    \end{align*}

    \paragraph{Combine and Tune Parameters.}
    Combining \eqref{eq:clip-adam-start} with \eqref{eq:clip-adam-three-terms}, we obtain
    \begin{align}
        \label{eq:clip-adam-master-bound}
        \E\|\nabla F(\bar{\x})\|_{c}
        &\leq
        \frac{F^*}{DT}
        +\frac{2G+\sigma}{(1-\beta_1)T}
        +\sigma\sqrt{1-\beta_1}
        +\frac{12cD^2}{(1-\beta_1)^2}\notag \\
        &\quad +\frac{1}{DT}\E[\textsc{Term-A}+\textsc{Term-B}+\textsc{Term-C}].
    \end{align}
    Now tune the parameters as in the theorem. Set
    \begin{align*}
        \beta_1=1-\Big(\frac{\epsilon}{16(G+\sigma)}\Big)^2,
        \qquad
        D=\frac{(1-\beta_1)\sqrt{\epsilon}}{\sqrt{48c}},
        \qquad
        \gamma=\frac{\beta_1 D}{\sqrt{1-\beta_1}},
        \qquad
        \beta_2\ge \max\Big\{1-\frac{\nu}{G+\sigma},\beta_1^4\Big\}.
    \end{align*}
    Then $\frac{12cD^2}{(1-\beta_1)^2}=\epsilon/4$ and
    $\sigma\sqrt{1-\beta_1}\leq (G+\sigma)\sqrt{1-\beta_1}=\epsilon/16$.
    We next upper bound the FTRL contribution $\frac{1}{DT}\E[\textsc{Term-A}+\textsc{Term-B}+\textsc{Term-C}]$.
    First, divide \eqref{eq:termA-final} by $DT$ and substitute $\gamma=\beta_1 D/\sqrt{1-\beta_1}$:
    \begin{align*}
        \frac{1}{DT}\E[\textsc{Term-A}]
        &\leq
        \frac{\sqrt{1-\beta_1}}{4}\cdot
        \left(
        \frac{2(1-\beta_2)}{\ln 2}\cdot\frac{G^2}{\nu}
        +\frac{2\sqrt{2/\ln 2}}{\sqrt{2^{-1/\beta_2}}}G
        \right)
        +\frac{\nu}{2T\sqrt{1-\beta_1}}.
    \end{align*}
    Using $\beta_2\ge 1-\nu/(G+\sigma)$ gives $(1-\beta_2)\frac{G^2}{\nu}\leq G$, hence
    \begin{align}
        \label{eq:termA-divDT-simplify}
        \frac{1}{DT}\E[\textsc{Term-A}]
        &\leq
        \frac{\sqrt{1-\beta_1}}{4}G
        \left(
        \frac{2}{\ln 2}+\frac{2\sqrt{2/\ln 2}}{\sqrt{2^{-1/\beta_2}}}
        \right)
        +\frac{\nu}{2T\sqrt{1-\beta_1}}.
    \end{align}
    Similarly, dividing $\E[\textsc{Term-B}]$ by $DT$ and substituting $\gamma=\beta_1 D/\sqrt{1-\beta_1}$ provides
    \begin{align}
    \label{eq:termB-divDT}
    \frac{1}{DT}\E[\textsc{Term-B}]
    &\le
    \frac{G}{2T\sqrt{1-\beta_1}}
    +\frac{G\sqrt{1-\beta_1}}{2}
    +\frac{\nu\sqrt{1-\beta_1}}{2}.
    \end{align}
    Dividing $\E[\textsc{Term-C}]$ by $DT$ and substituting $\gamma=\beta_1 D/\sqrt{1-\beta_1}$ provides
    \begin{align}
    \label{eq:termC-divDT}
    \frac{1}{DT}\E[\textsc{Term-C}]
    \le
    \frac{G\sqrt{1-\beta_1}}{2}.
    \end{align}
    In addition, we only consider the nontrivial regime $\epsilon\leq G+\sigma$ (otherwise $\E\|\nabla F(\bar{\x})\|_c\leq G+\sigma\leq \epsilon$ is immediate),
    so $\beta_1\ge 1-(1/16)^2=255/256$ and hence $\beta_2\ge \beta_1^4\ge (255/256)^4$.
    Therefore,
    \begin{align*}
        \frac{1}{\sqrt{2^{-1/\beta_2}}}
    =2^{\frac{1}{2\beta_2}}
    \le
    2^{\frac{1}{2\beta_1^4}}
    <\frac{3}{2}.
    \end{align*}
    Plugging this into \eqref{eq:termA-divDT-simplify} and using $\nu\leq G+\sigma$ gives
    \begin{align*}
        \frac{1}{DT}\E[\textsc{Term-A}]
        \leq
        \frac{\epsilon}{64}
        \left(
        \frac{2}{\ln 2}+3\sqrt{\frac{2}{\ln 2}}
        \right)
        +\frac{\nu}{2T\sqrt{1-\beta_1}}
        <
        \frac{\epsilon}{4}
        +\frac{\nu}{2T\sqrt{1-\beta_1}}.
    \end{align*}
    Moreover, by \eqref{eq:termB-divDT}--\eqref{eq:termC-divDT} and $\nu\leq G+\sigma$,
    \begin{align*}
        \frac{1}{DT}\E[\textsc{Term-B}+\textsc{Term-C}]
        \leq
        \frac{G+\sigma}{2T\sqrt{1-\beta_1}}
        +\frac{\epsilon}{16}
        +\frac{\epsilon}{32}.
    \end{align*}
    Combining the above two displays provides
    \begin{align}
    \label{eq:ftrl-total-divDT}
    \frac{1}{DT}\E[\textsc{Term-A}+\textsc{Term-B}+\textsc{Term-C}]
    \le
    \frac{11}{32}\epsilon
    +\frac{G+\sigma+\nu}{2T\sqrt{1-\beta_1}}.
    \end{align}
    Finally, choose $T$ to satisfy
    \begin{align*}
        T
        \;\ge\;
        \max\Big\{
        (1-\beta_1)^{-1}\cdot \max\Big\{ \frac{16F^*\sqrt{48c}}{\epsilon^{3/2}},\ \frac{16(G+\sigma)}{\epsilon}\Big\},
      \ \frac{\ln 2}{1 -\beta_2}
        \Big\}.
    \end{align*}
    Then
    \begin{align*}
        \frac{F^*}{DT}\leq \frac{\epsilon}{16},
        \qquad
        \frac{2G+\sigma}{(1-\beta_1)T}
        \leq
        \frac{(2G+\sigma)\epsilon}{16(G+\sigma)}
        \leq
        \frac{3\epsilon}{16},
    \end{align*}
    and also, since $\sqrt{1-\beta_1}=\epsilon/(16(G+\sigma))$ and $\nu\leq G+\sigma$,
    \begin{align*}
        \frac{G+\sigma+\nu}{2T\sqrt{1-\beta_1}}
        \leq
        \frac{2(G+\sigma)}{2T}\cdot \frac{16(G+\sigma)}{\epsilon}
        =
        \frac{16(G+\sigma)^2}{T\epsilon}
        \leq
        \frac{\epsilon}{32}.
    \end{align*}
    Putting everything back into \eqref{eq:clip-adam-master-bound} and using \eqref{eq:ftrl-total-divDT}, we obtain $\E\|\nabla F(\bar{\x})\|_{c} \leq \epsilon$,
    finishing the proof.
\end{proof}

\subsection{Proof of Theorem~\ref{thm:clip-adam-margin}}
\label{subappendix:clip-adam-margin}
\begin{proof}
    Throughout the proof, we use $\beta_1$ for the first-moment discount factor in Adam and the discount factor in Algorithm~\ref{alg:exp-o2nc}, and $\beta_2$ for the second-moment terms.

    \paragraph{O2NC Conversion.}
    Apply Lemma~\ref{lemma:O2NC-clipped-lemma} with $\beta=\beta_1$.
    With $\ell_t(\Delta)=\langle \g_t,\Delta\rangle$ and
    $\DReg_T(\u_{1:T})=\sum_{t=1}^T\langle \g_t,\Delta_t-\u_t\rangle$,
    we obtain
    \begin{align*}
    \E\|\nabla F(\bar{\x})\|_{c}
    &\le
    \frac{F^{*}}{DT}
    +\frac{2G+\sigma}{(1-\beta_1)T}
    +\sigma\sqrt{1-\beta_1}
    +\frac{12c D^2}{(1-\beta_1)^2}\notag \\
    &\quad +\frac{1}{DT}\E\Big[
    \DReg_T(\u_{1:T})
    +\beta_1\sum_{t=1}^{T-1}\sum_{s=1}^t \beta_1^{t-s}\big(\ell_s(\u_t)-\ell_s(\u_{t+1})\big)
    \Big].
    \end{align*}
    Recall that $\|\u_t\|_2=D$ for all $t$ by the definition in Lemma~\ref{lemma:O2NC-clipped-lemma}.

    \paragraph{Use D2D Reduction.}
    We apply Lemma~\ref{lemma:discounted-ftrl-unified} to the discounted FTRL run on the clipped domain
    $\mathcal{D}=\{\Delta:\|\Delta\|_2\leq D\}$, with decisions denoted by $\Delta_t$ and comparator denoted by $\u_t$.
    We choose Option II in Lemma~\ref{lemma:discounted-ftrl-unified}, i.e.,
    \begin{align*}
        \Lambda_t
        =
        \eta_t\|\beta_1^{-t}\g_t\|_2^2
        +
        \min\Big\{
        \frac{\eta_{t-1}}{2}\|\beta_1^{-t}\g_t\|_2^2,\;
        \min\{2D,|\eta_{t-1}-\eta_t|\|\widetilde \g_{1:t-1}\|_2\}\cdot\|\beta_1^{-t}\g_t\|_2
        \Big\},
    \end{align*}
    where $\widetilde \g_{1:t}=\sum_{s=1}^t \beta_1^{-s}\g_s$.
    Using the stepsizes
    \begin{align*}
        V_0=0,\qquad
        V_t=(1-\beta_2)\sum_{s=1}^t \beta_2^{t-s}\|\g_s\|_2^2,\qquad
        \alpha_t=\frac{1}{\nu+\sqrt{V_t}},\qquad
        \eta_t=\gamma(1-\beta_1)\beta_1^t\alpha_t,
    \end{align*}
    and the path term in O2NC conversion cancels $\beta_1\sum_{t=1}^{T-1}\sum_{s=1}^{t}\beta_1^{t-s}\langle \g_s,\u_{t+1}-\u_t\rangle$ in
    Lemma~\ref{lemma:discounted-ftrl-unified}. We obtain the decomposition
    \begin{align}
    \label{eq:margin-ftrl-decomp}
    &\DReg_T(\u_{1:T})
    +\beta_1\sum_{t=1}^{T-1}\sum_{s=1}^t \beta_1^{t-s}\big(\ell_s(\u_t)-\ell_s(\u_{t+1})\big)\notag\\
    &\leq
    \underbrace{\sum_{t=1}^T \beta_1^t\eta_t\|\beta_1^{-t}\g_t\|_2^2  +\frac{\beta_1}{2\gamma(1-\beta_1)\alpha_0}\|\u_1\|_2^2}_{\textsc{Term-A}} \notag \\
    &\quad +\underbrace{\sum_{t=1}^T \beta_1^t\min\Big\{
    \frac{\eta_{t-1}}{2}\|\beta_1^{-t}\g_t\|_2^2,\;
    \min\{D,|\eta_{t-1}-\eta_t|\|\widetilde \g_{1:t-1}\|_2\}\cdot\|\beta_1^{-t}\g_t\|_2
    \Big\}}_{\textsc{Term-B}}
    \notag\\
    &\quad
    +\underbrace{\sum_{t=1}^T\Big(\frac{\beta_1^t}{2\eta_{t-1}}\|\Delta_{t+1}\|_2^2-\frac{\beta_1^t}{2\eta_t}\|\Delta_{t+1}\|_2^2\Big)}_{\textsc{Term-C}}
    +\underbrace{\beta_1\sum_{t=1}^{T-1}\Big(\frac{\beta_1^t}{2\eta_t}\|\u_{t+1}\|_2^2-\frac{\beta_1^t}{2\eta_{t-1}}\|\u_t\|_2^2\Big)}_{\textsc{Term-D}}.
    \end{align}

    \paragraph{\textsc{Term-A}.}
    Since we have
    \begin{align*}
        \sum_{t=1}^T\beta_1^t\eta_t\|\beta_1^{-t}\g_t\|_2^2
        =
        \gamma(1-\beta_1)\sum_{t=1}^T \alpha_t\|\g_t\|_2^2
        =
        \gamma(1-\beta_1)\sum_{t=1}^T \frac{\|\g_t\|_2^2}{\nu+\sqrt{V_t}}.
    \end{align*}
    Apply Lemma~\ref{lemma:ema-self-confident-tuning-V-t} with $a_t=\|\g_t\|_2^2$, $\beta=\beta_2$, and $\epsilon=\nu$, we obtain
    \begin{align*}
    \sum_{t=1}^T \alpha_t\|\g_t\|_2^2
    \le
    \frac{2}{\sqrt{(1-\beta_2)2^{-1/\beta_2}}}
    \sqrt{\left(\frac{T(1-\beta_2)}{\ln 2}+1\right)\sum_{t=1}^T \|\g_t\|_2^2}.
    \end{align*}
    Taking expectation and using Jensen's inequality together with $\E\sum_{t=1}^T\|\g_t\|_2^2\leq TG^2$ gives
    \begin{align*}
    \E\left[\sum_{t=1}^T\beta_1^t\eta_t\|\beta_1^{-t}\g_t\|_2^2\right]
    \le
    \gamma(1-\beta_1)\cdot
    \frac{2}{\sqrt{(1-\beta_2)2^{-1/\beta_2}}}
    \sqrt{\frac{T(1-\beta_2)}{\ln 2}+1}\sqrt{T}G.
    \end{align*}
    Assume that $T \geq (1-\beta_2)^{-1}\ln 2$, then $\frac{T(1-\beta_2)}{\ln 2}+1 \leq \frac{2T(1-\beta_2)}{\ln 2}$,
    so after substituting $\gamma=\frac{\beta_1D}{\sqrt{1-\beta_1}}$ and using $\beta_1\leq 1$, the above inequality gives
    \begin{align*}
        \frac{1}{DT}\E\left[\sum_{t=1}^T\beta_1^t\eta_t\|\beta_1^{-t}\g_t\|_2^2\right]
        \leq
        \frac{2^{3/2 + 1/(2\beta_2)}}{\sqrt{\ln 2}}G\sqrt{1-\beta_1},
    \end{align*}
    and by definition:
    \begin{align*}
       \frac{1}{DT}\E\left[\textsc{Term-A}\right] \leq\frac{2^{3/2 + 1/(2\beta_2)}}{\sqrt{\ln 2}}G\sqrt{1-\beta_1} +  \frac{\nu}{2T\sqrt{1-\beta_1}}.
    \end{align*}

    \paragraph{\textsc{Term-B}.}

We bound the deviation part using Lemma~\ref{lemma:min-self-confident-C} with $\mu=0$.
With $A_t=\nu+\sqrt{V_t}$ and $\|\g_t\|\leq G$, we have,
\begin{align*}
    \beta_1^t|\eta_{t-1}-\eta_t|\cdot \|\widetilde \g_{1:t-1}\|_2 \cdot \|\beta_{1}^{-t}\g_t\|_2
    \;\le\;
    \gamma(1-\beta_1)\cdot
    \frac{A_t-\beta_1A_{t-1}}{A_t}\cdot
    \sqrt{\frac{\beta_2}{(\beta_2-\beta_1^2)(1-\beta_2)}}\cdot G .
\end{align*}
Summing over $t$ and using the inequality $1-x\leq \ln(1/x)$ for $x>0$ with
$x=\beta_1A_{t-1}/A_t$ gives
\begin{align*}
    \sum_{t=1}^T \frac{A_t-\beta_1A_{t-1}}{A_t}
    =
    \sum_{t=1}^T\Big(1-\frac{\beta_1A_{t-1}}{A_t}\Big)
    \le
    \sum_{t=1}^T\ln\Big(\frac{A_t}{\beta_1A_{t-1}}\Big)
    =
    \ln\Big(\frac{A_T}{A_0}\Big)-T\ln \beta_1 .
\end{align*}
Therefore,
\begin{align}
    \label{eq:termB-prekappa}
    \textsc{Term-B}
    \le
    \gamma(1-\beta_1)G\Big(\ln\Big(\frac{A_T}{A_0}\Big)-T\ln \beta_1\Big)\cdot
    \sqrt{\frac{\beta_2}{(\beta_2-\beta_1^2)(1-\beta_2)}} .
\end{align}
Next we upper bound the $\beta_2$-dependent factor by the margin condition.
Let $\beta_2^\star=\frac{1+\beta_1^2}{2}$ and recall $\beta_2\in[\beta_1^2+m,1-m]$ with $m=\frac{1-\rho}{2}(1-\beta_1^2)$.
Then $|\beta_2-\beta_2^\star|\leq \frac{\rho}{2}(1-\beta_1^2)$ and hence
\begin{align*}
    (\beta_2-\beta_1^2)(1-\beta_2)
    =
    \frac{(1-\beta_1^2)^2}{4}-(\beta_2-\beta_2^\star)^2
    \ge
    \frac{(1-\beta_1^2)^2}{4}(1-\rho^2).
\end{align*}
Since $\beta_2\leq 1$, we obtain
\begin{align}
\label{eq:kappa-rho-bound-formal}
\sqrt{\frac{\beta_2}{(\beta_2-\beta_1^2)(1-\beta_2)}}
\le
\frac{2}{1-\beta_1^2}\cdot\frac{1}{\sqrt{1-\rho^2}}.
\end{align}
Moreover, $\sqrt{V_t}\leq G$ implies $A_T\leq \nu+G$ and $A_0=\nu$, so
\begin{align}
\label{eq:AT-A0-log}
\ln\Big(\frac{A_T}{A_0}\Big)\leq \ln\Big(1+\frac{G}{\nu}\Big).
\end{align}
Also $-\ln\beta_1 \leq \frac{1-\beta_1}{\beta_1}$.
Plugging \eqref{eq:kappa-rho-bound-formal}--\eqref{eq:AT-A0-log} into \eqref{eq:termB-prekappa} and using $\gamma=\frac{\beta_1D}{\sqrt{1-\beta_1}}$ yields
\begin{align*}
    \frac{1}{DT}\E[\textsc{Term-B}]
    \le
    \frac{2G}{T\sqrt{1-\beta_1}}\cdot \frac{1}{\sqrt{1-\rho^2}}\ln\Big(1+\frac{G}{\nu}\Big)
    \;+\;
    \frac{2G}{\sqrt{1-\rho^2}}\sqrt{1-\beta_1}.
\end{align*}

\paragraph{\textsc{Term-C}.}
 Since $\beta_2>\beta_1^2$, Lemma~\ref{lemma:beta-1-beta-2-no-relatetion} gives $\textsc{Term-C}\leq 0$.

\paragraph{\textsc{Term-D}.}
 We have $\frac{1}{DT}\E[\textsc{Term-D}]
 \le
 G\sqrt{1-\beta_1}$.

\paragraph{Combine and Tune Parameters.}
Combine the above bounds, we have:
\begin{align}
\label{eq:margin-master-final}
\E\|\nabla F(\bar{\x})\|_{c}
&\le
\frac{F^{*}}{DT}
+\frac{2G+\sigma}{(1-\beta_1)T}
+\sigma\sqrt{1-\beta_1}
+\frac{12c D^2}{(1-\beta_1)^2}\notag\\
&\quad
+\frac{2^{3/2 + 1/(2\beta_2)}}{\sqrt{\ln 2}}G\sqrt{1-\beta_1}
+G\sqrt{1-\beta_1}
+\frac{2G}{\sqrt{1-\rho^2}}\sqrt{1-\beta_1}\notag\\
&\quad
+\frac{2G}{T\sqrt{1-\beta_1}}\cdot \frac{1}{\sqrt{1-\rho^2}}\ln\Big(1+\frac{G}{\nu}\Big)
+\frac{\nu}{2T\sqrt{1-\beta_1}}.
\end{align}
Now set
\begin{align*}
    D=\frac{(1-\beta_1)\sqrt{\epsilon}}{\sqrt{48c}},
    \qquad
    \gamma=\frac{\beta_1D}{\sqrt{1-\beta_1}},
    \qquad
    T\ge \frac{\ln 2}{1-\beta_2}.
\end{align*}
Then
\begin{align*}
    \frac{12c D^2}{(1-\beta_1)^2}
    =
    12c\cdot\frac{(1-\beta_1)^2\epsilon}{48c}\cdot\frac{1}{(1-\beta_1)^2}
    =
    \frac{\epsilon}{4}.
\end{align*}
Next, choose $\beta_1$ such that
\begin{align*}
    \sqrt{1-\beta_1}\leq \frac{\epsilon\sqrt{1-\rho^2}}{64(G+\sigma)}.
\end{align*}
Then
\begin{align*}
    \sigma\sqrt{1-\beta_1}\leq (G+\sigma)\sqrt{1-\beta_1}\leq \frac{\epsilon}{64},
    \qquad
    \frac{2(G+\sigma)}{\sqrt{1-\rho^2}}\sqrt{1-\beta_1}\leq \frac{\epsilon}{32},
    \qquad
    (G+\sigma)\sqrt{1-\beta_1}\leq \frac{\epsilon}{64}.
\end{align*}
Moreover, since $\beta_2>\beta_1^2$ and $\beta_1\ge 15/16$ (which follows whenever $\epsilon \leq 16(G+\sigma)/\sqrt{1-\rho^2}$),
we have $2^{1/(2\beta_2)}\leq 3/2$ and hence
$\frac{2^{3/2 + 1/(2\beta_2)}}{\sqrt{\ln 2}}<6$,
so
\begin{align*}
    \frac{2^{3/2 + 1/(2\beta_2)}}{\sqrt{\ln 2}}(G+\sigma)\sqrt{1-\beta_1}
    \leq \frac{6\epsilon}{64}=\frac{3\epsilon}{32}.
\end{align*}
Finally, choose $T$ such that
\begin{align*}
    T\ge
    \frac{1}{1-\beta_1}\cdot \max\Big\{\frac{32F^*\sqrt{c}}{\epsilon^{3/2}},\ \frac{16(G+\sigma)}{\epsilon}\Big\},
    \qquad
    T\ge
    \frac{32G}{\epsilon\sqrt{1-\beta_1}\sqrt{1-\rho^2}}\cdot \ln\Big(1+\frac{G}{\nu}\Big).
\end{align*}
Then we have
\begin{align*}
    \frac{F^*}{DT}\leq \frac{\sqrt{48}}{32}\epsilon < \frac{\epsilon}{4},
    \qquad
    \frac{2G+\sigma}{(1-\beta_1)T}\leq \frac{\epsilon}{8},
    \qquad
    \frac{2(G+\sigma)}{T\sqrt{1-\beta_1}}\cdot \frac{1}{\sqrt{1-\rho^2}}\ln\Big(1+\frac{G}{\nu}\Big)\leq \frac{\epsilon}{16}.
\end{align*}
Also using $\nu\leq G+\sigma$ and $T\ge \frac{16(G+\sigma)}{\epsilon(1-\beta_1)}$,
\begin{align*}
    \frac{\nu}{2T\sqrt{1-\beta_1}}
    \le
    \frac{G+\sigma}{2T\sqrt{1-\beta_1}}
    \le
    \frac{\epsilon\sqrt{1-\beta_1}}{32}
    \le
    \frac{\epsilon}{128}.
\end{align*}
Collecting all bounds in \eqref{eq:margin-master-final},
gives $\E\|\nabla F(\bar{\x})\|_{c} \leq \epsilon$ and finishes the proof.
\end{proof}

\subsection{Proof of Theorem~\ref{thm:clip-free-adam-nu}}
\label{subappendix:proof-clip-free-adam-nu}
\begin{proof}[Proof of Theorem~\ref{thm:clip-free-adam-nu}]
    Throughout the proof, we use $\beta_1$ for the first-moment discount factor in Adam and the discount factor in Algorithm~\ref{alg:exp-o2nc}, and $\beta_2$ for the second-moment terms.

    \paragraph{O2NC Conversion.}
    Apply Lemma~\ref{lemma:O2NC-clip-free-lemma} with $\beta=\beta_1$ and the comparator sequence $\{\u_t\}_{t=1}^T$ therein.
    With $\ell_t(\Delta)=\langle \g_t,\Delta\rangle+\frac{\mu}{2}\|\Delta\|_2^2$ and
    $\DReg_T(\u_{1:T})=\sum_{t=1}^T\big(\ell_t(\Delta_t)-\ell_t(\u_t)\big)$, we obtain
    \begin{align}
    \label{eq:clipfree-start}
    \E \|\nabla F(\bar{\x})\|_c \notag &\le
    \frac{F^*}{DT}
    +\frac{2G+\sigma}{(1-\beta_1)T}
    +\sigma\sqrt{1-\beta_1}
    +\frac{12cD^2}{(1-\beta_1)^2}\notag \\
    &\quad +\frac{1}{DT}\E\Big[
    \DReg_T(\u_{1:T})
    +\beta_1\sum_{t=1}^{T-1}\sum_{s=1}^t \beta_1^{t-s}\big(\ell_s(\u_t)-\ell_s(\u_{t+1})\big)
    \Big].
    \end{align}

    \paragraph{Use D2D Reduction.}
    Let
    \begin{align*}
    V_0=0,\qquad
    V_t=(1-\beta_2)\sum_{s=1}^t \beta_2^{t-s}\|\g_s\|_2^2,
    \qquad
    \alpha_t=\frac{1}{\nu+\gamma\mu(1-\beta_1^t)+\sqrt{V_t}},
    \qquad
    \eta_t=\gamma(1-\beta_1)\beta_1^t\alpha_t.
    \end{align*}
    We apply Lemma~\ref{lemma:discounted-ftrl-composite} to the discounted FTRL iterate sequence $\{\Delta_t\}_{t=1}^T$
    with stepsizes $\{\eta_t\}$ above and comparator sequence $\{\u_t\}$ in Lemma~\ref{lemma:O2NC-clip-free-lemma}.
    The path term in Lemma~\ref{lemma:discounted-ftrl-composite} cancels the negative path term in \eqref{eq:clipfree-start},
    and we obtain the decomposition
    \begin{align}
    \label{eq:clipfree-three-terms}
    &\DReg_T(\u_{1:T})
    +\beta_1\sum_{t=1}^{T-1}\sum_{s=1}^t \beta_1^{t-s}\big(\ell_s(\u_t)-\ell_s(\u_{t+1})\big)\notag\\
    &\le
    \underbrace{\frac{1-\beta_1}{2\beta_1}\gamma\sum_{t=1}^T \alpha_{t-1}\|\g_t\|_2^2
    +\frac{\beta_1}{2\gamma(1-\beta_1)\alpha_0}\|\u_1\|_2^2}_{\textsc{Term-A}}
    +\underbrace{\beta_1\sum_{t=1}^{T-1}\Big(\frac{\beta_1^t}{2\eta_t}\|\u_{t+1}\|_2^2-\frac{\beta_1^t}{2\eta_{t-1}}\|\u_t\|_2^2\Big)}_{\textsc{Term-B}}\notag\\
    &\quad
    +\underbrace{\sum_{t=1}^T\Big(\frac{\beta_1^t}{2\eta_{t-1}}\|\Delta_{t+1}\|_2^2-\frac{\beta_1^t}{2\eta_t}\|\Delta_{t+1}\|_2^2\Big)}_{\textsc{Term-C}}.
    \end{align}

    \paragraph{\textsc{Term-A}.}
    Since $\gamma\mu(1-\beta_1^t)\ge 0$, we have $\alpha_{t-1}\leq \frac{1}{\nu+\sqrt{V_{t-1}}}$.
    Hence
    \begin{align*}
        \sum_{t=1}^T \alpha_{t-1}\|\g_t\|_2^2
    \le
    \sum_{t=1}^T\frac{\|\g_t\|_2^2}{\nu+\sqrt{V_{t-1}}}.
    \end{align*}
    Apply Lemma~\ref{lemma:ema-self-confident-tuning} with $a_t=\|\g_t\|_2^2$, $\epsilon=\nu$, $\beta=\beta_2$.
    Under the theorem assumption $\|\g_t\|_2\leq G$ for all $t$, we take $A=\max_t \|\g_t\|_2^2\leq G^2$.
    Let $K=\frac{T(1-\beta_2)}{\ln 2}+1$.
    Then
    \begin{align*}
        \sum_{t=1}^T \alpha_{t-1}\|\g_t\|_2^2
        \leq
        K\frac{G^2}{\nu}
        +\frac{2}{\sqrt{(1-\beta_2)2^{-1/\beta_2}}}\sqrt{K\sum_{t=1}^T\|\g_t\|_2^2}.
    \end{align*}
    Taking expectation and using Jensen together with
    $\E\sum_{t=1}^T\|\g_t\|_2^2\leq TG^2$ gives
    \begin{align*}
        \E\sum_{t=1}^T \alpha_{t-1}\|\g_t\|_2^2
        \leq
        K\frac{G^2}{\nu}
        +\frac{2}{\sqrt{(1-\beta_2)2^{-1/\beta_2}}}\sqrt{KT}G.
    \end{align*}
    Since $\|\u_1\|_2=D$ and $\alpha_0=\frac{1}{\nu}$, substituting the above inequality into \textsc{Term-A} gives
    \begin{align*}
    \E[\textsc{Term-A}]
    \le
    \frac{1-\beta_1}{2\beta_1}\gamma
    \left(
    K\frac{(G+\sigma)^2}{\nu}
    +\frac{2}{\sqrt{(1-\beta_2)2^{-1/\beta_2}}}\sqrt{KT}(G+\sigma)
    \right)
    +\frac{\beta_1\nu}{2\gamma(1-\beta_1)}D^2.
    \end{align*}

    \paragraph{\textsc{Term-B}.}
    Since $\|\u_t\|_2=D$ for all $t$,
    \begin{align*}
    \textsc{Term-B}
    =
    \frac{\beta_1 D^2}{2}\sum_{t=1}^{T-1}\beta_1^t\Big(\frac{1}{\eta_t}-\frac{1}{\eta_{t-1}}\Big).
    \end{align*}
    Apply Lemma~\ref{lemma:lr-deviation} with $\epsilon=\nu$ to obtain
    \begin{align*}
        \sum_{t=1}^{T-1}\beta_1^t\Big(\frac{1}{\eta_t}-\frac{1}{\eta_{t-1}}\Big)
        \leq
        \frac{\sqrt{V_{T-1}}}{\gamma(1-\beta_1)}
        +\frac{1}{\gamma}\sum_{t=1}^{T-2}\sqrt{V_t}
        +\frac{(T-1)\nu}{\gamma}
        +\mu(T-1).
    \end{align*}
    Using $\|\g_t\|_2\leq G$ implies $V_t\leq G^2$ and hence $\sqrt{V_t}\leq G$ for all $t$.
    Taking expectation in above equation gives
    \begin{align*}
        \E\Big[\sqrt{V_{T-1}}+\sum_{t=1}^{T-2}\sqrt{V_t}\Big]\leq TG.
    \end{align*}
    Therefore $\E[\textsc{Term-B}]
    \le
    \frac{\beta_1D^2}{2}
    \left(
    \frac{TG}{\gamma(1-\beta_1)}
    +\frac{(T-1)\nu}{\gamma}
    +\mu(T-1)
    \right)$.

    \paragraph{\textsc{Term-C}.}
    Rewrite \textsc{Term-C} using $\eta_t=\gamma(1-\beta_1)\beta_1^t\alpha_t$:
    \begin{align*}
    \textsc{Term-C}
    =
    \sum_{t=1}^T\Big(\frac{\beta_1^t}{2\eta_{t-1}}-\frac{\beta_1^t}{2\eta_t}\Big)\|\Delta_{t+1}\|_2^2
    =
    \frac{1}{2\gamma(1-\beta_1)}\sum_{t=1}^T\Big(\frac{\beta_1}{\alpha_{t-1}}-\frac{1}{\alpha_t}\Big)\|\Delta_{t+1}\|_2^2.
    \end{align*}
    The sequence $\epsilon_t=\nu+\gamma\mu(1-\beta_1^t)$ is nonnegative and nondecreasing,
    so Lemma~\ref{lemma:beta-1-beta-2-no-relatetion} applies and gives
    \begin{align*}
        \Big[\frac{\beta_1}{\alpha_{t-1}}-\frac{1}{\alpha_t}\Big]_+
        \leq
        [\beta_1-\sqrt{\beta_2}]_+\sqrt{V_{t-1}}.
    \end{align*}
    Under the assumption $\beta_2\ge \beta_1^2$, we have $\sqrt{\beta_2}\ge \beta_1$ and hence
    $[\beta_1-\sqrt{\beta_2}]_+=0$, which implies $\frac{\beta_1}{\alpha_{t-1}}-\frac{1}{\alpha_t}\leq 0$ for every $t$.
    Therefore $\textsc{Term-C}\leq 0$ almost surely, and hence $ \E[\textsc{Term-C}]\leq 0$.

    \paragraph{Combine and Tune Parameters.}
    Combining \eqref{eq:clipfree-start} with \eqref{eq:clipfree-three-terms} and the above arguments gives
    \begin{align*}
    \E \|\nabla F(\bar{\x})\|_c
    \le
    \frac{F^*}{DT}
    +\frac{2G+\sigma}{(1-\beta_1)T}
    +\sigma\sqrt{1-\beta_1}
    +\frac{12cD^2}{(1-\beta_1)^2}
    +\frac{1}{DT}\E[\textsc{Term-A}+\textsc{Term-B}+\textsc{Term-C}].
    \end{align*}
    We now tune the parameters.
    Set $\mu=\frac{24cD}{(1-\beta_1)^2}$ as required by Lemma~\ref{lemma:O2NC-clip-free-lemma}, and set
    \begin{align*}
        \beta_1=1-\Big(\frac{\epsilon}{16(G+\sigma)}\Big)^2,
        \qquad
        D=\frac{(1-\beta_1)\sqrt{\epsilon}}{\sqrt{96c}},
        \qquad
        \gamma=\frac{\beta_1 D}{\sqrt{1-\beta_1}},
        \qquad
        \beta_2\ge \max\Big\{1-\frac{\nu}{G+\sigma},\beta_1^2\Big\}.
    \end{align*}
    With the above tuning, we have
    \begin{align*}
    \frac{12cD^2}{(1-\beta_1)^2}=\frac{\epsilon}{8},
    \qquad
    \frac{D\mu}{2}=\frac{12cD^2}{(1-\beta_1)^2}=\frac{\epsilon}{8},
    \qquad
    \sigma\sqrt{1-\beta_1}\leq (G+\sigma)\sqrt{1-\beta_1}=\frac{\epsilon}{16}.
    \end{align*}
    Substituting $\gamma=\beta_1D/\sqrt{1-\beta_1}$ and $\alpha_0=1/\nu$, and
    assuming $T \geq\frac{\ln 2}{1-\beta_2}$, we obtain
    \begin{align*}
    \frac{1}{DT}\E[\textsc{Term-A}]
    \le
    G\sqrt{1-\beta_1}
    \left(
    \frac{1}{\ln 2}
    +\sqrt{\frac{2}{\ln 2}}\cdot \frac{1}{\sqrt{2^{-1/\beta_2}}}
    \right)
    +\frac{\nu}{2T\sqrt{1-\beta_1}}.
    \end{align*}
    Since $\epsilon\leq G+\sigma$ implies $\beta_1\ge 255/256$ and $\beta_2\ge \beta_1^2\ge (255/256)^2$, we have
    $\frac{1}{\sqrt{2^{-1/\beta_2}}}=2^{\frac{1}{2\beta_2}}<\frac32$.
    Therefore,
    \begin{align*}
    \frac{1}{DT}\E[\textsc{Term-A}]
    \le
    4G\sqrt{1-\beta_1}
    +\frac{\nu}{2T\sqrt{1-\beta_1}}
    =
    \frac{\epsilon}{4}
    +\frac{\nu}{2T\sqrt{1-\beta_1}}.
    \end{align*}
    Similarly, substituting $\gamma=\beta_1D/\sqrt{1-\beta_1}$ gives
    \begin{align*}
    \frac{1}{DT}\E[\textsc{Term-B}]
    &\le
    \frac{G}{2T\sqrt{1-\beta_1}}
    +\frac{G\sqrt{1-\beta_1}}{2}
    +\frac{\nu\sqrt{1-\beta_1}}{2}
    +\frac{D\mu}{2} \notag \\
    &\leq\frac{G}{2T\sqrt{1-\beta_1}} + \frac{\epsilon}{8} + \frac{\epsilon}{16}
    \end{align*}
    Finally, choose $T$ to satisfy
    \begin{align}
    \label{eq:clipfree-T-condition}
    T\ge
    \max\Big\{
    \frac{1}{1-\beta_1}\max\Big\{\frac{16F^*\sqrt{96c}}{\epsilon^{3/2}},\ \frac{48(G+\sigma)}{\epsilon}\Big\},
    \ \frac{\ln 2}{1-\beta_2}
    \Big\}.
    \end{align}
    Then $\frac{F^*}{DT}\leq \epsilon/16$ and $\frac{2G+\sigma}{(1-\beta_1)T}\leq \epsilon/16$.
    Moreover, by a standard assumption $\nu\leq G+\sigma$ and $\sqrt{1-\beta_1}=\epsilon/(16(G+\sigma))$,
    the condition \eqref{eq:clipfree-T-condition} implies $ \frac{G+\nu}{2T\sqrt{1-\beta_1}}\leq \frac{\epsilon}{32}$.
    Combining the above bounds
    and $\E[\textsc{Term-C}]\leq 0$ gives $\E \|\nabla F(\bar{\x})\|_c \leq \epsilon$, completing the proof.
\end{proof}

\subsection{Proof of Theorem~\ref{thm:clip-free-adam-margin}}
\label{subappendix:clip-free-margin}
\begin{proof}

    Throughout the proof, we use $\beta_1$ for the first-moment discount factor in Adam (and the discounting factor in Algorithm~\ref{alg:exp-o2nc}), and $\beta_2$ for the second-moment EMA.

    \paragraph{O2NC Conversion.}
    Apply Lemma~\ref{lemma:O2NC-clip-free-lemma} with $\beta=\beta_1$.
    With $\ell_t(\Delta)=\langle \g_t,\Delta\rangle+\frac{\mu}{2}\|\Delta\|_2^2$ and
    $\DReg_T(\u_{1:T})=\sum_{t=1}^T(\ell_t(\Delta_t)-\ell_t(\u_t))$, we obtain
    \begin{align}
    \label{eq:clipfree-margin-start}
    \E \|\nabla F(\bar{\x})\|_c
    &\le
    \frac{F^*}{DT}
    +\frac{2G+\sigma}{(1-\beta_1)T}
    +\sigma\sqrt{1-\beta_1}
    +\frac{12cD^2}{(1-\beta_1)^2}\notag\\
    &\quad +\frac{1}{DT}\E\Big[
    \DReg_T(\u_{1:T})
    +\beta_1\sum_{t=1}^{T-1}\sum_{s=1}^t \beta_1^{t-s}\big(\ell_s(\u_t)-\ell_s(\u_{t+1})\big)
    \Big].
    \end{align}
    Recall that $\|\u_t\|_2=D$ for all $t$ by construction in Lemma~\ref{lemma:O2NC-clip-free-lemma}.

    \paragraph{Use D2D Reduction.}
    Define
    \begin{align*}
        V_t=(1-\beta_2)\sum_{s=1}^t\beta_2^{t-s}\|\g_s\|_2^2,\qquad
    A_t=\nu+\gamma\mu(1-\beta_1^t)+\sqrt{V_t},\qquad
    \eta_t=\gamma(1-\beta_1)\frac{\beta_1^t}{A_t}.
    \end{align*}
    We apply the refined one-step bound (Lemma~\ref{lemma:sc-discounted-ftrl}) to the \emph{rescaled} gradients
    $\{\beta_1^{-t}\g_t\}_{t=1}^T$ with quadratic regularizers $\psi_{t+1}(\Delta)=\frac{1}{2\eta_t}\|\Delta\|_2^2$,
    and then apply the discounted-to-dynamic conversion (Theorem~\ref{thm:d2d-reduction}).
    This yields the decomposition
    \begin{align}
    \label{eq:clipfree-margin-decomp}
    &\DReg_T(\u_{1:T})
    +\beta_1\sum_{t=1}^{T-1}\sum_{s=1}^t \beta_1^{t-s}\big(\ell_s(\u_t)-\ell_s(\u_{t+1})\big)\notag\\
    &\le
    \underbrace{\sum_{t=1}^T \beta_1^t\eta_t\|\beta_1^{-t}\g_t\|_2^2
    +\frac{\beta_1}{2\eta_0}\|\u_1\|_2^2}_{\textsc{Term-A}}
    +\underbrace{\sum_{t=1}^T \beta_1^t \min\Big\{\frac{\eta_{t-1}}{2}\|\beta_1^{-t}\g_t\|_2^2,\
    |\eta_{t-1}-\eta_t|\|\widetilde \g_{1:t-1}\|_2\cdot\|\beta_1^{-t}\g_t\|_2\Big\}}_{\textsc{Term-B}}\notag\\
    &\quad
    +\underbrace{\sum_{t=1}^T \beta_1^t\big(\psi_t(\Delta_{t+1})-\psi_{t+1}(\Delta_{t+1})\big)}_{\textsc{Term-C}}
    +\underbrace{\beta_1\sum_{t=1}^{T-1}\beta_1^t\big(\psi_{t+1}(\u_{t+1})-\psi_t(\u_t)\big)}_{\textsc{Term-D}},
    \end{align}
    where $\widetilde\g_{1:t}=\sum_{s=1}^t\beta_1^{-s}\g_s$.
    Moreover, since $\|\u_t\|_2=D$ for all $t$, we have
    $\ell_s(\u_t)-\ell_s(\u_{t+1})=\langle \g_s,\u_t-\u_{t+1}\rangle$,
    so the left-hand side in \eqref{eq:clipfree-margin-decomp} is exactly the bracketed term in \eqref{eq:clipfree-margin-start}.

    \paragraph{\textsc{Term-A}.}
    First, using $\eta_t=\gamma(1-\beta_1)\beta_1^t\alpha_t$ we rewrite
    \begin{align*}
        \beta_1^t\eta_t\|\beta_1^{-t}\g_t\|_2^2
    =\gamma(1-\beta_1)\alpha_t\|\g_t\|_2^2.
    \end{align*}
    Since $\alpha_t=\frac{1}{\nu+\gamma\mu(1-\beta_1^t)+\sqrt{V_t}}\leq \frac{1}{\nu+\sqrt{V_t}}$, we have
    \begin{align*}
        \E \sum_{t=1}^T \alpha_t\|\g_t\|_2^2
    \le
    \E \sum_{t=1}^T\frac{\|\g_t\|_2^2}{\nu+\sqrt{V_t}}.
    \leq\frac{2^{3/2+1/(2\beta_2)}}{\sqrt{\ln 2}}G\sqrt{1-\beta_1}.
    \end{align*}
    Use $\alpha_0=\frac{1}{\nu}$ and thus
    $\eta_0=\gamma(1-\beta_1)\alpha_0=\gamma(1-\beta_1)/\nu$, so
    \begin{align}
    \label{eq:clipfree-margin-termA-init}
    \frac{1}{DT}\cdot \frac{\beta_1}{2\eta_0}\|\u_1\|_2^2
    =\frac{\nu}{2T\sqrt{1-\beta_1}}.
    \end{align}

    \paragraph{\textsc{Term-B}.}
    We upper bound the minimum by the deviation term:
    \begin{align*}
    \beta_1^t |\eta_{t-1}-\eta_t|\|\widetilde \g_{1:t-1}\|_2\cdot\|\beta_1^{-t}\g_t\|_2
    &=
    |\eta_{t-1}-\eta_t|\Big\|\sum_{s=1}^{t-1}\beta_1^{t-1-s}\g_s\Big\|_2\cdot \|\beta_1^{1-t}\g_t\|_2.
    \end{align*}
    Apply Lemma~\ref{lemma:min-self-confident-C} and use $\|\g_t\|_2\leq G$:
    \begin{align*}
    \beta_1^t |\eta_{t-1}-\eta_t|\|\widetilde \g_{1:t-1}\|_2\cdot\|\beta_1^{-t}\g_t\|_2
    \le
    \gamma(1-\beta_1)\cdot
    \frac{A_t-\beta_1A_{t-1}}{A_t}\cdot
    \sqrt{\frac{\beta_2}{(\beta_2-\beta_1^2)(1-\beta_2)}}\cdot G.
    \end{align*}
    Summing the above inequality over $t$ and using the fact $1 - x \leq \ln (1/x)$ with $x = \beta_1 A_{t-1}/A_t$ gives $\sum_{t=1}^T \frac{A_t-\beta_1A_{t-1}}{A_t} \leq \sum_{t=1}^T \ln (A_t/(\beta_1A_{t-1})) \leq \ln(A_T/(\beta_1^TA_0))$, which implies
    \begin{align}
    \label{eq:clipfree-margin-termB-sum}
    \textsc{Term-B}
    \le
    \gamma(1-\beta_1)G\Big(\ln\Big(\frac{A_T}{A_0}\Big)-T\ln\beta_1\Big)
    \sqrt{\frac{\beta_2}{(\beta_2-\beta_1^2)(1-\beta_2)}}.
    \end{align}
    Next we bound the $\beta_2$-dependent factor using the margin condition.
    Let $\beta_2^\star=\frac{1+\beta_1^2}{2}$.
    Since $\beta_2\in[\beta_1^2+m,\ 1-m]$ with $m=\frac{1-\rho}{2}(1-\beta_1^2)$,
    we have $|\beta_2-\beta_2^\star|\leq \frac{\rho}{2}(1-\beta_1^2)$ and hence
    \begin{align*}
        (\beta_2-\beta_1^2)(1-\beta_2)
    =
    \frac{(1-\beta_1^2)^2}{4}-(\beta_2-\beta_2^\star)^2
    \ge
    \frac{(1-\beta_1^2)^2}{4}(1-\rho^2).
    \end{align*}
    Therefore
    \begin{align}
    \label{eq:clipfree-margin-kappa}
    \sqrt{\frac{\beta_2}{(\beta_2-\beta_1^2)(1-\beta_2)}}
    \le
    \frac{2}{1-\beta_1^2}\cdot\frac{1}{\sqrt{1-\rho^2}}.
    \end{align}
    Finally, since $\sqrt{V_t}\leq G$ and $1-\beta_1^t\leq 1$, we have
    \begin{align*}
        A_T\leq \nu+\gamma\mu+\sqrt{V_T}\leq \nu+\gamma\mu+G,
    \qquad
    A_0=\nu,
    \end{align*}
    so we have,
    \begin{align}
    \label{eq:clipfree-margin-AT-A0}
    \ln\Big(\frac{A_T}{A_0}\Big)\leq \ln\Big(1+\frac{\gamma\mu+G}{\nu}\Big).
    \end{align}
    Also $-\ln\beta_1\leq \frac{1-\beta_1}{\beta_1}$.
    Plugging \eqref{eq:clipfree-margin-kappa}--\eqref{eq:clipfree-margin-AT-A0} into \eqref{eq:clipfree-margin-termB-sum}
    and using $\gamma=\frac{\beta_1D}{\sqrt{1-\beta_1}}$ provides
    \begin{align}
    \label{eq:clipfree-margin-termB-final}
    \frac{1}{DT}\E[\textsc{Term-B}]
    \le
    \frac{2G}{T\sqrt{1-\beta_1}}\cdot \frac{1}{\sqrt{1-\rho^2}}
    \ln\Big(1+\frac{\gamma\mu+G}{\nu}\Big)
    \;+\;
    \frac{2G}{\sqrt{1-\rho^2}}\sqrt{1-\beta_1}.
    \end{align}

    \paragraph{\textsc{Term-C}.}

    Since the sequence $\epsilon_t=\nu+\gamma\mu(1-\beta_1^t)$ is nonnegative and nondecreasing.
    By Lemma~\ref{lemma:beta-1-beta-2-no-relatetion},
    \begin{align*}
        \Big[\frac{\beta_1}{\alpha_{t-1}}-\frac{1}{\alpha_t}\Big]_+\leq [\beta_1-\sqrt{\beta_2}]_+\sqrt{V_{t-1}}.
    \end{align*}
    Since $\beta_2>\beta_1^2$, we have $\sqrt{\beta_2}>\beta_1$ and hence $[\beta_1-\sqrt{\beta_2}]_+=0$.
    Therefore $\textsc{Term-C}\leq 0$.

    \paragraph{\textsc{Term-D}.}
    Since $\|\u_t\|_2=D$ for all $t$,
    \begin{align*}
        \textsc{Term-D}
    =
    \frac{\beta_1D^2}{2}\sum_{t=1}^{T-1}\beta_1^t\Big(\frac{1}{\eta_t}-\frac{1}{\eta_{t-1}}\Big).
    \end{align*}
    Apply Lemma~\ref{lemma:lr-deviation} with $\epsilon=\nu$ and $\mu$ as in the theorem statement and
    use $\sqrt{V_t}\leq G$, we have:
    \begin{align*}
    \frac{1}{DT}\E[\textsc{Term-D}]
    \le
    \frac{G}{2T\sqrt{1-\beta_1}}
    +\frac{G\sqrt{1-\beta_1}}{2}
    +\frac{\nu\sqrt{1-\beta_1}}{2}
    +\frac{D\mu}{2}.
    \end{align*}

    \paragraph{Combine and Tune Parameters.}
    Collecting the above inequalities, we obtain
    \begin{align}
    \label{eq:clipfree-margin-master}
    \E \|\nabla F(\bar{\x})\|_c
    &\le
    \frac{F^*}{DT}
    +\frac{2G+\sigma}{(1-\beta_1)T}
    +\frac{12cD^2}{(1-\beta_1)^2}
    +\frac{D\mu}{2}\notag\\
    &\quad
    +\frac{2^{3/2+1/(2\beta_2)}}{\sqrt{\ln 2}}G\sqrt{1-\beta_1}
    +\sigma\sqrt{1-\beta_1}
    +\frac{2G}{\sqrt{1-\rho^2}}\sqrt{1-\beta_1}
    +\frac{G+\nu}{2}\sqrt{1-\beta_1}\notag\\
    &\quad
    +\frac{G+\nu}{2T\sqrt{1-\beta_1}}
    +\frac{2G}{T\sqrt{1-\beta_1}\sqrt{1-\rho^2}}
    \ln\Big(1+\frac{\gamma\mu+G}{\nu}\Big).
    \end{align}
    Now set $D=\frac{(1-\beta_1)\sqrt{\epsilon}}{\sqrt{96c}}$ and $\mu=\frac{24cD}{(1-\beta_1)^2}$.
    Then
    \begin{align*}
        \frac{12cD^2}{(1-\beta_1)^2}=\frac{\epsilon}{8},
        \qquad
        \frac{D\mu}{2}=\frac{12cD^2}{(1-\beta_1)^2}=\frac{\epsilon}{8}.
    \end{align*}
    Next, by the choice of $\beta_1$,
    $\sqrt{1-\beta_1}\leq \frac{\epsilon\sqrt{1-\rho^2}}{64(G+\sigma)}$,
    we have
    \begin{align*}
        \frac{2G}{\sqrt{1-\rho^2}}\sqrt{1-\beta_1}
        \leq \frac{\epsilon}{32},
        \quad
        \sigma\sqrt{1-\beta_1}\leq (G+\sigma)\sqrt{1-\beta_1}\leq \frac{\epsilon}{64},
        \quad
        \frac{G+\nu}{2}\sqrt{1-\beta_1}\leq (G+\sigma)\sqrt{1-\beta_1}\leq \frac{\epsilon}{64}.
    \end{align*}
    Moreover, under the mild condition $\epsilon\leq \frac{16(G+\sigma)}{\sqrt{1-\rho^2}}$, the above choice implies $\beta_1\ge 15/16$.
    Since $\beta_2>\beta_1^2$, we get $2^{\frac{1}{2\beta_2}}<\frac32$ and hence
    $\frac{2^{3/2+1/(2\beta_2)}}{\sqrt{\ln 2}}<6$.
    Therefore,
    \begin{align*}
        \frac{2^{3/2+1/(2\beta_2)}}{\sqrt{\ln 2}}(G+\sigma)\sqrt{1-\beta_1}
        \leq 6(G+\sigma)\sqrt{1-\beta_1}
        \leq \frac{3\epsilon}{32}.
    \end{align*}
    Finally, choose $T$ as in the theorem statement so that
    \begin{align*}
        \frac{F^*}{DT}\leq \frac{\epsilon}{16},
        \qquad
        \frac{2G+\sigma}{(1-\beta_1)T}\leq \frac{\epsilon}{16},
        \qquad
        \frac{2G}{T\sqrt{1-\beta_1}\sqrt{1-\rho^2}}
        \ln\Big(1+\frac{\gamma\mu+G}{\nu}\Big)\leq \frac{\epsilon}{16},
    \end{align*}
    and also $\frac{G+\nu}{2T\sqrt{1-\beta_1}}\leq \frac{\epsilon}{16}$ (this is implied by the first $T$-lower-bound since it scales as $(1-\beta_1)^{-1}$).
    Collecting all contributions in \eqref{eq:clipfree-margin-master} yields
    $\E \|\nabla F(\bar{\x})\|_c\leq \epsilon$, concluding the proof.

\end{proof}

\section{Supporting Lemmas}
This part collects useful technical lemmas used in our analysis. For the sake of being self-contained, we state several key FTRL lemmas mainly from~\citet{book'19:FO-book}.
\subsection{Technical Lemmas}
\begin{myLemma}
    \label{lemma:abel-sum}
    Let $\beta\in(0,1]$ and $\{a_t\}_{t=1}^T\subseteq\R$.
    Define $ V_t = (1-\beta)\sum_{s=1}^{t}\beta^{t-s}a_s, t\in[T]$. Then
    \begin{align*}
        \sum_{t=1}^{T} V_t
        ~=~
        \sum_{t=1}^{T}\bigl(1-\beta^{T+1-t}\bigr)a_t.
    \end{align*}
    \end{myLemma}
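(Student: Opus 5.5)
The plan is to swap the order of summation in $\sum_{t=1}^T V_t$ and evaluate the resulting inner geometric sums in closed form; no earlier result from the paper is needed.

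First, substitute the definition of $V_t$ and exchange the two sums over the triangular index set $\{(t,s): 1\le s\le t\le T\}$:
\begin{align*}
    \sum_{t=1}^{T} V_t = (1-\beta)\sum_{t=1}^{T}\sum_{s=1}^{t}\beta^{t-s}a_s = (1-\beta)\sum_{s=1}^{T} a_s\sum_{t=s}^{T}\beta^{t-s}.
\end{align*}

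Second, for each fixed $s$, evaluate the inner sum as a finite geometric series. For $\beta\in(0,1)$,
\begin{align*}
    (1-\beta)\sum_{k=0}^{T-s}\beta^{k} = 1-\beta^{T-s+1} = 1-\beta^{T+1-s}.
\end{align*}
Renaming the summation index $s$ back to $t$ then gives exactly $\sum_{t=1}^T (1-\beta^{T+1-t})a_t$.

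The only point needing separate attention is the boundary case $\beta=1$, since the geometric-series formula divides by $1-\beta$. There I will check directly that both sides vanish: $V_t=0$ for every $t$ because of the prefactor $1-\beta=0$, and $1-\beta^{T+1-t}=0$ for every $t$. With that case handled, the identity holds for all $\beta\in(0,1]$.
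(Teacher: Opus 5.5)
Your proposal is correct and follows the paper's proof: exchange the order of summation, then use $(1-\beta)\sum_{t=s}^{T}\beta^{t-s}=1-\beta^{T+1-s}$. Your separate treatment of $\beta=1$ is harmless but unnecessary. The identity $(1-\beta)\sum_{k=0}^{n}\beta^{k}=1-\beta^{n+1}$ is a telescoping polynomial identity that holds for every $\beta$, including $\beta=1$, so no division by $1-\beta$ ever occurs.
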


\begin{proof}
    By exchanging the order of summation,
    \begin{align*}
    \sum_{t=1}^{T}V_t
    &=(1-\beta)\sum_{t=1}^{T}\sum_{s=1}^{t}\beta^{t-s}a_s
    =(1-\beta)\sum_{s=1}^{T}a_s\sum_{t=s}^{T}\beta^{t-s}
    =\sum_{s=1}^{T}a_s\bigl(1-\beta^{T+1-s}\bigr).
    \end{align*}
\end{proof}

\begin{myLemma}[Lemma 5 in~\citet{COLT'20:improper-LR}]
    \label{lemma:logistic-loss-surrogate}
    Let $C > 0$ and $f: x \in \R \rightarrow \ln (1 + \exp(-x))$. Then for all $a \in [-C, C]$ and $b \in \R$,
    \begin{align*}
        f(a) \geq f(b) + f^\prime(b)(a-b) + \frac{e^b}{2(1+C)}f^\prime(b)^2(a-b)^2.
    \end{align*}
\end{myLemma}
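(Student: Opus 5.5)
Writing $a=a$, $b=b$ and $f(x)=\ln(1+e^{-x})$, the plan is to reduce the claim to a one-dimensional inequality and handle the integral remainder explicitly. We have $f'(x)=-\frac{1}{1+e^{x}}$ and $f''(x)=\frac{e^x}{(1+e^x)^2}$. By Taylor's theorem with integral remainder,
\begin{align*}
f(a)-f(b)-f'(b)(a-b)=\int_0^1 (1-s)\,f''(b+s(a-b))\,\mathrm{d}s\,(a-b)^2 .
\end{align*}
It therefore suffices to show that the averaged curvature $\int_0^1 2(1-s)f''(b+s(a-b))\,\mathrm{d}s$ is at least $\frac{e^b}{1+C}f'(b)^2=\frac{e^b}{(1+C)(1+e^b)^2}=\frac{f''(b)}{1+C}$. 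The target is thus a lower bound of the weighted average of $f''$ along the segment by $f''(b)/(1+C)$.

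The approach I would try first avoids the integral entirely by working with the exact closed form. Set $\phi(a)=f(a)-f(b)-f'(b)(a-b)-\frac{f''(b)}{2(1+C)}(a-b)^2$. We need $\phi\ge0$ on $[-C,C]$. Note that $\phi(b)=\phi'(b)=0$ and $\phi''(a)=f''(a)-\frac{f''(b)}{1+C}$. A cleaner route is the standard one for logistic losses (cf.\ the self-concordance style bound $f''(x)\ge f''(b)e^{-|x-b|}$, which follows from $|f'''|\le f''$). This gives
\begin{align*}
f(a)-f(b)-f'(b)(a-b)\ge f''(b)\,\frac{e^{-|a-b|}+|a-b|-1}{|a-b|^2}\,(a-b)^2 .
\end{align*}
The function $\psi(z)=\frac{e^{-z}+z-1}{z^2}\ge \frac{1}{2(1+z)}$, which is a routine calculus check. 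This would yield a constant $\frac{1}{2(1+|a-b|)}$, which is not uniform since $b$ is unbounded. That is the main obstacle: $b\in\R$ is arbitrary, so $|a-b|$ can be huge.

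To overcome this, I would split into cases and use the factor $e^b$ in the claimed bound rather than $f''(b)$ alone; the claimed coefficient is $\frac{e^b}{2(1+C)}\cdot\frac{1}{(1+e^b)^2}$.

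\emph{Case $b\le C$.} Here $|a-b|$ may be large only when $b$ is very negative. In that case $f$ is essentially linear with slope near $-1$ on the left, and $f''(b)\approx e^b$ is tiny. I would bound directly $f(a)-f(b)-f'(b)(a-b)\ge$ a quantity of order $e^{b}\cdot\min\{(a-b)^2,|a-b|\}$-type using the explicit form $\ln\frac{1+e^{-a}}{1+e^{-b}}+\frac{a-b}{1+e^b}$. I would then verify that, for $|a-b|\le$ roughly $C+|b|$, the ratio to $(a-b)^2$ exceeds $\frac{e^b}{2(1+C)(1+e^b)^2}$. Concretely, write $\psi$-bound with $z=|a-b|$. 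The case $z\le 1+C$ is immediate from $\psi(z)\ge\frac1{2(1+z)}$. For $z>1+C$ one has $b<-1$, and then a linear lower bound on the Bregman divergence of order $\frac{e^{b}}{1+e^b}(z-\text{const})$ suffices because the required quantity scales like $e^b z^2/(1+C)$ while $z\le C+|b|$. This trade-off between $z^2$ and $e^{b}$ is the delicate estimate, and I would check it via $e^{-|b|}|b|$ being bounded.

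\emph{Case $b>C$.} Here $a<b$. The Bregman divergence is at least $f''(b)\psi(z)z^2$, and the right-hand side has the extra factor $e^b/(1+e^b)^2\cdot$, which is also small. I would use a symmetric argument, with $f(x)=f(-x)+x$ reducing this case to the previous one.

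Finally, collect the cases to conclude the inequality. This is essentially Lemma 5 of \citet{COLT'20:improper-LR}, which could alternatively be cited directly.
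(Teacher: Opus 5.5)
The paper gives no proof of this lemma; it imports it from \citet{COLT'20:improper-LR}, so your closing fallback of citing it is exactly what the paper does. Your self-contained argument, however, has a genuine gap.

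\textbf{What is correct, and where it breaks.} The reduction $f(a)-f(b)-f'(b)(a-b)=\int_b^a (a-x)f''(x)\,\mathrm{d}x$ is correct, and so is the identity $e^b f'(b)^2=f''(b)$. The self-concordance bound $f''(x)\ge f''(b)e^{-|x-b|}$ is also correct; it gives a lower bound $f''(b)(e^{-z}+z-1)$ with $z=|a-b|$. Finally, $\psi(z)\ge \frac{1}{2(1+z)}$ holds. The case analysis meant to finish the proof does not work.
\begin{itemize}
\item ``The case $z\le 1+C$ is immediate'' is false. With $z\le 1+C$, the bound $\psi(z)\ge\frac{1}{2(1+z)}$ only yields $\frac{1}{2(2+C)}<\frac{1}{2(1+C)}$. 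Your route reaches the target only when $z\le C$, yet $z$ can approach $2C$ with both $a,b\in[-C,C]$. For example, $C=1$, $b=-\tfrac12$, $a=1$ gives $\frac{1}{2(1+z)}=\tfrac15<\tfrac14$.
\item ``For $z>1+C$ one has $b<-1$'' is false inside your case $b\le C$; take $C=10$, $a=-10$, $b=5$. The subsequent trade-off between $e^b$ and $z^2$ is only gestured at, never carried out.
\item You misdiagnose the obstacle: large $|a-b|$ is the \emph{easy} regime, not the hard one.
\end{itemize}

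\textbf{The missing observation.} $f''(x)=\frac{1}{2+2\cosh x}$ is even and decreasing in $|x|$. This splits the problem into two cases.
\begin{itemize}
\item If $|b|\ge|a|$, every $x$ between $a$ and $b$ satisfies $|x|\le|b|$. Hence $f''(x)\ge f''(b)$, and the divergence is at least $\tfrac12 f''(b)(a-b)^2$, however large $|a-b|$ is.
\item If $|b|<|a|\le C$, then $z<2C$. Use the sharper inequality $e^{-z}+z-1\ge \frac{z^2}{2+z}$. To prove it, set $h(z)=(2+z)(e^{-z}+z-1)-z^2$; then $h(0)=0$ and $h'(z)=1-(1+z)e^{-z}\ge 0$. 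This gives the lower bound $\frac{f''(b)z^2}{2+z}\ge\frac{f''(b)z^2}{2(1+C)}$.
\end{itemize}
An equivalent fix keeps your $\frac{1}{2(1+z)}$ but applies the exponential decay only on the sub-segment where $|x|>|b|$, which has length $|a|-|b|\le C$, and uses $f''\ge f''(b)$ on the rest.

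These two cases cover everything, so the split according to $b\le C$ versus $b>C$ and the reflection step are unnecessary. Incidentally, the reflection identity is $f(-x)=f(x)+x$, not $f(x)=f(-x)+x$; the sign error is harmless for Bregman divergences.
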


\begin{myLemma}[Lemma 3.2 in~\citet{ICML'24:discounting-olr}]
    \label{lemma:path-length-changing-beta}
    For any non-negative functions $f_0, \dots, f_T$, $0 < \beta \leq \gamma <1$, define $F_t^{\gamma}(\u) = \sum_{s=0}^t \gamma^{t-s} f_s(\u)$, and
    \begin{align*}
        P_T^{\gamma} =  \ \sum_{t=1}^{T-1} \sum_{s=0}^t p^{\gamma}_{t, s} \left[ f_s(\u_{t+1}) - f_s(\u_t) \right]_{+}, \ \text{where}\  p^{\gamma}_{t, s} = \frac{\gamma^{t-s}}{\sum_{\tau = 0}^t \gamma^{t - \tau}}.
    \end{align*}
    Then we have:
    \begin{align*}
        \beta \sum_{t=1}^{T-1}\left(F_t^{\beta}(\u_{t+1}) - F_t^{\beta}(\u_{t})\right) \leq \frac{\gamma}{1-\gamma}  P_T^{\gamma}.
    \end{align*}
\end{myLemma}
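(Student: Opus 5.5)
The plan is a termwise comparison: bound each summand of the left-hand side by the corresponding summand of $\frac{\gamma}{1-\gamma}P_T^{\gamma}$, then sum over $t$. No telescoping or cancellation across different $t$ is needed.

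\emph{Step 1 (positive parts).} Fix $t\in[T-1]$. By definition,
\begin{align*}
F_t^{\beta}(\u_{t+1})-F_t^{\beta}(\u_t)=\sum_{s=0}^{t}\beta^{t-s}\bigl(f_s(\u_{t+1})-f_s(\u_t)\bigr)\le \sum_{s=0}^{t}\beta^{t-s}\bigl[f_s(\u_{t+1})-f_s(\u_t)\bigr]_+ .
\end{align*}
This holds because $x\le[x]_+$ and the weights $\beta^{t-s}$ are nonnegative.

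\emph{Step 2 (change the discount).} Since $0<\beta\le\gamma$, we have $\beta^{t-s}\le\gamma^{t-s}$ for every $s\le t$. The positive parts are nonnegative, so replacing $\beta^{t-s}$ by $\gamma^{t-s}$ only increases the sum. This monotonicity is the one place where the positive-part truncation in the definition of $P_T^{\gamma}$ is essential: without it, enlarging the weights could decrease the sum.

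\emph{Step 3 (normalize).} Writing $\gamma^{t-s}=p^{\gamma}_{t,s}\sum_{\tau=0}^{t}\gamma^{t-\tau}$ and using
\begin{align*}
\sum_{\tau=0}^{t}\gamma^{t-\tau}=\frac{1-\gamma^{t+1}}{1-\gamma}\le\frac{1}{1-\gamma},
\end{align*}
we obtain
\begin{align*}
F_t^{\beta}(\u_{t+1})-F_t^{\beta}(\u_t)\le \frac{1}{1-\gamma}\sum_{s=0}^{t}p^{\gamma}_{t,s}\bigl[f_s(\u_{t+1})-f_s(\u_t)\bigr]_+ .
\end{align*}

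\emph{Step 4 (sum and multiply by $\beta$).} Summing over $t=1,\dots,T-1$ gives $\sum_{t=1}^{T-1}\bigl(F_t^{\beta}(\u_{t+1})-F_t^{\beta}(\u_t)\bigr)\le \frac{1}{1-\gamma}P_T^{\gamma}$. The right-hand side is nonnegative, so multiplying by $\beta\le\gamma$ yields $\beta\sum_{t=1}^{T-1}\bigl(F_t^{\beta}(\u_{t+1})-F_t^{\beta}(\u_t)\bigr)\le\frac{\gamma}{1-\gamma}P_T^{\gamma}$.

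There is no real obstacle. The only care needed is the direction of the inequality in Step 2, which relies on the positive parts as noted. The nonnegativity of the $f_s$ is not actually used in this argument.

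Finally, the lemma applies to the paper's $F_t^{\beta,\varphi}$ whenever $\varphi_t=\varphi$ is time-independent, by setting $f_0=\varphi$. In that case $\beta^t\varphi(\u)=\beta^{t-0}f_0(\u)$, so $F_t^{\beta,\varphi}=F_t^{\beta}$ as defined here. This covers the uses in the proofs of Theorems~\ref{thm:olinr} and~\ref{thm:ologr}.
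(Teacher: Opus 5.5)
Your argument is correct and is the direct termwise comparison behind this lemma, which the paper does not reprove but imports from \citet{ICML'24:discounting-olr}: take positive parts, use $\beta^{t-s}\le\gamma^{t-s}$, normalize with $\sum_{\tau=0}^{t}\gamma^{t-\tau}\le\frac{1}{1-\gamma}$, and finish with $\beta\le\gamma$. You are right that nonnegativity of the $f_s$ is never used and that you actually get the sharper factor $\frac{\beta}{1-\gamma}$. One small correction to your commentary: the positive parts are needed in Steps 3 and 4 as well, not only in Step 2, since enlarging $\frac{1-\gamma^{t+1}}{1-\gamma}$ to $\frac{1}{1-\gamma}$ and $\beta$ to $\gamma$ both rely on the quantities being multiplied being nonnegative.
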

\subsection{FTRL Lemmas}
\label{subsec:ftrl-analysis}

For completeness, we present several key FTRL results that will be used in our analysis.

\paragraph{Standard FTRL.}We consider FTRL in the following form: for a closed convex set $\X\subseteq\R^d$,
\begin{align}
\label{eq:ftrl-update}
    \x_{t+1}
    ~=~
    \argmin_{\x\in\X}\Big\{ \psi_{t+1}(\x) + \sum_{s=1}^{t} f_s(\x)\Big\},
\end{align}
where $f_t:\X\to\R$ is convex and $\psi_t:\X\to\R\cup\{+\infty\}$ is a convex regularizer.
For convenience, we define
\begin{align}
\label{eq:def-F-t}
    F_t(\x)
    ~=~
    \psi_t(\x) + \sum_{s=1}^{t-1} f_s(\x),
\end{align}
so that $\x_t \in \argmin_{\x\in\X} F_t(\x)$ and $\x_{t+1}\in\argmin_{\x\in\X} \big(F_t(\x)+f_t(\x)+\psi_{t+1}(\x)-\psi_t(\x)\big)$.

\begin{myLemma}[Lemma 7.1 in~\citet{book'19:FO-book}]
\label{lemma:ftrl-basic-lemma}
FTRL with the update in Eq.~\eqref{eq:ftrl-update} satisfies the following static regret decomposition:
for any comparator $\u\in\X$,
\begin{align*}
    \sum_{t=1}^T f_t(\x_t)-f_t(\u)
    &=
    \psi_T(\u) - \min_{\x\in\X}\psi_{1}(\x)
    + \sum_{t=1}^T \Big( F_t(\x_t) - F_{t+1}(\x_{t+1}) + f_t(\x_t) \Big)
    + F_{T+1}(\x_{T+1}) - F_{T+1}(\u),
\end{align*}
where $F_t(\cdot)$ is defined in~\eqref{eq:def-F-t}.
\end{myLemma}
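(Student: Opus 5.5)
The statement to prove is the standard FTRL regret equality (Lemma 7.1 in Orabona's book). I will verify it by a telescoping argument. The only external input is the definition \eqref{eq:def-F-t}: $F_t(\x)=\psi_t(\x)+\sum_{s=1}^{t-1}f_s(\x)$, with $\x_t\in\argmin_{\x\in\X}F_t(\x)$.

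The first step is to telescope the $F$-values. We have
\[
\sum_{t=1}^T\bigl(F_t(\x_t)-F_{t+1}(\x_{t+1})\bigr)=F_1(\x_1)-F_{T+1}(\x_{T+1}).
\]
Since $F_1=\psi_1$ and $\x_1$ minimizes $\psi_1$ over $\X$, we get $F_1(\x_1)=\min_{\x\in\X}\psi_1(\x)$. Therefore the right-hand side of the claimed identity equals
\[
\psi_T(\u)-\min\psi_1+\min\psi_1-F_{T+1}(\x_{T+1})+\sum_{t=1}^T f_t(\x_t)+F_{T+1}(\x_{T+1})-F_{T+1}(\u).
\]
After cancellation this is $\sum_{t=1}^T f_t(\x_t)+\psi_T(\u)-F_{T+1}(\u)$.

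The second step is to expand $F_{T+1}(\u)=\psi_{T+1}(\u)+\sum_{t=1}^T f_t(\u)$. The expression from the first step then becomes
\[
\sum_{t=1}^T\bigl(f_t(\x_t)-f_t(\u)\bigr)+\psi_T(\u)-\psi_{T+1}(\u).
\]
So the claimed equality holds exactly when $\psi_T(\u)$ in the statement is read as $\psi_{T+1}(\u)$, which is the index convention in Orabona's Lemma 7.1. The one point to watch is this index. Under the paper's convention $\x_{T+1}=\argmin_{\x\in\X}\{\psi_{T+1}(\x)+\sum_{s\le T}f_s(\x)\}$, the correct regularizer term is $\psi_{T+1}(\u)$. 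With $\psi_T$ as written, the identity holds only up to the additive term $\psi_T(\u)-\psi_{T+1}(\u)$. That term vanishes, for example, when the regularizers are constant in time.

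I would therefore state and prove the lemma with $\psi_{T+1}(\u)$. This is also how it is used in Eq.~\eqref{eq:aioli-optimistic-ftrl}, which has $\psi_{t+1}(\u)$. The bound in that equation additionally relies on $F_{T+1}(\x_{T+1})\le F_{T+1}(\u)$, which holds by optimality of $\x_{T+1}$. No convexity is needed for the identity itself; it is purely algebraic.
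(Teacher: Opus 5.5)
Your telescoping argument is the standard proof of the cited Lemma 7.1 of \citet{book'19:FO-book} (the paper gives no proof of its own), and your index check is correct: since $F_{T+1}(\u)=\psi_{T+1}(\u)+\sum_{t=1}^T f_t(\u)$, the printed right-hand side differs from the regret by $\psi_T(\u)-\psi_{T+1}(\u)$. So the identity should carry $\psi_{T+1}(\u)$, as in the source, or it holds as printed only under the convention $\psi_{T+1}:=\psi_T$, which is harmless for a single fixed horizon because $\x_{T+1}$ is never played. One refinement to your last paragraph: Eq.~\eqref{eq:aioli-optimistic-ftrl} comes from the optimistic Lemma~\ref{lemma:optimistic-ftrl-regret}, not from this lemma, whereas this lemma's actual downstream use, Lemma~\ref{lemma:discounted-ftrl-unified}, effectively applies the printed $\psi_t(\u)$ version at every horizon $t$ (hence its $\frac{\beta}{2\eta_0}\|\u_1\|_2^2$ and $\psi_{t+1}(\u_{t+1})-\psi_t(\u_t)$ terms, i.e.\ $\varphi_t=\psi_t$ in Theorem~\ref{thm:d2d-reduction}), where that convention is not available, so with your corrected form one should take $\varphi_t=\psi_{t+1}$ and shift those terms by one index.
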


The next lemma is the one-step ``stability'' bound that is frequently used in FTRL analyses.
It makes the role of strong convexity explicit, and the standard quadratic-regularizer statement
becomes a direct corollary.

\begin{myLemma}[Lemma 7.6 in~\citet{book'19:FO-book}]
\label{lemma:ftrl-stability-strongly-convex}
Suppose $f_t$ is convex and differentiable on $\X$, and $\psi_t$ is $\lambda_t$-strongly convex
with respect to a norm $\|\cdot\|$ (hence $F_t$ in~\eqref{eq:def-F-t} is also $\lambda_t$-strongly convex).
Let $\x_t\in\argmin_{\x\in\X}F_t(\x)$ and $\x_{t+1}\in\argmin_{\x\in\X}F_{t+1}(\x)$, and let $\g_t\in\partial f_t(\x_t)$.
Then
\begin{align}
\label{eq:ftrl-stability-strong}
    F_t(\x_t) - F_{t+1}(\x_{t+1}) + f_t(\x_t)
    \le
    \frac{\|\g_t\|_*^2}{2\lambda_t}
    + \psi_t(\x_{t+1}) - \psi_{t+1}(\x_{t+1}),
\end{align}
where $\|\cdot\|_*$ is the dual norm of $\|\cdot\|$.

In particular, if $\psi_t(\x)=\frac{1}{2\eta_{t-1}}\|\x\|_2^2$, then $\psi_t$ is $(1/\eta_{t-1})$-strongly convex
w.r.t.\ $\|\cdot\|_2$, and \eqref{eq:ftrl-stability-strong} becomes
\begin{align*}
    F_t(\x_t) - F_{t+1}(\x_{t+1}) + f_t(\x_t)
    \le
    \frac{\eta_{t-1}}{2}\|\g_t\|_2^2
    + \psi_t(\x_{t+1}) - \psi_{t+1}(\x_{t+1}).
\end{align*}
\end{myLemma}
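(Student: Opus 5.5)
The statement to prove is the one-step stability bound for FTRL, Lemma~\ref{lemma:ftrl-stability-strongly-convex}. I would prove it by comparing $F_t$ with the ``intermediate'' objective $F_t+f_t$, which is also $\lambda_t$-strongly convex. Let $\x_t'\in\argmin_{\x\in\X}\bigl(F_t(\x)+f_t(\x)\bigr)$.

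First, split the left-hand side as
\begin{align*}
F_t(\x_t)-F_{t+1}(\x_{t+1})+f_t(\x_t)
= \bigl[(F_t+f_t)(\x_t)-(F_t+f_t)(\x_t')\bigr] + \bigl[(F_t+f_t)(\x_t')-F_{t+1}(\x_{t+1})\bigr].
\end{align*}

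For the second bracket, use $F_{t+1}=F_t+f_t+\psi_{t+1}-\psi_t$. Since $\x_t'$ minimizes $F_t+f_t$ over $\X$, we have
\[
(F_t+f_t)(\x_t')\le (F_t+f_t)(\x_{t+1})=F_{t+1}(\x_{t+1})+\psi_t(\x_{t+1})-\psi_{t+1}(\x_{t+1}).
\]
So the second bracket is at most $\psi_t(\x_{t+1})-\psi_{t+1}(\x_{t+1})$, which is exactly the regularizer-change term in the claim.

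For the first bracket, I would use strong convexity twice. Because $\x_t$ minimizes $F_t$ over $\X$, the first-order optimality condition holds: $\inner{\nabla F_t(\x_t)}{\x-\x_t}\ge0$ for all $\x\in\X$, in the subgradient sense if needed. Combining this with $\lambda_t$-strong convexity of $F_t$ and convexity of $f_t$ (with $\g_t\in\partial f_t(\x_t)$) gives, for any $\x\in\X$,
\[
(F_t+f_t)(\x)\ge (F_t+f_t)(\x_t)+\inner{\g_t}{\x-\x_t}+\frac{\lambda_t}{2}\|\x-\x_t\|^2 .
\]
Set $\x=\x_t'$. Then
\[
(F_t+f_t)(\x_t)-(F_t+f_t)(\x_t')\le \inner{\g_t}{\x_t-\x_t'}-\frac{\lambda_t}{2}\|\x_t-\x_t'\|^2 .
\]
Bound the inner product by $\|\g_t\|_*\|\x_t-\x_t'\|$, then maximize over the scalar $r=\|\x_t-\x_t'\|$. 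Since $\max_r\bigl(ar-\tfrac{\lambda_t}{2}r^2\bigr)=\tfrac{a^2}{2\lambda_t}$, the first bracket is at most $\|\g_t\|_*^2/(2\lambda_t)$.

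The specialization is immediate. The regularizer $\psi_t=\frac{1}{2\eta_{t-1}}\|\cdot\|_2^2$ is $1/\eta_{t-1}$-strongly convex w.r.t.\ the self-dual norm $\|\cdot\|_2$, so $\|\g_t\|_*^2/(2\lambda_t)=\frac{\eta_{t-1}}{2}\|\g_t\|_2^2$.

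The only delicate point is the strong-convexity lower bound in the first bracket. It must hold even at the constrained minimizer $\x_t$ and with nonsmooth pieces, so I would state it using the optimality condition rather than $\nabla F_t(\x_t)=0$. Everything else in the argument is algebra.
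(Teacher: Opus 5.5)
The paper gives no proof of this lemma (it is imported from Orabona's book), and your argument is correct and is essentially the standard proof there: you isolate $\psi_t(\x_{t+1})-\psi_{t+1}(\x_{t+1})$ by comparing with the minimizer of $F_t+f_t$, then bound the suboptimality of $\x_t$ for $F_t+f_t$ by $\|\g_t\|_*^2/(2\lambda_t)$, using that optimality of $\x_t$ for $F_t$ over $\X$ makes $\g_t$ a subgradient of $F_t+f_t$ restricted to $\X$. One small simplification: your strong-convexity lower bound holds for every $\x\in\X$, so you can plug in $\x=\x_{t+1}$ directly and skip $\x_t'$, which also removes the need for $F_t+f_t$ to attain its minimum (harmless here).
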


We next state a scale-free, one-step bound Adapted from~\citet{TCS'18:SOGD}, used in the proof of Theorem~\ref{thm:clip-adam-margin}.

\begin{myLemma}[Adapted from Theorem 1 in~\citet{TCS'18:SOGD}]
\label{lemma:sc-discounted-ftrl}

Let $\X\subseteq\R^d$ be a closed convex set with diameter
$D_{\X}=\max_{\x,\y\in\X}\|\x-\y\|_2$.
Assume the losses are linear:
\begin{align*}
    f_t(\x)=\langle \g_t,\x\rangle,\qquad \g_t\in\R^d.
\end{align*}
Consider FTRL in~\eqref{eq:ftrl-update} with quadratic regularizers
$\psi_t(\x)=\frac{1}{2\eta_{t-1}}\|\x\|_2^2$ for some step sizes $\eta_{t-1}>0$.
Let $F_t(\cdot)$ be defined in~\eqref{eq:def-F-t}, and define the cumulative gradients
$\g_{1:t}=\sum_{s=1}^{t}\g_s$.
Then we have:
\begin{align*}
    &F_t(\x_t) - F_{t+1}(\x_{t+1}) + f_t(\x_t) \\
    &\le
    \eta_t\|\g_t\|_2^2
    +
    \min\Big\{
        \frac{\eta_{t-1}}{2}\|\g_t\|_2^2,\;
        \min\big\{D_{\X},\,|\eta_{t-1}-\eta_t|\,\|\g_{1:t-1}\|_2\big\}\cdot \|\g_t\|_2
    \Big\}
    + \psi_t(\x_{t+1}) - \psi_{t+1}(\x_{t+1}).
\end{align*}
\end{myLemma}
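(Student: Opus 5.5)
The plan is to reduce the one-step quantity to a linear term in $\g_t$. Then we bound the movement $\|\x_t-\x_{t+1}\|_2$ through the explicit projection form of FTRL with quadratic regularizers.

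\emph{Step 1 (reduction to $\langle \g_t,\x_t-\x_{t+1}\rangle$).} By the definition \eqref{eq:def-F-t},
\[
F_{t+1}(\x_{t+1})=F_t(\x_{t+1})+\langle \g_t,\x_{t+1}\rangle+\psi_{t+1}(\x_{t+1})-\psi_t(\x_{t+1}).
\]
Hence
\[
F_t(\x_t)-F_{t+1}(\x_{t+1})+f_t(\x_t)-\bigl(\psi_t(\x_{t+1})-\psi_{t+1}(\x_{t+1})\bigr)=\bigl(F_t(\x_t)-F_t(\x_{t+1})\bigr)+\langle \g_t,\x_t-\x_{t+1}\rangle .
\]
Since $\x_t$ minimizes $F_t$ over $\X$, the first bracket is nonpositive. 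So the left-hand side minus $\psi_t(\x_{t+1})-\psi_{t+1}(\x_{t+1})$ is at most $\|\g_t\|_2\,\|\x_t-\x_{t+1}\|_2$. Independently, Lemma~\ref{lemma:ftrl-stability-strongly-convex} bounds the same quantity by $\frac{\eta_{t-1}}{2}\|\g_t\|_2^2$. Therefore the quantity is at most
\[
M:=\min\Bigl\{\tfrac{\eta_{t-1}}{2}\|\g_t\|_2^2,\ \|\g_t\|_2\|\x_t-\x_{t+1}\|_2\Bigr\}.
\]

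\emph{Step 2 (stability via projections).} For linear losses and $\psi_{t+1}=\frac{1}{2\eta_t}\|\cdot\|_2^2$, completing the square shows that $\x_{t+1}=\Pi_\X[-\eta_t\g_{1:t}]$ and $\x_t=\Pi_\X[-\eta_{t-1}\g_{1:t-1}]$. Introduce the auxiliary point $\y=\Pi_\X[-\eta_t\g_{1:t-1}]$. Nonexpansiveness of the Euclidean projection gives
\[
\|\x_{t+1}-\y\|_2\le \eta_t\|\g_t\|_2,\qquad \|\y-\x_t\|_2\le |\eta_{t-1}-\eta_t|\,\|\g_{1:t-1}\|_2 .
\]
Both $\x_t$ and $\x_{t+1}$ lie in $\X$, so we also have $\|\x_t-\x_{t+1}\|_2\le D_\X$. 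Combining these three bounds,
\[
\|\x_t-\x_{t+1}\|_2\le \min\bigl\{D_\X,\ \eta_t\|\g_t\|_2+|\eta_{t-1}-\eta_t|\|\g_{1:t-1}\|_2\bigr\}\le \eta_t\|\g_t\|_2+\min\bigl\{D_\X,\ |\eta_{t-1}-\eta_t|\|\g_{1:t-1}\|_2\bigr\}.
\]

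\emph{Step 3 (assembling).} Plug the Step 2 bound into $M$ and use the elementary inequality $\min\{a,b+c\}\le b+\min\{a,c\}$ for $b\ge0$, with $b=\eta_t\|\g_t\|_2^2$. This pulls $\eta_t\|\g_t\|_2^2$ outside the minimum and yields exactly the claimed bound.

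The only step needing care is Step 2: verifying the closed-form projection representation of both iterates, and inserting the intermediate point $\y$ that separates the gradient increment from the learning-rate change. This is the scale-free device of \citet{TCS'18:SOGD}. Everything else is a direct algebraic rearrangement.
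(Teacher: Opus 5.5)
Your proposal is correct and follows the paper's proof essentially step for step. The shared steps are the decomposition with $F_t(\x_t)-F_t(\x_{t+1})\le 0$, the projection-form bound on $\|\x_t-\x_{t+1}\|_2$ capped by $D_\X$, the strong-convexity alternative from Lemma~\ref{lemma:ftrl-stability-strongly-convex}, and the inequality $\min\{a,b+c\}\le b+\min\{a,c\}$. The only cosmetic difference is in the movement bound. You pass through the intermediate point $\Pi_\X[-\eta_t\g_{1:t-1}]$ and apply nonexpansiveness twice. The paper applies nonexpansiveness once to $-\eta_{t-1}\g_{1:t-1}$ and $-\eta_t\g_{1:t}$, then uses the triangle inequality on $\eta_t\g_t-(\eta_{t-1}-\eta_t)\g_{1:t-1}$. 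Both give the identical bound.
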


\begin{proof}
Using the definition of $F_t$ and adding/subtracting $F_t(\x_{t+1})$ and $f_t(\x_{t+1})$, we obtain
\begin{align}
\label{eq:sc-ftrl-start}
    F_t(\x_t) - F_{t+1}(\x_{t+1}) + f_t(\x_t)
    =
    \underbrace{\big(F_t(\x_t)-F_t(\x_{t+1})\big)}_{\textsc{Term-A}}
    +
    \underbrace{\big(f_t(\x_t)-f_t(\x_{t+1})\big)}_{\textsc{Term-B}}
    +
    \underbrace{\big(\psi_t(\x_{t+1})-\psi_{t+1}(\x_{t+1})\big)}_{\textsc{Term-C}}.
\end{align}
Since $\x_t\in\argmin_{\x}F_t(\x)$, we have $\textsc{Term-A}\le 0$.
Moreover $f_t$ is linear, so $\textsc{Term-B}=\langle \g_t,\x_t-\x_{t+1}\rangle \le \|\g_t\|_2\|\x_t-\x_{t+1}\|_2$.
Hence \eqref{eq:sc-ftrl-start} implies
\begin{align}
\label{eq:sc-ftrl-reduce}
    F_t(\x_t) - F_{t+1}(\x_{t+1}) + f_t(\x_t)
    \le
    \|\g_t\|_2\|\x_t-\x_{t+1}\|_2
    + \psi_t(\x_{t+1})-\psi_{t+1}(\x_{t+1}).
\end{align}

We upper bound $\|\x_t-\x_{t+1}\|_2$ in two ways.

First, since $f_s(\x)=\langle \g_s,\x\rangle$ and $\psi_t(\x)=\frac{1}{2\eta_{t-1}}\|\x\|_2^2$,
the FTRL admits the projection form
\begin{align*}
    \x_t = \Pi_{\X}\big[-\eta_{t-1}\g_{1:t-1}\big],
    \qquad
    \x_{t+1} = \Pi_{\X}\big[-\eta_t\g_{1:t}\big],
\end{align*}
where $\Pi_{\X}$ is the Euclidean projection onto $\X$.
By non-expansiveness of projection,
\begin{align}
\label{eq:sc-ftrl-bound-II}
    \|\x_t-\x_{t+1}\|_2 &= \Big\|\Pi_{\X}\big[-\eta_{t-1}\g_{1:t-1}\big] -\Pi_{\X}\big[-\eta_t\g_{1:t}\big]   \Big\| \notag \\
    &\le
    \big\|-\eta_{t-1}\g_{1:t-1}+\eta_t\g_{1:t}\big\|_2
    =
    \big\|\eta_t\g_t-(\eta_{t-1}-\eta_t)\g_{1:t-1}\big\|_2 \notag\\
    &\le
    \eta_t\|\g_t\|_2 + |\eta_{t-1}-\eta_t|\,\|\g_{1:t-1}\|_2.
\end{align}

Second, directly $\|\x_t-\x_{t+1}\|_2 \le D_{\X}$. Combining this fact and \eqref{eq:sc-ftrl-bound-II} yields
\begin{align*}
    \|\g_t\|_2\|\x_t-\x_{t+1}\|_2
    &\le
    \min\Big\{
        D_{\X}\|\g_t\|_2,\;
        \eta_t\|\g_t\|_2^2 + |\eta_{t-1}-\eta_t|\,\|\g_{1:t-1}\|_2\|\g_t\|_2
    \Big\} \\
    &\le
    \eta_t\|\g_t\|_2^2
    +
    \min\big\{D_{\X},\,|\eta_{t-1}-\eta_t|\,\|\g_{1:t-1}\|_2\big\}\cdot \|\g_t\|_2.
\end{align*}
Substituting the above bound into \eqref{eq:sc-ftrl-reduce} gives
\begin{align}
\label{eq:sc-ftrl-bound-A}
    &F_t(\x_t) - F_{t+1}(\x_{t+1}) + f_t(\x_t)
    \le \notag \\
    &\quad \quad \eta_t\|\g_t\|_2^2
    +
    \min\big\{D_{\X},\,|\eta_{t-1}-\eta_t|\,\|\g_{1:t-1}\|_2\big\}\cdot \|\g_t\|_2
    + \psi_t(\x_{t+1})-\psi_{t+1}(\x_{t+1}).
\end{align}

On the other hand, applying Lemma~\ref{lemma:ftrl-stability-strongly-convex} with
$\lambda_t=1/\eta_{t-1}$ gives
\begin{align}
\label{eq:sc-ftrl-bound-B}
    F_t(\x_t) - F_{t+1}(\x_{t+1}) + f_t(\x_t)
    \le
    \frac{\eta_{t-1}}{2}\|\g_t\|_2^2
    + \psi_t(\x_{t+1})-\psi_{t+1}(\x_{t+1}).
\end{align}
Taking the minimum between \eqref{eq:sc-ftrl-bound-A} and \eqref{eq:sc-ftrl-bound-B}, and using
$\min\{a,\,c+b\}\le c+\min\{a,b\}$ for $c\ge 0$, completes the proof.
\end{proof}

\paragraph{Optimistic FTRL.}
We consider an optimistic variant of FTRL~\citep{conf/colt/RakhlinS13,book'19:FO-book}, where an optimistic function $h_t(\x)$ is used when providing $\x_t$. This function can be realized as a ``guess'' of the incoming function.
\begin{align}
\label{eq:optimistic-ftrl}
    \x_t
    ~=~
    \argmin_{\x\in\X}
    \left\{
        \psi_t(\x) + h_t(\x) + \sum_{s=1}^{t-1} f_s(\x)
    \right\}.
\end{align}

\begin{myLemma}[Adapted from Theorem~7 in~\citet{COLT'20:improper-LR}]
\label{lemma:optimistic-ftrl-regret}
Define the decision without optimistic functions to be $\xh_t$, such that,
\begin{align*}
    \label{eq:def-Ft-optimistic-ftrl}
    \hat{\x}_t \in \argmin_{\x\in\X} F_t(\x), \qquad  F_t(\x) ~=~ \psi_t(\x) + \sum_{s=1}^{t-1} f_s(\x).
\end{align*}
For the optimistic FTRL in Eq.~\eqref{eq:optimistic-ftrl}, for any comparator $\u\in\X$, we have
\begin{align*}
    \sum_{t=1}^{T}f_t(\x_t)-f_t(\u)
    &=
    \psi_{T+1}(\u)-\min_{\x\in\X}\psi_1(\x)
    +\sum_{t=1}^{T}\Big(f_t(\x_t)+F_t(\hat{\x}_t)-F_{t+1}(\hat{\x}_{t+1})\Big)
    +F_{T+1}(\hat{\x}_{T+1})-F_{T+1}(\u).
\end{align*}
In particular, since $F_{T+1}(\hat{\x}_{T+1})\le F_{T+1}(\u)$, we have the upper bound
\begin{align*}
    \sum_{t=1}^{T}\big(f_t(\x_t)-f_t(\u)\big)
    \le
    \psi_{T+1}(\u)-\min_{\x\in\X}\psi_1(\x)
    +\sum_{t=1}^{T}\Big(f_t(\x_t)+F_t(\hat{\x}_t)-F_{t+1}(\hat{\x}_{t+1})\Big).
\end{align*}
\end{myLemma}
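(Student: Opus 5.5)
This identity is pure algebra, so I plan to verify it by telescoping rather than by any inequality. We treat $F_{T+1}(\x)=\psi_{T+1}(\x)+\sum_{t=1}^T f_t(\x)$ as in the definition of $F_t$, and note that the optimistic functions $h_t$ enter only through the played points $\x_t$, never through $F_t$ or $\xh_t$.

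\textbf{Step 1: telescope.} Summing the middle terms over $t\in[T]$ gives
\begin{align*}
\sum_{t=1}^T\big(F_t(\xh_t)-F_{t+1}(\xh_{t+1})\big)=F_1(\xh_1)-F_{T+1}(\xh_{T+1}).
\end{align*}
Since $F_1=\psi_1$, we have $F_1(\xh_1)=\min_{\x\in\X}\psi_1(\x)$.

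\textbf{Step 2: substitute.} The right-hand side of the claimed identity then becomes
\begin{align*}
\psi_{T+1}(\u)-\min_{\x}\psi_1(\x)+\sum_{t=1}^T f_t(\x_t)+\min_{\x}\psi_1(\x)-F_{T+1}(\xh_{T+1})+F_{T+1}(\xh_{T+1})-F_{T+1}(\u).
\end{align*}
The two $\min_{\x}\psi_1$ terms cancel, and the two $F_{T+1}(\xh_{T+1})$ terms cancel. Writing $F_{T+1}(\u)=\psi_{T+1}(\u)+\sum_{t=1}^T f_t(\u)$, the $\psi_{T+1}(\u)$ terms also cancel, leaving exactly $\sum_t f_t(\x_t)-f_t(\u)$. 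This proves the equality.

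\textbf{Step 3: the inequality.} Because $\xh_{T+1}$ minimizes $F_{T+1}$ over $\X$ and $\u\in\X$, we have $F_{T+1}(\xh_{T+1})-F_{T+1}(\u)\le 0$. Dropping this term gives the stated upper bound.

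The only real obstacle is bookkeeping. One must make sure the minimizers exist; this holds, for example, when $\psi_1$ is strongly convex, as with the quadratic regularizers used later. One must also keep the indexing consistent, so that the regularizer at the final step is $\psi_{T+1}$ and $F_{T+1}$ contains all $T$ losses.
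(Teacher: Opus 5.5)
Your proof is correct. The identity is exactly the telescoping of $F_t(\xh_t)-F_{t+1}(\xh_{t+1})$, combined with $F_1=\psi_1$ and $F_{T+1}(\u)=\psi_{T+1}(\u)+\sum_{t=1}^T f_t(\u)$, and the inequality follows from optimality of $\xh_{T+1}$; this is the standard argument behind the cited result (the paper itself gives no proof), including your observation that the optimistic terms $h_t$ enter only through the played points $\x_t$.
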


\subsection{Self-Confident Tuning Lemmas}
\begin{myLemma}[Lemma 14 in~\citet{COLT'14:second-order-Hedge}]
    \label{lemma:self-confident-int}
    Let $a_0 > 0$ and $a_t \in [0, B]$ be real numbers for all $t \in [T]$ and let $f:(0, +\infty) \rightarrow [0, +\infty)$ be a nonincreasing function. Then $$\sum_{t=1}^T a_t f\left(\sum_{s=0}^{t-1} a_s \right) \leq B\cdot f(a_0) + \int_{a_0}^{\sum_{t=0}^T a_t} f(u)\mathrm{d}u.$$
\end{myLemma}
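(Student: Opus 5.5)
The statement is Lemma~\ref{lemma:self-confident-int}: for $a_0>0$, $a_t\in[0,B]$ and $f$ nonincreasing, $\sum_{t=1}^T a_t f(S_{t-1})\le B f(a_0)+\int_{a_0}^{S_T} f(u)\,\mathrm{d}u$, where $S_t=\sum_{s=0}^t a_s$. My plan is to compare each summand with an integral over a shifted interval. Because $f$ is nonincreasing and $S_{t-1}\le u$ for every $u\in[S_{t-1},S_t]$, we have
\[
a_t f(S_{t-1}) \;\ge\; \int_{S_{t-1}}^{S_t} f(u)\,\mathrm{d}u .
\]
This points the wrong way for an upper bound. So instead I will bound $a_t f(S_{t-1})$ by an integral over the interval \emph{preceding} $S_{t-1}$.

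The key step is the comparison on $[S_{t-1}-a_t,\,S_{t-1}]$, an interval of length exactly $a_t$. For every $u$ in this interval, $u\le S_{t-1}$, hence $f(u)\ge f(S_{t-1})$. This gives
\[
a_t f(S_{t-1})\le \int_{S_{t-1}-a_t}^{S_{t-1}} f(u)\,\mathrm{d}u,
\]
provided the left endpoint satisfies $S_{t-1}-a_t\ge a_0$. That holds whenever $S_{t-1}\ge a_0+B$. If the left endpoint falls below $a_0$, the integral may leave the domain, which must be avoided.

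To handle this, I split the indices by a threshold. Let $\tau$ be the first index with $S_{\tau-1}\ge a_0+B$; if no such index exists, set $\tau=T+1$.

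For $t<\tau$, I bound $f(S_{t-1})\le f(a_0)$ crudely. This gives
\[
\sum_{t<\tau} a_t f(S_{t-1}) \le f(a_0)\,(S_{\tau-1}-a_0-a_{\tau-1})+f(a_0)\,a_{\tau-1}.
\]
The first part is below $B f(a_0)$, since $S_{\tau-2}<a_0+B$. The second part, coming from the last term $a_{\tau-1}$, is controlled by an integral.

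The cleaner route is the standard one: for $t<\tau$ the partial sums stay below $a_0+B$, so I treat the excess against the prefix. Concretely, for all $t$ I use
\[
a_t f(S_{t-1}) \le \int_{S_{t-1}}^{S_t} f(u)\,\mathrm{d}u + a_t\bigl(f(S_{t-1})-f(S_t)\bigr).
\]
This follows from $\int_{S_{t-1}}^{S_t} f\ge a_t f(S_t)$. Summing the integrals gives $\int_{a_0}^{S_T} f$. The correction $\sum_t a_t\bigl(f(S_{t-1})-f(S_t)\bigr)$ is at most $B\sum_t\bigl(f(S_{t-1})-f(S_t)\bigr)$, which telescopes to at most $B f(S_0)=Bf(a_0)$, using $a_t\le B$, monotonicity, and $f\ge0$. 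This argument needs no case split, so I will use it.

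The main point to verify is that each increment $f(S_{t-1})-f(S_t)$ is nonnegative, so that replacing $a_t$ by $B$ is valid. This holds because $S_t\ge S_{t-1}$ and $f$ is nonincreasing. Integrability is not an issue, since monotone functions are Riemann integrable on compact intervals inside $(0,\infty)$.
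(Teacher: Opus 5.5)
Your final argument is correct. The paper gives no proof of its own and simply imports Lemma~14 of \citet{COLT'14:second-order-Hedge}; that lemma's proof is exactly your split $a_t f(S_{t-1}) = a_t f(S_t) + a_t\bigl(f(S_{t-1})-f(S_t)\bigr)$, with the first piece bounded by $\int_{S_{t-1}}^{S_t} f(u)\,\mathrm{d}u$ and the second telescoped (the range bound $1$ there is replaced by $B$). In a write-up, delete the preliminary shifted-interval/case-split attempt: it is unnecessary, and as sketched it would not sum correctly, because the intervals $[S_{t-1}-a_t,\,S_{t-1}]$ overlap whenever $a_{t+1}>a_t$.
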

\begin{myLemma}[Lemma 3.5 in~\citet{JCSS'02:Auer-self-confident}]
    \label{lemma:self-confident-tuning}
    Let $a_1, \dots, a_T$ and $\delta$ be non-negative real numbers. Then $$\sum_{t=1}^T \frac{a_t}{\sqrt{\delta + \sum_{s=1}^t a_s}} \leq 2\left(\sqrt{\delta + \sum_{t=1}^T a_t} - \sqrt{\delta}\right).$$
\end{myLemma}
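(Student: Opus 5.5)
The plan is a termwise telescoping argument. Write $S_0=\delta$ and $S_t=\delta+\sum_{s=1}^t a_s$ for $t\in[T]$. Then $a_t=S_t-S_{t-1}$, and the sequence $S_t$ is nondecreasing and nonnegative. The left-hand side of Lemma~\ref{lemma:self-confident-tuning} becomes $\sum_{t=1}^T (S_t-S_{t-1})/\sqrt{S_t}$. The right-hand side is the telescoping sum $2\sum_{t=1}^T(\sqrt{S_t}-\sqrt{S_{t-1}})$. So it suffices to prove the one-step inequality
\begin{align*}
\frac{S_t-S_{t-1}}{\sqrt{S_t}}\le 2\bigl(\sqrt{S_t}-\sqrt{S_{t-1}}\bigr)\qquad\text{for every } t\in[T],
\end{align*}
and then sum over $t$.

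For the one-step inequality, set $x=S_t\ge y=S_{t-1}\ge 0$ with $x>0$. Factor
\begin{align*}
x-y=(\sqrt{x}-\sqrt{y})(\sqrt{x}+\sqrt{y}),
\end{align*}
so that
\begin{align*}
\frac{x-y}{\sqrt{x}}=(\sqrt{x}-\sqrt{y})\Bigl(1+\sqrt{y/x}\Bigr)\le 2(\sqrt{x}-\sqrt{y}).
\end{align*}
The last step uses $\sqrt{x}-\sqrt{y}\ge 0$ and $\sqrt{y/x}\le 1$. An equivalent route is the integral comparison $\int_y^x u^{-1/2}\,\mathrm{d}u\ge (x-y)/\sqrt{x}$, which holds because $u^{-1/2}$ is decreasing; the integral equals exactly $2(\sqrt{x}-\sqrt{y})$.

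The only real obstacle is the degenerate case, and it is a matter of convention rather than a mathematical difficulty. If $\delta=0$ and the first few $a_t$ vanish, then $S_t=0$ and the summand has the form $0/0$. In that case $a_t=S_t-S_{t-1}=0$, so I will adopt the convention that such terms equal $0$. The corresponding telescoping increment $\sqrt{S_t}-\sqrt{S_{t-1}}$ is also $0$, so the inequality holds trivially for these $t$. Summing the one-step bounds over $t=1,\dots,T$ then telescopes to $2(\sqrt{S_T}-\sqrt{S_0})=2\bigl(\sqrt{\delta+\sum_{t=1}^T a_t}-\sqrt{\delta}\bigr)$, which is the claimed bound.
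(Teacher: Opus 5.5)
Your proof is correct. The paper gives no proof of its own here; it imports the lemma from Auer et al. Your termwise telescoping via $(x-y)/\sqrt{x}=(\sqrt{x}-\sqrt{y})(1+\sqrt{y/x})\le 2(\sqrt{x}-\sqrt{y})$ is the standard argument for this bound. Your handling of the degenerate $0/0$ terms when $\delta=0$ is also the right convention, since those terms contribute nothing on either side.
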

\label{sec:technical-lemmas}
\begin{myLemma}[Lemma G.2 in~\citet{ICML'24:discounting-olr}]
    \label{lemma:discounted-potential}
    Let $\beta\in(0,1]$, $\lambda>0$, $\z_t\in\mathbb{R}^d$, and define $A_0=\lambda I$ and
    \begin{align*}
        A_t = \z_t\z_t^\top + \beta A_{t-1}\qquad \text{for each } t>0 .
    \end{align*}
    Then for any sequence $c_1, c_2,\ldots \in \mathbb{R}$,
    \begin{align*}
        \sum_{t=1} c_t^2 \z_t^\top A_t^{-1} \z_t \leq d\ln\left(\frac{1}{\beta}\right)\sum_{t=1}^T c_t^2 + \left\{\max_{t\in[T]}c_t^2 \right \} \cdot d \ln \left(1 + \frac{\sum_{t=1}^T \beta^{T-t}\norm{\z_t}_2^2 }{\lambda d}\right).
    \end{align*}
\end{myLemma}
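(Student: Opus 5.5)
The plan is the standard log-determinant potential argument, done in the discounted geometry. The key identity is the matrix determinant lemma. Since $A_t=\z_t\z_t^\top+\beta A_{t-1}$ with $\beta A_{t-1}\succ 0$,
\begin{align*}
\z_t^\top A_t^{-1}\z_t \;=\; 1-\frac{\det(\beta A_{t-1})}{\det(A_t)}\;\le\;\ln\frac{\det A_t}{\det(\beta A_{t-1})}\;=:\;\delta_t ,
\end{align*}
where the inequality is $1-x\le\ln(1/x)$. In particular $\delta_t\ge 0$. Writing $L_t=\ln\det A_t$, we have $\delta_t=d\ln(1/\beta)+(L_t-L_{t-1})$, and the increments $\Delta_t:=L_t-L_{t-1}$ telescope to $L_T-L_0$. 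For the endpoint, unrolling gives $A_T=\lambda\beta^T I+\sum_{t}\beta^{T-t}\z_t\z_t^\top$. The AM--GM (trace) inequality then gives
\begin{align*}
L_T\le d\ln\Big(\lambda\beta^T+\tfrac1d\textstyle\sum_t\beta^{T-t}\|\z_t\|_2^2\Big),
\end{align*}
while $L_0=d\ln\lambda$. A cleaner endpoint is obtained from the rescaled matrices $\tilde A_t=\beta^{-t}A_t=\lambda I+\sum_{s\le t}\beta^{-s}\z_s\z_s^\top$. These are monotone, and the same determinant lemma applied to $\tilde A_t$ gives exactly the bound $\delta_t=\ln\det\tilde A_t-\ln\det\tilde A_{t-1}$ with nonnegative increments.

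Next comes the weighting by $c_t^2$, with $M=\max_t c_t^2$. I would split each $\delta_t$ as $\delta_t=\min\{\delta_t,d\ln(1/\beta)\}+[\Delta_t]_+$. The first piece, weighted by $c_t^2$, contributes at most $d\ln(1/\beta)\sum_t c_t^2$, which is the first term of the claim. The second contributes at most $M\sum_t[\Delta_t]_+$. If every increment $\Delta_t$ were nonnegative, this sum would telescope to $L_T-L_0\le d\ln\big(\beta^T+\tfrac{\sum_t\beta^{T-t}\|\z_t\|_2^2}{\lambda d}\big)\le d\ln\big(1+\tfrac{\sum_t\beta^{T-t}\|\z_t\|_2^2}{\lambda d}\big)$, giving the second term.

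The main obstacle is exactly that $\Delta_t$ can be negative, since the discount shrinks $A_{t-1}$, so $\sum_t[\Delta_t]_+$ does not telescope directly. My first attempt is to avoid ever taking positive parts of $\Delta_t$. I would use Abel summation on $\sum_t c_t^2\Delta_t=\sum_t c_t^2(L_t-L_{t-1})$ and compare it against the running maximum $\bar L_t=\max_{s\le t}L_s$. The increments of $\bar L_t$ are nonnegative and telescope to $\bar L_T-L_0$.

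The pieces would combine as follows. Whenever $L_t$ falls below $\bar L_{t-1}$, the deficit is at most $d\ln(1/\beta)$ per step, because $A_t\succeq\beta A_{t-1}$. This deficit is exactly what the first term $d\ln(1/\beta)\sum_t c_t^2$ pays for. A subsequent climb back up to $\bar L_{t-1}$ is likewise charged against the $d\ln(1/\beta)$ budget of the steps that caused the drop. The remaining climbs above $\bar L_{t-1}$ are charged with weight $M$.

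Finally, $\bar L_T-L_0\le\max_{t}d\ln\big(\beta^t+\tfrac{\sum_{s\le t}\beta^{t-s}\|\z_s\|_2^2}{\lambda d}\big)$. This needs to be bounded by the stated expression at time $T$. If that comparison is not true pointwise, I would instead carry out the whole computation on the rescaled $\tilde A_t$, where monotonicity holds, and convert the $d\,T\ln(1/\beta)$ surplus into $d\ln(1/\beta)\sum_t c_t^2$ via the $\min\{\cdot\}$ split above.
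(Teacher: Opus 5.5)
There is a genuine gap, and it cannot be closed: the inequality as stated is false for arbitrary weights $c_t$. Your first steps are correct, namely $\z_t^\top A_t^{-1}\z_t\le\delta_t$ and the $\min$ split. The two steps you yourself flagged as delicate each break on a one-dimensional example with $d=\lambda=1$, $\beta=0.9$, $T=200$.

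First, take $\z_1^2=100$, $\z_t=0$ for $t\ge2$, $c_1=1$, and $c_t=0$ for $t\ge2$. The left-hand side is $100/100.9>0.99$, while the right-hand side is $\ln(1/\beta)+\ln(1+100\beta^{T-1})\approx0.105$. This is your endpoint comparison failing. Here $\bar L_T-L_0=\ln 100.9$, but the time-$T$ quantity has discounted $\z_1$ away. In the rescaled computation the subsequent decay is absorbed only by the $dT\ln(1/\beta)$ surplus, and that surplus cannot be traded for $d\ln(1/\beta)\sum_t c_t^2$ when the later weights vanish.

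Second, take $\z_t=0$ for $t<T$, $\z_T^2=\beta^T$, $c_T=1$, and $c_t=0$ for $t<T$. Then $A_T=2\beta^T$, so the left-hand side is $1/2$, while the right-hand side is $\ln(1/\beta)+\ln(1+\beta^T)\approx0.105$. Steps $t<T$ are pure drops ($\Delta_t=\ln\beta$). Step $T$ is a climb $\Delta_T=\ln(2\beta)>0$ that stays below the running maximum $\bar L_T=L_0$, so it is exactly a ``climb back''. Your charging step fails here. After the $\min$ split, the budget still unused at a drop step $s$ is $c_s^2\bigl(d\ln(1/\beta)-\delta_s\bigr)=c_s^2|\Delta_s|$, whereas the climb costs $c_T^2\Delta_T$. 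The charge therefore goes through only if the weights at the earlier drop steps dominate the later weight, and here they are $0$.

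What your argument does prove is the version with $\max_t c_t^2$ in place of the individual weights. Since $\delta_t\ge0$ and $\sum_t\delta_t=dT\ln(1/\beta)+L_T-L_0$ (your rescaled $\tilde A_t$ computation), you get
\[
\sum_{t=1}^T c_t^2\z_t^\top A_t^{-1}\z_t\le \max_{t\in[T]}c_t^2\Big(dT\ln\frac1\beta+d\ln\Big(1+\frac{\sum_{t=1}^T\beta^{T-t}\|\z_t\|_2^2}{\lambda d}\Big)\Big).
\]
A sharper correct form charges each drop to the \emph{future} maximal weight rather than to its own weight. Replace $c_t^2$ by the nonincreasing sequence $m_t=\max_{s\ge t}c_s^2$; this is allowed because $\delta_t\ge0$. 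Abel summation then gives $\sum_t m_t(L_t-L_{t-1})\le m_1\max_{t\le T}[L_t-L_0]_+$. The result is $d\ln(1/\beta)\sum_t m_t$ plus $\max_t c_t^2\cdot\max_{t\le T}d\ln\bigl(1+\sum_{s\le t}\beta^{t-s}\|\z_s\|_2^2/(\lambda d)\bigr)$.

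Both corrected forms coincide with the stated bound when $c_t$ is constant. That covers the paper's logistic-regression use, where $c_t\equiv1$. It does not cover the linear-regression use with $c_t=y_t$: there the argument only yields $T\max_t y_t^2$ or $\sum_t\max_{s\ge t}y_s^2$ in place of $\sum_t y_t^2$. The paper gives no proof of this lemma (it only cites it), so there is no argument in the paper that avoids these counterexamples.
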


\end{document}